        \theoremstyle{plain}
        \newtheorem{assumption}{Assumption}
        \newtheorem{thm}{Theorem}[section]
        \newtheorem{lem}[thm]{Lemma}
        \newtheorem{prop}[thm]{Proposition}
        \newtheorem{definition}{Definition}
        \newtheorem{remark}{Remark}
        \theoremstyle{plain}
\newtheorem*{rep@prop}{\rep@title}
\newcommand{\newrepprop}[2]{%
\newenvironment{rep#1}[1]{%
    \def\rep@title{#2 \ref{##1}}%
        \begin{rep@prop}
    }%
    {\end{rep@prop}}
}
\newtheorem*{rep@thm}{\rep@title}
\newcommand{\newrepthm}[2]{%
\newenvironment{rep#1}[1]{%
    \def\rep@title{#2 \ref{##1}}%
        \begin{rep@thm}
    }%
    {\end{rep@thm}}
}
\newcommand{\neutralize}[1]{\expandafter\let\csname c@#1\endcsname\count@}
\newenvironment{assumptionbis}[1]
  {%
   \neutralize{assumption}\phantomsection
   \begin{assumption}}
  {\end{assumption}}
\newenvironment{thmbis}[1]
  {%
   \neutralize{thm}\phantomsection%
   \begin{thm}}
  {\end{thm}}
\theoremstyle{definition}
\newtheorem{dfn}{Definition}[section]
\newtheorem*{rep@definition}{\rep@title}
\newcommand{\newrepdefinition}[2]{%
\newenvironment{rep#1}[1]{%
    \def\rep@title{#2 \ref{##1}}%
        \begin{rep@definition}
    }%
    {\end{rep@definition}}
}
\renewcommand{\vec}[1]{\mathbf{#1}}                                                             
\newcommand{\FedEM}{\texttt{{FedEM}}}
\newcommand{\DEM}{\texttt{D-FedEM}}           
\newcommand{\FedAvg}{\texttt{FedAvg}}
\newcommand{\FedProx}{\texttt{FedProx}}
\newcommand{\pFedMe}{\texttt{pFedMe}}
\DeclareMathOperator{\tr}{tr}
\DeclarePairedDelimiter\floor{\lfloor}{\rfloor}
\DeclareMathOperator*{\argmin}{arg\,min}
\DeclareMathOperator*{\minimize}{minimize}
\DeclareMathOperator*{\E}{\mathbb{E}}
\DeclareMathOperator*{\dd}{d}
\DeclareMathOperator{\prox}{prox}
\title{Federated Multi-Task Learning \\ under a Mixture of Distributions}
\author[1,3]{Othmane Marfoq}
\author[1]{Giovanni Neglia}
\author[2]{Aur\'elien Bellet}
\author[3]{Laetitia Kameni}
\author[3]{Richard Vidal}
\affil[1]{Inria,~Universit\'e C\^ote d'Azur,~France,~\{othmane.marfoq,~giovanni.neglia\}@inria.fr}
\affil[2]{Inria,~Universit\'e de Lille,~France,~aurelien.bellet@inria.fr}
\affil[3]{Accenture Labs,~France,~\{richard.vidal,~laetitia.kameni\}@accenture.com}
\begin{document}

\doparttoc 
\faketableofcontents 

\part{} 

\maketitle

\begin{abstract}

The increasing size of data generated by smartphones and IoT devices motivated the development of 
\emph{Federated Learning} (FL), a framework for on-device collaborative training of machine learning models. First efforts in FL focused on learning a single global model with good average performance across clients, but the global model may be arbitrarily bad for a given client, due to the inherent heterogeneity of local data distributions. 
Federated \emph{multi-task learning} (MTL) approaches can learn \emph{personalized models} by formulating an opportune penalized optimization problem.
The penalization term can capture complex relations among personalized models, but eschews clear statistical assumptions about local data distributions.

In this work, we propose to study federated MTL under the flexible assumption that each local data distribution is a \emph{mixture of unknown underlying distributions}.
This assumption encompasses most of the existing personalized FL approaches and leads to federated EM-like algorithms for both client-server and fully decentralized settings.
Moreover, it  provides a principled way to serve personalized models to clients not seen at training time. The algorithms' convergence is analyzed through a novel federated surrogate optimization framework, which can be of general interest.
Experimental results on FL benchmarks show that our approach provides models with higher accuracy and fairness than state-of-the-art methods.

\end{abstract}

\section{Introduction}
\label{sec:introduction}
Federated Learning (FL) \cite{kairouz2019advances} allows a set of clients to collaboratively train models without sharing their local data.
Standard FL approaches train a unique model for all clients~\cite{mcmahan2017communication,konevcny2016federated, Sahu2018OnTC, karimireddy2020scaffold, mohri2019agnostic}.
However, as discussed in~\cite{sattler2020clustered}, the existence of such a global model suited for all clients is at odds with the statistical heterogeneity observed across different clients~\cite{li2020federated,kairouz2019advances}.
Indeed, clients can have non-iid data and \emph{varying preferences}. 
Consider for example a language modeling task: given the sequence of tokens ``\textit{I love eating},'' the next word can be arbitrarily different from one client to another.
Thus, having personalized models for each client is a necessity in many FL applications.

\textbf{Previous work on personalized FL.} A naive approach for FL personalization consists in learning first a global model and then fine-tuning its parameters at each client via a few iterations of stochastic gradient descent~\cite{sim2019investigation}.
In this case, the global model plays the role of a meta-model to be used as initialization for few-shot adaptation at each client.
In particular, the connection between FL and Model Agnostic Meta Learning (MAML)~\cite{jiang2019improving} has been studied in \cite{fallah2020personalized, khodak2019adaptive, acar2021debiasing} in order to build a more suitable meta-model for local personalization.
Unfortunately, these methods can fail to build a model with low generalization error (as exemplified by LEAF synthetic dataset~\cite[App.~1]{caldas2018leaf}). 
An alternative approach is to jointly train a global model and one local model per client and then let each client build a personalized model by interpolating them~\cite{deng2020adaptive,corinzia2019variational,mansour2020three}.
However, if local distributions are far from the average distribution, a relevant global model does not exist and this approach boils down to every client learning only on its own local data. This issue is formally captured by the generalization bound in~\cite[Theorem~1]{deng2020adaptive}. 

Clustered FL~\cite{sattler2020clustered, ghosh2020efficient, mansour2020three} addresses the potential lack of a global model by assuming that clients can be partitioned into several clusters.
Clients belonging to the same cluster share the same optimal model, but those models can be arbitrarily different across clusters (see \cite[Assumption~2]{sattler2020clustered} for a rigorous formulation).
During training, clients learn the cluster to which they belong as well as the cluster model. 
The Clustered FL assumption is also quite limiting, as no knowledge transfer is possible across clusters.
In the extreme case where each client has its own optimal local model (recall the example on language modeling), the number of clusters  coincides with the number of clients and no federated learning is possible.

Multi-Task Learning (MTL) has recently emerged as an alternative approach to learn personalized models in the federated setting and allows for more nuanced relations among clients' models~\cite{smith2017federated, Vanhaesebrouck2017a, pmlr-v108-zantedeschi20a, hanzely2020federated, dinh2020personalized}. 
The authors of \cite{smith2017federated,Vanhaesebrouck2017a} were the first to frame FL personalization as a MTL problem. 
In particular, they defined federated MTL as a penalized optimization problem, where the penalization term models relationships among tasks (clients).
The work \cite{smith2017federated} proposed the \textsc{Mocha} algorithm for the client-server scenario, while \cite{Vanhaesebrouck2017a,pmlr-v108-zantedeschi20a} presented decentralized algorithms for the same problem.
Unfortunately, these algorithms can only learn simple models (linear models or linear combination of pre-trained models), because of the complex penalization term.
Other MTL-based approaches~\cite{hanzely2020federated, hanzely2020lower, dinh2020personalized, huang2021personalized, li2021ditto} are able to train more general models at the cost of considering simpler penalization terms (e.g., the distance to the average model), thereby losing the capability to capture complex relations among tasks.
Moreover, a general limitation of this line of work is that the penalization term is justified qualitatively and not on the basis of clear statistical assumptions on local data distributions.

More recently, \cite{shamsian2021personalized} proposed \texttt{pFedHN}.
\texttt{pFedHN} feeds local clients' representations to a global (across clients) hypernetwork, which can output personalized heterogeneous models.
Unfortunately, the hypernetwork has a large memory footprint already for small clients' models (e.g., the hypernetwork in the experiments in~\cite{shamsian2021personalized} has $100$ more parameters than the output model). 
Hence, it is not clear if  \texttt{pFedHN} can scale to more complex models.
Moreover, \texttt{pFedHN} requires each client to communicate multiple times for the server to learn meaningful representations.
Therefore, its performance is likely to deteriorate when clients participate only once (or few times) to training, as it is the case for large-scale cross-device FL training.
Furthermore, even once the hypernetwork parameters have been learned, training personalized models for new clients still requires multiple client-server communication rounds.
More similar to our approach, \texttt{FedFOMO} \cite{zhang2020personalized} lets each client interpolate other clients' local models with opportune weights learned during training.
However, this method lacks both theoretical justifications for such linear combinations and convergence guarantees.
Moreover, \texttt{FedFOMO} requires the presence of a powerful server able to 1) store all individual local models and 2) learn for each client---through repeated interactions---which other clients' local models may be useful.
Therefore, \texttt{FedFOMO} is not suited for cross-device FL where the number of clients may be very large (e.g.,~$10^{5}$--$10^{7}$ participating clients \cite[Table~2]{kairouz2019advances}) and a given client may only participate in a single training round.

Overall, although current personalization approaches can lead to superior empirical performance in comparison to a shared global model or individually trained local models, it is still not well understood whether and under which conditions clients are guaranteed to benefit from collaboration.

\textbf{Our contributions.} In this work, we first show that federated learning is impossible without assumptions on local data distributions.
Motivated by this negative result, we formulate a general and flexible assumption: \emph{the data distribution of each client is a mixture of $M$ underlying distributions.} The proposed formulation has the advantage that each client can benefit from knowledge distilled from all other clients' datasets (even if any two clients can be arbitrarily different from each other).
We also show that this assumption encompasses most of the personalized FL approaches previously proposed in the literature.

In our framework, a personalized model is a linear combination of $M$ shared component models. All clients jointly learn the $M$ components, while each client learns its personalized mixture weights.
We show that federated EM-like algorithms can be used for training. 
In particular, we propose \FedEM{} and  \DEM{} for the client-server and the fully decentralized settings, respectively, and we prove convergence guarantees.
Our approach also provides a principled and efficient way to infer personalized models for clients unseen at training time.
Our algorithms can easily be adapted to solve more general problems in a novel framework, which can be seen as a federated extension of the centralized surrogate optimization approach in \cite{mairal2013optimization}.
To the best of our knowledge, our paper is the first work to
propose federated surrogate optimization algorithms with convergence guarantees.

Through extensive experiments on FL benchmark datasets, we show that our approach generally yields models that 1) are on average more accurate, 2) are fairer across clients, and 3) generalize better to unseen clients than state-of-the-art personalized and non-personalized FL approaches.

\textbf{Paper outline.} The rest of the paper is organized as follows. 
In Section~\ref{sec:prblm_formulation} we provide our impossibility result, introduce our main assumptions, and show that several popular personalization approaches can be obtained as special cases of our framework. 
Section~\ref{sec:fedem} describes our algorithms, states their convergence results, and presents our general federated surrogate optimization framework.
Finally, we provide experimental results in Section~\ref{sec:experiments} before concluding in Section~\ref{sec:conclusion}.

\section{Problem Formulation}
\label{sec:prblm_formulation}
We consider a (countable) set $\mathcal T$ of classification (or regression) tasks which represent the set of possible clients.
We will use the terms task and client interchangeably.
Data at  client $t \in \mathcal T$ is generated according to a local distribution $\mathcal{D}_{t}$  over $\mathcal{X} \times \mathcal{Y}$.
Local data distributions $\{\mathcal{D}_{t}\}_{t \in \mathcal T}$ are in general  different, thus it is natural to fit a separate model (hypothesis) $h_{t} \in \mathcal{H}$ to each data distribution~$\mathcal{D}_{t}$.
The goal is then to solve (in parallel) the following optimization problems
\begin{equation}
    \label{eq:main_problem}
    \forall t \in \mathcal T,\quad \minimize_{h_{t} \in \mathcal{H}} \mathcal{L}_{\mathcal{D}_{t}}(h_{t}),
\end{equation}
where $h_{t}: \mathcal{X} \mapsto \Delta^{|\mathcal{Y}|}$ ($\Delta^{D}$ denoting the unitary simplex of dimension $D$), $l: \Delta^{|\mathcal{Y}|} \times \mathcal{Y} \mapsto \mathbb{R}^{+}$ is a loss function,\footnote{
    In the case of (multi-output) regression, we have $h_{t}: \mathcal{X} \mapsto \mathbb{R}^{d}$ for some $d\geq 1$ and $l: \mathbb{R}^{d}\times\mathbb{R}^{d} \mapsto \mathbb{R}^{+}$.
}
and $\mathcal{L}_{\mathcal{D}_{t}}(h_{t})=\E_{(\vec{x},y) \sim \mathcal{D}_{t}} \left[l(h_{t}(\vec{x}), y)\right]$ is the true risk of a model $h_{t}$ under data distribution $\mathcal{D}_{t}$. 
For $(\vec{x}, y) \in \mathcal{X}\times\mathcal{Y}$, we will denote the joint distribution density associated to $\mathcal{D}_{t}$ by $p_{t}(\vec{x}, y)$, and the marginal densities by $p_{t}(\vec{x})$ and $p_{t}(y)$. 

A set of $T$ clients $[T] \triangleq\{1, 2, \dots T\} \subseteq \mathcal T$ participate to the initial training phase; other clients may join the system in a later stage.
We denote by $\mathcal{S}_{t}=\{s_{t}^{(i)}=(\vec{x}_{t}^{(i)},~y_{t}^{(i)})\}_{i=1}^{n_t}$ the dataset at client $t \in [T]$ drawn i.i.d. from $\mathcal{D}_{t}$, and by $n=\sum_{t=1}^T n_t$ the total dataset size.  

The idea of federated learning is to enable each client to benefit from data samples available at other clients in order to get a better estimation of $\mathcal{L}_{\mathcal{D}_{t}}$, and therefore get a model with a better generalization ability to unseen examples. 

\subsection{An Impossibility Result}
\label{sec:impossibility_results}

We start by showing that some assumptions on the local distributions  $p_{t}(\vec{x}, y),~t\in \mathcal T$  are needed for federated learning to be possible, i.e., for each client to be able to take advantage of the data at other clients.
This holds even if all clients participate to the initial training phase (i.e., $\mathcal T = [T]$).

We consider the classic PAC learning framework where we fix a class of models $\mathcal{H}$ and seek a learning algorithm which is guaranteed, for all possible data distributions over $\mathcal{X} \times\mathcal{Y}$, to return with high probability a model with expected error $\epsilon$-close to the best possible error in the class $\mathcal{H}$.
The worst-case sample complexity then refers to the minimum amount of labeled data required by any algorithm to reach a given $\epsilon$-approximation. 

Our impossibility result for FL is based on a reduction to an impossibility result for Semi-Supervised Learning (SSL), which is the problem of learning from a training set with only a small amount of labeled data.
The authors of \cite{BenDavid2008DoesUD} conjectured that, when the quantity of unlabeled data goes to infinity, the worst-case sample complexity of SSL improves over supervised learning at most by a constant factor that only depends on the hypothesis class \cite[Conjecture~4]{BenDavid2008DoesUD}. 
This conjecture was later proved for the realizable case and hypothesis classes of finite VC dimension~\cite[Theorem~1]{Darnstdt2013UnlabeledDD}, even when the marginal distribution over the domain set $\mathcal X$ is known~\cite[Theorem~2]{gopfert2019can}.
\footnote{
    We note that whether the conjecture in \cite{BenDavid2008DoesUD} holds in the agnostic case is still an open problem.
}

In the context of FL, if the marginal distributions $p_{t}\left(\vec{x}\right)$ are identical, but the conditional distributions $p_{t}\left(y|\vec{x}\right)$ can be arbitrarily different, then each client $t$ can learn using: 1) its own local labeled dataset, and 2) the other clients' datasets, but only as unlabeled ones (because their labels have no relevance for~$t$).
The FL problem, with $T$ clients, then reduces to $T$  parallel SSL problems, or more precisely, it is at least as difficult as $T$  parallel SSL problems (because client $t$ has no direct access to the other local datasets but can only learn through the communication exchanges allowed by the FL algorithm).
The SSL impossibility result implies that, without any additional assumption on the local distributions $p_{t}\left(\vec{x}, y\right),~t\in[T]$, any FL algorithm can reduce the sample complexity of client-$t$'s problem in \eqref{eq:main_problem} only by a constant in comparison to local learning, independently of how many other clients participate to training and how large their datasets' sizes are.

\subsection{Learning under a Mixture Model}
Motivated by the above impossibility result, in this work we propose to consider that each local data distribution $\mathcal{D}_{t}$ is a mixture of $M$ underlying distributions $\tilde{\mathcal{D}}_{m},~1\leq m \leq M$, as formalized below.

\begin{assumption}
    \label{assum:mixture}
    There exist $M$ underlying (independent) distributions $\tilde{\mathcal{D}}_{m},~1\leq m \leq M$, such that for $t \in \mathcal{T}$, $\mathcal{D}_{t}$ is mixture of the distributions $\{\tilde{\mathcal{D}}_{m}\}_{m=1}^M$ with weights $\pi_{t}^* = \left[\pi_{t1}^*, \dots, \pi_{tM}^*\right] \in \Delta^{M}$, i.e.
    \begin{equation}
        \label{eq:mixture}
        z_{t} \sim \mathcal{M}(\pi_{t}^*) , \quad \left(\left(\vec{x}_{t}, y_{t}\right)|z_{t}=m \right)\sim \tilde{\mathcal{D}}_{m}, \quad\forall t\in \mathcal{T},
    \end{equation}
    where $\mathcal{M}(\pi)$ is a multinomial (categorical) distribution with parameters $\pi$. 
\end{assumption}

Similarly to what was done above, we use $p_{m}(\vec{x},y)$, $p_{m}(\vec{x})$, and $p_{m}(y)$ to denote the probability distribution densities associated to $\tilde{\mathcal{D}}_{m}$. We further assume that  marginals over $\mathcal X$ are identical.
\begin{assumption}
    \label{assum:uniform_marginal}
    For all $m \in[M]$, we have $p_{m}(\vec{x}) = p(\vec{x})$.
\end{assumption}
Assumption~\ref{assum:uniform_marginal} is not strictly required for our analysis to hold, but, in the most general case, solving Problem~\eqref{eq:main_problem} requires to learn generative models. Instead, under Assumption~\ref{assum:uniform_marginal} we can restrict our attention to discriminative models (e.g., neural
networks).
\footnote{
    A possible way to ensure that Assumption~\ref{assum:uniform_marginal} holds is to use the batch normalization technique from \cite{li2020fedbn} to account for feature shift.
} 
More specifically, we consider a parameterized set of models $\tilde {\mathcal{H}}$ with the following properties.
\begin{assumption}
    \label{assum:logloss}
    $\tilde{\mathcal{H}}=\{h_\theta\}_{\theta\in\mathbb{R}^{d}}$ is a set of hypotheses parameterized by $\theta\in\mathbb{R}^d$, whose convex hull is in $\mathcal H$. For each distribution $\tilde{\mathcal D}_m$ with $m\in[M]$, there exists a hypothesis ${h}_{\theta_{m}^*}$, 
    such that
    \begin{equation}
        \label{eq:log_loss}
        l\left({h}_{\theta_m^*}\!\left({\vec{x}}\right), {y}\right)  = - \log  p_{m}({y}|{\vec{x}}) + c,
    \end{equation}
    where $c\in\mathbb{R}$ is a normalization constant. The function $l(\cdot,\cdot)$ is then the log-loss associated to~$p_{m}({y}|{\vec{x}})$. 
\end{assumption}
We refer to the hypotheses in $\tilde{\mathcal H}$ as \emph{component models} or simply \emph{components}.
We denote by $\Theta^*\in\mathbb{R}^{M\times d}$ the matrix whose $m$-th row is $\theta_{m}^*$, and by  $\Pi^*\in \Delta^{T\times M}$ the matrix whose $t$-th row is $\pi_t^*\in\Delta^{M}$.
Similarly, we will use $\Theta$ and $\Pi$ to denote arbitrary parameters.

\begin{remark}
Assumptions~\ref{assum:uniform_marginal}--\ref{assum:logloss} are mainly technical and are not required for our approach to work in practice.
Experiments in Section~\ref{sec:experiments} show that our algorithms perform well on standard FL benchmark datasets, for which these assumptions do not hold in general. 
\end{remark}

Note that, under the above assumptions, $p_t(\vec x, y)$ depends on $\Theta^*$ and $\pi_t^*$.
Moreover, we can prove (see App.~\ref{proof:general_mtl_framework}) that the optimal local model $h_t^{*} \in \mathcal H$ for client $t$ is a weighted average of models in $\tilde{ \mathcal H}$.
\begin{prop}
    \label{prop:local_model_is_weighted_average_of_components}
    Let $l(\cdot,\cdot)$ be the mean squared error loss, the logistic loss or the cross-entropy loss,  and 
    $\breve\Theta$ and $\breve\Pi$ be a solution of the following optimization problem:
    \begin{equation}
        \label{eq:intermediate_problem}
        \minimize_{\Theta, \Pi} \E_{t\sim D_\mathcal{T} } \E_{(\vec{x},y) \sim \mathcal{D}_{t}} \left[ - \log p_t(\vec x,y | \Theta, \pi_t) \right],
    \end{equation}
    where $D_\mathcal{T}$ is any distribution with support $\mathcal T$. Under Assumptions~\ref{assum:mixture}, \ref{assum:uniform_marginal}, and~\ref{assum:logloss}, the predictors  
    \begin{equation}
        \label{eq:linear_combination}
        h_t^{*} = \sum_{m=1}^{M}\breve \pi_{t m}h_{\breve \theta_{m}}\left(\vec{x}\right), \quad\forall t \in \mathcal{T}
    \end{equation}
    minimize $\mathbb{E}_{(\vec{x}, y)\sim\mathcal{D}_{t}}\left[l(h_t(\vec{x}), y)\right]$ and thus solve Problem~\eqref{eq:main_problem}.
\end{prop}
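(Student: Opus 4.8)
The plan is to read Problem~\eqref{eq:intermediate_problem} as a population maximum-likelihood problem whose minimizer is forced, by Assumption~\ref{assum:mixture}, to recover the true conditional law, and then to extract the Bayes-optimal predictor for each admissible loss. First I would strip the objective down to a conditional likelihood. By Assumption~\ref{assum:uniform_marginal} the model factorizes as $p_t(\vec x, y\mid\Theta,\pi_t)=p(\vec x)\,p_t(y\mid\vec x;\Theta,\pi_t)$, so the $-\log p(\vec x)$ contribution is constant in $(\Theta,\Pi)$ and drops out. Then, up to the parameter-free conditional entropy of the truth, \eqref{eq:intermediate_problem} becomes the minimization of
\begin{equation*}
\E_{t\sim D_\mathcal{T}}\,\E_{\vec x\sim p}\Big[\mathrm{KL}\big(p_t(\cdot\mid\vec x)\,\big\|\,p_t(\cdot\mid\vec x;\Theta,\pi_t)\big)\Big],
\end{equation*}
a nonnegative quantity.

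The key point is that this KL objective attains the value $0$, and does so exactly at the true model: by Assumption~\ref{assum:mixture} the truth is the mixture $p_t(y\mid\vec x)=\sum_m\pi^*_{tm}\,p_m(y\mid\vec x)$, and by Assumption~\ref{assum:logloss} each component $p_m(\cdot\mid\vec x)$ lies in the parameterized family through $\theta_m^*$. Hence the family contains the truth, the minimum is $0$, and any solution $(\breve\Theta,\breve\Pi)$ must make every integrand vanish, giving
\begin{equation*}
\sum_{m=1}^{M}\breve\pi_{tm}\,p_m(y\mid\vec x;\breve\theta_m)=p_t(y\mid\vec x)
\end{equation*}
for $p$-almost every $\vec x$ and every $t$ in the support of $D_\mathcal{T}$; since $\mathcal T$ is countable and $D_\mathcal{T}$ has full support, this is every $t\in\mathcal T$. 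I emphasize that I need equality of \emph{distributions}, not of parameters, so the non-identifiability of mixtures is harmless here.

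It remains to pass from this matched conditional to the predictor~\eqref{eq:linear_combination}, which is where each loss is used concretely. Under Assumption~\ref{assum:logloss} the cross-entropy (and, as its binary special case, the logistic) loss forces $c=0$ and identifies each component output with a conditional law, $h_{\breve\theta_m}(\vec x)=p_m(\cdot\mid\vec x;\breve\theta_m)$; the weighted average $\sum_m\breve\pi_{tm}h_{\breve\theta_m}(\vec x)$ is therefore exactly the matched mixture $p_t(\cdot\mid\vec x)$, which is the Bayes predictor for the log-loss. For the squared loss, Assumption~\ref{assum:logloss} makes each $p_m(\cdot\mid\vec x;\breve\theta_m)$ Gaussian with mean $h_{\breve\theta_m}(\vec x)$, so linearity of the mean under mixing yields $\E[y\mid\vec x]=\sum_m\breve\pi_{tm}h_{\breve\theta_m}(\vec x)$, again the Bayes predictor. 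In both cases $h_t^*$ coincides with the minimizer of $\E_{(\vec x,y)\sim\mathcal{D}_t}[l(h_t(\vec x),y)]$ and lies in $\mathcal H$ (it is a convex combination of elements of $\tilde{\mathcal H}$, and Assumption~\ref{assum:logloss} places their convex hull in $\mathcal H$), so it solves Problem~\eqref{eq:main_problem}.

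The step I expect to be the crux is this last one. The slogan ``the Bayes predictor is linear in the mixture weights'' holds only because each listed loss is paired with a distribution family whose relevant summary --- the full categorical law for cross-entropy, the conditional mean for squared error --- is an \emph{affine} functional of the density; verifying this per loss, and in particular checking for squared error that matching the whole conditional also matches its mean, is the single place where the explicit list of losses is indispensable. A careless argument that tried to equate the \emph{components} $p_m$ rather than only the \emph{mixtures} would fail precisely here, since the optimum need not recover the individual components at all.
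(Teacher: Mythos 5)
Your proof is correct, and its first half coincides with the paper's: the paper's Lemma~\ref{lem:likelihood_entropy_kl} is exactly your realizability-plus-KL argument showing that any solution $(\breve\Theta,\breve\Pi)$ of \eqref{eq:intermediate_problem} reproduces the true conditional law of each client as the mixture $\sum_{m}\breve\pi_{tm}\,p_m(\cdot|\vec{x},\breve\theta_m)$. Where you genuinely diverge is the second half. The paper proceeds via a variational lemma on weighted KL divergences (Lemma~\ref{lem:minimize_convex_combin_kl}): it writes the risk of an arbitrary $h_t$ as a constant plus $\sum_{m}\breve\pi_{tm}\,\mathcal{KL}\bigl(p_m(\cdot|\vec{x},\breve\theta_m)\,\|\,p_{h_t}(\cdot|\vec{x})\bigr)$, introduces a predictor $h_t^\circ$ whose associated law $p_{h_t^\circ}$ equals that mixture, concludes that any risk minimizer $h_t^*$ satisfies $p_{h_t^*}=\sum_{m}\breve\pi_{tm}\,p_m(\cdot|\vec{x},\breve\theta_m)$ \emph{as distributions}, and only then extracts the prediction by integrating $y$ against this identity. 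You instead invoke the pointwise Bayes predictor for each loss (the full conditional law for cross-entropy and logistic, the conditional mean for squared error) and use that this summary is affine under mixing. Your route buys something real in the squared-error case: there the family $\{p_h\}$ consists of unit-covariance Gaussians, so a predictor $h_t^\circ$ whose law equals a Gaussian \emph{mixture} with distinct component means does not exist, and the distributional identity the paper asserts for $h_t^*$ cannot hold literally --- only its mean version survives. Your argument never claims more than equality of conditional means, which is precisely the content of \eqref{eq:linear_combination}, so it closes cleanly the case that the paper's write-up treats loosely, at the modest price of a per-loss verification (which you correctly identify as the crux) in place of the paper's single unified KL computation.
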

Proposition~\ref{prop:local_model_is_weighted_average_of_components} suggests the following approach to solve Problem~\eqref{eq:main_problem}.
First, we estimate the parameters $\breve{\Theta}$ and $\breve{\pi}_{t},~1\leq t \leq T$, by minimizing the empirical version of Problem~\eqref{eq:intermediate_problem} on the training data, i.e., minimizing:
\begin{equation}
    \label{eq:empirical_loss}
    f(\Theta, \Pi) \triangleq - \frac{\log p(\mathcal{S}_{1:T}| \Theta, \Pi)}{n} \triangleq - \frac{1}{n}\sum_{t=1}^{T}\sum_{i=1}^{n_{t}} \log p(s_t^{(i)}|\Theta, \pi_{t}),
\end{equation}
which is the (negative) likelihood of the probabilistic model \eqref{eq:mixture}.
\footnote{
    As the distribution $\mathcal D_{\mathcal T}$ over tasks in Proposition~\ref{prop:local_model_is_weighted_average_of_components} is arbitrary, any positively weighted sum of clients' empirical losses could be considered.
}
Second, we use \eqref{eq:linear_combination} to get the client predictor for the $T$ clients present at training time.
Finally, to deal with a client $t_{\text{new}}\notin [T]$ not seen during training, we keep the mixture component models fixed and simply choose the weights $\pi_{t_{\text{new}}}$ that maximize the likelihood of the client data and get the client predictor via \eqref{eq:linear_combination}.
 
\subsection{Generalizing Existing Frameworks}
\label{sec:generalizing}
Before presenting our federated learning algorithms in Section~\ref{sec:fedem}, we show that the generative model in Assumption~\ref{assum:mixture} extends some popular  multi-task/personalized FL formulations in the literature.

\textbf{Clustered Federated Learning~\cite{sattler2020clustered,ghosh2020efficient}}~assumes that each client belongs to one among $C$ clusters and proposes that all clients in the same cluster learn the same model.
Our framework recovers this scenario considering  $M=C$ and $\pi_{t c}^{*}=1$ if task (client) $t$ is in cluster $c$ and $\pi_{t c}^{*}=0$ otherwise.

\textbf{Personalization via model interpolation~\cite{mansour2020three,deng2020adaptive}} relies on learning a global model $h_{\textrm{glob}}$ and $T$ local models $h_{\textrm{loc},t}$, and then using at each client the linear interpolation $h_t = \alpha_t h_{\textrm{loc},t} + (1-\alpha_t) h_{\textrm{glob}}$.
Each client model can thus be seen as a linear combination of $M=T+1$ models $h_m = h_{\textrm{loc},m}$ for $ m \in [T]$ and $h_0 = h_{\textrm{glob}}$ with specific weights $\pi^{*}_{tt} = \alpha_t$, $\pi^{*}_{t0} = 1-\alpha_t$, and $\pi^{*}_{tt'} = 0$ for $t'\in [T] \setminus \{t\}$.

\textbf{Federated MTL via task relationships.} The authors of~\cite{smith2017federated} proposed to learn personalized models by solving the following optimization problem inspired from classic MTL formulations:
\begin{equation}
    \label{eq:virginia}
    \min_{W,\Omega}~ \sum_{t=1}^{T}\sum_{i=1}^{n_{t}}l(h_{w_{t}}(\vec{x}_{t}^{(i)}), y_{t}^{(i)}) +  \lambda \tr\left(W\Omega W^{\intercal}\right),
\end{equation}
where $h_{w_t}$ are linear predictors parameterized by the rows of matrix $W$ and 
the matrix $\Omega$ captures task relationships (similarity). 
This formulation is motivated by the alternating structure optimization method (ASO)~\cite{JMLR:v6:ando05a,Zhou2011ClusteredMTLviaASO}.
In App.~\ref{sec:general_mtl_framework}, we show that, when predictors $h_{\theta^{*}_m}$ are linear and have bounded norm, our framework leads to the same ASO formulation that motivated Problem~\eqref{eq:virginia}. 
Problem~\eqref{eq:virginia} can also be justified by probabilistic priors~\cite{zhang2010convex} or graphical models~\cite{lauritzen1996graphical}~(see \cite[App. B.1]{smith2017federated}).
Similar considerations hold for our framework (see again App.~\ref{sec:general_mtl_framework}). 
Reference~\cite{pmlr-v108-zantedeschi20a} extends the approach in~\cite{smith2017federated} by letting each client learn a personalized model as a weighted combination of $M$ \emph{known} hypotheses.
Our approach is more general and flexible as clients learn both the weights and the hypotheses.
Finally, other personalized FL algorithms, like \pFedMe~\cite{dinh2020personalized}, \texttt{FedU}~\cite{dinh2021fedu}, and those studied in \cite{hanzely2020federated} and in \cite{hanzely2020lower}, can be framed as special cases of formulation \eqref{eq:virginia}.
Their assumptions can thus also be seen as a particular case of our framework.

\section{Federated Expectation-Maximization}
\label{sec:fedem}
\subsection{Centralized Expectation-Maximization}

Our goal is to estimate the optimal components' parameters $\Theta^{*} = \left(\theta^{*}_{m}\right)_{1\leq m \leq M}$ and mixture weights $\Pi^*=(\pi^*_t)_{1\leq t \leq T}$ by minimizing  the negative log-likelihood $f(\Theta, \Pi)$ in \eqref{eq:empirical_loss}.
A natural approach to solve such non-convex problems is the Expectation-Maximization algorithm (EM), which alternates between two steps.
Expectation steps update the distribution (denoted by $q_t$) over the latent variables~$z_t^{(i)}$ for every data point $s_t^{(i)}=(\vec{x}_{t}^{(i)},y_{t}^{(i)})$ given the current estimates of the parameters $\left\{\Theta, \Pi\right\}$.
Maximization steps update the parameters $\left\{\Theta, \Pi\right\}$ by maximizing the expected log-likelihood, where the expectation is computed according to the current  latent variables' distributions.

The following proposition provides the EM updates for our problem (proof in  App.~\ref{proof:centralized_em}). 
\begin{prop}
    \label{prop:em}
    Under Assumptions~\ref{assum:mixture} and~\ref{assum:uniform_marginal}, at the $k$-th iteration the EM algorithm updates parameter estimates through the following steps:
    \begin{flalign}
        \textbf{E-step:} && q^{k+1}_{t}(z_{t}^{(i)}=m) & \propto  \pi_{t m}^{k} \cdot \exp\left(-l(h_{\theta_{m}^{k}}(\vec x_t^{(i)}), y_{t}^{(i)})\right), & t \in [T],~ m \in [M],~ i \in [n_{t}]~~
        \label{eq:e_step}
    \end{flalign}
        \begin{flalign}
        \textbf{M-step:} &&\quad~\pi^{k+1}_{t m} & = \frac{\sum_{i=1}^{n_{t}}q^{k+1}_{t}(z_{t}^{(i)}=m)}{n_{t}}, &   t \in [T],~\ m \in [M]  \label{eq:update_pi}
        \\
        &&\theta^{k+1}_{m} & \in \argmin_{\theta\in\mathbb{R}^{d}} \sum_{t=1}^{T}\sum_{i=1}^{n_{t}} q^{k+1}_{t}(z_{t}^{(i)}=m)  l\big(h_{\theta}(\vec{x}_{t}^{(i)}), y_{t}^{(i)}\big), & m \in [M]   \label{eq:update_h}
    \end{flalign}
\end{prop}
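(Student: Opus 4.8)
The plan is to carry out the textbook expectation-maximization derivation, specialized to the generative model of Assumption~\ref{assum:mixture} and simplified using Assumptions~\ref{assum:uniform_marginal} and~\ref{assum:logloss}. First I would write down the complete-data density of a single sample together with its latent label. By Assumption~\ref{assum:mixture} the pair $(s_t^{(i)}, z_t^{(i)}=m)$ has density $\pi_{tm}\,p_m(\vec{x}_t^{(i)})\,p_m(y_t^{(i)}\mid\vec{x}_t^{(i)})$; Assumption~\ref{assum:uniform_marginal} replaces $p_m(\vec{x})$ by the $m$-independent factor $p(\vec{x})$, and Assumption~\ref{assum:logloss} rewrites $p_m(y\mid\vec{x})=\exp\!\big(c - l(h_{\theta_m}(\vec{x}),y)\big)$. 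Hence
\[
p\big(s_t^{(i)}, z_t^{(i)}=m \mid \Theta, \pi_t\big) = \pi_{tm}\,p(\vec{x}_t^{(i)})\,e^{c}\,\exp\!\big(-l(h_{\theta_m}(\vec{x}_t^{(i)}), y_t^{(i)})\big).
\]

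For the \textbf{E-step}, I would set $q_t^{k+1}$ to the posterior over the latent variable at the current iterate $(\Theta^k,\Pi^k)$, namely $q_t^{k+1}(z_t^{(i)}=m) = p(z_t^{(i)}=m \mid s_t^{(i)}, \Theta^k,\pi_t^k)$, obtained by normalizing the joint above over $m$. The factors $p(\vec{x}_t^{(i)})$ and $e^c$ are common to all components and so cancel against the normalizer, leaving exactly the claimed proportionality of~\eqref{eq:e_step}, $q_t^{k+1}(z_t^{(i)}=m)\propto \pi_{tm}^k\exp(-l(h_{\theta_m^k}(\vec{x}_t^{(i)}), y_t^{(i)}))$.

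For the \textbf{M-step}, I would maximize the expected complete-data log-likelihood
\[
Q(\Theta,\Pi;\Theta^k,\Pi^k) = \sum_{t=1}^{T}\sum_{i=1}^{n_t}\sum_{m=1}^{M} q_t^{k+1}(z_t^{(i)}=m)\,\log p\big(s_t^{(i)}, z_t^{(i)}=m \mid \Theta, \pi_t\big),
\]
which lower-bounds the data log-likelihood and is tight at the E-step iterate by Jensen's inequality. The objective separates cleanly. Its dependence on each $\theta_m$ is only through the term $-\sum_{t,i} q_t^{k+1}(z_t^{(i)}=m)\,l(h_{\theta_m}(\vec{x}_t^{(i)}), y_t^{(i)})$ (the $p(\vec{x})$ and $c$ contributions are constant in $\theta_m$), so maximizing over $\theta_m$ is equivalent to the weighted risk minimization in~\eqref{eq:update_h}. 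Its dependence on each row $\pi_t$ is only through $\sum_{i,m} q_t^{k+1}(z_t^{(i)}=m)\log\pi_{tm}$, maximized over the simplex. I would solve this with a Lagrange multiplier: stationarity gives $\pi_{tm}\propto\sum_i q_t^{k+1}(z_t^{(i)}=m)$, and the constraint $\sum_m\pi_{tm}=1$ together with $\sum_m q_t^{k+1}(z_t^{(i)}=m)=1$ pins the normalizer to $n_t$, yielding~\eqref{eq:update_pi}.

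The argument is essentially routine, so there is no serious analytic obstacle; the step I would be most careful about is the E-step cancellation. It is precisely Assumptions~\ref{assum:uniform_marginal} and~\ref{assum:logloss} that make the shared marginal $p(\vec{x})$ and the normalization constant $c$ factor out uniformly across components, so that the posterior---and hence every subsequent update---depends on the data only through the per-component losses $l(h_{\theta_m}(\vec{x}), y)$ rather than through the unknown generative densities $p_m(\vec{x},y)$. Without the common marginal, the surviving $p_m(\vec{x})$ factors would not cancel and the clean discriminative update would be lost.
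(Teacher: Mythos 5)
Your proposal is correct and follows essentially the same route as the paper's proof: the E-step is the posterior computation in which Assumption~\ref{assum:uniform_marginal} (together with the $m$-independent constant $c$ from Assumption~\ref{assum:logloss}) lets the shared factors $p(\vec{x})$ and $e^{c}$ cancel, and the M-step separately maximizes the $\log \pi_{tm}$ and weighted-loss parts of the expected complete-data log-likelihood, exactly yielding \eqref{eq:update_pi} and \eqref{eq:update_h}. The only cosmetic difference is that the paper packages the derivation via the evidence lower bound $\mathfrak{L}(\Theta,\Pi,Q_{1:T})$, whose gap to the log-likelihood is a sum of KL divergences; this is equivalent to your $Q$-function formulation because the entropy of $q_t$ is constant in $(\Theta,\Pi)$ (note that it is the ELBO, not $Q$ itself, that is tight at the E-step iterate, a harmless imprecision in your write-up).
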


The EM updates in Proposition~\ref{prop:em} have a natural interpretation. In the E-step, given current component models $\Theta^k$ and mixture weights $\Pi^k$, \eqref{eq:e_step} updates the a-posteriori probability $q_t^{k+1}(z_t^{(i)}=m)$  that point $s_t^{(i)}$ of client $t$ was drawn from the $m$-th distribution based on the current mixture weight $\pi_{tm}^k$ and on how well the corresponding component $\theta^k_m$ classifies $s_t^{(i)}$. 
The M-step consists of two updates under fixed probabilities $q_t^{k+1}$.
First, \eqref{eq:update_pi} updates the mixture weights $\pi_t^{k+1}$ to reflect the prominence of each distribution $\tilde{\mathcal D}_m$ in $\mathcal{S}_t$ as given by $q_t^{k+1}$.
Finally, \eqref{eq:update_h} updates the components' parameters $\Theta^{k+1}$ by solving $M$ independent, weighted empirical risk minimization problems with weights given by $q_t^{k+1}$. 
These weights aim to construct an unbiased estimate of the true risk over each underlying distribution $\tilde{\mathcal{D}}_{m}$ using only points sampled from the client mixtures, similarly to importance sampling strategies used to learn from data with sample selection bias \cite{Sugiyama_sample_bias,cortes_sample_bias1,cortes_sample_bias2,weightedERM}.

\subsection{Client-Server Algorithm}
\label{sec:centralized}

Federated learning aims to train machine learning models directly on the clients, without exchanging raw data, and thus we should run EM while assuming that only client $t$ has access to dataset $\mathcal S_t$.
The E-step~\eqref{eq:e_step} and the $\Pi$ update \eqref{eq:update_pi} in the M-step operate separately on each local dataset $\mathcal S_t$ and can thus be performed locally at each client $t$.
On the contrary, the $\Theta$ update \eqref{eq:update_h} requires interaction with other clients, since the computation spans all data samples $\mathcal{S}_{1:T}$. 

In this section, we consider a client-server setting, in which each client $t$ can communicate only with a centralized server (the orchestrator) and wants to learn components' parameters $\Theta^{*} = \left(\theta_{m}^{*}\right)_{1\leq m \leq M}$ and its own mixture weights $\pi_{t}^{*}$. 

We propose the algorithm \FedEM{} for \emph{Federated Expectation-Maximization} (Alg.~\ref{alg:fed_em_short}).
\FedEM{} proceeds through communication rounds similarly to most FL algorithms including  \FedAvg~\cite{mcmahan2017communication}, \FedProx~\cite{Sahu2018OnTC}, SCAFFOLD~\cite{karimireddy2020scaffold},  and \pFedMe~\cite{dinh2020personalized}.
At each round, 1) the central server broadcasts the (shared) component models to the clients, 2) each client locally updates components and its personalized mixture weights, and 3) sends the updated components back to the server, 4) the server aggregates the updates. 
The local update performed at client $t$ consists in performing the steps in~\eqref{eq:e_step} and~\eqref{eq:update_pi} and updating the local estimates of $\theta_m$ through a solver which approximates the exact minimization in~\eqref{eq:update_h} using only the local dataset $\mathcal S_t$ (see line~\ref{line:fedEM_local_solver_step}).
\FedEM{} can operate with different local solvers---even different across clients---as far as they satisfy some local improvement guarantees (see the discussion in App.~\ref{app:black_box_solver}).
In what follows, we restrict our focus on the practically important case where the local solver performs  multiple stochastic gradient descent updates (local SGD~\cite{stich2018local}). 
\begin{algorithm}[t]
    \SetKwFunction{LocalSolver}{LocalSolver}
    \SetKwInOut{Input}{Input}
    \SetKwInOut{Output}{Output}
    \SetKw{Iterations}{iterations}
    \SetKw{Sample}{sample}
    \SetKw{Tasks}{tasks}
    \SetKw{Component}{component}
    \SetKw{InParallel}{in parallel}
    \SetKw{Server}{server}
    \SetKw{Clients}{clients}
    \SetKw{Client}{client}
    \SetKw{Broadcasts}{broadcasts}
    \SetKw{Sends}{sends}
    \SetKw{Over}{over}
    \SetKw{Tothe}{to the}
    \SetKwProg{Fn}{Function}{:}{}
    \SetAlgoLined
    \Input{ Data $\mathcal{S}_{1:T}$; number of mixture distributions $M$; number of communication rounds $K$}
    \Output{ $\theta^{K}_{m},~~m\in[M]$}
    \For{\Iterations $k=1, \dots, K$}{
        \Server \Broadcasts $\theta^{k-1}_{m},~1\leq m \leq M$, \Tothe $T$ \Clients\;
        \For{\Tasks $t=1, \dots, T$ \InParallel \Over $T$ \Clients}{
            \For{\Component $m=1, \dots, M$}{
                update $q^{k}_{t}(z_{t}^{(i)}=m)$ as in \eqref{eq:e_step},~ $\forall i \in \{1, \dots, n_t\}$\;
                update $\pi_{t m}^{k}$ as in \eqref{eq:update_pi}\;
                $\theta_{m,t}^{k} \gets$ \LocalSolver{$m$, $\theta^{k-1}_{m}$, $q_{t}^{k}$, $\mathcal{S}_{t}$}\; \label{line:fedEM_local_solver_step}
            }
            \Client $t$ \Sends $\theta_{m,t}^{k},~1\leq m \leq M$, \Tothe \Server\; 
        }
        \For{\Component $m=1, \dots, M$}{
            $\theta_{m}^{k} \gets \sum_{t=1}^{T}\frac{n_{t}}{n} \times \theta_{m,t}^{k}$\; \label{line:fedEM_aggregation}
        }
    }
    \caption{\FedEM{} (see also the more detailed Alg.~\ref{alg:fed_em} in App.~\ref{app:alg_centralized})}
    \label{alg:fed_em_short}
\end{algorithm}
Under the following standard assumptions (see e.g., \cite{wang2020tackling}), \FedEM{} converges to a stationary point of $f$. 
Below, we use the more compact notation $l(\theta; s_{t}^{(i)}) \triangleq l(h_{\theta}(\vec{x}_{t}^{(i)}), y_{t}^{(i)})$.
\begin{assumption}
    \label{assum:bounded_f}
    The negative log-likelihood $f$ is bounded below by $f^{*} \in \mathbb{R}$.
\end{assumption}
\begin{assumption}
    \label{assum:smoothness}
    (Smoothness)
    For all $t\in[T]$ and $i\in[n_{t}]$, the function $\theta \mapsto l(\theta; s_{t}^{(i)})$ is $L$-smooth and twice continuously differentiable.  
\end{assumption}
\begin{assumption}
    \label{assum:finitie_variance}
    (Unbiased gradients and bounded variance)
    Each client $t\in [T]$ can sample a random batch  $\xi$ from $\mathcal S_t$ and compute an unbiased estimator $\textsl{g}_t(\theta, \xi)$ of the local gradient with bounded variance, i.e.,  $\mathbb{E}_{\xi}[\textsl{g}_t(\theta, \xi)] = \frac{1}{n_t}\sum_{i=1}^{n_t} \nabla_{\theta} l(\theta; s_{t}^{(i)})$ and 
    $\mathbb{E}_{\xi}\|\textsl{g}_t(\theta, \xi) - \frac{1}{n_{t}}\sum_{i=1}^{n_{t}}\nabla_{\theta} l(\theta; s_{t}^{(i)})\|^{2} \leq \sigma^{2}$.
\end{assumption}
\begin{assumption}
    \label{assum:bounded_gradient}
    (Bounded gradient)
    There exists a constant $B>0$, such that for all $t\in[T]$ and $i\in[n_{t}]$, the function $\left\|\theta \mapsto l(\theta; s_{t}^{(i)})\right\| \leq B$.  
\end{assumption}
\begin{assumption}
    \label{assum:bounded_dissimilarity}
    (Bounded dissimilarity)
    There exist $\beta$ and $G$ such that for any set of weights $\alpha\in\Delta^{M}$:
    \[
        \sum_{t=1}^{T}\frac{n_{t}}{n} \Big\|\frac{1}{n_{t}}\sum_{i=1}^{n_{t}}\sum_{m=1}^{M}\alpha_{m} \cdot \nabla l(\theta; s_{t}^{(i)})\Big\|^{2} \leq G^{2} + \beta^{2} \Big\|\frac{1}{n}\sum_{t=1}^{T} \sum_{i=1}^{n_{t}}\sum_{m=1}^{M}\alpha_{m} \cdot \nabla l(\theta; s_{t}^{(i)}) \Big\|^{2}.
    \]
\end{assumption}
Assumption \ref{assum:bounded_dissimilarity} limits the level of dissimilarity of the different tasks, similarly to what is done in \cite{wang2020tackling}.
\begin{thm}
    \label{thm:centralized_convergence}
    Under Assumptions~\ref{assum:mixture}--\ref{assum:bounded_dissimilarity}, when clients use SGD as local solver with learning rate $\eta =\frac{a_{0}}{\sqrt{K}}$, after a large enough number of communication rounds $K$, \FedEM's iterates  satisfy:
    \begin{equation}
        \label{eq:theta_pi_convergence}
        \frac{1}{K} \sum_{k=1}^K \mathbb{E}\left\|\nabla _{\Theta} f\left(\Theta^{k}, \Pi^{k}\right)\right\|^{2}_{F} \leq \mathcal{O}\!\left(\frac{1}{\sqrt{K}}\right),    \qquad
    \frac{1}{K} \sum_{k=1}^{K} \Delta_{\Pi} f(\Theta^k,\Pi^k) \leq \mathcal{O}\!\left(\frac{1}{K^{3/4}}\right),    
    \end{equation}
where the expectation is over the random batches samples, and $\Delta_{\Pi} f(\Theta^k,\Pi^k) \triangleq f\left(\Theta^{k}, \Pi^{k}\right) - f\left(\Theta^{k}, \Pi^{k+1}\right) \ge 0 $.
\end{thm}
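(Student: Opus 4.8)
The plan is to recognize the EM iteration of Proposition~\ref{prop:em} as an instance of block-wise \emph{majorization--minimization}, and to analyze \FedEM{} as a federated, inexact version of it. At iteration $k$ the E-step posteriors $q^{k+1}$ define a surrogate $g_k(\Theta,\Pi)$ --- the usual Jensen upper bound on the negative log-likelihood --- that majorizes $f$ everywhere and is tangent to it at the current iterate, i.e. $g_k(\Theta^k,\Pi^k)=f(\Theta^k,\Pi^k)$ and $\nabla g_k=\nabla f$ there. The M-step then minimizes $g_k$ blockwise: the update \eqref{eq:update_pi} minimizes it \emph{exactly} over $\Pi$ at fixed $\Theta^k$, whereas the $\Theta$-update \eqref{eq:update_h} is carried out only \emph{approximately}, by running several local SGD steps of the weighted ERM on each client and then averaging. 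The whole theorem should then be deduced from a single per-round descent inequality for $f$, obtained by combining the exact $\Pi$-decrease with the inexact $\Theta$-decrease.

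First I would split the one-round change of $f$ into two pieces, $f(\Theta^k,\Pi^k)-f(\Theta^{k+1},\Pi^{k+1})=\Delta_\Pi f(\Theta^k,\Pi^k)+\big[f(\Theta^k,\Pi^{k+1})-f(\Theta^{k+1},\Pi^{k+1})\big]$, where the first term is nonnegative because the M-step minimizes the tangent majorant $g_k$ over $\Pi$. For the second piece (the $\Theta$ move) I would establish a local-SGD descent lemma: using $L$-smoothness (Assumption~\ref{assum:smoothness}), unbiasedness and bounded variance (Assumption~\ref{assum:finitie_variance}), and bounded dissimilarity (Assumption~\ref{assum:bounded_dissimilarity}) to control the client drift across the multiple local steps and the averaging in line~\ref{line:fedEM_aggregation}, one gets $\mathbb{E}\,f(\Theta^{k+1},\Pi^{k+1})\le f(\Theta^k,\Pi^{k+1})-c\,\eta\,\big\|\nabla_\Theta f(\Theta^k,\Pi^{k+1})\big\|_F^2+C\eta^2$ for suitable constants. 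Telescoping over $k=1,\dots,K$ and using that $f$ is bounded below by $f^*$ (Assumption~\ref{assum:bounded_f}) yields the master estimate $\sum_{k=1}^K\Delta_\Pi f(\Theta^k,\Pi^k)+c\,\eta\sum_{k=1}^K\mathbb{E}\big\|\nabla_\Theta f(\Theta^k,\Pi^{k+1})\big\|_F^2\le f(\Theta^1,\Pi^1)-f^*+C\eta^2K$.

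From this single inequality both bounds follow once $\eta=a_0/\sqrt{K}$ is chosen. Dropping the nonnegative $\Delta_\Pi$ sum, dividing by $c\eta K$, and converting the gradient from its evaluation point $\Pi^{k+1}$ back to $\Pi^k$ (via smoothness, using that $\|\Pi^{k+1}-\Pi^k\|^2$ is controlled by $\Delta_\Pi f$) gives $\frac1K\sum_k\mathbb{E}\|\nabla_\Theta f(\Theta^k,\Pi^k)\|_F^2=\mathcal O(1/\sqrt K)$, the first claim. For the second claim I would instead drop the nonnegative gradient sum to obtain $\sum_k\Delta_\Pi f\le f(\Theta^1,\Pi^1)-f^*+C\eta^2K$; the residual $\eta^2K$ together with the cross-term incurred when charging the inexact $\Theta$-step against the $\Pi$-decrease is what degrades the rate, and balancing these contributions under $\eta=a_0/\sqrt K$ produces the $\mathcal O(1/K^{3/4})$ bound on $\frac1K\sum_k\Delta_\Pi f$.

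The main obstacle is the federated $\Theta$-descent lemma: unlike centralized EM, each component is updated by several \emph{stochastic} local steps on \emph{heterogeneous} clients and then averaged, so the aggregated direction is a biased, noisy estimate of the full gradient $\nabla_\Theta f$. Quantifying this bias --- the client-drift error --- and showing it is of order $\eta^2$, so that it is absorbed by the $C\eta^2K$ term after summation, is the technically heaviest step, and it is precisely where Assumptions~\ref{assum:smoothness}--\ref{assum:bounded_dissimilarity} enter. I would in fact prove this descent inequality once, abstractly, for a generic federated first-order surrogate, and then recover Theorem~\ref{thm:centralized_convergence} as the special case in which the surrogate is the EM majorant $g_k$.
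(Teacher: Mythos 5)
Your overall architecture coincides with the paper's: \FedEM{} is analyzed as an instance of federated surrogate (majorization--minimization) optimization, a per-round descent inequality is proved once for generic partial first-order surrogates, telescoped using Assumption~\ref{assum:bounded_f}, and Theorem~\ref{thm:centralized_convergence} is recovered as the special case where the surrogate is the EM majorant (this is exactly Theorem~\ref{thm:centralized_convergence_bis} plus Lemmas~\ref{lem:g_is_smooth}--\ref{lem:compute_kl_pi_decentralized} in the paper). However, your central descent lemma is anchored at the wrong point, and this is a genuine gap. You claim
\begin{equation*}
\mathbb{E}\, f\left(\Theta^{k+1},\Pi^{k+1}\right) \le f\left(\Theta^{k},\Pi^{k+1}\right) - c\,\eta\,\left\|\nabla_\Theta f\left(\Theta^{k},\Pi^{k+1}\right)\right\|_F^2 + C\eta^2 .
\end{equation*}
The $\Theta$-update is a stochastic descent step on the surrogate $g^{k+1}$, which is tangent to $f$ at $(\Theta^{k},\Pi^{k})$, \emph{not} at $(\Theta^{k},\Pi^{k+1})$. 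Majorization gives $f(\Theta^{k+1},\Pi^{k+1})\le g^{k+1}(\Theta^{k+1},\Pi^{k+1})$, which is usable, but to arrive at $f(\Theta^{k},\Pi^{k+1})$ on the right-hand side you would also need $g^{k+1}(\Theta^{k},\Pi^{k+1})\le f(\Theta^{k},\Pi^{k+1})$ --- the \emph{reverse} of majorization, false away from the tangency point. Your proposed repair, transporting the gradient from $\Pi^{k+1}$ back to $\Pi^{k}$ ``via smoothness,'' is unavailable: Assumption~\ref{assum:smoothness} gives smoothness in $\Theta$ only, and $f$ is not gradient-Lipschitz in $\Pi$ (the $-\log \pi_{tm}$ terms have unbounded derivatives near the boundary of the simplex). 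The correct chain, which is the paper's Lemma~\ref{lem:centralized_case_bound_gradient_g_theta}, keeps every evaluation at the old iterate: using property 3 of Definition~\ref{def:partial_first_order_surrogate} (Lemma~\ref{lem:compute_kl_pi_decentralized}) and tangency at $(\Theta^k,\Pi^k)$,
\begin{align*}
\mathbb{E}\, f\left(\Theta^{k+1},\Pi^{k+1}\right) &\le \mathbb{E}\, g^{k+1}\left(\Theta^{k+1},\Pi^{k+1}\right) = \mathbb{E}\, g^{k+1}\left(\Theta^{k+1},\Pi^{k}\right) - \sum_{t=1}^{T}\tfrac{n_t}{n}\mathcal{KL}\left(\pi_t^{k+1},\pi_t^{k}\right)\\
&\le f\left(\Theta^{k},\Pi^{k}\right) - \tfrac{\eta}{4}\left\|\nabla_\Theta f\left(\Theta^{k},\Pi^{k}\right)\right\|_F^2 - \sum_{t=1}^{T}\tfrac{n_t}{n}\mathcal{KL}\left(\pi_t^{k+1},\pi_t^{k}\right) + \mathcal{O}(\eta^2).
\end{align*}

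This flaw also makes your derivation of the second bound internally inconsistent. If your master inequality were true, then with $\eta=a_0/\sqrt{K}$, dropping the gradient sum would give $\sum_{k}\Delta_\Pi f \le f(\Theta^1,\Pi^1)-f^* + Ca_0^2 = \mathcal{O}(1)$, i.e.\ an average of $\mathcal{O}(1/K)$: there would be nothing to ``balance'' and the exponent $3/4$ would never appear. The $K^{-3/4}$ rate comes from a mechanism your decomposition hides. Write $\Delta_\Pi f(\Theta^k,\Pi^k) = T_1^k + T_2^k$ with $T_1^k = f(\Theta^k,\Pi^k)-f(\Theta^{k+1},\Pi^{k+1})$ and $T_2^k = f(\Theta^{k+1},\Pi^{k+1})-f(\Theta^k,\Pi^{k+1})$. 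The sum of $T_1^k$ telescopes to $\mathcal{O}(1)$, but $T_2^k$ --- the change of $f$ along the $\Theta$-move evaluated at the \emph{new} $\Pi$ --- can be \emph{positive}, precisely because the $\Theta$-step is not a descent step for $\Theta\mapsto f(\Theta,\Pi^{k+1})$. It is controlled only through $2L$-smoothness of $f$ in $\Theta$ and Cauchy--Schwarz:
\begin{equation*}
\frac{1}{K}\sum_{k=1}^{K}\mathbb{E}\left[T_2^k\right] \le \frac{1}{K}\sqrt{\sum_{k=1}^{K}\mathbb{E}\left\|\nabla_\Theta f\right\|_F^2}\cdot\sqrt{\sum_{k=1}^{K}\mathbb{E}\left\|\Theta^{k+1}-\Theta^{k}\right\|_F^2} + \frac{2L^2}{K}\sum_{k=1}^{K}\mathbb{E}\left\|\Theta^{k+1}-\Theta^{k}\right\|_F^2,
\end{equation*}
where $\sum_k \mathbb{E}\|\nabla_\Theta f\|_F^2 = \mathcal{O}(\sqrt{K})$ (from the first claim) and $\sum_k \mathbb{E}\|\Theta^{k+1}-\Theta^k\|_F^2 = \mathcal{O}(1)$ (each term is $\mathcal{O}(\eta^2)(1+\mathbb{E}\|\nabla_\Theta f\|_F^2)$), yielding $\mathcal{O}(K^{1/4})\cdot\mathcal{O}(1)/K = \mathcal{O}(K^{-3/4})$. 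This Cauchy--Schwarz pairing is the missing ingredient in your proposal, and it is where the paper's proof spends its effort after the descent lemma.
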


Theorem~\ref{thm:centralized_convergence} (proof in App.~\ref{proof:centralized}) expresses the convergence of both sets of parameters ($\Theta$ and $\Pi$) to a stationary point of $f$.
Indeed, the gradient of $f$ with respect to $\Theta$ becomes arbitrarily small (left inequality in~\eqref{eq:theta_pi_convergence}) and the update in Eq.~\eqref{eq:update_pi} leads to arbitrarily small improvements of $f$ (right inequality in~\eqref{eq:theta_pi_convergence}). 

We conclude this section observing that \FedEM{} allows an \emph{unseen client}, i.e., a client $t_{\text{new}}\notin [T]$ arriving after the distributed training procedure, to learn its personalized model.
The client simply retrieves the learned components' parameters $\Theta^K$ and computes its personalized weights $\pi_{t_{\text{new}}}$ (starting for example from a uniform initialization) through one E-step~\eqref{eq:e_step} and the first update in the M-step~\eqref{eq:update_pi}.

\subsection{Fully Decentralized Algorithm}
\label{sec:decentralized}
In some cases, clients may want to communicate directly in a peer-to-peer fashion instead of relying on the central server mediation \cite[see][Section 2.1]{kairouz2019advances}. 
In fact, fully decentralized schemes  may provide stronger privacy guarantees~\cite{cyffers2021privacy} and speed-up training as they better use communication resources~\cite{lian17,marfoq20neurips} and  reduce the effect of stragglers~\cite{neglia19infocom}.
For these reasons, they have attracted significant interest recently in the machine learning community \cite{lian17,Vanhaesebrouck2017a,Lian2018,tang18a,Bellet2018a,neglia2020,marfoq20neurips,Koloskova2020AUT}.
We refer to~\cite{nedic2018network} for a comprehensive survey of fully decentralized optimization (also known as consensus-based optimization), and to \cite{Koloskova2020AUT} for a unified theoretical analysis of decentralized SGD. 

We propose \DEM{} (Alg.~\ref{alg:d_em} in App.~\ref{app:alg_decentralized}), a \emph{fully decentralized version} of our federated expectation maximization algorithm.
As in \FedEM, the M-step for $\Theta$ update is replaced by an approximate maximization step consisting of local updates.
The global aggregation step in \FedEM{} (Alg.~\ref{alg:fed_em_short}, line~\ref{line:fedEM_aggregation}) is replaced by a partial aggregation step, where each client computes a weighted average of its current components and those of a subset of clients (its \emph{neighborhood}), which may vary over time. 
The convergence of decentralized optimization schemes requires certain assumptions to guarantee that each client can influence the estimates of other clients over time.
In our paper, we consider the general assumption in \cite[Assumption~4]{Koloskova2020AUT} (restated as Assumption~\ref{assum:expected_consensus_rate} in App.~\ref{app:full_decentralized_assumptions} for completeness).
For instance, this assumption is satisfied if the graph of clients' communications is strongly connected every $\tau$ rounds.

\DEM{} converges to a stationary point of $f$ (formal statement  in App.~\ref{app:full_decentralized_assumptions} and proof in App.~\ref{proof:decentralized}).
\begin{thm}[Informal]
    \label{thm:decentralized_convergence}
    In the same setting of Theorem~\ref{thm:centralized_convergence} and under the additional Assumption~\ref{assum:expected_consensus_rate}, \DEM's individual estimates $(\Theta_{t}^{k})_{1\leq t \leq T}$ converge to a common value $\bar \Theta^{k}$.
    Moreover, $\bar \Theta^{k}$ and $\Pi^k$ converge to a stationary point of $f$.
\end{thm}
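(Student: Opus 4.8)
The plan is to mirror the proof of Theorem~\ref{thm:centralized_convergence} by casting \DEM{} inside the same federated surrogate optimization framework, and then to add a consensus argument that controls the disagreement between the clients' local component estimates. As in the centralized case, the E-step~\eqref{eq:e_step} and the $\Pi$-update~\eqref{eq:update_pi} are executed purely locally and together build, at each round $k$, a surrogate $g^k(\Theta,\Pi)$ of $f$ (the expected complete negative log-likelihood under the current $q^{k}$) that majorizes $f$ and agrees with it at the current iterate. The only operation requiring communication is the approximate minimization of $g^k$ over $\Theta$, which in \DEM{} is realized by local SGD steps followed by the partial (neighborhood) averaging in place of the global aggregation of Alg.~\ref{alg:fed_em_short}, line~\ref{line:fedEM_aggregation}. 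Since each client retains its own personalized weights $\pi_t$, consensus is required only for the shared components $\Theta$.

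First I would define the weighted average $\bar\Theta^k \triangleq \sum_{t=1}^T \frac{n_t}{n}\Theta_t^k$ (the invariant average of the mixing operator) and the consensus error $\Xi^k \triangleq \sum_{t=1}^T \frac{n_t}{n}\|\Theta_t^k-\bar\Theta^k\|_F^2$, and establish a one-step recursion of the form $\E[\Xi^{k+1}] \le (1-\rho)\,\E[\Xi^k] + C\eta^2$ for a mixing factor $\rho>0$ supplied by Assumption~\ref{assum:expected_consensus_rate} and a constant $C$ controlled by the smoothness (Assumption~\ref{assum:smoothness}), the bounded gradient variance (Assumption~\ref{assum:finitie_variance}), and the bounded dissimilarity (Assumption~\ref{assum:bounded_dissimilarity}). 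Because the mixing assumption only guarantees contraction over windows of $\tau$ rounds rather than at every single round, I would unroll the recursion over such windows, bounding the products of the time-varying mixing matrices, to obtain $\frac{1}{K}\sum_{k=1}^K \E[\Xi^k] = \mathcal O(\eta^2)$. With $\eta = a_0/\sqrt K$ this shows the per-client estimates collapse onto $\bar\Theta^k$, proving the first claim (convergence to a common value).

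Next I would analyze the trajectory of the average $\bar\Theta^k$. Averaging the local SGD updates, $\bar\Theta^k$ follows an inexact gradient step on the surrogate $g^k$ in which the gradient is evaluated at the scattered local points $\Theta_t^k$ rather than at $\bar\Theta^k$; using $L$-smoothness I would absorb this mismatch into a term proportional to $\Xi^k$. Combining the majorization inequality $f(\bar\Theta^{k+1},\Pi^{k+1}) \le g^k(\bar\Theta^{k+1},\Pi^{k+1})$ with a standard descent lemma for the approximate minimization of $g^k$ over $\Theta$ and with the exact minimization over $\Pi$, I would telescope over $k$, using Assumption~\ref{assum:bounded_f} to bound $f^0-f^*$, and substitute the consensus bound from the previous step. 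This yields, exactly as in~\eqref{eq:theta_pi_convergence}, that $\frac1K\sum_k \E\|\nabla_\Theta f(\bar\Theta^k,\Pi^k)\|_F^2 \to 0$ and $\frac1K\sum_k \Delta_\Pi f(\bar\Theta^k,\Pi^k) \to 0$, establishing that $(\bar\Theta^k,\Pi^k)$ reaches a stationary point of $f$.

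The main obstacle, as in all decentralized analyses, is the coupling between the consensus recursion and the descent recursion: the constant $C$ in the consensus bound depends on the magnitude of the gradients, which is itself controlled only through the descent argument, so the two inequalities must be solved jointly rather than sequentially. The difficulty is compounded here by two features specific to \DEM{}: the surrogate $g^k$ \emph{changes at every round} (unlike standard decentralized SGD on a fixed objective), so the majorization property and the variation of $g^k$ between consecutive rounds must be tracked carefully; and the time-varying communication graph forces the window-based unrolling of the mixing inequality. I expect the bounded-dissimilarity assumption (Assumption~\ref{assum:bounded_dissimilarity}) to be the key lever that keeps $C$ finite and lets the coupled system close, yielding the same rates as the centralized Theorem~\ref{thm:centralized_convergence}.
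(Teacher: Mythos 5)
Your overall architecture coincides with the paper's: cast \DEM{} as fully decentralized federated surrogate optimization, prove a descent lemma for the averaged iterate with an error term proportional to the consensus distance, prove a consensus recursion over windows of $\tau$ rounds via Assumption~\ref{assum:expected_consensus_rate}, and couple the two. However, two concrete steps fail as written. First, the doubly stochastic mixing matrices of Assumption~\ref{assum:expected_consensus_rate} preserve the \emph{uniform} average $\frac{1}{T}\sum_{t=1}^T\Theta_t^k$, not your weighted average $\sum_{t=1}^T\frac{n_t}{n}\Theta_t^k$; the latter is not invariant under the partial averaging step, so the trajectory recursion you build on top of it does not hold. The paper (and the formal statement of the theorem, which defines $\bar\Theta^k$ through $\frac{\mathbf{1}\mathbf{1}^{\intercal}}{T}$) tracks the uniform average and instead pushes the weights $n_t/n$ into the local step sizes, as in the local solver of Alg.~\ref{alg:d_em}.

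Second, your intermediate claim that unrolling the mixing inequality yields $\frac{1}{K}\sum_{k}\mathbb{E}[\Xi^k]=\mathcal{O}(\eta^2)$ as a standalone step is both circular and quantitatively wrong. Circular, because under Assumption~\ref{assum:bounded_dissimilarity} the driving term of the consensus recursion contains $\beta^2\big\|\nabla_\Theta f(\bar\Theta^l,\Pi^l)\big\|^2$ (the dissimilarity bound is relative, not absolute), so your ``constant'' $C$ is controlled only through the descent analysis, which in turn requires the consensus bound --- you concede exactly this in your final paragraph, contradicting the earlier claim, but you never resolve it beyond expressing the expectation that the system ``closes.'' Quantitatively wrong, because the rate that actually comes out is $\mathcal{O}(1/\sqrt{K})$, not $\mathcal{O}(\eta^2)=\mathcal{O}(1/K)$. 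The paper closes the loop with a specific device that is missing from your proposal (Lemma~\ref{lem:consensus_lemma_non_convex_final}, built on \cite[Lemma~14]{Koloskova2020AUT}): under step-size conditions of the form $\eta\lesssim\min\{p/(\tau L),\,p/(\tau\beta)\}$, the \emph{summed} consensus distance satisfies
\begin{equation*}
    \frac{(12+T)L^{2}}{4T}\sum_{k=0}^{K}\mathbb{E}\left\|\vec{U}^{k}-\bar{\vec{U}}^{k}\right\|_{F}^{2}\;\leq\;\frac{1}{16}\sum_{k=0}^{K}\mathbb{E}\left\|\nabla_{\Theta} f\left(\bar\Theta^{k},\Pi^{k}\right)\right\|^{2}+\mathcal{O}\left(K\eta^{2}\right),
\end{equation*}
where the factor $\frac{1}{16}$ is engineered (through the step-size restrictions) to be absorbed by the $-\frac{1}{8}$ coefficient of the descent lemma. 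Only after this absorption does telescoping give $\sum_k\mathbb{E}\|\nabla_\Theta f\|^2=\mathcal{O}(\sqrt{K})$, from which the consensus rate $\mathcal{O}(1/\sqrt{K})$ follows a posteriori. Without this (or an equivalent) absorption argument, your coupled system of inequalities does not close, so the proof is incomplete at precisely the step you flagged as the main obstacle.
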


\subsection{Federated Surrogate Optimization}
\label{sec:fed_surrogate}
\FedEM{} and \DEM{} can be seen as particular instances of a more general framework---of potential interest for other applications---that we call \emph{federated surrogate optimization}.

The standard majorization-minimization principle \parencite{Lange2012Optimization} iteratively minimizes, at each iteration $k$, a surrogate function $g^k$ majorizing the objective function $f$.
The work \cite{mairal2013optimization} studied this approach when  each $g^k$ is a first-order surrogate of $f$ (the formal definition from \cite{mairal2013optimization} is given in App.~\ref{dfn:first_order_surrogate}).

Our novel federated surrogate optimization framework  considers that the objective function $f$ is a weighted sum $f = \sum_{t=1}^{T} \omega_{t} f_{t}$ of $T$ functions and iteratively minimizes $f$ in a distributed fashion using \emph{partial} first-order surrogates $g^{k}_{t}$ for each function $f_t$.
``Partial'' refers to the fact that $g_{t}^{k}$ is not required to be a first order surrogate wrt the whole set of  parameters, as defined formally below.
\begin{definition}[Partial first-order surrogate]
    \label{def:partial_first_order_surrogate}
    A function $g(\vec{u},\vec{v}) : \mathbb{R}^{d_u} \times \mathcal{V} \to \mathbb R$ is a partial-first-order surrogate of $f(\vec{u},\vec{v})$  wrt $\vec{u}$ near $(\vec{u}_0, \vec{v}_0) \in 
    \mathbb \mathbb{R}^{d_u} \times \mathcal{V}$ when the following conditions are satisfied:
    \begin{enumerate}
        \setlength\itemsep{-0.5em}
        \item$g(\vec{u},\vec{v})\ge f(\vec{u},\vec{v})$ for all $\vec{u} \in \mathbb{R}^{d_u}$ and $\vec v\in  \mathcal{V}$;
        \item $r(\vec{u},\vec{v}) \triangleq g(\vec{u},\vec{v})- f(\vec{u},\vec{v})$ is differentiable and $L$-smooth with respect to $\vec{u}$. Moreover, we have $r(\vec{u}_0,\vec{v}_0) = 0$ and $\nabla_{\vec u} r(\vec{u}_0,\vec{v}_0) = 0$.
        \item $g(\vec u, \vec v_{0}) - g(\vec u, \vec v) = d_{\mathcal{V}}\left(\vec v_{0}, \vec v\right)$ for all $\vec{u} \in \mathbb{R}^{d_u}$ and $\vec v\in \argmin_{\vec{v}' \in \mathcal{V}} g(\vec{u}, \vec{v}') $, where $d_{\mathcal{V}}$ is non-negative and $d_{\mathcal{V}}(\vec{v}, \vec{v}') = 0 \iff \vec{v} = \vec{v}'$.
    \end{enumerate}
\end{definition}
Under the assumption that each client $t$ can compute a partial first-order surrogate of $f_{t}$, we propose algorithms for federated surrogate optimization in both the client-server setting (Alg.~\ref{alg:fed_surrogate_opt}) and the fully decentralized one (Alg.~\ref{alg:decentralized_surrogate_opt}) and prove their convergence under mild conditions (App.~\ref{proof:centralized} and \ref{proof:decentralized}).
\FedEM{} and \DEM{} can be seen as particular instances of these algorithms and Theorem.~\ref{thm:centralized_convergence} and Theorem.~\ref{thm:decentralized_convergence} follow from the more general convergence results for federated surrogate optimization.
We can also use our framework to analyze the convergence of other FL algorithms such as \texttt{pFedMe} \cite{dinh2020personalized}, as we illustrate in App.~\ref{app:pfedme_fso}.

\section{Experiments}
\label{sec:experiments}
\textbf{Datasets and models.} We evaluated our method on five federated benchmark datasets spanning a wide range of machine learning tasks: image classification (CIFAR10 and CIFAR100 \cite{Krizhevsky09learningmultiple}), handwritten character recognition
(EMNIST \cite{cohen2017emnist} and FEMNIST \cite{caldas2018leaf}),\footnote{
    For training, we sub-sampled $10\%$ and $15\%$ from EMNIST and FEMNIST datasets respectively.
} and language modeling (Shakespeare \cite{caldas2018leaf, mcmahan2017communication}). 
Shakespeare dataset (resp.~FEMNIST) was naturally partitioned by assigning all lines from the same characters (resp.~all images from the same writer) to the same client. 
We created federated versions of CIFAR10 and EMNIST by distributing samples with the same label across the clients according to a symmetric Dirichlet distribution with parameter $0.4$, as in \cite{Wang2020Federated}. 
For CIFAR100, we exploited the availability  of ``coarse''  and ``fine'' labels, using a two-stage Pachinko allocation method~\cite{li2006pachinko} to assign $600$ sample to each of the $100$ clients, as in \cite{reddi2021adaptive}.
We also evaluated our method on a synthetic dataset verifying Assumptions~\ref{assum:mixture}--\ref{assum:logloss}.
For all tasks, we randomly split each local dataset into training ($60\%$), validation ($20\%$) and test  ($20\%$) sets.
Table~\ref{tab:datasets_models} summarizes datasets, models, and number of clients (more details can be found in App.~\ref{app:datasets_models}). Code is available at \url{https://github.com/omarfoq/FedEM}.
\begin{table}
    \caption{Datasets and models (details in App.~\ref{app:datasets_models}).}
    \centering
    \scriptsize
    \begin{tabular*}{\textwidth}{ l @{\extracolsep{\fill}} l @{\extracolsep{\fill}} r @{\extracolsep{\fill}} r  @{\extracolsep{\fill}} l}
    \toprule
        \textbf{Dataset} & \textbf{Task}  &  \textbf{Clients} & \textbf{Total samples} & \textbf{Model} \\
         \midrule
        FEMNIST \cite{caldas2018leaf} & Handwritten character recognition & $539$ & $120,772$ & 2-layer CNN + 2-layer FFN \\
         EMNIST \cite{cohen2017emnist} & Handwritten character recognition & $100$ & $81,425$ & 2-layer CNN + 2-layer FFN \\
         CIFAR10 \cite{Krizhevsky09learningmultiple} & Image classification  & $80$ & $60,000$ & MobileNet-v2 \cite{sandler2018mobilenetv2} \\
         CIFAR100 \cite{Krizhevsky09learningmultiple} & Image classification  & $100$ & $60,000$ & MobileNet-v2 \cite{sandler2018mobilenetv2}  \\
         Shakespeare \cite{caldas2018leaf, mcmahan2017communication} & Next-Character Prediction  & $778$  & $4,226,158$ & Stacked-LSTM \cite{HochSchm97LSTM}\\
         Synthetic  & Binary Classification  & $300$ & $1,570,507$ &Linear model \\
    \bottomrule
    \end{tabular*}%
    \label{tab:datasets_models} 
\end{table}

\textbf{Other FL approaches.} We compared our algorithms with global models trained with \FedAvg~\cite{mcmahan2017communication} and $\FedProx$~\cite{Sahu2018OnTC} as well as different personalization approaches: a personalized model trained only on the local dataset, \FedAvg{} with local tuning (\FedAvg+)~\cite{jiang2019improving}, Clustered FL~\cite{sattler2020clustered} and $\pFedMe$~\cite{dinh2020personalized}.
 For each method and each task, the learning rate and the other hyperparameters were tuned via grid search (details in App.~\ref{app:implementation_details}).
\FedAvg+ updated the local model through a single pass on the local dataset. 
Unless otherwise stated, the number of components considered by \FedEM{} was $M=3$, training occurred over $80$ communication rounds for Shakespeare and $200$ rounds for all other datasets. At each round, clients train for one  epoch.  Results for \DEM{} are in App.~\ref{app:d_fedem_results}.
A comparison with \texttt{MOCHA} \cite{smith2017federated}, which can only train linear models, is presented in App.~\ref{app:mocha}.
\begin{table}
    \caption{Test accuracy: average across clients\,/\,bottom decile.}
    \label{tab:results_summary}
    \centering
    \scriptsize
    \begin{tabular*}{\textwidth}{l  @{\extracolsep{\fill}} r  @{\extracolsep{\fill}} r @{\extracolsep{\fill}} r  @{\extracolsep{\fill}} r   @{\extracolsep{\fill}} r   @{\extracolsep{\fill}} r  @{\extracolsep{\fill}} r}
        \toprule
        Dataset & Local & \FedAvg~\cite{mcmahan2017communication}  & \FedProx~\cite{Sahu2018OnTC}  & \FedAvg+~\cite{jiang2019improving} & \texttt{Clustered FL}~\cite{sattler2020clustered} &  \texttt{pFedMe}~\cite{dinh2020personalized} & \FedEM~(Ours)   \\
        \midrule
            FEMNIST & $71.0 \,/\, 57.5$ & $78.6 \,/\, 63.9$ & $78.9 \,/\, 64.0$ & $75.3 \,/\, 53.0$ & $73.5 \,/\, 55.1$ & $74.9 \,/\, 57.6$ & $\mathbf{79.9} \,/\, \mathbf{64.8}$  \\
            EMNIST & $71.9 \,/\, 64.3$ & $82.6 \,/\, 75.0$ & $83.0 \,/\, 75.4$ & $83.1 \,/\, 75.8$ & $82.7 \,/\, 75.0$ &  $83.3 \,/\, 76.4$ & $\mathbf{83.5} \,/\, \mathbf{76.6}$  \\
            CIFAR10  & $70.2 \,/\, 48.7$ & $78.2 \,/\, 72.4$ & $78.0 \,/\, 70.8$ & $82.3 \,/\, 70.6$ & $78.6 \,/\, 71.2$  & $81.7 \,/\, 73.6$ & $\mathbf{84.3} \,/\, \mathbf{78.1}$  \\
            CIFAR100 & $31.5 \,/\, 19.9$ & $40.9 \,/\, 33.2$ & $41.0 \,/\, 33.2$ & $39.0 \,/\, 28.3$ & $41.5 \,/\, 34.1$ & $41.8 \,/\, 32.5$ & $\mathbf{44.1} \,/\, \mathbf{35.0}$  \\
            Shakespeare & $32.0 \,/\, 16.6$ & $\mathbf{46.7} \,/\, 42.8$  & $45.7 \,/\, 41.9$ & $40.0 \,/\, 25.5$ & $46.6 \,/\, 42.7$ & $41.2 \,/\, 36.8$ & $\mathbf{46.7} \,/\, \mathbf{43.0}$  \\
            Synthetic & $65.7 \,/\, 58.4$ & $68.2 \,/\, 58.9$ & $68.2 \,/\, 59.0$ & $68.9 \,/\, 60.2$ & $69.1 \,/\, 59.0$ & $69.2 \,/\, 61.2$ & $\mathbf{74.7} \,/\, \mathbf{66.7}$  \\
        \bottomrule
    \end{tabular*}
\end{table}

\textbf{Average performance of personalized models.} 
The performance of each personalized model (which is the same for all clients in the case of \FedAvg{} and \FedProx) is evaluated on the local test dataset (unseen at training).
Table~\ref{tab:results_summary} shows the average weighted accuracy with weights proportional to local dataset sizes. We observe that \FedEM{} obtains the best performance across all datasets.

\textbf{Fairness across clients.} \FedEM's improvement in terms of  average accuracy could be the result of learning particularly good models for some clients at the expense of bad models for other clients.
Table~\ref{tab:results_summary} shows the bottom decile of the accuracy of local models, i.e., the $(T/10)$-th worst accuracy (the minimum accuracy is particularly noisy, notably because some local test datasets are very small).
Even clients with the worst personalized models  are still better off when \FedEM{} is used for training.

\textbf{Clients sampling.} In cross-device federated learning, only a subset of clients may be available at each round. We ran CIFAR10 experiments with different levels of participation: at each round a given fraction of all clients were sampled uniformly without replacement. 
We restrict the comparison to $\FedEM$ and \FedAvg+, as 1) \FedAvg+ performed better than $\FedProx$ and $\FedAvg$ in the previous CIFAR10 experiments, 2) it is not clear how to extend $\pFedMe$ and Clustered FL to handle client sampling.
Results in Fig.~\ref{fig:sample_rate_and_M} (left) show that \FedEM{} is more robust to low clients' participation levels.
We provide additional results on client sampling, including a comparison with \texttt{APFL}~\cite{deng2020adaptive}, in App.~\ref{app:client_sampling}.

\textbf{Generalization to unseen clients.} As discussed in Section~\ref{sec:centralized}, \FedEM{} allows new clients arriving after the distributed training to easily learn their personalized models. With the exception of \FedAvg+, it is not clear how the other personalized FL algorithms should be extended to tackle the same goal (see discussion in  App.~\ref{app:generalization_to_unseen_clients}). 
In order to evaluate the quality of new clients' personalized models, we performed an experiment where only 80\% of the clients (``old'' clients) participate to the training. 
The remaining 20\% join the system in a second phase and use their local training datasets to learn their personalized weights. 
Table~\ref{tab:results_summary_zero_shot} shows that \FedEM{} allows new clients to learn a personalized model at least as good as \FedAvg's global one and always better than \FedAvg+'s one.
Unexpectedly, new clients achieve sometimes  a significantly higher test accuracy than old clients (e.g., 47.5\% against 44.1\% on CIFAR100). 
Our investigation in App.~\ref{app:generalization_to_unseen_clients} suggests that, by selecting their mixture weights on local datasets that were not used to train the components, new clients can compensate for potential overfitting in the initial training phase.
We also investigate in App.~\ref{app:generalization_to_unseen_clients} the effect of the local dataset size on the accuracy achieved by unseen clients, showing that personalization is effective even when unseen clients have small datasets.
\begin{table}
    \caption{Average test accuracy across clients unseen at training  (train accuracy in parenthesis). 
    }
    \label{tab:results_summary_zero_shot}
    \centering
    \scriptsize
    \begin{tabular}{l   r   r   r }
        \toprule
        Dataset  & \FedAvg~\cite{mcmahan2017communication}  & \FedAvg+~\cite{jiang2019improving} &   \FedEM~(Ours)   \\
        \midrule
            FEMNIST & $78.3$ ($80.9$) & $74.2$ ($84.2$) &  $\mathbf{79.1}$ ($81.5$)   \\
            EMNIST  & $83.4$ ($82.7$) & $83.7$ ($92.9$) & $\mathbf{84.0}$ ($83.3$)   \\
            CIFAR10 & $77.3$ ($77.5$) & $80.4$ ($80.5$) & $\mathbf{85.9}$ ($90.7$)  \\
            CIFAR100 & $41.1$ ($42.1$) & $36.5$ ($55.3$)  & $\mathbf{47.5}$ ($46.6$)   \\
            Shakespeare & $\mathbf{46.7}$ ($47.1$) & $40.2$ ($93.0$)  & $\mathbf{46.7}$ ($46.6$)  \\
            Synthetic & $68.6$ ($70.0$) & $69.1$ ($72.1$) & $\mathbf{73.0}$ ($74.1$)   \\
        \bottomrule
    \end{tabular}
\end{table}
\begin{figure}[t] 
    \centering
    \begin{minipage}[b]{0.4\linewidth}
        \includegraphics[width=.9\linewidth]{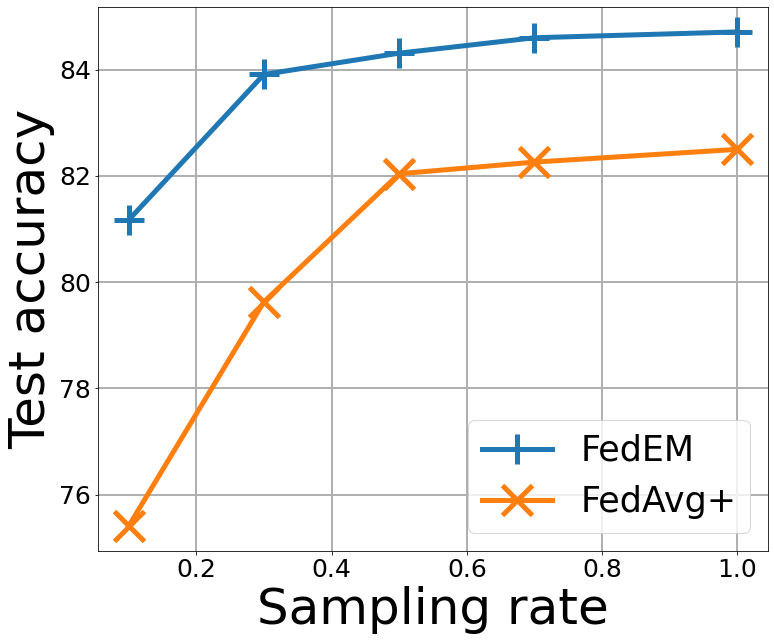}
    \end{minipage}
   \begin{minipage}[b]{0.4\linewidth}
        \centering
        \includegraphics[width=.9\linewidth]{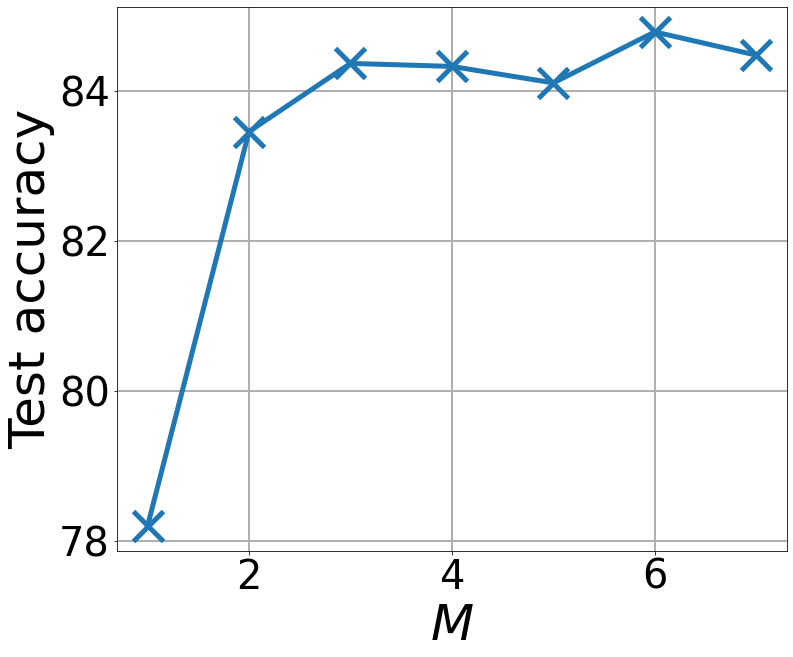}
  \end{minipage}
  \caption{Effect of client sampling rate (left) and \FedEM{}  number of mixture components $M$ (right) on the test accuracy for CIFAR10~\cite{Krizhevsky09learningmultiple}. 
  }
  \label{fig:sample_rate_and_M}
\end{figure}

\textbf{Effect of $M$.} A limitation of \FedEM{} is that each client needs to update and transmit $M$ components at each round, requiring roughly $M$ times more computation and $M$ times larger messages. 
Nevertheless, the number of components to consider in practice is quite limited.
We used $M=3$ in our previous experiments, and Fig.~\ref{fig:sample_rate_and_M} (right) shows that larger values do not yield much improvement and $M=2$ already provides a significant level of personalization.
In all experiments above, the number of communication rounds allowed all approaches to converge. As a consequence, even if other methods trained over $M=3$ times more rounds---in order to have as much computation and communication as \FedEM---the conclusions would not change.
As a final experiment, we considered a time-constrained setting, where \FedEM{} is limited to run one third ($=1/M$) of the rounds (Table~\ref{tab:results_summary_less_rounds} in App.~\ref{app:effect_of_M}). 
Even if \FedEM{} does not reach its maximum accuracy, it still outperforms the other methods on 3 datasets.

\section{Conclusion}
\label{sec:conclusion}
In this paper, we proposed a novel federated MTL approach based on the flexible assumption that local data distributions are mixtures of underlying distributions.
Our EM-like algorithms allow clients to jointly learn shared component models and personalized mixture weights in client-server and fully decentralized settings.
We proved convergence guarantees for our algorithms through a general federated surrogate optimization framework which can be used to analyze other FL formulations. 
Extensive empirical evaluation shows that our approach learns models with higher accuracy and fairness than state-of-the-art FL algorithms, even for clients not present at training time.

In future work, we aim to reduce the local computation and communication of our algorithms.
Aside from standard compression schemes \cite{pmlr-v130-haddadpour21a}, a promising direction is to limit the number of component models that a client updates/transmits at each step. 
This could be done in an adaptive manner based on the client's current mixture weights. 
A simultaneously published work~\cite{dieuleveut2021federated} proposes a federated EM algorithm (also called \FedEM{}), which does not address personalization but reduces communication requirements by compressing appropriately defined complete data sufficient statistics.

A second interesting research direction is to study personalized FL approaches under privacy constraints (quite unexplored until now with the notable exception of \cite{Bellet2018a}).
Some features of our algorithms may be beneficial for privacy (e.g., the fact that personalized weights are kept locally and that all users contribute to all shared models).
We hope to design differentially private versions of our algorithms and characterize their privacy-utility trade-offs.

\section{Acknowledgements}
\label{sec:acknowledgement}
This work has been supported by the French government, through the 3IA Côte d’Azur Investments in the Future project managed by the National Research Agency (ANR) with the reference number ANR-19-P3IA-0002, and through grants ANR-16-CE23-0016 (Project PAMELA) and ANR-20-CE23-0015 (Project PRIDE).
{The authors are grateful to the OPAL infrastructure from Universit\'e C\^ote d'Azur for providing computational resources and technical support.}

\printbibliography

\iftrue
\else
    \section*{Checklist}
    \label{sec:checklist}

\begin{enumerate}

\item For all authors...
\begin{enumerate}
  \item Do the main claims made in the abstract and introduction accurately reflect the paper's contributions and scope?
    \answerYes{}
  \item Did you describe the limitations of your work?
    \answerYes{See paragraph \textbf{Effect of $M$} in Section~\ref{sec:experiments}.}
  \item Did you discuss any potential negative societal impacts of your work?
    \answerYes{Broader impact is discussed in Appendix~\ref{app:borader_imapct}}.
  \item Have you read the ethics review guidelines and ensured that your paper conforms to them?
    \answerYes{}
\end{enumerate}

\item If you are including theoretical results...
\begin{enumerate}
  \item Did you state the full set of assumptions of all theoretical results?
    \answerYes{See Assumptions~\ref{assum:mixture}--\ref{assum:bounded_dissimilarity}.}
	\item Did you include complete proofs of all theoretical results?
    \answerYes{}
\end{enumerate}

\item If you ran experiments...
\begin{enumerate}
  \item Did you include the code, data, and instructions needed to reproduce the main experimental results (either in the supplemental material or as a URL)?
    \answerYes{The code is provided in supplementary materials. \path{README.md} file gives instructions on how to reproduce all the results in the paper.}
  \item Did you specify all the training details (e.g., data splits, hyperparameters, how they were chosen)?
    \answerYes{See paragraphs \textbf{Datasets and models} and \textbf{Other FL Approaches} in Section~\ref{sec:experiments}, and Appendix~\ref{app:experiments_details}.}
	\item Did you report error bars (e.g., with respect to the random seed after running experiments multiple times)?
    \answerNo{}
	\item Did you include the total amount of compute and the type of resources used (e.g., type of GPUs, internal cluster, or cloud provider)?
    \answerYes{See Appendix~\ref{app:implementation_details}}.
\end{enumerate}

\item If you are using existing assets (e.g., code, data, models) or curating/releasing new assets...
\begin{enumerate}
  \item If your work uses existing assets, did you cite the creators?
    \answerYes{We used datasets from LEAF~\cite{caldas2018leaf}, CIFAR10/CIFAR100~\cite{Krizhevsky09learningmultiple}, and the federated split of Shakespeare dataset proposed first in~\cite{mcmahan2017communication}. All datasets are cited in the paper (see Section~\ref{sec:experiments}). Our implementation is using Pytorch~\cite{paszke2019pytorch}, it is cited in Appendix~\ref{app:implementation_details}.}
  \item Did you mention the license of the assets?
    \answerYes{For LEAF license, see files \path{data\femnist\get_file_dirs.py}, \path{data\femnist\get_hashes.py}, \path{data\femnist\group_by_writer.py}, and \path{data\femnist\match_hashes.py}. For the license of the federated split of Shakespeare dataset see file \path{data\shakespeare\preprocess_shakespeare.py}.}
  \item Did you include any new assets either in the supplemental material or as a URL?
    \answerYes{The code is provided in supplementary materials.}
  \item Did you discuss whether and how consent was obtained from people whose data you're using/curating?
    \answerNA{}
  \item Did you discuss whether the data you are using/curating contains personally identifiable information or offensive content?
    \answerNA{}
\end{enumerate}

\item If you used crowdsourcing or conducted research with human subjects...
\begin{enumerate}
  \item Did you include the full text of instructions given to participants and screenshots, if applicable?
    \answerNA{}
  \item Did you describe any potential participant risks, with links to Institutional Review Board (IRB) approvals, if applicable?
    \answerNA{}
  \item Did you include the estimated hourly wage paid to participants and the total amount spent on participant compensation?
    \answerNA{}
\end{enumerate}

\end{enumerate}

\fi

\newpage

\appendix

\addcontentsline{toc}{section}{Appendix} 
\part{Appendix} 
\parttoc 

\newpage

\section{Proof of Proposition \ref{prop:local_model_is_weighted_average_of_components}}
\label{proof:general_mtl_framework}

For $h\in\mathcal{H}$ and $\left(\vec{x}, y\right) \in \mathcal{X} \times \mathcal{Y}$,  let $p_{h}\left(y|\vec{x}\right)$ denote the conditional probability distribution of $y$ given $\vec{x}$ under model $h$, i.e.,
\begin{equation}
    p_{h}\left(y|\vec{x}\right) \triangleq e^{c_{h}\left(\vec{x}\right)} \times \exp\Big\{-l\left(h\left(\vec{x}\right), y\right)\Big\},
\end{equation}
where
\begin{equation}
    c_{h}\left(\vec{x}\right) \triangleq -\log\left[\int_{y\in\mathcal{Y}} \exp\Big\{-l\left(h\left(\vec{x}\right), y\right)\Big\} \dd y \right].
\end{equation}

We also remind that the entropy of a probability distribution $q$ over $\mathcal{Y}$ is given by
\begin{equation}
    H\left(q\right) \triangleq -\int_{y\in\mathcal{Y}} q\left(y\right) \cdot \log q\left(y\right) \dd y,
\end{equation}
and that the Kullback-Leibler divergence between two probability distributions $q_{1}$ and $q_{2}$ over $\mathcal{Y}$ is given by
\begin{equation}
    \mathcal{KL}\left(q_{1}||q_{2}\right) \triangleq \int_{y\in\mathcal{Y}} q_{1}\left(y\right) \cdot \log \frac{q_{1}\left(y\right)}{q_{2}\left(y\right)}\dd y.
\end{equation}

\begin{repprop}{prop:local_model_is_weighted_average_of_components}
    Let $l(\cdot,\cdot)$ be the mean squared error loss, the logistic loss or the cross-entropy loss,  and 
    $\breve\Theta$ and $\breve\Pi$ be a solution of the following optimization problem:
    \begin{equation}
        \tag{\ref{eq:intermediate_problem}}
        \minimize_{\Theta, \Pi} \E_{t\sim D_\mathcal{T} } \E_{(\vec{x},y) \sim \mathcal{D}_{t}} \left[ - \log p_t(\vec x,y | \Theta, \pi_t) \right],
    \end{equation}
    where $D_\mathcal{T}$ is any distribution with support $\mathcal T$. Under Assumptions~\ref{assum:mixture}, \ref{assum:uniform_marginal}, and~\ref{assum:logloss}, the predictors  
    \begin{equation}
        \tag{\ref{eq:linear_combination}}
        h_t^{*} = \sum_{m=1}^{M}\breve \pi_{t m}h_{\breve \theta_{m}}, \quad\forall t \in \mathcal{T}
    \end{equation}
    minimize $\mathbb{E}_{(\vec{x}, y)\sim\mathcal{D}_{t}}\left[l(h_{t}(\vec{x}), y)\right]$ and thus solve Problem~\eqref{eq:main_problem}.
\end{repprop}

\begin{proof}
    We prove the result for each of the three possible cases of the loss function. We verify  that $c_{h}$ does not depend on $h$ in each of the three cases, then we use Lemma~\ref{lem:convex_combination_dsitributions} to conclude.
    
    \paragraph{Mean Squared Error Loss} This is the case of a regression problem where $\mathcal{Y} = \mathbb{R}^{d}$ for some $d>0$. For $\vec{x}, y \in \mathcal{X} \times \mathcal{Y}$ and $h\in\mathcal{H}$, we have
    \begin{equation}
        p_{h}\left(y|\vec{x}\right) = \frac{1}{\sqrt{\left(2\pi\right)^{d}}} \cdot \exp\left\{- \frac{\left\|h\left(\vec{x}\right) - y\right\|^{2}}{2}\right\},
    \end{equation}
    and 
    \begin{equation}
        c_{h}\left(\vec{x}\right) = -\log\left(\sqrt{\left(2\pi\right)^{d}}\right) 
    \end{equation}
    
    \paragraph{Logistic Loss} This is the case of a binary classification problem where $\mathcal{Y} = \left\{0, 1\right\}$. For $\vec{x}, y \in \mathcal{X} \times \mathcal{Y}$ and $h\in\mathcal{H}$, we have
    \begin{equation}
        p_{h}\left(y|\vec{x}\right) = \left(h\left(\vec{x}\right)\right)^{y} \cdot \left(1-h\left(\vec{x}\right)\right)^{1-y},
    \end{equation}
    and 
    \begin{equation}
        c_{h}\left(\vec{x}\right) = 0 
    \end{equation}
    
    \paragraph{Cross-entropy loss} This is the case of a classification problem where $\mathcal{Y} = [L]$ for some $L>1$. For $\vec{x}, y \in \mathcal{X} \times \mathcal{Y}$ and $h\in\mathcal{H}$, we have
    \begin{equation}
        p_{h}\left(y|\vec{x}\right) = \prod_{l=1}^{L} \left(h\left(\vec{x}\right)\right)^{\mathds{1}_{\left\{y=l\right\}}},
    \end{equation}
    and 
    \begin{equation}
        c_{h}\left(\vec{x}\right) = 0
    \end{equation}
    
    \paragraph{Conclusion} For $t\in\mathcal{T}$, consider a predictor $h_{t}^{*}$ minimizing  $\mathbb{E}_{(\vec{x}, y)\sim\mathcal{D}_{t}}\left[l(h_{t}(\vec{x}), y)\right]$. Using Lemma~\ref{lem:convex_combination_dsitributions}, for $\left(\vec{x}, y\right) \in \mathcal{X} \times \mathcal{Y}$, we have 
    \begin{equation}
         p_{h_{t}^{*}}\left(y|\vec{x}\right) = \sum_{m=1}^{M}\breve{\pi}_{tm} \cdot p_{m}\left(y|\vec{x}, \breve{\theta}_{m}\right).
    \end{equation}
    We multiply both sides of this equality by $y$ and we integrate over $y \in \mathcal{Y}$. Note that in all three cases we have 
    \begin{equation}
        \forall \vec{x} \in \mathcal{X},\quad\int_{y\in\mathcal{Y}} y \cdot p_{h}\left(\cdot |\vec{x}\right) \dd y = h(\vec{x}).
    \end{equation}
    It follows that
    \begin{equation}
        h_t^{*} = \sum_{m=1}^{M}\breve \pi_{t m}h_{\breve \theta_{m}}, \quad\forall t \in \mathcal{T}.
    \end{equation}
\end{proof}

\subsection*{Supporting Lemmas}

\begin{lem}
    \label{lem:likelihood_entropy_kl}
    Suppose that Assumptions~\ref{assum:mixture} and~\ref{assum:logloss} hold, and consider $\breve{\Theta}$ and $\breve{\Pi}$ to be a solution of Problem~\eqref{eq:intermediate_problem}. Then
    \begin{equation}
        p_t(\vec x,y | \breve{\Theta}, \breve{\pi}_t) =  p_t(\vec x,y | \Theta^{*}, \pi_t^{*}), ~\forall t\in \mathcal{T}.
    \end{equation}
\end{lem}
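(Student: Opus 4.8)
The plan is to recognize Problem~\eqref{eq:intermediate_problem} as a maximum-likelihood (equivalently, cross-entropy minimization) problem and to exploit the fact that the expected negative log-likelihood of a model, taken under the true data-generating distribution, decomposes into a Kullback--Leibler divergence plus a constant entropy term. Concretely, under Assumptions~\ref{assum:mixture},~\ref{assum:uniform_marginal}, and~\ref{assum:logloss}, the true density of $\mathcal{D}_t$ coincides with the parametrized density evaluated at the ground-truth parameters, i.e. $p_t(\vec x, y) = p_t(\vec x, y \mid \Theta^*, \pi_t^*)$: indeed $\mathcal{D}_t$ is the mixture $\sum_m \pi_{tm}^* p_m(\vec x, y)$, and Assumption~\ref{assum:logloss} guarantees that the component $\theta_m^*$ reproduces the conditional $p_m(y\mid\vec x)$ while Assumption~\ref{assum:uniform_marginal} fixes the shared marginal $p(\vec x)$.

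First I would write, for each $t$, the per-task objective as
\begin{equation}
    \E_{(\vec x, y)\sim\mathcal{D}_t}\left[-\log p_t(\vec x, y\mid\Theta,\pi_t)\right] = \mathcal{KL}\!\left(p_t(\cdot\mid\Theta^*,\pi_t^*)\,\middle\|\,p_t(\cdot\mid\Theta,\pi_t)\right) + H\!\left(p_t(\cdot\mid\Theta^*,\pi_t^*)\right),
\end{equation}
obtained by adding and subtracting $\log p_t(\vec x, y\mid\Theta^*,\pi_t^*)$ inside the expectation. The entropy term is independent of the optimization variables $(\Theta,\Pi)$, so minimizing~\eqref{eq:intermediate_problem} is equivalent to minimizing $\E_{t\sim D_\mathcal{T}}\,\mathcal{KL}(p_t(\cdot\mid\Theta^*,\pi_t^*)\,\|\,p_t(\cdot\mid\Theta,\pi_t))$. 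Since the Kullback--Leibler divergence is non-negative and vanishes iff its two arguments coincide, and since the ground-truth pair $(\Theta^*,\Pi^*)$ is feasible and drives every summand to zero, the minimum value of~\eqref{eq:intermediate_problem} equals $\E_{t\sim D_\mathcal{T}}[H(p_t(\cdot\mid\Theta^*,\pi_t^*))]$ and is attained at $(\Theta^*,\Pi^*)$.

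Because $(\breve\Theta,\breve\Pi)$ is by hypothesis a minimizer, it attains the same optimal value, forcing $\E_{t\sim D_\mathcal{T}}\,\mathcal{KL}(p_t(\cdot\mid\Theta^*,\pi_t^*)\,\|\,p_t(\cdot\mid\breve\Theta,\breve\pi_t)) = 0$. Here I would invoke that $D_\mathcal{T}$ has full support $\mathcal{T}$: a non-negative quantity with zero expectation under a full-support distribution must vanish for every $t\in\mathcal{T}$, so each task's divergence is zero and hence $p_t(\vec x, y\mid\breve\Theta,\breve\pi_t) = p_t(\vec x, y\mid\Theta^*,\pi_t^*)$ for all $t$, which is the claim. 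The main obstacle is not the algebra but the care needed in two places: verifying that the parametrized model at $(\Theta^*,\Pi^*)$ genuinely reproduces the true joint density (which is where Assumptions~\ref{assum:uniform_marginal} and~\ref{assum:logloss} are essential, since the shared marginal and the log-loss identity are what make the component conditionals well-defined probability densities), and handling the measure-theoretic subtlety that $\mathcal{KL}=0$ yields equality of densities only almost everywhere---harmless for the continuous densities considered here, but worth stating explicitly.
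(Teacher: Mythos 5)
Your proposal is correct and follows essentially the same route as the paper's proof: the decomposition of the expected negative log-likelihood into $\mathcal{KL}\left(p_{t}(\cdot|\Theta^{*},\pi_{t}^{*})\,\|\,p_{t}(\cdot|\breve{\Theta},\breve{\pi}_{t})\right)$ plus an entropy term, non-negativity of the divergence combined with optimality of $(\breve{\Theta},\breve{\Pi})$ to force the expected divergence to zero, and the full-support (countable $\mathcal{T}$) argument to conclude that each per-task divergence vanishes. The only cosmetic difference is that you phrase the optimality step as "both attain the same minimum value" while the paper writes the two inequalities explicitly; your added remarks on identifying $p_{t}(\vec{x},y)$ with $p_{t}(\vec{x},y|\Theta^{*},\pi_{t}^{*})$ and on almost-everywhere equality are harmless refinements of the same argument.
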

\begin{proof}
    For $t\in\mathcal{T}$, 
    \begin{flalign}
        \E_{(\vec{x},y) \sim \mathcal{D}_{t}} &\left[ - \log p_t(\vec x,y | \breve{\Theta}, \breve{\pi}_t) \right] &
        \\
        &= - \int_{\left(\vec{x}, y\right) \in \mathcal{X}\times \mathcal{Y}}   p_t(\vec x,y | \Theta^{*}, \pi_t^{*}) \cdot \log p_t(\vec x,y | \breve{\Theta}, \breve{\pi}_t) \dd\vec{x} \dd y &
        \\
        &= - \int_{\left(\vec{x}, y\right) \in \mathcal{X}\times \mathcal{Y}}   p_t(\vec x,y | \Theta^{*}, \pi_t^{*}) \cdot \log\frac{ p_t(\vec x,y | \breve{\Theta}, \breve{\pi}_t)}{p_t(\vec x,y | \Theta^{*}, \pi_t^{*})} \dd\vec{x} \dd y \nonumber
        \\
        & \quad - \int_{\left(\vec{x}, y\right) \in \mathcal{X}\times \mathcal{Y}}    p_t(\vec x,y | \Theta^{*}, \pi_t^{*}) \cdot \log p_t(\vec x,y | \Theta^{*}, \pi^{*}_t) \dd\vec{x} \dd y
        \\
        & = \mathcal{KL} \left(p_{t}\left(\cdot|\Theta^{*}, \pi_{t}^{*}\right) \| p_{t}\big(\cdot|\breve{\Theta}, \breve{\pi}_{t}\big) \right) + H\left[p_{t}\left(\cdot|\Theta^{*}, \pi_{t}^{*}\right)\right],\label{eq:ll_decomposition_entropy_puls_kl}
    \end{flalign}
    Since the $\mathcal{KL}$ divergence is non-negative, we have
    \begin{equation}
            \E_{(\vec{x},y) \sim \mathcal{D}_{t}} \left[ - \log p_t(\vec x,y | \breve{\Theta}, \breve{\pi}_t)\right] \geq  H\left[p_{t}\left(\cdot|\Theta^{*}, \pi_{t}^{*}\right)\right]
            =  \E_{(\vec{x},y) \sim \mathcal{D}_{t}} \left[ - \log p_t(\vec x,y | \Theta^{*}, \pi_t^{*}) \right].
    \end{equation}
    Taking the expectation over $t\sim \mathcal{D}_{\mathcal{T}}$, we write
    \begin{equation}
        \label{eq:ll_entropy_geq}
        \E_{t\sim \mathcal{D}_{\mathcal{T}}}\E_{(\vec{x},y) \sim \mathcal{D}_{t}} \left[ - \log p_t(\vec x,y | \breve{\Theta}, \breve{\pi}_t)\right] \geq  \E_{t\sim \mathcal{D}_{\mathcal{T}}}\E_{(\vec{x},y) \sim \mathcal{D}_{t}} \left[ - \log p_t(\vec x,y | \Theta^{*}, \pi_t^{*}) \right].
    \end{equation}
    Since $\breve{\Theta}$ and $\breve{\Pi}$ is a solution of Problem~\eqref{eq:intermediate_problem}, we also have 
    \begin{equation}
        \label{eq:ll_entropy_leq}
        \E_{t\sim \mathcal{D}_{\mathcal{T}}}\E_{(\vec{x},y) \sim \mathcal{D}_{t}} \left[ - \log p_t(\vec x,y | \breve{\Theta}, \breve{\pi}_t)\right] \leq  \E_{t\sim \mathcal{D}_{\mathcal{T}}}\E_{(\vec{x},y) \sim \mathcal{D}_{t}} \left[ - \log p_t(\vec x,y | \Theta^{*}, \pi_t^{*}) \right].
    \end{equation}
    Combining \eqref{eq:ll_entropy_geq}, \eqref{eq:ll_entropy_leq}, and \eqref{eq:ll_decomposition_entropy_puls_kl}, we have
    \begin{equation}
        \E_{t\sim \mathcal{D}_{\mathcal{T}}} \mathcal{KL} \left(p_{t}\left(\cdot|\Theta^{*}, \pi_{t}^{*}\right) \| p_{t}\big(\cdot|\breve{\Theta}, \breve{\pi}_{t}\big) \right) = 0.
    \end{equation}
    Since $\mathcal{KL}$ divergence is non-negative, and the support of $\mathcal{D}_{\mathcal{T}}$ is the countable set $\mathcal{T}$, it follows that
    \begin{equation}
        \forall t\in \mathcal{T},\quad  \mathcal{KL} \left(p_{t}\left(\cdot|\Theta^{*}, \pi_{t}^{*}\right) \| p_{t}\big(\cdot|\breve{\Theta}, \breve{\pi}_{t}\big) \right) = 0.
    \end{equation}
   Thus,
    \begin{equation}
        p_t(\vec x,y | \breve{\Theta}, \breve{\pi}_t) =  p_t(\vec x,y | \Theta^{*}, \pi_t^{*}),\quad\forall t\in \mathcal{T}.
    \end{equation}
\end{proof}


\begin{lem}
    \label{lem:minimize_convex_combin_kl}
    Consider $M$ probability distributions on $\mathcal{Y}$, that we denote $q_{m},~m\in[M]$, and $\alpha=\left(\alpha_{1}, \dots, \alpha_{m}\right) \in \Delta^{M}$. For any probability distribution $q$ over $\mathcal{Y}$, we have
    \begin{equation}
        \sum_{m=1}^{M}\alpha_{m} \cdot \mathcal{KL}\left(q_{m}\|\sum_{m'=1}^{M}\alpha_{m'}\cdot q_{m'}\right) \leq \sum_{m=1}^{M}\alpha_{m}\cdot \mathcal{KL}\left(q_{m}\|q\right),
    \end{equation}
    with equality if and only if,
    \begin{equation}
        q = \sum_{m=1}^{M}\alpha_{m}\cdot q_{m}.
    \end{equation}
\end{lem}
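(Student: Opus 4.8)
The plan is to reduce everything to a single application of the non-negativity of the Kullback--Leibler divergence (Gibbs' inequality). Write $\bar q \triangleq \sum_{m'=1}^{M} \alpha_{m'} q_{m'}$ for the mixture appearing on the left-hand side, so that the claim becomes $\sum_{m} \alpha_m \mathcal{KL}(q_m \| \bar q) \le \sum_{m} \alpha_m \mathcal{KL}(q_m \| q)$. The natural first step is to form the difference of the two sides and show that it equals a single KL divergence of the mixture against the arbitrary distribution $q$.

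Concretely, I would subtract the two weighted sums and expand using the definition $\mathcal{KL}(q_m\|q) = \int_{\mathcal Y} q_m \log(q_m/q)\,\dd y$. The entropy-like terms $\int_{\mathcal Y} q_m \log q_m \, \dd y$ are identical in both sums and cancel, leaving
\[
\sum_{m=1}^M \alpha_m \mathcal{KL}(q_m\|q) - \sum_{m=1}^M \alpha_m \mathcal{KL}(q_m\|\bar q) = \sum_{m=1}^M \alpha_m \int_{\mathcal Y} q_m(y) \log \frac{\bar q(y)}{q(y)} \, \dd y.
\]
Interchanging the finite sum with the integral and using $\sum_{m} \alpha_m q_m = \bar q$ collapses the weighted combination, so the right-hand side becomes $\int_{\mathcal Y} \bar q(y) \log(\bar q(y)/q(y))\,\dd y = \mathcal{KL}(\bar q\|q)$.

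The inequality then follows immediately: since $\mathcal{KL}(\bar q\|q)\ge 0$, the difference of the two weighted sums is non-negative, which is exactly the claimed bound. For the equality case I would invoke the standard fact that $\mathcal{KL}(\bar q\|q)=0$ if and only if $q = \bar q = \sum_{m} \alpha_m q_m$ (almost everywhere), which is precisely the stated equality condition.

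There is essentially no serious obstacle here; the whole argument is a short calculation. The only points deserving care are \emph{(i)} the clean cancellation of the $\log q_m$ terms, which is exactly what turns the difference of two weighted KL sums into a single KL divergence of the mixture against $q$, and \emph{(ii)} justifying the swap of the finite sum with the integral and then appealing to Gibbs' inequality together with its equality condition to obtain the ``if and only if.'' Implicitly one assumes the densities are such that the divergences are well defined (e.g.\ $\bar q \ll q$ on the relevant support), but under the paper's setting this is not restrictive.
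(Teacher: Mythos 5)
Your proposal is correct and follows essentially the same route as the paper's own proof: forming the difference of the two weighted sums, cancelling the $\log q_m$ terms, swapping the finite sum with the integral to collapse the mixture, and recognizing the result as $\mathcal{KL}\left(\sum_{m}\alpha_m q_m \,\big\|\, q\right) \geq 0$ with the standard equality condition. No meaningful difference in approach or substance.
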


\begin{proof}
    \begin{flalign}
        \sum_{m=1}^{M}\alpha_{m} & \cdot \mathcal{KL}\left(q_{m}\|q\right) - \sum_{m=1}^{M}\alpha_{m} \cdot \mathcal{KL}\left(q_{m}\|\sum_{m'=1}^{M}\alpha_{m'}\cdot q_{m'}\right) \nonumber &
        \\
        & = \sum_{m=1}^{M}\alpha_{m} \cdot \left[\mathcal{KL}\left(q_{m}\|q\right) - \mathcal{KL}\left(q_{m}\|\sum_{m'=1}^{M}\alpha_{m'}\cdot q_{m'}\right)\right]
        \\
        &= - \sum_{m=1}^{M}\alpha_{m} \int_{y\in\mathcal{Y}} q_{m}\left(y\right) \cdot \log\left(\frac{q\left(y\right)}{\sum_{m'=1}^{M}\alpha_{m'}\cdot q_{m'}\left(y\right)}\right)
        \\
        &= -  \int_{y\in\mathcal{Y}} \left\{\sum_{m=1}^{M}\alpha_{m} \cdot q_{m}\left(y\right)\right\} \cdot \log\left(\frac{q\left(y\right)}{\sum_{m'=1}^{M}\alpha_{m'}\cdot q_{m'}\left(y\right)}\right) \dd y
        \\
        &= \mathcal{KL}\left( \sum_{m=1}^{M}\alpha_{m} \cdot q_{m} \| q\right) \geq 0.
    \end{flalign}    
    The equality holds, if and only if, 
    \begin{equation}
        q = \sum_{m=1}^{M}\alpha_{m}\cdot q_{m}.
    \end{equation}
\end{proof}

\begin{lem}
    \label{lem:convex_combination_dsitributions}
    Consider $\breve{\Theta}$ and $\breve{\Pi}$ to be a solution of Problem~\eqref{eq:intermediate_problem}.
    Under Assumptions~\ref{assum:mixture}, \ref{assum:uniform_marginal}, and~\ref{assum:logloss}, if $c_{h}$ does not depend on $h \in\mathcal{H}$, then the predictors $h_{t}^{*},~t\in\mathcal{T}$, minimizing  $\mathbb{E}_{(\vec{x}, y)\sim\mathcal{D}_{t}}\left[l(h_{t}(\vec{x}), y)\right]$, verify for  $\left(\vec{x}, y\right) \in \mathcal{X} \times \mathcal{Y}$
    \begin{equation}
        \label{eq:convex_combination_dsitributions}
        p_{h_{t}^{*}}\left(y|\vec{x}\right) = \sum_{m=1}^{M}\breve{\pi}_{tm} \cdot p_{m}\left(y|\vec{x}, \breve{\theta}_{m}\right).
    \end{equation}
\end{lem}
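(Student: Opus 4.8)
The plan is to convert the loss-minimization problem into a Kullback--Leibler minimization and then chain together the two supporting lemmas. Since by hypothesis $c_h$ does not depend on $h$, write $c_h(\vec{x}) = c(\vec{x})$; Assumption~\ref{assum:logloss} then gives $l(h(\vec{x}),y) = -\log p_h(y|\vec{x}) + c(\vec{x})$, so for any client $t$ we have $\mathbb{E}_{(\vec{x},y)\sim\mathcal{D}_t}[l(h_t(\vec{x}),y)] = \mathbb{E}_{(\vec{x},y)\sim\mathcal{D}_t}[-\log p_{h_t}(y|\vec{x})] + \mathbb{E}_{\vec{x}}[c(\vec{x})]$. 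The last term is independent of $h_t$, so minimizing the expected loss is equivalent to minimizing the expected conditional cross-entropy $\mathbb{E}_{(\vec{x},y)\sim\mathcal{D}_t}[-\log p_{h_t}(y|\vec{x})]$.

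First I would rewrite the true conditional density in mixture form. By Assumption~\ref{assum:uniform_marginal} every component marginal equals $p(\vec{x})$, so Assumption~\ref{assum:mixture} yields $p_t(\vec{x},y) = p(\vec{x})\sum_m \pi_{tm}^* p_m(y|\vec{x})$ and hence $p_t(y|\vec{x}) = \sum_m \pi_{tm}^* p_m(y|\vec{x})$. Lemma~\ref{lem:likelihood_entropy_kl} then lets me swap the unknown true parameters for the solution $\breve{\Theta},\breve{\Pi}$ of Problem~\eqref{eq:intermediate_problem}, giving $p_t(y|\vec{x}) = \sum_m \breve{\pi}_{tm} p_m(y|\vec{x},\breve{\theta}_m)$. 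Substituting this into the expectation, and using $p_m(y|\vec{x},\breve{\theta}_m) = p_{h_{\breve{\theta}_m}}(y|\vec{x})$, the objective becomes $\mathbb{E}_{\vec{x}\sim p}\big[\sum_m \breve{\pi}_{tm}\big(-\int p_m(y|\vec{x})\log p_{h_t}(y|\vec{x})\,dy\big)\big]$, i.e.\ a $\breve{\pi}$-weighted sum of cross-entropies between each component conditional $p_m(\cdot|\vec{x})$ and the candidate $p_{h_t}(\cdot|\vec{x})$. Splitting every cross-entropy into an entropy plus a KL divergence, the entropy terms drop out (they do not involve $h_t$), so minimizing over $h_t$ reduces to minimizing $\mathbb{E}_{\vec{x}}\big[\sum_m \breve{\pi}_{tm}\,\mathcal{KL}\big(p_m(\cdot|\vec{x})\,\|\,p_{h_t}(\cdot|\vec{x})\big)\big]$. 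Applying Lemma~\ref{lem:minimize_convex_combin_kl} pointwise in $\vec{x}$ (with $\alpha_m = \breve{\pi}_{tm}$ and $q_m = p_m(\cdot|\vec{x})$) shows this weighted KL sum is minimized exactly when $p_{h_t}(\cdot|\vec{x}) = \sum_m \breve{\pi}_{tm} p_m(\cdot|\vec{x},\breve{\theta}_m)$, which is the claimed identity~\eqref{eq:convex_combination_dsitributions}.

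The hard part will be the realizability step. Lemma~\ref{lem:minimize_convex_combin_kl} pins down the mixture as the unique minimizer over \emph{all} distributions $q(\cdot|\vec{x})$, but I must argue that this unconstrained optimum is actually attained inside the constrained family $\{p_h : h\in\mathcal{H}\}$, so that any loss-minimizer $h_t^*$ inherits the identity rather than merely its KL projection. This is precisely where the convex-hull part of Assumption~\ref{assum:logloss} enters: the candidate $h = \sum_m \breve{\pi}_{tm} h_{\breve{\theta}_m}$ lies in $\mathcal{H}$, and for the losses at hand $p_h(\cdot|\vec{x})$ is affine in $h$ (in the logistic and cross-entropy cases $p_h(\cdot|\vec{x})$ is essentially the output vector, so a convex combination of components realizes the mixture conditional and makes the lower bound tight). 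I would also flag that in the squared-error case a single Gaussian cannot literally equal a Gaussian mixture, so there the identity is effectively used only at the level of first moments — which is exactly what the subsequent multiply-by-$y$-and-integrate step in the proof of Proposition~\ref{prop:local_model_is_weighted_average_of_components} needs.
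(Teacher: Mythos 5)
Your proposal is correct and follows essentially the same route as the paper's proof: swap $(\Theta^*,\Pi^*)$ for $(\breve\Theta,\breve\Pi)$ via Lemma~\ref{lem:likelihood_entropy_kl}, decompose the expected loss into entropy-plus-KL terms, and apply Lemma~\ref{lem:minimize_convex_combin_kl} pointwise in $\vec{x}$. The one place where you go beyond the paper is the realizability step, and your instinct there is sound: the paper simply introduces a predictor $h_{t}^{\circ}$ satisfying $p_{h_{t}^{\circ}}(\cdot|\vec{x}) = \sum_{m}\breve{\pi}_{tm}\, p_{m}(\cdot|\vec{x},\breve{\theta}_{m})$ and uses it to obtain the reverse inequality at the minimizer, without verifying that such a predictor exists in $\mathcal{H}$. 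For the logistic and cross-entropy losses your argument closes this cleanly, since $p_{h}(\cdot|\vec{x})$ is affine in the output $h(\vec{x})$, so the convex combination $\sum_{m}\breve{\pi}_{tm} h_{\breve{\theta}_{m}}$ (which lies in $\mathcal{H}$ by Assumption~\ref{assum:logloss}) realizes the mixture exactly. Your caveat about the mean squared error case is also accurate and identifies a genuine looseness in the paper's own proof: there $p_{h}(\cdot|\vec{x})$ is a single Gaussian with identity covariance, which cannot equal a nondegenerate Gaussian mixture, so identity~\eqref{eq:convex_combination_dsitributions} cannot hold literally; what does hold is that the constrained minimizer is the KL projection onto this Gaussian family, whose mean is $\sum_{m}\breve{\pi}_{tm} h_{\breve{\theta}_{m}}(\vec{x})$, and this first-moment identity is all that the subsequent multiply-by-$y$-and-integrate step in Proposition~\ref{prop:local_model_is_weighted_average_of_components} actually uses.
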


\begin{proof}
        For $t \in \mathcal{T}$ and $h_{t} \in\mathcal{H}$, under Assumptions~\ref{assum:mixture}, \ref{assum:uniform_marginal}, and~\ref{assum:logloss}, we have
    \begin{equation}
        \mathbb{E}_{(\vec{x}, y)\sim\mathcal{D}_{t}}\left[l(h_t(\vec{x}), y)\right] = \int_{\vec{x}, y \in \mathcal{X} \times \mathcal{Y}} l(h_t(\vec{x}), y) \cdot p_{t}\left(\vec{x}, y|\Theta^{*}, \pi_{t}^{*}\right)\dd\vec{x} \dd y. 
    \end{equation}
    Using Lemma~\ref{lem:likelihood_entropy_kl}, it follows that 
    \begin{equation}
        \mathbb{E}_{(\vec{x}, y)\sim\mathcal{D}_{t}}\left[l(h_t(\vec{x}), y)\right] = \int_{\vec{x}, y \in \mathcal{X} \times \mathcal{Y}} l(h_t(\vec{x}), y) \cdot p_{t}\left(\vec{x}, y|\breve{\Theta}, \breve{\pi}_{t}\right)\dd\vec{x} \dd y.
    \end{equation}
    Thus, using Assumptions~\ref{assum:mixture} and~\ref{assum:uniform_marginal} we have,
    \begin{flalign}
        \mathbb{E}&_{(\vec{x}, y)\sim\mathcal{D}_{t}} \left[l(h_t(\vec{x}), y)\right]  &
        \\ 
        &= \int_{\vec{x}, y \in \mathcal{X} \times \mathcal{Y}} l(h_t(\vec{x}), y) \cdot p_{t}\left(\vec{x}, y|\breve{\Theta}, \breve{\pi}_{t}\right)\dd\vec{x} \dd y
        \\
        &= \int_{\vec{x}, y \in \mathcal{X} \times \mathcal{Y}} l(h_t(\vec{x}), y) \cdot \left(\sum_{m=1}^{M}\breve{\pi}_{tm} \cdot p_{m}\left(y|\vec{x}, \breve{\theta}_{m}\right) \right) p\left(\vec{x}\right)\dd\vec{x} \dd y
        \\
        &= \int_{\vec{x}\in\mathcal{X}}\left[\sum_{m=1}^{M}\breve{\pi}_{tm} \int_{y\in\mathcal{Y}}l(h_t(\vec{x}), y) \cdot  p_{m}\left(y|\vec{x}, \breve{\theta}_{m}\right) \dd y \right] p\left(\vec{x}\right)\dd\vec{x} 
        \\
        &= \int_{\vec{x}\in\mathcal{X}}\left[\sum_{m=1}^{M}\breve{\pi}_{tm} \left\{c_{h_{t}}\left(\vec{x}\right) - \int_{y\in\mathcal{Y}} p_{m}\left(y|\vec{x}, \breve{\theta}_{m}\right)   \log p_{h_{t}}\left(y|\vec{x}\right) \dd y\right\}  \right] p\left(\vec{x}\right)\dd \vec{x}
        \\
        &= \int_{\vec{x}\in\mathcal{X}} \left[c_{h_{t}}\left(\vec{x}\right) - \sum_{m=1}^{M}\breve{\pi}_{tm}  \int_{y\in\mathcal{Y}} p_{m}\left(y|\vec{x}, \breve{\theta}_{m}\right)   \log p_{h_{t}}\left(y|\vec{x}\right) \dd y  \right] p\left(\vec{x}\right)\dd \vec{x}
        \\
        &= \int_{\vec{x}\in\mathcal{X}} \left[c_{h_{t}}\left(\vec{x}\right) + \sum_{m=1}^{M}\breve{\pi}_{tm} \cdot  H\left(p_{m}\left(\cdot|\vec{x}, \breve{\theta}_{m}\right)\right)   \right] p\left(\vec{x}\right)\dd \vec{x} \nonumber 
        \\
        & \qquad + \int_{\vec{x}\in\mathcal{X}} \left[ \sum_{m=1}^{M}\breve{\pi}_{tm} \cdot  \mathcal{KL}\left(p_{m}\big(\cdot|\vec{x}, \breve{\theta}_{m}\big)\| p_{h_{t}}\left(\cdot|\vec{x}\right) \right)     \right] p\left(\vec{x}\right)\dd \vec{x}.
        \label{eq:convex_combination_dsitributions_step_1}
    \end{flalign}
    
   Let $h_{t}^\circ$ be a predictor satisfying the following equality:
   \[p_{h_{t}^{\circ}}\left(y|\vec{x}\right) = \sum_{m=1}^{M}\breve{\pi}_{tm} \cdot p_{m}\left(y|\vec{x}, \breve{\theta}_{m}\right).\]
   Using Lemma~\ref{lem:minimize_convex_combin_kl}, we have
   \begin{equation}
        \label{eq:convex_combination_dsitributions_step_2}
        \sum_{m=1}^{M} \breve{\pi}_{tm} \cdot \mathcal{KL}\left(p_{m}\big(\cdot|\vec{x}, \breve{\theta}_{m}\big)\| p_{h_{t}}\left(\cdot|\vec{x}\right) \right) \geq 
        \sum_{m=1}^{M} \breve{\pi}_{tm} \cdot \mathcal{KL}\left(p_{m}\big(\cdot|\vec{x}, \breve{\theta}_{m}\big)\| p_{h_{t}^{\circ}}\left(\cdot|\vec{x}\right) \right) 
   \end{equation}
   with equality if and only if
   \begin{equation}
       p_{h_{t}}\left(\cdot|\vec{x}\right) = p_{h_{t}^{\circ}}\left(\cdot|\vec{x}\right).
   \end{equation}
   Since $c_{h}$ does not depend on $h$, replacing \eqref{eq:convex_combination_dsitributions_step_2} in \eqref{eq:convex_combination_dsitributions_step_1},  it follows that
   \begin{equation}
    \mathbb{E}_{(\vec{x}, y)\sim\mathcal{D}_{t}} \left[l(h_{t}(\vec{x}), y)\right] \geq
       \mathbb{E}_{(\vec{x}, y)\sim\mathcal{D}_{t}} \left[l(h_{t}^{\circ}(\vec{x}), y)\right].
   \end{equation} 
   This inequality holds for any predictor $h_t$ and in particular for $h_{t}^{*} \in \argmin \mathbb{E}_{(\vec{x}, y)\sim\mathcal{D}_{t}}\left[l(h_{t}(\vec{x}), y)\right]$, for which it also holds the opposite inequality, then:
   \begin{equation}
    \mathbb{E}_{(\vec{x}, y)\sim\mathcal{D}_{t}} \left[l(h_{t}^{*}(\vec{x}), y)\right] =
       \mathbb{E}_{(\vec{x}, y)\sim\mathcal{D}_{t}} \left[l(h_{t}^{\circ}(\vec{x}), y)\right],
   \end{equation} 
   and the equality implies that    
   \begin{equation}
          p_{h_{t}^{*}}\left(\cdot|\vec{x}\right) = p_{h_{t}^{\circ}}\left(\cdot|\vec{x}\right) = \sum_{m=1}^{M}\breve{\pi}_{tm} \cdot p_{m}\left(\cdot|\vec{x}, \breve{\theta}_{m}\right).
   \end{equation}
   
\end{proof}

\newpage
\section{Relation with Other Multi-Task Learning Frameworks}
\label{sec:general_mtl_framework}
In this appendix, we give more details about the relation of our formulation with existing frameworks for (federated) MTL sketched in Section~\ref{sec:generalizing}. 
We suppose that Assumptions~\ref{assum:mixture}--\ref{assum:logloss} hold and that each client learns a predictor of the form~\eqref{eq:linear_combination}.
Note that this is more general than  \cite{pmlr-v108-zantedeschi20a}, where each client learns a personal hypothesis as a weighted combination of a set of $M$ base \emph{known} hypothesis, since the base hypothesis and \emph{not only the weights} are learned in our case. 

\paragraph{Alternating Structure Optimization \cite{Zhou2011ClusteredMTLviaASO}.} Alternating structure optimization (ASO) is a popular MTL approach that learns a shared low-dimensional predictive structure on hypothesis spaces from multiple related tasks, i.e., all tasks are assumed to share a common feature space $P \in \mathbb{R}^{d' \times d}$, where $d' \leq \min(T, d)$ is the dimensionality of the shared feature space and $P$ has orthonormal columns ($PP^{\intercal}=I_{d'}$), i.e., $P$ is \emph{semi-orthogonal matrix}.
ASO leads to the following formulation:
 \begin{equation}
        \label{eq:aso_formulation}
        \begin{aligned}
            \minimize_{W, P:PP^{\intercal}=I_{d'} } \quad & \sum_{t=1}^{T}\sum_{i=1}^{n_{t}}l\left(h_{w_{t}}\left(\vec{x}_{t}^{(i)}\right), y_{t}^{(i)}\right) + \alpha \left(\tr\left(W W ^{\intercal}\right) - \tr\left(WP^{\intercal}PW^{\intercal}\right)\right) + \beta \tr\left(W W ^{\intercal}\right),\\
        \end{aligned}
\end{equation}
where $\alpha \geq 0$ is the regularization parameter for task relatedness and $\beta \geq 0$ is an additional L2 regularization parameter. 

When the hypothesis $\left(h_{\theta}\right)_{\theta}$ are assumed to be linear, Eq.~\eqref{eq:linear_combination} can be written as $W = \Pi \Theta$.
Writing the LQ decomposition\footnote{Note that when $\Theta$ is a full rank matrix, this decomposition is unique.} of matrix $\Theta$, i.e., $\Theta = LQ$, where $L \in \mathbb{R}^{M\times M}$ is a lower triangular matrix and $Q \in \mathbb{R}^{M  \times d}$ is a semi-orthogonal matrix ($Q Q^{\intercal} = I_{M}$), \eqref{eq:linear_combination} becomes $W = \Pi L Q \in \mathbb{R}^{T \times d}$, thus, $W = W Q^{\intercal} Q$, leading to the constraint $\left\|W - W Q^{\intercal} Q\right\|_{F}^{2} = \tr\left(W W ^{\intercal}\right) - \tr\left(WQ^{\intercal}QW^{\intercal}\right) = 0$. If we assume $\left\|\theta_{m}\right\|^{2}_{2}$ to be bounded by a constant $B>0$  for all $m \in[M]$, we get the constraint $\tr\left(W W^{\intercal} \right) \leq TB$.
It means that minimizing $\sum_{t=1}^{T}\sum_{i=1}^{n_{t}}l\left(h_{w_{t}}\left(\vec{x}_{t}^{(i)}\right), y_{t}^{(i)}\right)$ under our Assumption~\ref{assum:mixture} can be formulated as the following constrained optimization problem
\begin{equation}
        \label{eq:constrained_pblm_aso}
        \begin{aligned}
            \minimize_{W, Q: QQ^{\intercal}=I_{M}} \quad & \sum_{t=1}^{T}\sum_{i=1}^{n_{t}}l\left(h_{w_{t}}\left(\vec{x}_{t}^{(i)}\right), y_{t}^{(i)}\right),\\
            \mathrm{subject \;\; to} \quad & \tr\left\{W W ^{\intercal}\right\} - \tr\left\{WQ^{\intercal}QW^{\intercal}\right\} = 0, \\
            & \tr\left(W W^{\intercal} \right) \leq TB.
        \end{aligned}
\end{equation}

Thus, there exists Lagrange multipliers $\alpha \in \mathbb{R}$ and $\beta > 0$, for which Problem~\eqref{eq:constrained_pblm_aso} is equivalent to the following regularized optimization problem
\begin{equation}
        \minimize_{W, Q:QQ^{\intercal}=I_{M} }  \sum_{t=1}^{T}\sum_{i=1}^{n_{t}}l\left(h_{w_{t}}\left(\vec{x}_{t}^{(i)}\right), y_{t}^{(i)}\right) + \alpha \left(\tr\left\{W W ^{\intercal}\right\} - \tr\left\{WQ^{\intercal}QW^{\intercal}\right\}\right) + \beta \tr\left\{W W ^{\intercal}\right\},
\end{equation}
which is exactly Problem~\eqref{eq:aso_formulation}. 

\paragraph{Federated MTL via task relationships.}
The ASO formulation above motivated the authors of~\cite{smith2017federated} to learn personalized models by solving the following problem
\begin{equation}
    \label{eq:virginia_app}
    \min_{W,\Omega} \sum_{t=1}^{T}\sum_{i=1}^{n_{t}}l\left(h_{w_{t}}\left(\vec{x}_{t}^{(i)}\right), y_{t}^{(i)}\right) +  \lambda \tr\left(W\Omega W^{\intercal}\right),
\end{equation}
Two alternative MTL formulations are presented in~\cite{smith2017federated} to justify Problem~\eqref{eq:virginia_app}: MTL with probabilistic priors~\cite{zhang2010convex} and  MTL with graphical models~\cite{lauritzen1996graphical}.
Both of them can be covered using our Assumption~\ref{assum:mixture} as follows:
    \begin{itemize}
        \item Considering $T=M$ and $\Pi = I_{M}$ in Assumption~\ref{assum:mixture} and introducing a prior on $\Theta$ of the form 
            \begin{equation}
                \label{eq:prior_on_theta_app}
                \Theta \sim \left(\prod\mathcal{N}\left(0, \sigma^{2}I_{d}\right)\right) \mathcal{MN}\left(I_{d} \otimes \Omega \right)
            \end{equation}
        lead to a formulation similar to MTL with probabilistic priors \cite{zhang2010convex}.
        \item Two tasks $t$ and $t'$ are independent if $\langle \pi_{t}, \pi_{t'} \rangle  = 0$, thus using $\Omega_{t,t'} = \langle \pi_{t}, \pi_{t'} \rangle$ leads to the same graphical model as in \cite{lauritzen1996graphical}.
    \end{itemize}
Several personalized FL formulations, e.g., \pFedMe \cite{dinh2020personalized}, \texttt{FedU} \cite{dinh2021fedu} and the formulation studied in \cite{hanzely2020federated} and in \cite{hanzely2020lower}, are special cases of formulation~\eqref{eq:prior_on_theta_app}.

\newpage
\section{Centralized Expectation Maximization}
\label{proof:centralized_em}
\begin{repprop}{prop:em}
    Under Assumptions~\ref{assum:mixture} and~\ref{assum:uniform_marginal},
    at the $k$-th iteration the EM algorithm updates parameter estimates through the following steps:
    \begin{flalign}
        \textbf{E-step:} && q^{k+1}_{t}(z_{t}^{(i)}=m) & \propto  \pi_{t m}^{k} \cdot \exp\left(-l(h_{\theta_{m}^{k}}(\vec x_t^{(i)}), y_{t}^{(i)})\right), &    t \in [T],~ m \in [M],~ i \in [n_{t}]~~
        \tag{\ref{eq:e_step}}
    \end{flalign}
        \begin{flalign}
        \textbf{M-step:} &&\quad~\pi^{k+1}_{t m} & = \frac{\sum_{i=1}^{n_{t}}q^{k+1}_{t}(z_{t}^{(i)}=m)}{n_{t}}, &   t \in [T],~\ m \in [M]  \tag{\ref{eq:update_pi}}\\
        &&\theta^{k+1}_{m} & \in \argmin_{\theta\in\mathbb{R}^{d}} \sum_{t=1}^{T}\sum_{i=1}^{n_{t}} q^{k+1}_{t}(z_{t}^{(i)}=m)  l\big(h_{\theta}(\vec{x}_{t}^{(i)}), y_{t}^{(i)}\big), & m \in [M]   \tag{\ref{eq:update_h}}
    \end{flalign}
\end{repprop}
\begin{proof}
    The objective is to learn parameters $\{\breve{\Theta}, \breve{\Pi}\}$ from the data $\mathcal{S}_{1:T}$ by maximizing the likelihood $p\left(S_{1:T}|\Theta, \Pi\right)$. We introduce functions $q_{t}(z),~t\in[T]$ such that $q_{t} \geq 0$ and $\sum_{z=1}^{M}q_{t}(z) = 1$ in the expression of the likelihood. For $\Theta \in \mathbb{R}^{M\times d}$ and $\Pi \in \Delta^{T\times M}$, we have 
    \begin{flalign}
        \log p(\mathcal{S}_{1:T}|\Theta, \Pi) &= \sum_{t=1}^{T}\sum_{i=1}^{n_{t}}\log p_t\left(s_{t}^{(i)}|\Theta, \pi_{t}\right) &
        \\
        &= \sum_{t=1}^{T}\sum_{i=1}^{n_{t}}\log \left[\sum_{m=1}^{M}\left(\frac{p_t\left(s_{t}^{(i)}, z_{t}^{(i)}=m|\Theta, \pi_{t}\right)}{q_t\left(z_{t}^{(i)}=m\right)}\right) q_t\left(z_{t}^{(i)}=m\right)\right]
        \\
        &\geq \sum_{t=1}^{T}\sum_{i=1}^{n_{t}}\sum_{m=1}^{M}q_t\left(z_{t}^{(i)}=m\right)\log\frac{p_t\left(s_{t}^{(i)}, z_{t}^{(i)}=m|\Theta, \pi_{t}\right)}{q_t\left(z_{t}^{(i)}=m\right)}
        \\
        &=  \sum_{t=1}^{T}\sum_{i=1}^{n_{t}}\sum_{m=1}^{M}q_t\left(z_{t}^{(i)}=m\right) \log p_t\left(s_{t}^{(i)},z_{t}^{(i)}=m|\Theta, \pi_{t}\right) \nonumber
        \\
        & \qquad - \sum_{t=1}^{T}\sum_{i=1}^{n_{t}}\sum_{m=1}^{M}q_t\left(z_{t}^{(i)}=m\right)\log q_t\left(z_{t}^{(i)}=m\right)
        \\
        \label{eq:variational_lower_bound}
        &\triangleq \mathfrak{L}(\Theta, \Pi, Q_{1:T}),
    \end{flalign}
    where we used Jensen's inequality because $\log$ is concave. $\mathfrak{L}(\Theta, \Pi, Q_{1:T})$ is an \emph{evidence lower bound}. The centralized EM-algorithm corresponds to iteratively maximizing this bound with respect to $Q_{1:T}$ (E-step) and with respect to $\{\Theta, \Pi\}$ (M-step).
    
    \paragraph{E-step.} The difference between the log-likelihood and the evidence lower bound $\mathfrak{L}(\Theta, \Pi, Q_{1:T})$ can be expressed in terms of a sum of $\mathcal{KL}$ divergences:
    \begin{flalign}
        \log  & p(\mathcal{S}_{1:T}|\Theta, \Pi) - \mathfrak{L}(\Theta, \Pi, Q_{1:T})= \nonumber &
        \\ &=  \sum_{t=1}^{T}\sum_{i=1}^{n_{t}}\left\{\log p_t\left(s_{t}^{(i)}|\Theta, \pi_{t}\right) - \sum_{m=1}^{M}q_t\left(z_{t}^{(i)}=m\right)\log \frac{p_t\left(s_{t}^{(i)},z_{t}^{(i)}=m|\Theta, \pi_{t}\right) }{q_t\left(z_{t}^{(i)}=m\right)}\right\} 
        \\
        &=\sum_{t=1}^{T}\sum_{t=1}^{n_{t}}\sum_{m=1}^{M} q_{t}\left(z_{t}^{(i)}=m\right)\left( \log p_{t}\left(s_{t}^{(i)}| \Theta, \pi_{t}\right)  - \log \frac{p_{t}\left(s_{t}^{(i)}, z_{t}^{(i)}=m|  \Theta, \pi_{t}\right)}{q_{t}\left(z_{t}^{(i)}=m\right)} \right)
        \\
        &= \sum_{t=1}^{T}\sum_{t=1}^{n_{t}}\sum_{m=1}^{M} q_{t}\left(z_{t}^{(i)}=m\right) \log \frac{p_{t}\left(s_{t}^{(i)}| \Theta, \pi_{t}\right) \cdot q_{t}\left(z_{t}^{(i)}=m\right)}{p_{t}\left(s_{t}^{(i)}, z_{t}^{(i)}=m|  \Theta, \pi_{t}\right)}
        \\
        & = \sum_{t=1}^{T}\sum_{t=1}^{n_{t}}\sum_{m=1}^{M} q_{t}\left(z_{t}^{(i)}=m\right) \log \frac{ q_{t}\left(z_{t}^{(i)}=m\right)}{p_{t}\left(z_{t}^{(i)}=m|s_{t}^{(i)}, \Theta, \pi_{t}\right)}
        \\
        &= \sum_{t=1}^{T}\sum_{i=1}^{n_{t}}\mathcal{KL}\left(q_{t}\left(z_{t}^{(i)}\right)||p_{t}\left(z_{t}^{(i)} | s_{t}^{(i)}, \Theta, \pi_{t}\right)\right) \geq 0. \label{eq:variational_gap_is_kl}
    \end{flalign}
     For fixed parameters $\{\Theta, \Pi\}$,  the maximum of $\mathfrak{L}(\Theta, \Pi, Q_{1:T})$ is reached when   \[\sum_{t=1}^{T}\sum_{i=1}^{n_{t}}\mathcal{KL}\left(q_{t}\left(z_{t}^{(i)}\right)||p_{t}\left(z_{t}^{(i)} | s_{t}^{(i)}, \Theta, \pi_{t}\right)\right) = 0.\] Thus for $t\in[T]$ and $i\in[n_{t}]$, we have:
    \begin{flalign}
        q_{t}(z_{t}^{(i)}=m) &= p_{t}(z_{t}^{(i)}=m|s_{t}^{(i)}, \Theta, \pi_{t}) &
        \\
        &= \frac{p_{t}(s_{t}^{(i)} | z_{t}^{(i)}=m,\Theta, \pi_{t}) \times p_{t}(z_{t}^{(i)}=m|\Theta, \pi_{t})}{p_{t}\left(s_{t}^{(i)}|\Theta, \pi_{t}\right)}
        \\
        &= \frac{p_{m}(s_{t}^{(i)}| \theta_m) \times \pi_{t m}}{\sum_{m'=1}^{M} p_{m'}(s_{t}^{(i)}) \times \pi_{t m'}}
        \\
        &=  \frac{p_{{m}}\left(y_{t}^{(i)}|\vec{x}_{t}^{(i)}, \theta_m\right)\times p_{m}\left(\vec{x}_{t}^{(i)}\right) \times \pi_{t m}}{\sum_{m'=1}^{M} p_{{m'}}\left(y_{t}^{(i)}|\vec{x}_{t}^{(i)}, \theta_{m'}\right)\times p_{m'}\left(\vec{x}_{t}^{(i)}\right) \times \pi_{t m'}}
        \\
        &=  \frac{p_{{m}}\left(y_{t}^{(i)}|\vec{x}_{t}^{(i)}, \theta_{m} \right)\times p\left(\vec{x}_{t}^{(i)}\right) \times \pi_{t m}}{\sum_{m'=1}^{M} p_{{m'}}\left(y_{t}^{(i)}|\vec{x}_{t}^{(i)}, , \theta_{m'}\right)\times p\left(\vec{x}_{t}^{(i)}\right) \times \pi_{t m'}},\label{eq:ass_marginals_use}
    \end{flalign}
    where \eqref{eq:ass_marginals_use} relies on Assumption \ref{assum:uniform_marginal}.
    It follows that
    \begin{equation}
        \label{eq:app_e_step}
        q_{t}(z_{t}^{(i)}=m) = p_{t}(z_{t}^{(i)}=m|s_{t}^{(i)}, \Theta, \pi_{t})  = \frac{p_{{m}}\left(y_{t}^{(i)}|\vec{x}_{t}^{(i)}, \theta_{m}\right)\times \pi_{t m}}{\sum_{m'=1}^{M} p_{{m'}}\left(y_{t}^{(i)}|\vec{x}_{t}^{(i)}, \theta_{m'}\right) \times \pi_{t m'}}.
    \end{equation}

    \paragraph{M-step.} We maximize now $\mathfrak{L}(\Theta, \Pi, Q_{1:T})$ with respect to $\{\Theta, \Pi\}$. By dropping the terms  not depending on $\{\Theta, \Pi\}$ in the expression of  $\mathfrak{L}(\Theta, \Pi, Q_{1:T})$ we write:
    \begin{flalign}
        \mathfrak{L}& (\Theta, \Pi, Q_{1:T})  & \nonumber \\
        &= \sum_{t=1}^{T}\sum_{i=1}^{n_{t}}\sum_{m=1}^{M}q_t\left(z_{t}^{(i)}=m\right)\log p_t\left(s_{t}^{(i)}, z_{t}^{(i)}=m|\Theta, \pi_{t}\right) + c 
        \\
        &= \sum_{t=1}^{T}\sum_{i=1}^{n_{t}}\sum_{m=1}^{M}q_t\left(z_{t}^{(i)}=m\right) \Big[\log p_t\left(s_{t}^{(i)}| z_{t}^{(i)}=m, \Theta, \pi_{t}\right) + \log p_t\left(z_{t}^{(i)}=m| \Theta, \pi_{t}\right) \Big] +  c
        \\
        &= \sum_{t=1}^{T}\sum_{i=1}^{n_{t}}\sum_{m=1}^{M}q_t\left(z_{t}^{(i)}=m\right) \left[\log p_{\theta_{m}}\left(s_{t}^{(i)}\right) + \log \pi_{t m} \right]+  c
        \\
        &= \sum_{t=1}^{T}\sum_{i=1}^{n_{t}}\sum_{m=1}^{M}q_t\left(z_{t}^{(i)}=m\right) \left[\log p_{\theta_{m}}\left(y_{t}^{(i)}| \vec{x}_{t}^{(i)} \right) + \log p_m\left( \vec{x}_{t}^{(i)} \right) + \log \pi_{t m} \right] +c 
        \\
        &= \sum_{t=1}^{T}\sum_{i=1}^{n_{t}}\sum_{m=1}^{M}q_t\left(z_{t}^{(i)}=m\right) \left[\log p_{\theta_{m}}\left(y_{t}^{(i)}| \vec{x}_{t}^{(i)} \right)  + \log \pi_{t m} \right]+  c',\\
    \end{flalign}
    where $c$ and $c'$ are constant not depending on $\left\{\Theta, \Pi\right\}$.

    Thus, for $t\in[T]$ and $m\in[M]$, by solving a simple optimization problem we update $\pi_{t m}$ as follows:
    \begin{equation}
        \label{eq:app_update_pi}
        \pi_{t m} =\frac{\sum_{i=1}^{n_{t}}q_{t}(z_{t}^{(i)}=m)}{n_{t}}.
    \end{equation}
    On the other hand, for $m\in[M]$, we update $\theta_{m}$ by solving:
    \begin{equation}
        \label{eq:app_update_h}
        \theta_{m} \in \argmin_{\theta\in\mathbb{R}^{d}} \sum_{t=1}^{T}\sum_{i=1}^{n_{t}} q_{t}(z_{t}^{(i)}=m) \times  l\left(h_{\theta}(\vec{x}_{t}^{(i)}), y_{t}^{(i)}\right).
    \end{equation}
\end{proof}

\newpage
\section{Detailed Algorithms}
\subsection{Client-Server Algorithm}
\label{app:alg_centralized}
Alg.~\ref{alg:fed_em} is a detailed version of Alg.~\ref{alg:fed_em_short} (\FedEM), with local SGD used as local solver.

Alg.~\ref{alg:fed_surrogate_opt} gives our general algorithm for federated surrogate optimization, from which Alg.~\ref{alg:fed_em} is derived.

\begin{algorithm}
    \SetKwFunction{LocalSolver}{LocalSolver}
    \SetKwInOut{Input}{Input}
    \SetKwInOut{Output}{Output}
    \SetKw{Iterations}{iterations}
    \SetKw{Sample}{sample}
    \SetKw{Tasks}{tasks}
    \SetKw{Component}{component}
    \SetKw{InParallel}{in parallel}
    \SetKw{Server}{server}
    \SetKw{Clients}{clients}
    \SetKw{Client}{client}
    \SetKw{Broadcasts}{broadcasts}
    \SetKw{Sends}{sends}
    \SetKw{Over}{over}
    \SetKw{Tothe}{to the}
    \SetKwProg{Fn}{Function}{:}{}
    \SetAlgoLined
    \Input{ Data $\mathcal{S}_{1:T}$; number of mixture components $M$;  number of communication rounds $K$; number of local steps $J$}
    \Output{$\theta^{K}_{m}$ for $1\in[M]$; $\pi^{K}_{t}$ for $t\in[T]$}
    \tcp{Initialization}
    \Server randomly initialize $\theta^{0}_{m}\in\mathbb{R}^{d}$ for $1\leq m\leq M$\;
    \For{\Tasks $t=1, \dots, T$ \InParallel \Over $T$ \Clients}{
        Randomly initialize $\pi^{0}_{t} \in\Delta^{M}$\;
    }
    \tcp{Main loop}
    \For{\Iterations $k=1, \dots, K$}{
        \Server \Broadcasts $\theta^{k-1}_{m},~1\leq m \leq M$ \Tothe $T$ \Clients\;
        \For{\Tasks $t=1, \dots, T$ \InParallel \Over $T$ \Clients}{
            \For{\Component $m=1, \dots, M$}{
                \tcp{E-step}
                \For{\Sample $i=1, \dots, n_{t}$}{
                    $q^{k}_{t}\left(z_{t}^{(i)}=m\right) \gets \frac{\pi_{t m}^{k} \cdot \exp\left(-l(h_{\theta_{m}^{k}}(\vec x_t^{(i)}), y_{t}^{(i)})\right)}{\sum_{m'=1}^{M}\pi_{t m'}^{k} \cdot \exp\left(-l(h_{\theta_{m'}^{k}}(\vec x_t^{(i)}), y_{t}^{(i)})\right)}$ \;
                }
                \tcp{M-step}
                $\pi_{t m}^{k} \gets \frac{\sum_{i=1}^{n_{t}}q^{k}_{t}(z_{t}^{(i)}=m)}{n_{t}}$ \;
                $\theta_{m,t}^{k} \gets$ \LocalSolver{$J$, $m$, $\theta^{k-1}_{m}$, $q_{t}^{k}$, $\mathcal{S}_{t}$}
                \;
            }
            \Client $t$ \Sends $\theta_{m,t}^{k},~1\leq m \leq M$ \Tothe \Server\;
        }
        \For{\Component $m=1, \dots, M$}{
            $\theta_{m}^{k} \gets \sum_{t=1}^{T}\frac{n_{t}}{n} \cdot \theta_{m,t}^{k}$\;
        }
    }
    \BlankLine
    \BlankLine
    \BlankLine
    \Fn{\LocalSolver{$J$, $m$, $\theta$, $q$, $\mathcal{S}$}}{
        \For{$j=0,\dots, J-1$}{
        Sample indexes $\mathcal{I}$ uniformly from $1, \dots, |\mathcal{S}|$\;
            $\theta \gets \theta - \eta_{k-1,j}\sum_{i\in\mathcal{I}}q(z^{(i)}=m)\cdot \nabla_{\theta} l\left(h_{\theta}\left(\vec{x}^{(i)}\right), y^{(i)}\right)$\;
        }
        \Return $\theta$\;
    }
    \caption{\FedEM: Federated Expectation-Maximization}
    \label{alg:fed_em}
\end{algorithm}

\newpage

\begin{algorithm}
    \SetKwInOut{Input}{Input}
    \SetKwInOut{Output}{Output}
    \SetKw{Iterations}{iterations}
    \SetKw{Tasks}{tasks}
    \SetKw{InParallel}{in parallel}
    \SetKw{Server}{server}
    \SetKw{Clients}{clients}
    \SetKw{Client}{client}
    \SetKw{Broadcasts}{broadcasts}
    \SetKw{Sends}{sends}
    \SetKw{Over}{over}
    \SetKw{Tothe}{to the}
    \SetKwFunction{LocalSolver}{LocalSolver}
    \SetKwProg{Fn}{Function}{:}{}
    \SetAlgoLined
    \Input{ $\vec{u}^{0} \in \mathbb{R}^{d_{u}}$; $\vec{V}^{0}=\left(\vec{v}_{t}^{0}\right)_{1\leq t \leq T} \in \mathcal{V}^{T}$; number of iterations $K$; number of local steps $J$}
   \Output{ $\vec{u}^{K}$;~$\vec{v}_{t}^{K}$}
    \For{\Iterations $k=1, \dots, K$}{
        \Server \Broadcasts $\vec{u}^{k-1}$ \Tothe $T$ \Clients\;
        \For{\Tasks $t=1, \dots, T$ \InParallel \Over $T$ \Clients}{
            Compute  partial first-order surrogate function $g_{t}^{k}$ of $f_{t}$ near $\left\{\vec{u}^{k-1}, \vec{v}_{t}^{k-1}\right\}$\;
            $\vec{v}_{t}^{k} \gets \argmin\limits_{\vec{v}\in\mathcal{V}} g_{t}^{k}\left(\vec{u}^{k-1}, \vec{v}\right)$\;
            $u_{t}^{k} \gets$ \LocalSolver{$J$, $\vec{u}_t^{k-1}$, $\vec{v}_{t}^{k-1}$, $g_{t}^{k}$, $\mathcal{S}_t$}\; \label{line:surogate_optimization_local_slover}
            \Client $t$ \Sends $\vec{u}_{t}^{k}$ \Tothe \Server\;
        }
        $\vec{u}^{k} \gets \sum_{t=1}^{T}\omega_{t}\cdot \vec{u}_{t}^{k}$\;
    }
    \BlankLine
    \BlankLine
    \BlankLine
    \Fn{\LocalSolver{$J$, $\vec{u}$, $\vec{v}$, $g$, $\mathcal{S}$}}{
        \For{$j=0,\dots, J-1$}{
            sample $\xi^{k-1,j}$ from $\mathcal{S}$\;
            $\vec{u} \gets \vec{u} - \eta_{k-1,j} \cdot \nabla_{\vec{u}} g(\vec{u}, \vec{v}; \xi^{k-1,j})$\;
        }
        \Return $\Theta$\;
    }
    \caption{Federated Surrogate Optimization}
    \label{alg:fed_surrogate_opt}
\end{algorithm}

\newpage
\subsection{Fully Decentralized Algorithm}
\label{app:alg_decentralized}
Alg.~\ref{alg:d_em} shows \DEM{}, the fully decentralization version of our federated expectation maximization algorithm.

Alg.~\ref{alg:decentralized_surrogate_opt} gives our general fully decentralized algorithm for federated surrogate optimization, from which Alg.~\ref{alg:d_em} is derived.

\begin{algorithm}
    \SetKwFunction{LocalSolver}{LocalSolver}
    \SetKwInOut{Input}{Input}
    \SetKwInOut{Output}{Output}
    \SetKw{Iterations}{iterations}
    \SetKw{Sample}{sample}
    \SetKw{Tasks}{tasks}
    \SetKw{Component}{component}
    \SetKw{InParallel}{in parallel}
    \SetKw{Server}{server}
    \SetKw{Clients}{clients}
    \SetKw{Client}{client}
    \SetKw{Broadcasts}{broadcasts}
    \SetKw{Sends}{sends}
    \SetKw{Over}{over}
    \SetKw{Tothe}{to the}
    \SetKw{Receives}{receives}
    \SetKwProg{Fn}{Function}{:}{}
    \SetAlgoLined
    \Input{ Data $\mathcal{S}_{1:T}$; number of mixture components $M$; number of iterations $K$; number of local steps $J$; mixing matrix distributions $\mathcal{W}^{k}$ for $k\in [K]$}
    \Output{ $\theta^{K}_{m,t}$ for $m\in[M]$ and $t\in[T]$; $\pi_{t}$ for $t\in[T]$}
    \tcp{Initialization}
    \For{\Tasks $t=1, \dots, T$ \InParallel \Over $T$ \Clients}{
        Randomly initialize $\Theta_{t} = (\theta_{m, t})_{1\leq m\leq M} \in\mathbb{R}^{M \times d}$ \;
        Randomly initialize $\pi^{0}_{t} \in\Delta^{M}$\;
    }
    \tcp{Main loop}
    \For{\Iterations $k=1, \dots, K$}{
        \tcp{Select the communication topology and the aggregation weights}
        Sample $W^{k-1} \sim \mathcal{W}^{k-1}$\;
        \For{\Tasks $t=1, \dots, T$ \InParallel \Over $T$ \Clients}{
            \For{\Component $m=1, \dots, M$}{
                \tcp{E-step}
                \For{\Sample $i=1, \dots, n_{t}$}{
                    $q^{k}_{t}\left(z_{t}^{(i)}=m\right) \gets \frac{\pi_{t m}^{k} \cdot \exp\left(-l(h_{\theta_{m}^{k}}(\vec x_t^{(i)}), y_{t}^{(i)})\right)}{\sum_{m'=1}^{M}\pi_{t m'}^{k} \cdot \exp\left(-l(h_{\theta_{m'}^{k}}(\vec x_t^{(i)}), y_{t}^{(i)})\right)}$\;
                }
                \tcp{M-step}
                $\pi_{t m}^{k} \gets \frac{\sum_{i=1}^{n_{t}}q^{k}_{t}(z_{t}^{(i)}=m)}{n_{t}}$ \;
                $\theta_{m,t}^{k-\frac{1}{2}} \gets$  \LocalSolver{$J$, $m$, $\theta_{m,t}^{k-1}$, $q_{t}^{k}$, $\mathcal{S}_{t}$, $t$}\label{line:d_EM_local_solver_step}\;
            }
            Send $\theta_{m,t}^{k-\frac{1}{2}},~1\leq m \leq M$ to neighbors\;
            Receive $\theta_{m,s}^{k-\frac{1}{2}},~1\leq m \leq M$ from neighbors\;
            \For{\Component $m=1, \dots, M$}{
                $\theta_{m,t}^{k} \gets \sum_{s=1}^{T}w^{k-1}_{s,t} \cdot \theta_{m,s}^{k-\frac{1}{2}}$\; \label{line:d_EM_aggregation}
            }
        }
    }
    \BlankLine
    \BlankLine
    \BlankLine
    \Fn{\LocalSolver{$J$, $m$, $\theta$, $q$, $\mathcal{S}$, $t$}}{
        \For{$j=0,\dots, J-1$}{
        Sample indexes $\mathcal{I}$ uniformly from $1, \dots, |\mathcal{S}|$\;
            $\theta \gets \theta -  \frac{n_{t}}{n} \cdot \eta_{k-1,j}\sum_{i\in\mathcal{I}}q(z^{(i)}=m)\cdot \nabla_{\theta} l\left(h_{\theta}\left(\vec{x}^{(i)}\right), y^{(i)}\right)$\;
        }
        \Return $\theta$\;
    }
    \caption{\DEM: Fully Decentralized Federated Expectation-Maximization}
    \label{alg:d_em}
\end{algorithm}

\newpage
\begin{algorithm}
    \SetKwInOut{Input}{Input}
    \SetKwInOut{Output}{Output}
    \SetKw{Iterations}{iterations}
    \SetKw{Tasks}{tasks}
    \SetKw{InParallel}{in parallel}
    \SetKw{Server}{server}
    \SetKw{Clients}{clients}
    \SetKw{Client}{client}
    \SetKw{Broadcasts}{broadcasts}
    \SetKw{Sends}{sends}
    \SetKw{Receives}{receives}
    \SetKw{Over}{over}
    \SetKw{Tothe}{to the}
    \SetKwFunction{LocalSolver}{LocalSolver}
    \SetKwProg{Fn}{Function}{:}{}
    \SetAlgoLined
    \Input{ $\vec{u}^{0} \in \mathbb{R}^{d_{u}}$; $\vec{V}^{0}=\left(\vec{v}_{t}^{0}\right)_{1\leq t \leq T} \in \mathcal{V}^{T}$; number of iterations $K$; number of local step $J$; mixing matrix distributions $\mathcal{W}^{k}$ for $k\in [K]$}
    \Output{ $\vec{u}^{K}_{t}$ for $t\in[T]$; $\vec{v}_{t}^{K}$ for $t\in[T]$}
    \For{\Iterations $k=1, \dots, K$}{
        \tcp{Select the communication topology and the aggregation weights}
        Sample $W^{k-1} \sim \mathcal{W}^{k-1}$\;
        \For{\Tasks $t=1, \dots, T$ \InParallel \Over $T$ \Clients}{
            compute partial first-order surrogate function $g_{t}^{k}$ of $f_{t}$ near $\left\{\vec{u}_{t}^{k-1}, \vec{v}_{t}^{k-1}\right\}$\;
            $\vec{v}_{t}^{k} \gets \argmin\limits_{v\in\mathcal{V}} g_{t}^{k}\left(\vec{u}_{t}^{k-1}, \vec{v}\right)$\;
            $\vec{u}_{t}^{k-\frac{1}{2}} \gets$  \LocalSolver{$J$, $\vec{u}_{t}^{k-1}$, $\vec{v}_{t}^{k-1}$, $g_{t}^{k}$, $t$}\; \label{line:decentralized_surogate_optimization_local_slover}
            Send $\vec{u}_{t}^{k-\frac{1}{2}}$ to neighbors\;
            Receive $\vec{u}_{s}^{k-\frac{1}{2}}$ from neighbors\;
            $\vec{u}^{k}_{t} \gets \sum_{s=1}^{T}w^{k-1}_{t s}\times \vec{u}_{s}^{k-\frac{1}{2}}$\;
        }
    }
    \BlankLine
    \BlankLine
    \BlankLine
    \Fn{\LocalSolver{$J$, $\vec{u}$, $\vec{v}$, $g$, $\mathcal{S}$, $t$}}{
        \For{$j=0,\dots, J-1$}{
            sample $\xi^{k-1,j}$ from $\mathcal{S}$ \;
            $\vec{u} \gets \vec{u} - \omega_{t} \cdot \eta_{k-1,j} \nabla_{\vec{u}} g(\vec{u}, \vec{v}, \xi^{k-1,j})$\;
        }
        \Return $\vec{u}$\;
    }
    \caption{Fully-Decentralized Federated Surrogate Optimization}
    \label{alg:decentralized_surrogate_opt}
\end{algorithm}

\newpage
\section{Details on the Fully Decentralized Setting}
\label{app:full_decentralized_assumptions}
As mentioned in Section~\ref{sec:decentralized},
the convergence of decentralized optimization schemes requires certain assumptions on the sequence of mixing matrices $(W^{k})_{k>0}$, to guarantee that each client can influence the estimates of other clients over time. In our paper, we consider the following general assumption.
\begin{assumption}[{\cite[Assumption~4]{Koloskova2020AUT}}]
    \label{assum:expected_consensus_rate}
    Symmetric doubly stochastic mixing matrices are drawn at each round $k$ from (potentially different) distributions $W^{k}\sim\mathcal{W}^{k}$ and 
    there exists two constants $p\in(0,1]$, and integer $\tau\geq 1$ such that for all $\Xi \in\mathbb{R}^{M \times d\times T}$ and all integers $l\in\left\{0, \dots, K/\tau\right\}$:
    \begin{equation}
        \mathbb{E}\left\|\Xi W_{l,\tau} - \bar{\Xi}\right\|_{\mathcal{F}}^{2} \leq (1-p)\left\|\Xi - \bar{\Xi}\right\|_{\mathcal{F}}^{2},
    \end{equation}
    where $W_{l,\tau} \triangleq W^{(l+1)\tau-1}\dots W^{l \tau}$, $\bar{\Xi}\triangleq \Xi\frac{\mathbf{1}\mathbf{1}^{\intercal}}{T}$, and the expectation  is taken over the random distributions $W^{k}\sim\mathcal{W}^{k}$.
\end{assumption}
Assumption~\ref{assum:expected_consensus_rate} expresses the fact that the sequence of mixing matrices, on average and every $\tau$ communication rounds, brings the values in the columns of $\Xi$ closer to their row-wise average (thereby mixing the clients' updates over time).
For instance, the assumption is satisfied if the communication graph is strongly connected every $\tau$ rounds, i.e., the graph $([T],\mathcal E)$, where the edge $(i,j)$ belongs to the graph if $w_{i,j}^h>0$ for some $h \in \{k+1, \dots, k+\tau \}$ is connected.

We provide below the rigorous statement of Theorem~\ref{thm:decentralized_convergence}, which was informally presented in Section~\ref{sec:decentralized}. It shows that \DEM{} converges to a consensus stationary point of $f$ (proof in App.~\ref{proof:decentralized}).
\begin{repthm}{thm:decentralized_convergence}
    Under Assumptions~\ref{assum:mixture}--\ref{assum:expected_consensus_rate}, when clients use SGD as local solver with learning rate $\eta =\frac{a_{0}}{\sqrt{K}}$, \DEM's iterates  satisfy the following inequalities after a large enough number of communication rounds $K$:
    \begin{equation}
        \frac{1}{K}\sum_{k=1}^K \mathbb{E}\left\|\nabla_{\Theta}f\left(\bar{\Theta}^{k}, \Pi^{k}\right)\right\|_{F}^{2} \leq \mathcal{O} \left(\frac{1}{\sqrt{K}}\right), \quad
        \frac{1}{K}\sum_{k=1}^K\sum_{t=1}^{T}\frac{n_{t}}{n}\mathcal{KL}\left(\pi^{k}_{t}, \pi_{t}^{k-1}\right)\leq \mathcal{O}\left(\frac{1}{K}\right),
    \end{equation}
    where $\bar \Theta^k = \left[\Theta_{1}^k, \dots \Theta_{T}^{k}\right] \cdot  \frac{\mathbf{1}\mathbf{1}^{\intercal}}{T}$.  
    Moreover, individual estimates $\left(\Theta_{t}^{k}\right)_{1\leq t \leq T}$ converge to consensus, i.e., to $\bar \Theta^k$:
    \begin{equation*}
        \min_{k\in[K]}\mathbb{E}\sum_{t=1}^{T}\left\|\Theta_{t}^{k} - \bar{\Theta}^{k}\right\|_{F}^{2} \leq \mathcal{O} \left(\frac{1}{\sqrt{K}}\right).
    \end{equation*}
\end{repthm}

\newpage
\section{Federated Surrogate Optimization}
\label{app:fed_surrogate_optimization}
In this appendix, we give more details on the federated surrogate optimization framework introduced in Section~\ref{sec:fed_surrogate}. In particular, we provide the assumptions under which Alg.~\ref{alg:fed_surrogate_opt} and Alg.~\ref{alg:decentralized_surrogate_opt} converge. We also illustrate how our framework can be used to study existing algorithms.

\subsection{Reminder on Basic (Centralized) Surrogate Optimization}
\label{app:surrogate_optimization}
In this appendix, we recall the (centralized) \emph{first-order surrogate optimization} framework introduced in \cite{mairal2013optimization}. In this framework, given a continuous function $f:\mathbb{R}^{d} \mapsto \mathbb{R}$, we are interested in solving
\begin{equation*}
    \min_{\theta\in\mathbb{R}^{d}}f(\theta)
\end{equation*}
using the majoration-minimization scheme presented in Alg.~\ref{alg:app_basic_surrogate_opt}. 
\begin{algorithm}
    \SetKwInOut{Input}{Input}
    \SetKwInOut{Output}{Output}
    \SetKw{Iterations}{iterations}
    \SetAlgoLined
    \Input{$\;\theta^{0} \in \mathbb{R}^{d}$; number of iterations $K$;}
    \Output{$\;\theta^{K}$}
    \For{\Iterations $k=1, \dots, K$}{
        Compute $g^{k}$, a surrogate function of $f$ near $\theta^{k-1}$\;
        Update solution: $\theta^{k} \in \argmin_{\theta}g^{k}(\theta)$\;
    }
    \caption{Basic Surrogate Optimization}
    \label{alg:app_basic_surrogate_opt}
\end{algorithm}

This procedure relies on surrogate functions, that approximate well the objective function in a neighborhood of a point. Reference~\cite{mairal2013optimization} focuses on \emph{first-order surrogate functions} defined below.
\begin{dfn}[First-Order Surrogate \cite{mairal2013optimization}]
    \label{dfn:first_order_surrogate}
    A function $g:\mathbb{R}^{d}\mapsto \mathbb{R}$ is a first order surrogate of $f$ near $\theta^{k}\in\mathbb{R}^{d}$ when the following is satisfied:
    \begin{itemize}
        \item \textbf{Majorization:}  we have $g(\theta') \geq f(\theta')$ for all $\theta'\in\argmin_{\theta\in\mathbb{R}^{d}}g(\theta)$. When the more general condition $g\geq f$ holds, we say that $g$ is a \textbf{majorant} function.  
        \item \textbf{Smoothness:} the approximation error $r\triangleq g -f$ is differentiable, and its gradient is $L$-Lipschitz. Moreover, we have $r(\theta^{k})=0$ and $\nabla r(\theta^{k})=0$.
    \end{itemize}
\end{dfn}

\subsection{Novel Federated Version}

As discussed in Section~\ref{sec:fed_surrogate}, our novel federated surrogate optimization framework  minimizes an objective function $\left(\vec u, \vec v_{1:T}\right) \mapsto f\left(\vec u, \vec v_{1:T}\right)$ that can be written as a weighted sum $f\left(\vec u, \vec v_{1:T}\right)=\sum_{t=1}^{T}\omega_{t} f_{t}\left(\vec u, \vec v_{t}\right)$ of $T$ functions. We suppose that each client $t\in[T]$ can compute a partial first order surrogate of $f_{t}$, defined as follows.

\begin{repdefinition}{def:partial_first_order_surrogate}[Partial first-order surrogate]
    A function $g(\vec{u},\vec{v}) : \mathbb{R}^{d_u} \times \mathcal{V} \to \mathbb R$ is a partial first-order surrogate of $f(\vec{u},\vec{v})$  wrt $\vec{u}$ near $(\vec{u}_0, \vec{v}_0) \in 
    \mathbb \mathbb{R}^{d_u} \times \mathcal{V}$ when the following conditions are satisfied:
    \begin{enumerate}
        \setlength\itemsep{-0.5em}
        \item$g(\vec{u},\vec{v})\ge f(\vec{u},\vec{v})$ for all $\vec{u} \in \mathbb{R}^{d_u}$ and $\vec v\in  \mathcal{V}$;
        \item $r(\vec{u},\vec{v}) \triangleq g(\vec{u},\vec{v})- f(\vec{u},\vec{v})$ is differentiable and $L$-smooth with respect to $\vec{u}$. Moreover, we have $r(\vec{u}_0,\vec{v}_0) = 0$ and $\nabla_{\vec u} r(\vec{u}_0,\vec{v}_0) = 0$.
        \item $g(\vec u, \vec v_{0}) - g(\vec u, \vec v) = d_{\mathcal{V}}\left(\vec v_{0}, \vec v\right)$ for all $\vec{u} \in \mathbb{R}^{d_u}$ and $\vec v\in \argmin_{\vec{v}' \in \mathcal{V}} g(\vec{u}, \vec{v}') $, where $d_{\mathcal{V}}$ is non-negative and $d_{\mathcal{V}}(v, v') = 0 \iff v = v'$.
    \end{enumerate}
\end{repdefinition}

Under the assumption that each client $t$ can compute a partial first order surrogate of $f_{t}$, we propose algorithms for federated surrogate optimization in both the client-server setting (Alg.~\ref{alg:fed_surrogate_opt}) and the fully decentralized one (Alg.~\ref{alg:decentralized_surrogate_opt}). Both algorithms are iterative and distributed: at each iteration $k>0$, client $t\in[T]$ computes a partial first-order surrogate $g^{k}_{t}$ of $f_{t}$ near $\left\{u^{k-1}, v_{t}^{k-1}\right\}$ (resp.~$\left\{u_{t}^{k-1}, v_{t}^{k-1}\right\}$) for federated surrogate optimization in Alg.~\ref{alg:fed_surrogate_opt} (resp. for fully decentralized surrogate optimization in Alg~\ref{alg:decentralized_surrogate_opt}). 

The convergence of those two algorithms requires the following standard assumptions. Each of them generalizes one of the Assumptions~\ref{assum:bounded_f}--\ref{assum:bounded_dissimilarity} for our EM algorithms.

\begin{assumptionbis}{assum:bounded_f}
    \label{assum:bounded_f_bis}
    The objective function $f$ is bounded below by $f^{*} \in \mathbb{R}$.
\end{assumptionbis}
\begin{assumptionbis}{assum:smoothness}
    \label{assum:smoothness_bis}
    (Smoothness)
    For all $t\in[T]$ and $k>0$, $g_{t}^{k}$ is $L$-smooth wrt to $\vec u$. 
\end{assumptionbis}
\begin{assumptionbis}{assum:finitie_variance}
    \label{assum:finite_variance_bis}
    (Unbiased gradients and bounded variance)
    Each client $t\in [T]$ can sample a random batch  $\xi$ from $\mathcal S_t$ and compute an unbiased estimator $\nabla_{\vec u}g^{k}_{t}(\vec u, \vec v; \xi)$ of the local gradient with bounded variance, i.e.,  $\mathbb{E}_{\xi}[\nabla_{\vec u}g^{k}_{t}(\vec u, \vec v; \xi)] = \nabla_{\vec u}g^{k}_{t}(\vec u, \vec v)$ and 
    $\mathbb{E}_{\xi}\|\nabla_{\vec u}g^{k}_{t}(\vec u, \vec v; \xi) - \nabla_{\vec u}g^{k}_{t}(\vec u, \vec v)\|^{2} \leq \sigma^{2}$.
\end{assumptionbis}
\begin{assumptionbis}{assum:bounded_dissimilarity}
    \label{assum:bounded_dissimilarity_bis}
    (Bounded dissimilarity)
    There exist $\beta$ and $G$ such that
    \[
        \sum_{t=1}^{T}\omega_{t} \cdot \Big\|\nabla_{\vec u}g^{k}_{t}(\vec u, \vec v)\Big\|^{2} \leq G^{2} + \beta^{2} \Big\| \sum_{t=1}^{T}\omega_{t} \cdot \nabla_{\vec u}g^{k}_{t}(\vec u, \vec v) \Big\|^{2}.
    \]
\end{assumptionbis}

Under these assumptions a parallel result to Theorem.~\ref{thm:centralized_convergence} holds for the client-server setting.

\begin{thmbis}{thm:centralized_convergence}
    \label{thm:centralized_convergence_bis}
    Under Assumptions~\ref{assum:bounded_f_bis}--\ref{assum:bounded_dissimilarity_bis}, when clients use SGD as local solver with learning rate $\eta =\frac{a_{0}}{\sqrt{K}}$, after a large enough number of communication rounds $K$, the iterates of federated surrogate optimization (Alg.~\ref{alg:fed_surrogate_opt}) satisfy:
    \begin{equation}
        \label{eq:theta_pi_convergence_bis}
        \frac{1}{K} \sum_{k=1}^K \mathbb{E}\left\|\nabla _{\vec u} f\left(\vec u^{k}, \vec v_{1:T}^{k}\right)\right\|^{2}_{F} \leq \mathcal{O}\!\left(\frac{1}{\sqrt{K}}\right),    \qquad
    \frac{1}{K} \sum_{k=1}^K  \Delta_{\vec v} f(\vec u^k,\vec v^{k}_{1:T})
    \leq \mathcal{O}\!\left(\frac{1}{K^{3/4}}\right),    
    \end{equation}
    where the expectation is over the random batches samples, and $\Delta_{v} f(\vec u^k,\vec v_{1:T}^{k}) \triangleq f\left(\vec u^{k}, \vec v_{1:T}^{k}\right) - f\left(\vec u^{k}, \vec v_{1:T}^{k+1}\right) \ge 0 $.
\end{thmbis}

In the fully decentralized setting, if in addition to Assumptions~\ref{assum:bounded_f_bis}-\ref{assum:bounded_dissimilarity_bis}, we suppose that Assumption~\ref{assum:expected_consensus_rate} holds, a parallel result to Theorem.~\ref{thm:decentralized_convergence} holds.
\begin{thmbis}{thm:decentralized_convergence}
    \label{thm:decentralized_convergence_bis}
    Under Assumptions~\ref{assum:bounded_f_bis}--\ref{assum:bounded_dissimilarity_bis} and Assumption~\ref{assum:expected_consensus_rate}, when clients use SGD as local solver with learning rate $\eta =\frac{a_{0}}{\sqrt{K}}$, after a large enough number of communication rounds $K$, the iterates of fully decentralized federated surrogate optimization (Alg.~\ref{alg:decentralized_surrogate_opt}) satisfy:
    \begin{equation}
        \frac{1}{K}\sum_{k=1}^{K}\mathbb{E}\left\|\nabla_{\vec u}f\left(\bar{\vec u}^{k}, v_{1:T}^{k}\right)\right\|^{2} \leq \mathcal{O} \left(\frac{1}{\sqrt{K}}\right), \qquad
        \frac{1}{K}\sum_{k=1}^{K}\sum_{t=1}^{T}\omega_{t} \cdot d_{\mathcal{V}}\left(\vec v^{k}_{t}, \vec  \vec{v}^{k+1}_{t}\right)\leq \mathcal{O}\left(\frac{1}{K}\right),
    \end{equation}
    where $\bar{\vec u}^k = \frac{1}{T}\sum_{t=1}^{T}\vec u^{k}_{t}$.  
    Moreover, local estimates $\left(\vec u_{t}^{k}\right)_{1\leq t \leq T}$ converge to consensus, i.e., to $\bar{\vec u}^k$:
    \begin{equation*}
        \frac{1}{K}\sum_{k=1}^{K}\sum_{t=1}^{T}\left\|\vec u_{t}^{k} - \bar{\vec u}^{k}\right\|^{2} \leq \mathcal{O} \left(\frac{1}{\sqrt{K}}\right).
    \end{equation*}
\end{thmbis}

The proofs of Theorem~\ref{thm:centralized_convergence_bis} and Theorem~\ref{thm:decentralized_convergence_bis} are in Section~\ref{proof:centralized} and Section~\ref{proof:decentralized}, respectively.

\subsection{Illustration: Analyzing \texttt{pFedMe} with Federated Surrogate Optimization}
\label{app:pfedme_fso}

In this section, we show that \texttt{pFedMe} \cite{dinh2020personalized} can be studied through our federated surrogate optimization framework. With reference to the general formulation of \texttt{pFedMe} in \cite[Eq.~(2)~and~(3)]{dinh2020personalized}, consider 
\begin{equation}
    g_{t}^{k}\left(\vec{w}\right) =  f_{t}\left(\theta^{k-1}\right) + \frac{\lambda}{2} \cdot \left\|\theta^{k-1} - \omega\right\|^{2},
\end{equation}
where
$\theta^{k-1} = \prox_{\frac{f_{t}}{\lambda}}\left(\omega^{k-1}\right)\triangleq \argmin_{\theta} \left\{f_{t}\left(\theta\right) + \frac{\lambda}{2} \cdot \left\|\theta - \omega^{k-1}\right\|^{2}\right\}$. We can verify that $g_{t}^{k}$ is a first-order surrogate of $f_{t}$ near $\theta^{k-1}$: 
\begin{enumerate}
    \item It is clear that $g_{t}^{k}\left(\theta^{k-1}\right) = f_{t}\left(\theta^{k-1}\right)$.
    \item Since $\theta^{k-1} = \prox_{\frac{f_{t}}{\lambda}}\left(\omega^{k-1}\right)$, using the envelope theorem (assuming that $f_{t}$ is proper, convex and lower semi-continuous), it follows that $\nabla f_{t}\left(\omega^{k-1}\right) = \lambda \left(\theta^{k-1} - \omega^{k-1}\right) = \nabla g_{k}^{k}\left(\omega^{k-1}\right)$.
\end{enumerate} 

Therefore, \texttt{pFedMe} can be seen as a particular case of the federated surrogate optimization algorithm (Alg.~\ref{alg:fed_surrogate_opt}), to which our convergence results apply.

\newpage
\section{Convergence Proofs}
\label{app:proofs}
We study the client-server setting and the fully decentralized setting in Section~\ref{proof:centralized} and Section~\ref{proof:decentralized}, respectively. In both cases, we first prove the more general result for the federated surrogate optimization introduced in App.~\ref{app:fed_surrogate_optimization}, and then derive the specific result for \FedEM{} and \DEM{}.

\subsection{Client-Server Setting}
\label{proof:centralized}
\subsubsection{Additional Notations}

\begin{remark}
    For convenience and without loss of generality, we suppose in this section that $\omega \in \Delta^{T}$, i.e., $\forall t \in [T],~\omega_{t} \geq 0 $ and $\sum_{t'=1}^{T}\omega_{t'} = 1$.  
\end{remark}

At iteration $k>0$, we use $\vec{u}_{t}^{k-1, j}$ to denote the $j$-th iterate of the local solver at client $t\in[T]$, thus
\begin{equation}
    \vec{u}_{t}^{k-1,0} = \vec{u}^{k-1},
\end{equation}
and 
\begin{equation}
    \vec{u}^{k} = \sum_{t=1}^{T} \omega_{t} \cdot \vec{u}_{t}^{k-1, J}.
\end{equation}
At iteration $k>0$, the local solver's updates at client $t\in[T]$ can be written as (for $0\leq j\leq J-1$):
\begin{equation}
    \vec{u}_{t}^{k-1, j+1} = \vec{u}_{t}^{k-1, j} - \eta_{k-1, j} \nabla_{\vec{u}}g_{t}^{k}\left(\vec{u}_{t}^{k-1, j}, \vec{v}_{t}^{k-1}; \xi_{t}^{k-1,j}\right),
\end{equation}
where $\xi_{t}^{k-1,j}$ is the batch drawn at the $j$-th local update of $\vec{u}_{t}^{k-1}$.

We introduce $\eta_{k-1}=\sum_{j=0}^{J-1}\eta_{k-1, j}$, and we define the normalized update of the local solver at client $t\in[T]$ as,
\begin{equation}
    \hat{\delta}^{k-1}_{t} \triangleq-\frac{\vec{u}^{k-1,J}_{t} - \vec{u}^{k-1,0}_{t}}{\eta_{k-1}} = \frac{\sum_{j=0}^{J-1}\eta_{k-1,j} \cdot \nabla_{\vec{u}} g^{k}_{t}\left(\vec{u}_{t}^{k-1,j}, \vec{v}_{t}^{k-1}; \xi_{t}^{k-1,j}\right)}{\sum_{j=0}^{J-1}\eta_{k-1,j}}, 
\end{equation}
and also define 
\begin{equation}
    \delta^{k-1}_{t} \triangleq\frac{\sum_{j=0}^{J-1}\eta_{k-1,j} \cdot \nabla_{\vec{u}} g^{k}_{t}\left(\vec{u}_{t}^{k-1,j}, \vec{v}_{t}^{k-1}\right)}{\eta_{k-1}}.
\end{equation}
With this notation, 
\begin{equation}
    \label{eq:define_delta_u}
    \vec{u}^{k} - \vec{u}^{k-1} = -\eta_{k-1} \cdot \sum_{t=1}^{T} \omega_{t} \cdot \hat{\delta}_{t}^{k-1}.
\end{equation}
Finally, we define $g^{k},~k>0$ as
\begin{equation}
    \label{eq:app_gk_def}
    g^{k}\left(\vec{u},\vec{v}_{1:T}\right) \triangleq \sum_{t=1}^{T}\omega_{t} \cdot g^{k}_{t}\left(\vec{u}, \vec{v}_{t}\right).
\end{equation}
Note that $g^{k}$ is a convex combination of functions $g_{t}^{k},~t\in[T]$.

\subsubsection{Proof of Theorem~\ref{thm:centralized_convergence_bis}}

\begin{lem}
    \label{lem:centralized_case_bound_gradient_g_theta}
    Suppose that Assumptions~\ref{assum:smoothness_bis}--\ref{assum:bounded_dissimilarity_bis} hold. Then, for $k>0$, and $\left(\eta_{k, j}\right)_{0\leq j \leq J-1}$ 
    such that $\eta_k\triangleq \sum_{j=0}^{J-1}\eta_{k, j} \leq\min \left\{ \frac{1}{2\sqrt{2}L}, \frac{1}{4L\beta}\right\}$, the updates of federated surrogate optimization (Alg~\ref{alg:fed_surrogate_opt}) verify
    \begin{flalign}
        \label{eq:centralized_case_bound_gradient_g_theta}
        \mathbb{E} & \Bigg[\frac{f(\vec{u}^{k}, \vec{v}^{k}_{1:T}) - f (\vec{u}^{k-1}, \vec{v}^{k-1}_{1:T})}{\eta_{k-1}}\Bigg] \leq   & \nonumber
        \\
        & \qquad \qquad -\frac{1}{4}\E\left\|\nabla_{\vec{u}} f\left(\vec{u}^{k-1}, \vec{v}_{1:T}^{k-1}\right)\right\|^{2} -\frac{1}{\eta_{k-1}} \sum_{t=1}^{T}\omega_{t}\cdot d_{\mathcal{V}}\left(\vec{v}_{t}^{k-1},  \vec{v}_{t}^{k}\right) \nonumber
        \\
        & \qquad \qquad + 2\eta_{k-1} L\left(  \sum_{j=0}^{J-1} \frac{\eta_{k-1,j}^{2}}{\eta_{k-1}} L +  1\right) \sigma^{2} + 4\eta_{k-1}^{2}L^{2} G^{2}.
    \end{flalign}
\end{lem}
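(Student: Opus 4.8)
The plan is to bound the one-step change of $f$ by splitting it into the change caused by the $\vec{v}$-update and the change caused by the local $\vec{u}$-steps, exploiting the three defining properties of the partial first-order surrogate. The crucial enabling observation is property~3 of Definition~\ref{def:partial_first_order_surrogate}: since $\vec{v}_t^k \in \argmin_{\vec{v}} g_t^k(\vec{u}^{k-1}, \vec{v})$, the difference $g_t^k(\vec{u}, \vec{v}_t^{k-1}) - g_t^k(\vec{u}, \vec{v}_t^k) = d_{\mathcal{V}}(\vec{v}_t^{k-1}, \vec{v}_t^k)$ is independent of $\vec{u}$. Hence $\nabla_{\vec{u}} g_t^k(\cdot, \vec{v}_t^{k-1}) = \nabla_{\vec{u}} g_t^k(\cdot, \vec{v}_t^k)$ everywhere, so the local SGD steps---which the algorithm takes on $g_t^k(\cdot, \vec{v}_t^{k-1})$---coincide with descent on $g_t^k(\cdot, \vec{v}_t^k)$, and the two evaluations of $g^k$ differ only by the constant $\sum_t \omega_t d_{\mathcal{V}}(\vec{v}_t^{k-1}, \vec{v}_t^k)$.

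First I would write, using majorization (property~1) followed by this constant shift,
\[
f(\vec{u}^k, \vec{v}^k_{1:T}) \le g^k(\vec{u}^k, \vec{v}^k_{1:T}) = g^k(\vec{u}^k, \vec{v}^{k-1}_{1:T}) - \sum_{t=1}^T \omega_t\, d_{\mathcal{V}}(\vec{v}_t^{k-1}, \vec{v}_t^k),
\]
which already produces the second negative term. Since $r_t(\vec{u}^{k-1}, \vec{v}_t^{k-1}) = 0$ and $\nabla_{\vec{u}} r_t(\vec{u}^{k-1}, \vec{v}_t^{k-1}) = 0$ (property~2), the surrogate touches $f$ at the expansion point, $g^k(\vec{u}^{k-1}, \vec{v}^{k-1}_{1:T}) = f(\vec{u}^{k-1}, \vec{v}^{k-1}_{1:T})$, and their $\vec{u}$-gradients agree, $\nabla_{\vec{u}} g^k(\vec{u}^{k-1}, \vec{v}^{k-1}_{1:T}) = \nabla_{\vec{u}} f(\vec{u}^{k-1}, \vec{v}^{k-1}_{1:T})$. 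Subtracting $f(\vec{u}^{k-1}, \vec{v}^{k-1}_{1:T})$ and applying the $L$-smoothness of $g^k$ in $\vec{u}$ (Assumption~\ref{assum:smoothness_bis}) to the step $\vec{u}^k - \vec{u}^{k-1} = -\eta_{k-1}\sum_t \omega_t \hat{\delta}_t^{k-1}$, I obtain a descent inequality whose leading term is $-\eta_{k-1}\langle \nabla_{\vec{u}} f(\vec{u}^{k-1}, \vec{v}^{k-1}_{1:T}), \sum_t \omega_t \hat{\delta}_t^{k-1}\rangle$ plus the quadratic penalty $\tfrac{L}{2}\eta_{k-1}^2 \lVert\sum_t \omega_t \hat{\delta}_t^{k-1}\rVert^2$.

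Next I would take expectations and divide by $\eta_{k-1}$. Unbiasedness (Assumption~\ref{assum:finite_variance_bis}) replaces $\hat{\delta}_t^{k-1}$ by $\delta_t^{k-1}$ in the cross term and splits the quadratic penalty into a ``bias'' part $\lVert\sum_t \omega_t \delta_t^{k-1}\rVert^2$ and a variance contribution controlled by $\sum_j (\eta_{k-1,j}^2/\eta_{k-1})\sigma^2$. The remaining work is to control the client drift $\mathbb{E}\lVert\vec{u}_t^{k-1,j} - \vec{u}^{k-1}\rVert^2$: unrolling the local recursion and using $L$-smoothness bounds $\lVert\delta_t^{k-1} - \nabla_{\vec{u}} g_t^k(\vec{u}^{k-1}, \vec{v}_t^{k-1})\rVert$ by $L$ times this drift, while the drift itself accumulates a variance term and a term in the per-client gradient norm. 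Here the bounded-dissimilarity Assumption~\ref{assum:bounded_dissimilarity_bis} converts $\sum_t \omega_t \lVert\nabla_{\vec{u}} g_t^k(\vec{u}^{k-1}, \vec{v}_t^{k-1})\rVert^2$ into $G^2 + \beta^2\lVert\nabla_{\vec{u}} f(\vec{u}^{k-1}, \vec{v}^{k-1}_{1:T})\rVert^2$.

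The main obstacle is the bookkeeping in this last step: I must combine the cross term with the quadratic penalty so that, after invoking bounded dissimilarity, the coefficient multiplying $\mathbb{E}\lVert\nabla_{\vec{u}} f(\vec{u}^{k-1}, \vec{v}^{k-1}_{1:T})\rVert^2$ remains negative. This is exactly where the step-size restriction $\eta_{k-1} \le \min\{\tfrac{1}{2\sqrt{2}L}, \tfrac{1}{4L\beta}\}$ enters: the first bound tames the smoothness penalty and the second neutralizes the $\beta^2$ blow-up from dissimilarity, together pinning the gradient coefficient at $-\tfrac14$ and leaving precisely the advertised error terms. I expect the delicate part to be tracking the numerical constants through the drift recursion so that they assemble into $2\eta_{k-1}L(\sum_j \tfrac{\eta_{k-1,j}^2}{\eta_{k-1}}L + 1)\sigma^2 + 4\eta_{k-1}^2 L^2 G^2$.
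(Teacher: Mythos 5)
Your proposal is correct and follows essentially the same route as the paper's proof: an $L$-smoothness descent inequality for $g^k(\cdot,\vec{v}^{k-1}_{1:T})$ along the aggregated local update, unbiasedness/variance splitting of the stochastic terms, a client-drift recursion closed by the step-size conditions, bounded dissimilarity to absorb the per-client gradient norms into $G^2+\beta^2\|\nabla_{\vec{u}} f\|^2$, and the three surrogate properties to replace $g^k$ by $f$ and extract the $d_{\mathcal{V}}$ term. The only difference is cosmetic — you invoke the majorization and the $d_{\mathcal{V}}$ identity at the outset, whereas the paper substitutes them at the very end — so the two arguments are the same.
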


\begin{proof}
    This proof uses standard techniques from distributed stochastic optimization. It is inspired by \cite[Theorem~1]{wang2020tackling}. 
    
    For $k>0$, $g^{k}$ is $L$-smooth wrt $\vec{u}$, because it is a convex combination of $L$-smooth functions $g_{t}^{k},~t\in[T]$. Thus, we write
    \begin{equation}
        g^{k}\left(\vec{u}^{k}, \vec{v}_{1:T}^{k-1}\right) - g^{k}\left(\vec{u}^{k-1}, \vec{v}_{1:T}^{k-1}\right) \leq  \biggl< \vec{u}^{k} - \vec{u}^{k-1}, \nabla_{\vec{u}} g^{k}(\vec{u}^{k-1}, \vec{v}_{1:T}^{k-1})\biggr> + \frac{L}{2}\left\|\vec{u}^{k} - \vec{u}^{k-1}\right\|^{2},
        \label{eq:write_g_is_l_smooth}
    \end{equation}
    where $<\vec{u},\vec{u}'>$ denotes the scalar product of vectors $\vec{u}$ and $\vec{u}'$.
    Using Eq.~\eqref{eq:define_delta_u}, and taking the expectation over random batches $\left(\xi^{k-1,j}_t\right)_{\substack{0\leq j \leq J-1\\1\leq t \leq T}}$, we have
    \begin{align}
        \E\Big[g^{k}\big(\vec{u}^{k} &, \vec{v}_{1:T}^{k-1}\big) - g^{k}\left(\vec{u}^{k-1}, \vec{v}_{1:T}^{k-1}\right)\Big] \leq \nonumber
        \\
        & -\eta_{k-1}  \underbrace{\E \biggl< \sum_{t=1}^{T}\omega_{t} \cdot\hat{\delta}_{t}^{k-1} , \nabla_{\vec{u}} g^{k}(\vec{u}^{k-1}, \vec{v}_{1:T}^{k-1})\biggr>}_{\triangleq T_{1}} 
        + \frac{L\eta_{k-1}^{2}}{2} \cdot  \underbrace{\E\left\|\sum_{t=1}^{T}\omega_{t}\cdot \hat{\delta}_{t}^{k-1}\right\|^{2}}_{\triangleq T_{2}}.
        \label{eq:centralized_case_bound_gradient_g_theta_part_1}
    \end{align}
    We bound each of those terms separately. For $T_{1}$ we have
    \begin{flalign}
        T_{1} &= \mathbb{E}\biggl<\sum_{t=1}^{T}\omega_{t} \cdot \hat{\delta}_{t}^{k-1}, \nabla_{\vec{u}} g^{k}\left(\vec{u}^{k-1}, \vec{v}_{1:T}^{k-1}\right)\biggr> &
        \\
        &= \mathbb{E}\biggl<\sum_{t=1}^{T}\omega_{t}\cdot \left(\hat{\delta}_{t}^{k-1} -\delta_{t}^{k-1} \right), \nabla_{\vec{u}} g^{k}\left(\vec{u}^{k-1}, \vec{v}_{1:T}^{k-1}\right)\biggr> \nonumber
        \\ 
        & \qquad + \mathbb{E}\biggl<\sum_{t=1}^{T}\omega_{t}\cdot \delta_{t}^{k-1}, \nabla_{\vec{u}} g^{k}\left(\vec{u}^{k-1}, \vec{v}_{1:T}^{k-1}\right)\biggr>.
    \end{flalign}
    Because stochastic gradients are unbiased (Assumption~\ref{assum:finite_variance_bis}), we have
    \begin{equation}
        \E\left[\hat{\delta}_{t}^{k-1} - \delta_{t}^{k-1}\right] = 0,
    \end{equation}
    thus,
    \begin{flalign}
        T_1 &= \mathbb{E}\biggl<\sum_{t=1}^{T}\omega_{t}\cdot \delta_{t}^{k-1}, \nabla_{\vec{u}} g^{k}\left(\vec{u}^{k-1}, \vec{v}_{1:T}^{k-1}\right)\biggr> &
        \\
        &= \frac{1}{2}\left(\left\|\nabla_{\vec{u}} g^{k}\left(\vec{u}^{k-1}, \vec{v}_{1:T}^{k-1}\right)\right\|^{2} +  \mathbb{E}\left\|\sum_{t=1}^{T}\omega_{t} \cdot \delta_{t}^{k-1}\right\|^{2} \right) \nonumber
        \\
        & \qquad - \frac{1}{2}\mathbb{E}\left\|\nabla_{\vec{u}} g^{k}\left(\vec{u}^{k-1}, \vec{v}_{1:T}^{k-1}\right)  - \sum_{t=1}^{T}\omega_{t}\cdot \delta_{t}^{k-1}\right\|^{2}.
        \label{eq:bound_t1}
    \end{flalign}
    For $T_{2}$ we have for $k>0$,
    \begin{flalign}
        T_{2} &= \E\left\|\sum_{t=1}^{T}\omega_{t} \cdot \hat{\delta}_{t}^{k-1}\right\|^{2} & 
        \\
        &= \E\left\|\sum_{t=1}^{T}\omega_{t} \cdot \left( \hat{\delta}_{t}^{k-1} - \delta_{t}^{k-1} \right) + \sum_{t=1}^{T}\omega_{t} \cdot \delta_{t}^{k-1}\right\|^{2}
        \\
        &\leq 2  \E\left\|\sum_{t=1}^{T}\omega_{t} \cdot \left( \hat{\delta}_{t}^{k-1} - \delta_{t}^{k-1} \right)\right\|^{2} + 2  \E\left\| \sum_{t=1}^{T}\omega_{t} \cdot \delta_{t}^{k-1}\right\|^{2}
        \\
        &= 2 \sum_{t=1}^{T} \omega_{t}^{2} \cdot \E\left\|\hat{\delta}_{t}^{k-1} - \delta_{t}^{k-1}\right\|^{2} + 2 \sum_{1\leq s \neq t \leq T} \omega_{t} \omega_{s}  \E \biggl< \hat{\delta}_{t}^{k-1} - \delta_{t}^{k-1}, \hat{\delta}_{s}^{k-1} - \delta_{s}^{k-1} \biggr> \nonumber
        \\
        & \qquad + 2\E \left\|\sum_{t=1}^{T} \omega_{t} \delta_{t}^{k-1}\right\|^{2}.
    \end{flalign}
    Since clients sample batches independently, and stochastic gradients are unbiased (Assumption~\ref{assum:finite_variance_bis}), we have
    \begin{equation}
        \E \biggl< \hat{\delta}_{t}^{k-1} - \delta_{t}^{k-1}, \hat{\delta}_{s}^{k-1} - \delta_{s}^{k-1} \biggr> = 0,
    \end{equation}
    thus,
    \begin{flalign}
        T_{2} &\leq 2 \sum_{t=1}^{T} \omega_{t}^{2} \cdot \E\left\|\hat{\delta}_{t}^{k-1} - \delta_{t}^{k-1}\right\|^{2} + 2 \E \left\|\sum_{t=1}^{T} \omega_{t} \delta_{t}^{k-1}\right\|^{2} &
        \\
        &= 2\sum_{t=1}^{T}\omega_{t}^{2} \E\left \| \sum_{j=0}^{J-1} \frac{\eta_{k-1,j}}{\eta_{k-1}} \left[ \nabla_{\vec{u}} g_{t}^{k}\left(\vec{u}_{t}^{k-1, j}, \vec{v}_{t}^{k-1}\right) - \nabla_{\vec{u}} g_{t}^{k}\left(\vec{u}_{t}^{k-1, j}, \vec{v}_{t}^{k-1}; \xi_{t}^{k-1,j}\right) \right] \right\|^{2} \nonumber
        \\
        & \qquad + 2\E\left\|\sum_{t=1}^{T}\omega_{t} \delta_{t}^{k-1}\right\|^{2}.
    \end{flalign}
    Using Jensen inequality, we have
    \begin{flalign}
        \left \| \sum_{j=0}^{J-1} \frac{\eta_{k-1,j}}{\eta_{k-1}} \left[ \nabla_{\vec{u}} g_{t}^{k}\left(\vec{u}_{t}^{k-1, j}, \vec{v}_{t}^{k-1}\right) - \nabla_{\vec{u}} g_{t}^{k}\left(\vec{u}_{t}^{k-1, j}, \vec{v}_{t}^{k-1}; \xi_{t}^{k-1,j}\right) \right] \right\|^{2}  & \leq    & \nonumber
        \\
        \sum_{j=0}^{J-1} \frac{\eta_{k-1,j}}{\eta_{k-1}} \Big\| \nabla_{\vec{u}} g_{t}^{k}\left(\vec{u}_{t}^{k-1, j}, \vec{v}_{t}^{k-1}\right) - \nabla_{\vec{u}} g_{t}^{k}\Big(\vec{u}_{t}^{k-1, j}, \vec{v}_{t}^{k-1} &; \xi_{t}^{k-1,j} \Big) \Big\|^{2},
    \end{flalign}
    and since the variance of stochastic gradients is bounded by $\sigma^{2}$ (Assumption~\ref{assum:finite_variance_bis}), it follows that
    \begin{flalign}
          \E&\left \| \sum_{j=0}^{J-1} \frac{\eta_{k-1,j}}{\eta_{k-1}} \left[ \nabla_{\vec{u}} g_{t}^{k}\left(\vec{u}_{t}^{k-1, j}, \vec{v}_{t}^{k-1}\right)  - \nabla_{\vec{u}} g_{t}^{k}\left(\vec{u}_{t}^{k-1, j}, \vec{v}_{t}^{k-1}; \xi_{t}^{k-1,j}\right) \right] \right\|^{2}   \nonumber &
          \\
          &\quad \leq \sum_{j=0}^{J-1}\frac{\eta_{k-1,j}}{\eta_{k-1}}  \sigma^{2}= \sigma^{2}.
    \end{flalign}
    Replacing back in the expression of $T_{2}$, we have
    \begin{equation}
        T_{2} \leq 2\sum_{t=1}^{T} \omega_{t}^{2} \sigma^{2} + 2 \E\left\|\sum_{t=1}^{T}\omega_{t}\cdot \delta_{t}^{k-1}\right\|^{2}. 
    \end{equation}
    Finally, since $0 \leq \omega_{t} \leq 1,~t\in[T]$ and $\sum_{t=1}^{T}\omega_{t} = 1$, we have
    \begin{equation}
        \label{eq:bound_t2}
        T_{2} \leq 2 \sigma^{2} + 2 \E\left\|\sum_{t=1}^{T}\omega_{t}\cdot \delta_{t}^{k-1}\right\|^{2}. 
    \end{equation}
    Having bounded $T_1$ and $T_2$, we can replace Eq.~\eqref{eq:bound_t1} and Eq.~\eqref{eq:bound_t2} in Eq.~\eqref{eq:centralized_case_bound_gradient_g_theta_part_1}, and we get
    \begin{flalign}
        \mathbb{E}\Big[g^{k}(\vec{u}^{k}, \vec{v}^{k-1}_{1:T}) - g^{k}& (\vec{u}^{k-1}, \vec{v}^{k-1}_{1:T})\Big] \leq  -\frac{\eta_{k-1}}{2}\left\|\nabla_{\vec{u}} g^{k}\left(\vec{u}^{k-1}, \vec{v}_{1:T}^{k-1}\right)\right\|^{2} + \eta_{k-1}^{2} L\sigma^{2} & \nonumber
        \\
        & -\frac{\eta_{k-1}}{2}\left(1-2L\eta_{k-1}\right) \cdot \mathbb{E} \left\|\sum_{t=1}^{T}\omega_t \cdot \delta_{t}^{k-1}\right\|^{2} \nonumber
        \\
        & + \frac{\eta_{k-1}}{2}\mathbb{E}\Big\|\nabla_{\vec{u}} g^{k}\left(\vec{u}^{k-1}, \vec{v}_{1:T}^{k-1}\right)  -  \sum_{t=1}^{T}\omega_{t} \cdot 
        \delta_{t}^{k-1}\Big\|^{2}.
    \end{flalign}
    As $\eta_{k-1} \leq \frac{1}{2\sqrt{2}L} \leq \frac{1}{2L}$, we have
    \begin{flalign}
        \mathbb{E}\Big[g^{k}(\vec{u}^{k}, \vec{v}^{k-1}_{1:T}) - g^{k}& (\vec{u}^{k-1}, \vec{v}^{k-1}_{1:T})\Big] \leq  -\frac{\eta_{k-1}}{2}\left\|\nabla_{\vec{u}} g^{k}\left(\vec{u}^{k-1}, \vec{v}_{1:T}^{k-1}\right)\right\|^{2} + \eta_{k-1}^{2} L\sigma^{2} & \nonumber
        \\
        & + \frac{\eta_{k-1}}{2}\mathbb{E}\Big\|\nabla_{\vec{u}} g^{k}\left(\vec{u}^{k-1}, \vec{v}_{1:T}^{k-1}\right)  -  \sum_{t=1}^{T}\omega_{t}\delta_{t}^{k-1}\Big\|^{2}.
        \label{eq:centralized_case_bound_gradient_g_theta_part_2}
    \end{flalign}
    Replacing  $\nabla_{\vec{u}}g^{k}\left(\vec{u}^{k-1}, \vec{v}_{1:T}^{k-1}\right) = \sum_{t=1}^{T}\omega_{t} \cdot \nabla_{\vec{u}} g_{t}^{k}\left(\vec{u}^{k-1}, \vec{v}_{t}^{k-1}\right)$, and using Jensen inequality to bound the last term in the RHS of Eq.~\eqref{eq:centralized_case_bound_gradient_g_theta_part_2}, we have
    \begin{flalign}
        \mathbb{E}\Big[g^{k}(\vec{u}^{k}, \vec{v}^{k-1}_{1:T}) - g^{k}& (\vec{u}^{k-1}, \vec{v}^{k-1}_{1:T})\Big] \leq  -\frac{\eta_{k-1}}{2}\left\|\nabla_{\vec{u}} g^{k}\left(\vec{u}^{k-1}, \vec{v}_{1:T}^{k-1}\right)\right\|^{2} + \eta_{k-1}^{2} L\sigma^{2}  & \nonumber
         \\
        & + \frac{\eta_{k-1}}{2} \sum_{t=1}^{T} \omega_{t} \cdot \underbrace{\mathbb{E}\Big\|\nabla_{\vec{u}} g_{t}^{k}\left(\vec{u}^{k-1}, \vec{v}_{t}^{k-1}\right)  -  \delta_{t}^{k-1}\Big\|^{2}}_{\triangleq T_{3}}.
        \label{eq:centralized_case_bound_gradient_g_theta_part_3}
    \end{flalign}
    We now bound the term $T_{3}$:
    \begin{flalign}
        T_{3} &=  \mathbb{E}\Big\|\nabla_{\vec{u}} g_{t}^{k}\left(\vec{u}^{k-1}, \vec{v}_{t}^{k-1}\right)  -  \delta_{t}^{k-1}\Big\|^{2} &
        \\
        &= \mathbb{E}\left\|\nabla_{\vec{u}} g_{t}^{k}\left(\vec{u}^{k-1}, \vec{v}_{t}^{k-1}\right) - \sum_{j=0}^{J-1}\frac{\eta_{k-1,j}}{\eta_{k-1}} \nabla_{\vec{u}} g^{k}_{t}\left(\vec{u}^{k-1,j}_{t}, \vec{v}_{t}^{k-1}\right)\right\|^{2}\\
        &= \mathbb{E}\left\|\sum_{j=0}^{J-1}\frac{\eta_{k-1,j}}{\eta_{k-1}} \left[\nabla_{\vec{u}} g_{t}^{k}\left(\vec{u}^{k-1}, \vec{v}_{t}^{k-1}\right)  -  \nabla_{\vec{u}} g^{k}_{t}\left(\vec{u}^{k-1, j}_{t}, \vec{v}_{t}^{k-1}\right)\right]\right\|^{2}\\
        &\leq \sum_{j=0}^{J-1} \frac{\eta_{k-1,j}}{\eta_{k-1}} \mathbb{E} \left\|\nabla_{\vec{u}} g_{t}^{k}\left(\vec{u}^{k-1}, \vec{v}_{t}^{k-1}\right)  -  \nabla_{\vec{u}} g^{k}_{t}\left(\vec{u}^{k-1, j}_{t}, \vec{v}_{t}^{k-1}\right)\right\|^{2}\\
        &\leq \sum_{j=0}^{J-1} \frac{\eta_{k-1,j}}{\eta_{k-1}} L^{2} \mathbb{E} \left\|\vec{u}^{k-1}  -  \vec{u}^{k-1, j}_{t}\right\|^{2}, \label{eq:bound_t3_part1}
    \end{flalign}
    where the first inequality follows from Jensen inequality and the second one follow from the $L$-smoothness of $g_{t}^k$ (Assumption~\ref{assum:smoothness_bis}). We bound now the term $\E\left\|\vec{u}^{k-1}  -  \vec{u}^{k-1, j}_{t}\right\|$ for $j\in\left\{0,\dots, J-1\right\}$ and $t\in[T]$,
    \begin{flalign}
        \E\Big\|\vec{u}^{k-1} & - \vec{u}_{t}^{k-1,j}\Big\|^{2} = \mathbb{E}\left\| \vec{u}_{t}^{k-1,j} - \vec{u}_{t}^{k-1,0} \right\|^{2} & 
        \\
        &= \mathbb{E}\left\|\sum_{l=0}^{j-1}\left( \vec{u}_{t}^{k-1,l+1} - \vec{u}_{t}^{k-1,l}\right)\right\|^{2} 
        \\
        &= \mathbb{E}\left\|\sum_{l=0}^{j-1} \eta_{k-1,l}  \nabla_{\vec{u}} g^{k}_{t}\left(\vec{u}_{t}^{k-1,j}, \vec{v}_{t}^{k-1}; \xi_{t}^{k-1,l}\right)\right\|^{2} 
        \\
        &\leq 2\mathbb{E}\Bigg\|\sum_{l=0}^{j-1} \eta_{k-1,l}   \left[\nabla_{\vec{u}} g^{k}_{t}\left(\vec{u}_{t}^{k-1,l}, \vec{v}_{t}^{k-1}; \xi_{t}^{k-1,l}\right) - \nabla_{\vec{u}} g^{k}_{t}\left(\vec{u}_{t}^{k-1,l}, \vec{v}_{t}^{k-1}\right)\right]\Bigg\|^{2}  \nonumber
        \\
        &\qquad \qquad + 2\mathbb{E}\left\|\sum_{l=0}^{j-1} \eta_{k-1,l} \nabla_{\vec{u}} g^{k}_{t}\left(\vec{u}_{t}^{k-1,l}, \vec{v}_{t}^{k-1}\right)\right\|^{2}
        \\
        &= 2\sum_{l=0}^{j-1} \eta_{k-1, l}^{2}\mathbb{E} \Bigg\|\nabla_{\vec{u}} g^{k}_{t}\left(\vec{u}_{t}^{k-1,l}, \vec{v}_{t}^{k-1}; \xi_{t}^{k-1,l}\right)  
        -\nabla_{\vec{u}} g^{k}_{t}\left(\vec{u}_{t}^{k-1,l}, \vec{v}_{t}^{k-1}\right)\Bigg\|^{2} \nonumber
        \\
        &\qquad \qquad + 2\mathbb{E}\Bigg\|\sum_{l=0}^{j-1} \eta_{k-1,l} \nabla_{\vec{u}} g^{k}_{t}\left(\vec{u}_{t}^{k-1,l}, \vec{v}_{t}^{k-1}\right)\Bigg\|^{2}
        \\
        &\leq 2\sigma^{2}\sum_{l=0}^{j-1} \eta_{k-1, l}^{2} + 2\mathbb{E}\left\|\sum_{l=0}^{j-1} \eta_{k-1,l} \nabla_{\vec{u}} g^{k}_{t}\left(\vec{u}_{t}^{k-1,l}, \vec{v}_{t}^{k-1}\right)\right\|^{2}, \label{eq:bound_theta_j_minus_theta_0_part1}
    \end{flalign}
    where, in the last two steps, we used the fact that stochastic gradients are unbiased and have bounded variance (Assumption~\ref{assum:finite_variance_bis}).
    We bound now the last term in the RHS of Eq.~\eqref{eq:bound_theta_j_minus_theta_0_part1},
    \begin{flalign}
        \mathbb{E}\Bigg\| & \sum_{l=0}^{j-1}  \eta_{k-1,l}   \nabla_{\vec{u}} g^{k}_{t}  \left(\vec{u}_{t}^{k-1,l}, \vec{v}_{t}^{k-1}\right)\Bigg\|^{2}  = & \nonumber
        \\
        & \mathbb{E}\Bigg\|\left(\sum_{l'=0}^{j-1}\eta_{k-1,l'}\right) \cdot \sum_{l=0}^{j-1} \frac{\eta_{k-1,l}}{\sum_{l'=0}^{j-1}\eta_{k-1,l'}}   \nabla_{\vec{u}} g^{k}_{t}\left(\vec{u}_{t}^{k-1,l}, \vec{v}_{t}^{k-1}\right)\Bigg\|^{2} 
        \\
         \leq & \left(\sum_{l'=0}^{j-1}\eta_{k-1,l'}\right)^2 \cdot \sum_{l=0}^{j-1}\frac{\eta_{k-1,l}}{\sum_{l'=0}^{j-1}\eta_{k-1,l'}} \mathbb{E} \left\| \nabla_{\vec{u}} g^{k}_{t}\left(\vec{u}_{t}^{k-1,l}, \vec{v}_{t}^{k-1}\right)\right\|^{2}
         \\
         = & \left(\sum_{l=0}^{j-1}\eta_{k-1,l}\right)\cdot \sum_{l=0}^{j-1}\eta_{k-1,l}\mathbb{E}\left\|\nabla_{\vec{u}} g^{k}_{t}\left(\vec{u}_{t}^{k-1,l}, \vec{v}_{t}^{k-1}\right)\right\|^{2} 
         \\
        =& \left(\sum_{l=0}^{j-1}\eta_{k-1,l}\right)\cdot \sum_{l=0}^{j-1}\eta_{k-1,l}\mathbb{E}\Big\|\nabla_{\vec{u}} g^{k}_{t}\left(\vec{u}_{t}^{k-1,0}, \vec{v}_{t}^{k-1}\right) \nonumber
        \\
        &\qquad    - \nabla_{\vec{u}} g^{k}_{t}\left(\vec{u}_{t}^{k-1,0}, \vec{v}_{t}^{k-1}\right) + \nabla_{\vec{u}} g^{k}_{t}\left(\vec{u}_{t}^{k-1,l}, \vec{v}_{t}^{k-1}\right)\Big\|^{2} 
        \\
        \le & 2 \left(\sum_{l=0}^{j-1}\eta_{k-1,l}\right)\cdot \sum_{l=0}^{j-1}\eta_{k-1,l}\cdot \Bigg[ \mathbb{E}\left\|\nabla_{\vec{u}} g^{k}_{t}\left(\vec{u}_{t}^{k-1,0}, \vec{v}_{t}^{k-1}\right)\right\|^{2} \nonumber
        \\
        &\qquad   + \mathbb{E} \left\|\nabla_{\vec{u}} g^{k}_{t}\left(\vec{u}_{t}^{k-1,l}, \vec{v}_{t}^{k-1}\right) - \nabla_{\vec{u}} g^{k}_{t}\left(\vec{u}_{t}^{k-1,0}, \vec{v}_{t}^{k-1}\right)\right\|^{2} \Bigg ]
        \\
        =& 2 \left(\sum_{l=0}^{j-1}\eta_{k-1,l}\right)\cdot \sum_{l=0}^{j-1}\eta_{k-1,l}\cdot \Bigg[ \mathbb{E}\left\|\nabla_{\vec{u}} g^{k}_{t}\left(\vec{u}^{k-1}, \vec{v}_{t}^{k-1}\right)\right\|^{2} \nonumber
        \\
        &\qquad   + \mathbb{E} \left\|\nabla_{\vec{u}} g^{k}_{t}\left(\vec{u}_{t}^{k-1,l}, \vec{v}_{t}^{k-1}\right) - \nabla_{\vec{u}} g^{k}_{t}\left(\vec{u}^{k-1}, \vec{v}_{t}^{k-1}\right)\right\|^{2} \Bigg ]
        \\
        \leq & 2 \left(\sum_{l=0}^{j-1}\eta_{k-1,l}\right) \sum_{l=0}^{j-1}\eta_{k-1,l} \Bigg[ \mathbb{E}\left\|\nabla_{\vec{u}} g^{k}_{t}\left(\vec{u}^{k-1}, \vec{v}_{t}^{k-1}\right)\right\|^{2} + L^{2}   \mathbb{E} \left\| \vec{u}_{t}^{k-1,l} -  \vec{u}^{k-1} \right\|^{2} \Bigg ] 
        \\
        = & 2 L^{2} \left(\sum_{l=0}^{j-1}\eta_{k-1,l}\right) \sum_{l=0}^{j-1}\eta_{k-1,l} \cdot \mathbb{E} \left\| \vec{u}_{t}^{k-1,l} -  \vec{u}^{k-1} \right\|^{2} \nonumber
        \\
        & \qquad +  2 \left(\sum_{l=0}^{j-1}\eta_{k-1,l}\right)^{2} \mathbb{E}\left\|\nabla_{\vec{u}} g^{k}_{t}\left(\vec{u}^{k-1}, \vec{v}_{t}^{k-1}\right)\right\|^{2},
        \label{eq:bound_theta_j_minus_theta_0_part2}
    \end{flalign}
    where the first inequality is obtained using Jensen inequality, and the last one is a result of the $L$-smoothness of $g_{t}$ (Assumption~\ref{assum:smoothness_bis}). Replacing Eq.~\eqref{eq:bound_theta_j_minus_theta_0_part2} in  Eq.~\eqref{eq:bound_theta_j_minus_theta_0_part1}, we have
    \begin{flalign}
        \sum_{j=0}^{J-1}\frac{\eta_{k-1,j}}{\eta_{k-1}} & \cdot \mathbb{E}\Big\|\vec{u}^{k-1} - \vec{u}_{t}^{k-1,j}\Big\|^{2} \leq  2\sigma^{2}  \left(\sum_{j=0}^{J-1} \frac{\eta_{k-1,j}}{\eta_{k-1}} \cdot \sum_{l=0}^{j-1}\eta_{k-1,l}^{2}\right) \nonumber & 
        \\
        &\qquad + 4L^{2} \sum_{j=0}^{J-1}\left(\frac{\eta_{k-1,j}}{\eta_{k-1}} \sum_{l=0}^{j-1}\eta_{k-1,l}  \right) \cdot \left(\sum_{l=0}^{j-1}\eta_{k-1,l} \cdot \mathbb{E} \left\| \vec{u}_{t}^{k-1,l} -  \vec{u}_{t}^{k-1} \right\|^{2} \right)  \nonumber
        \\
        & \qquad + 4 \left(\sum_{j=0}^{J-1}\frac{\eta_{k-1,j}}{\eta_{k-1}}\left(\sum_{l=0}^{j-1}\eta_{k-1,l}\right)^{2} \right) \cdot  \mathbb{E}\left\|\nabla_{\vec{u}} g^{k}_{t}\left(\vec{u}_{t}^{k-1}, \vec{v}_{t}^{k-1}\right)\right\|^{2}.
    \end{flalign}
    Since $\sum_{l=0}^{j-1}\eta_{k-1,l} \cdot \mathbb{E} \left\| \vec{u}_{t}^{k-1,l} -  \vec{u}_{t}^{k-1} \right\|^{2} \leq \sum_{j=0}^{J-1}\eta_{k-1,j} \cdot \mathbb{E} \left\| \vec{u}_{t}^{k-1,j} -  \vec{u}_{t}^{k-1} \right\|^{2}$, we have
    \begin{flalign}
        \sum_{j=0}^{J-1}\frac{\eta_{k-1,j}}{\eta_{k-1}} & \cdot \mathbb{E}\Big\|\vec{u}^{k-1} - \vec{u}_{t}^{k-1,j}\Big\|^{2} \leq  2\sigma^{2}  \left(\sum_{j=0}^{J-1} \frac{\eta_{k-1,j}}{\eta_{k-1}} \cdot \sum_{l=0}^{j-1}\eta_{k-1,l}^{2}\right) \nonumber & 
        \\
        &\qquad + 4L^{2} \left( \sum_{j=0}^{J-1}\frac{\eta_{k-1,j}}{\eta_{k-1}} \sum_{l=0}^{j-1}\eta_{k-1,l}  \right) \cdot \left(\sum_{j=0}^{J-1}\eta_{k-1,j} \cdot \mathbb{E} \left\| \vec{u}_{t}^{k-1,j} -  \vec{u}^{k-1} \right\|^{2} \right) \nonumber
        \\
        & \qquad + 4 \left(\sum_{j=0}^{J-1}\frac{\eta_{k-1,j}}{\eta_{k-1}}\left(\sum_{l=0}^{j-1}\eta_{k-1,l}\right)^{2} \right) \cdot  \mathbb{E}\left\|\nabla_{\vec{u}} g^{k}_{t}\left(\vec{u}^{k-1}, \vec{v}_{t}^{k-1}\right)\right\|^{2}  .
    \end{flalign}
    We use Lemma~\ref{lem:bound_sums_of_rates} to simplify the last expression, obtaining
    \begin{flalign}
        \sum_{j=0}^{J-1} & \frac{\eta_{k-1,j}}{\eta_{k-1}}  \cdot \mathbb{E}\Big\|\vec{u}^{k-1} - \vec{u}_{t}^{k-1,j}\Big\|^{2} \leq    2\sigma^{2} \cdot \left\{ \sum_{j=0}^{J-1} \eta_{k-1,j}^{2}\right\} & \nonumber \\ 
        & + 4\eta_{k-1}^{2}  \mathbb{E}\left\|\nabla_{\vec{u}} g^{k}_{t}\left(\vec{u}^{k-1}, \vec{v}_{t}^{k-1}\right)\right\|^{2}  + 4\eta_{k-1} L^{2} \cdot \sum_{j=0}^{J-1}\eta_{k-1,j} \mathbb{E} \left\| \vec{u}_{t}^{k-1,j} -  \vec{u}^{k-1} \right\|^{2}.
    \end{flalign}
    Rearranging the terms, we have
    \begin{flalign}
        \left(1 - 4\eta_{k-1}^{2}L^{2}\right) & \cdot \sum_{j=0}^{J-1}  \frac{\eta_{k-1,j}}{\eta_{k-1}}  \cdot \mathbb{E}\Big\|\vec{u}^{k-1} - \vec{u}_{t}^{k-1,j}\Big\|^{2} \leq   2\sigma^{2} \cdot \left\{ \sum_{j=0}^{J-1} \eta_{k-1,j}^{2}\right\} \nonumber &
        \\
        & \qquad + 4\eta_{k-1}^{2} \cdot \mathbb{E}\left\|\nabla_{\vec{u}} g^{k}_{t}\left(\vec{u}^{k-1}, \vec{v}_{t}^{k-1}\right)\right\|^{2}. 
        \label{eq:bound_theta_j_minus_theta_0_part3}
    \end{flalign}
    Finally, replacing Eq.~\eqref{eq:bound_theta_j_minus_theta_0_part3} into Eq.~\eqref{eq:bound_t3_part1}, we have
    \begin{equation}
        \left(1 - 4\eta_{k-1}^{2}L^{2}\right) \cdot T_{3} \leq 2\sigma^{2}L^{2} \cdot \left( \sum_{j=0}^{J-1} \eta_{k-1,j}^{2}\right) +  4\eta_{k-1}^{2} L^{2} \cdot \mathbb{E}\left\|\nabla_{\vec{u}} g^{k}_{t}\left(\vec{u}^{k-1}, \vec{v}_{t}^{k-1}\right)\right\|^{2}. 
    \end{equation}
    For $\eta_{k-1}$ small enough, in particular if $\eta_{k-1} \leq \frac{1}{2\sqrt{2}L}$, then $\frac{1}{2} \leq 1 - 4\eta_{k-1}^{2}L^{2}$, thus
    \begin{equation}
        \label{eq:bound_t3_part2}
        \frac{T_{3}}{2} \leq 2\sigma^{2}L^{2} \cdot \left( \sum_{j=0}^{J-1} \eta_{k-1,j}^{2}\right) +  4\eta_{k-1}^{2} L^{2} \cdot \mathbb{E}\left\|\nabla_{\vec{u}} g^{k}_{t}\left(\vec{u}^{k-1}, \vec{v}_{t}^{k-1}\right)\right\|^{2}. 
    \end{equation}
    Replacing the bound of $T_{3}$ from Eq.~\eqref{eq:bound_t3_part2} into Eq.~\eqref{eq:centralized_case_bound_gradient_g_theta_part_3}, we have obtained
    \begin{flalign}
        \mathbb{E}\Big[g^{k}(\vec{u}^{k}, \vec{v}^{k-1}_{1:T}) - g^{k}& (\vec{u}^{k-1}, \vec{v}^{k-1}_{1:T})\Big] \leq  -\frac{\eta_{k-1}}{2}\E\left\|\nabla_{\vec{u}} g^{k}\left(\vec{u}^{k-1}, \vec{v}_{1:T}^{k-1}\right)\right\|^{2}  & \nonumber
         \\
        & + 4\eta_{k-1}^{3} L^{2} \sum_{t=1}^{T} \omega_{t} \cdot  \mathbb{E}\left\|\nabla_{\vec{u}} g^{k}_{t}\left(\vec{u}^{k-1}, \vec{v}_{t}^{k-1}\right)\right\|^{2} \nonumber \\
        & + 2\eta_{k-1} L\left( \sum_{j=0}^{J-1} \eta_{k-1,j}^{2} L +  \eta_{k-1}\right)\cdot \sigma^{2}.
    \end{flalign}
    Using Assumption~\ref{assum:bounded_dissimilarity_bis}, we have
    \begin{flalign}
        \mathbb{E}\Big[g^{k}(\vec{u}^{k}, \vec{v}^{k-1}_{1:T}) - g^{k}& (\vec{u}^{k-1}, \vec{v}^{k-1}_{1:T})\Big] \leq  -\frac{\eta_{k-1}}{2}\E\left\|\nabla_{\vec{u}} g^{k}\left(\vec{u}^{k-1}, \vec{v}_{1:T}^{k-1}\right)\right\|^{2}  & \nonumber
         \\
        & + 4\eta_{k-1}^{3} L^{2}\beta^{2}  \cdot \mathbb{E}\left\| \sum_{t=1}^{T} \omega_{t} \cdot \nabla_{\vec{u}} g^{k}_{t}\left(\vec{u}^{k-1}, \vec{v}_{t}^{k-1}\right)\right\|^{2} \nonumber
        \\
        & + 2\eta_{k-1} L\left(  \sum_{j=0}^{J-1} \eta_{k-1,j}^{2} L +  \eta_{k-1}\right)\cdot \sigma^{2} + 4\eta_{k-1}^{3}L^{2} G^{2}.
    \end{flalign}
    Dividing by $\eta_{k-1}$, we get
    \begin{flalign}
        \mathbb{E} & \Big[\frac{g^{k}(\vec{u}^{k}, \vec{v}^{k-1}_{1:T}) - g^{k} (\vec{u}^{k-1}, \vec{v}^{k-1}_{1:T})}{\eta_{k-1}}\Big] \leq  \frac{8\eta_{k-1}^{2}L^{2} \beta^{2} - 1}{2}\E\left\|\nabla_{\vec{u}} g^{k}\left(\vec{u}^{k-1}, \vec{v}_{1:T}^{k-1}\right)\right\|^{2}  & \nonumber
        \\
        & \qquad \qquad + 2\eta_{k-1} L\left(  \sum_{j=0}^{J-1} \frac{\eta_{k-1,j}^{2}}{\eta_{k-1}} L +  1\right)\cdot \sigma^{2} + 4\eta_{k-1}^{2}L^{2} G^{2}. 
    \end{flalign}
    For $\eta_{k-1}$ small enough, if $\eta_{k-1} \leq \frac{1}{4L\beta}$, then $8\eta_{k-1}^{2}L^{2}\beta^{2} - 1 \leq \frac{1}{2}$. Thus,
    \begin{flalign}
        \mathbb{E} & \Big[\frac{g^{k}(\vec{u}^{k}, \vec{v}^{k-1}_{1:T}) - g^{k} (\vec{u}^{k-1}, \vec{v}^{k-1}_{1:T})}{\eta_{k-1}}\Big] \leq  -\frac{1}{4}\E\left\|\nabla_{\vec{u}} g^{k}\left(\vec{u}^{k-1}, \vec{v}_{1:T}^{k-1}\right)\right\|^{2}  & \nonumber
        \\
        & \qquad \qquad + 2\eta_{k-1} L\left(  \sum_{j=0}^{J-1} \frac{\eta_{k-1,j}^{2}}{\eta_{k-1}} L +  1\right)\cdot \sigma^{2} + 4\eta_{k-1}^{2}L^{2} G^{2}.
        \label{eq:centralized_case_bound_gradient_g_theta_part_4}
    \end{flalign}
    Since for $t\in[T]$, $g_{t}^{k}$ is a partial first-order surrogate of $f_{t}$ near $\left\{\vec{u}^{k-1}, \vec{v}_{t}^{k-1}\right\}$, we have (see Def.~\ref{def:partial_first_order_surrogate})
    \begin{flalign}
        g_{t}^{k}\left(\vec{u}^{k-1}, \vec{v}_{t}^{k-1}\right) &= f_{t}\left(\vec{u}^{k-1}, \vec{v}_{t}^{k-1}\right),\\
        \nabla_{\vec{u}} g_{t}^{k}\left(\vec{u}^{k-1}, \vec{v}_{t}^{k-1}\right) &= \nabla_{\vec{u}} f_{t}\left(\vec{u}^{k-1}, \vec{v}_{t}^{k-1}\right), \\
        g_{t}^{k}\left(\vec{u}^{k}, \vec{v}_{t}^{k-1}\right) &= g_{t}^{k}\left(\vec{u}^{k}, \vec{v}_{t}^{k}\right) + d_{\mathcal{V}}\left( \vec{v}_{t}^{k-1},  \vec{v}_{t}^{k}\right).
    \end{flalign}
    Multiplying by $\omega_{t}$ and summing over $t\in[T]$, we have
    \begin{flalign}
        g^{k}\left(\vec{u}^{k-1}, \vec{v}_{1:T}^{k-1}\right) &= f\left(\vec{u}^{k-1}, \vec{v}_{1:T}^{k-1}\right), \label{eq:surrogate_1}\\
        \nabla_{\vec{u}} g^{k}\left(\vec{u}^{k-1}, \vec{v}_{1:T}^{k-1}\right) &= \nabla_{\vec{u}} f\left(\vec{u}^{k-1}, \vec{v}_{1:T}^{k-1}\right), \label{eq:surrogate_2}\\
        g^{k}\left(\vec{u}^{k}, \vec{v}_{1:T}^{k-1}\right) &= g^{k}\left(\vec{u}^{k}, \vec{v}_{1:T}^{k}\right) +\sum_{t=1}^{T}\omega_{t}\cdot d_{\mathcal{V}}\left(\vec{v}_{t}^{k-1},  \vec{v}_{t}^{k}\right) \label{eq:surrogate_3}.
    \end{flalign} 
    Replacing Eq.~\eqref{eq:surrogate_1}, Eq.~\eqref{eq:surrogate_2} and Eq.~\eqref{eq:surrogate_3} in Eq.~\eqref{eq:centralized_case_bound_gradient_g_theta_part_4}, we have
    \begin{flalign}
        \mathbb{E} & \Bigg[\frac{g^{k}(\vec{u}^{k}, \vec{v}^{k}_{1:T}) - f (\vec{u}^{k-1}, \vec{v}^{k-1}_{1:T})}{\eta_{k-1}}\Bigg] \leq   & \nonumber
        \\
        & \qquad \qquad -\frac{1}{4}\E\left\|\nabla_{\vec{u}} f\left(\vec{u}^{k-1}, \vec{v}_{1:T}^{k-1}\right)\right\|^{2} -\frac{1}{\eta_{k-1}} \sum_{t=1}^{T}\omega_{t}\cdot d_{\mathcal{V}}\left(\vec{v}_{t}^{k-1},  \vec{v}_{t}^{k}\right) \nonumber
        \\
        & \qquad \qquad + 2\eta_{k-1} L\left( \left\{ \sum_{j=0}^{J-1} \frac{\eta_{k-1,j}^{2}}{\eta_{k-1}}\right\} L +  1\right)\cdot \sigma^{2} + 4\eta_{k-1}^{2}L^{2} G^{2}.
    \end{flalign}
    Using again Definition~\ref{def:partial_first_order_surrogate}, we have
    \begin{equation}
        g^{k}(\vec{u}^{k}, \vec{v}^{k}_{1:T}) \geq f(\vec{u}^{k}, \vec{v}^{k}_{1:T}),
    \end{equation}
    thus,
    \begin{flalign}
        \mathbb{E} & \Bigg[\frac{f(\vec{u}^{k}, \vec{v}^{k}_{1:T}) - f (\vec{u}^{k-1}, \vec{v}^{k-1}_{1:T})}{\eta_{k-1}}\Bigg] \leq   & \nonumber
        \\
        & \qquad \qquad -\frac{1}{4}\E\left\|\nabla_{\vec{u}} f\left(\vec{u}^{k-1}, \vec{v}_{1:T}^{k-1}\right)\right\|^{2} -\frac{1}{\eta_{k-1}} \sum_{t=1}^{T}\omega_{t}\cdot d_{\mathcal{V}}\left(\vec{v}_{t}^{k-1},  \vec{v}_{t}^{k}\right) \nonumber
        \\
        & \qquad \qquad + 2\eta_{k-1} L\left(  \sum_{j=0}^{J-1} \frac{\eta_{k-1,j}^{2}}{\eta_{k-1}} L +  1\right)\cdot \sigma^{2} + 4\eta_{k-1}^{2}L^{2} G^{2}.
    \end{flalign}
\end{proof}

\begin{lem}
    \label{lem:improvement_f_pi}
    For $k\geq 0$ and $t\in[T]$, the iterates of Alg.~\ref{alg:fed_surrogate_opt} verify
    \begin{equation}
        0 \leq d_{\mathcal{V}}\left(\vec{v}_{t}^{k+1}, \vec{v}^{k}_{t}\right) \leq f_{t}\left(\vec{u}^{k}, \vec{v}^{k}_{t}\right) - f_{t}(\vec{u}^{k}, \vec{v}^{k+1}_{t}) 
    \end{equation}
\end{lem}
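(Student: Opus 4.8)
The plan is to read the statement off directly from the defining properties of the partial first-order surrogate employed at round $k+1$, with no analytic machinery required. First I would fix $t\in[T]$ and recall from Alg.~\ref{alg:fed_surrogate_opt} that at iteration $k+1$ client $t$ constructs $g_t^{k+1}$, a partial first-order surrogate of $f_t$ near $(\vec{u}^k,\vec{v}_t^k)$, and then sets $\vec{v}_t^{k+1}\in\argmin_{\vec{v}\in\mathcal{V}} g_t^{k+1}(\vec{u}^k,\vec{v})$. Hence the base point of this surrogate is $(\vec{u}_0,\vec{v}_0)=(\vec{u}^k,\vec{v}_t^k)$ and $\vec{v}_t^{k+1}$ is an exact minimizer of $\vec{v}\mapsto g_t^{k+1}(\vec{u}^k,\vec{v})$, which is precisely the configuration in which all three items of Def.~\ref{def:partial_first_order_surrogate} apply at $\vec{u}=\vec{u}^k$.

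The lower bound is immediate: item 3 of Def.~\ref{def:partial_first_order_surrogate} asserts that $d_{\mathcal{V}}$ is non-negative, so $d_{\mathcal{V}}(\vec{v}_t^{k+1},\vec{v}_t^k)\ge 0$ with nothing further to prove. For the upper bound I would chain the three properties evaluated at $\vec{u}=\vec{u}^k$. Item 3, applied with the minimizer $\vec{v}_t^{k+1}$, rewrites the distance as a gap of surrogate values,
\[
 d_{\mathcal{V}}(\vec{v}_t^{k},\vec{v}_t^{k+1}) = g_t^{k+1}(\vec{u}^k,\vec{v}_t^k) - g_t^{k+1}(\vec{u}^k,\vec{v}_t^{k+1}).
\]
Item 2 supplies $r(\vec{u}^k,\vec{v}_t^k)=0$, i.e. $g_t^{k+1}(\vec{u}^k,\vec{v}_t^k)=f_t(\vec{u}^k,\vec{v}_t^k)$ (the surrogate interpolates $f_t$ at its base point), while item 1 (majorization) gives $g_t^{k+1}(\vec{u}^k,\vec{v}_t^{k+1})\ge f_t(\vec{u}^k,\vec{v}_t^{k+1})$. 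Substituting these two facts into the displayed identity converts the equality into
\[
 d_{\mathcal{V}}(\vec{v}_t^{k},\vec{v}_t^{k+1}) \le f_t(\vec{u}^k,\vec{v}_t^k) - f_t(\vec{u}^k,\vec{v}_t^{k+1}),
\]
which is exactly the asserted bound.

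There is no genuine analytic obstacle here: no smoothness estimate, no stochastic expectation, and no summation are involved, so the claim is a one-line consequence of the definition. The only care required is bookkeeping. I must be sure to invoke the surrogate from round $k+1$, so that its base point is $(\vec{u}^k,\vec{v}_t^k)$ rather than $(\vec{u}^{k-1},\vec{v}_t^{k-1})$, and I must keep the argument order of $d_{\mathcal{V}}$ aligned with the $(\vec{v}_0,\vec{v})$ convention of item 3 of Def.~\ref{def:partial_first_order_surrogate}, since $d_{\mathcal{V}}$ need not be symmetric (for \FedEM{} it specializes to a KL divergence); the ordering written in the statement is harmless because only the non-negativity and the $f_t$-decrement bound are used downstream. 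This per-round inequality is exactly the ingredient needed to handle the $\vec{v}$-update in the telescoped convergence argument of Theorem~\ref{thm:centralized_convergence_bis}, where $\sum_{t}\omega_t\, d_{\mathcal{V}}(\vec{v}_t^{k},\vec{v}_t^{k+1})$ enters as a non-negative decrement of $f$.
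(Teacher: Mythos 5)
Your proof is correct and is essentially identical to the paper's own argument: both chain item 3 of Definition~\ref{def:partial_first_order_surrogate} (distance equals surrogate gap at the minimizer), item 2 (surrogate equals $f_t$ at the base point), and item 1 (majorization) to get the bound, with non-negativity of $d_{\mathcal{V}}$ giving the lower bound. The only difference is cosmetic: the paper writes the argument with indices $k-1,k$ while you use $k,k+1$ to match the statement, and your remark on the argument order of $d_{\mathcal{V}}$ is a fair observation about a notational sloppiness present in the paper itself.
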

\begin{proof}
    Since $\vec{v}_{t}^{k+1} \in \argmin_{v\in V} g^{k}_{t}\left(\vec{u}^{k-1}, v \right)$, and $g^k_{t}$ is a partial first-order surrogate of $f_{t}$ near $\{\vec{u}^{k-1}, \vec{v}^{k-1}_{t}\}$, we have
    \begin{equation}
        g_{t}^{k}\left(\vec{u}^{k-1}, \vec{v}_{t}^{k-1}\right) - g_{t}^{k}\left(\vec{u}^{k-1}, \vec{v}_{t}^{k}\right) = d_{\mathcal{V}}\left(\vec{v}_{t}^{k-1}, \vec{v}_{t}^{k}\right),
    \end{equation}
    thus, 
    \begin{equation}
        f_{t}\left(\vec{u}^{k-1}, \vec{v}_{t}^{k-1}\right) - f_{t}\left(\vec{u}^{k-1}, \vec{v}_{t}^{k}\right) \geq  d_{\mathcal{V}}\left(\vec{v}_{t}^{k-1}, \vec{v}_{t}^{k}\right),
    \end{equation}
    where we used the fact that
    \begin{equation}
        g_{t}^{k}\left(\vec{u}^{k-1}, \vec{v}_{t}^{k-1}\right) = f_{t}\left(\vec{u}^{k-1}, \vec{v}_{t}^{k-1}\right),
    \end{equation}
    and,
    \begin{equation}
        g_{t}^{k}\left(\vec{u}^{k-1}, \vec{v}_{t}^{k}\right) \geq f_{t}\left(\vec{u}^{k-1}, \vec{v}_{t}^{k}\right).
    \end{equation}
\end{proof}

\begin{repthm}{thm:centralized_convergence_bis}
    Under Assumptions~\ref{assum:bounded_f_bis}--\ref{assum:bounded_dissimilarity_bis}, when clients use SGD as local solver with learning rate $\eta =\frac{a_{0}}{\sqrt{K}}$, after a large enough number of communication rounds $K$, the iterates of federated surrogate optimization (Alg.~\ref{alg:fed_surrogate_opt}) satisfy:
    \begin{equation}
        \frac{1}{K} \sum_{k=1}^K \mathbb{E}\left\|\nabla_{\vec{u}} f\left(\vec{u}^{k}, \vec{v}_{1:T}^{k}\right)\right\|^{2}_{F} \leq \mathcal{O}\!\left(\frac{1}{\sqrt{K}}\right),    \qquad
    \frac{1}{K} \sum_{k=1}^K \E\left[\Delta_{\vec{v}} f(\vec{u}^{k},\vec{v}^{k}_{1:T})\right]
    \leq \mathcal{O}\!\left(\frac{1}{K^{3/4}}\right),  \tag{\ref{eq:theta_pi_convergence_bis}}  
    \end{equation}
    where the expectation is over the random batches samples, and $\Delta_{\vec{v}} f(\vec{u}^{k},\vec{v}_{1:T}^{k}) \triangleq f\left(\vec{u}^{k}, \vec{v}_{1:T}^{k}\right) - f\left(\vec{u}^{k}, \vec{v}_{1:T}^{k+1}\right) \ge 0 $.
\end{repthm}

\begin{proof}
    For $K$ large enough, $\eta = \frac{a_{0}}{\sqrt{K}} \leq \frac{1}{J} \min\left\{\frac{1}{2\sqrt{2}L}, \frac{1}{4L\beta}\right\}$, thus the assumptions of  Lemma~\ref{lem:centralized_case_bound_gradient_g_theta} are satisfied. Lemma~\ref{lem:centralized_case_bound_gradient_g_theta} and non-negativity of 
    $d_{\mathcal{V}}$ lead to 
    \begin{flalign}
        \mathbb{E} & \Big[\frac{f(\vec{u}^{k}, \vec{v}^{k}_{1:T}) - f (\vec{u}^{k-1}, \vec{v}^{k-1}_{1:T})}{J\eta}\Big] \leq  -\frac{1}{4}\E\left\|\nabla_{\vec{u}} f\left(\vec{u}^{k-1}, \vec{v}_{1:T}^{k-1}\right)\right\|^{2}  & \nonumber
        \\
        & \qquad \qquad + 2\eta L\left( \eta L +  1\right)\cdot \sigma^{2} + 4J^{2}\eta^{2}L^{2} G^{2}.
    \end{flalign}
    Rearranging the terms and summing for $k\in[K]$, we have
    \begin{flalign}
        \frac{1}{K} \sum_{k=1}^{K} \E &\left\|\nabla_{\vec{u}} f\left(\vec{u}^{k-1}, \vec{v}_{1:T}^{k-1}\right)\right\|^{2} \nonumber &
        \\
        & \leq 4\mathbb{E}  \Big[\frac{f(\vec{u}^{0}, \vec{v}^{0}_{1:T}) - f (\vec{u}^{K}, \vec{v}^{K}_{1:T})}{J\eta K}\Big]  + 8 \frac{\eta L\left( \eta L +  1\right)\cdot \sigma^{2} + 2J^{2}\eta^{2}L^{2} G^{2} }{K}
        \\
        & \leq 4\mathbb{E}  \Big[\frac{f(\vec{u}^{0}, \vec{v}^{0}_{1:T}) - f^* }{J\eta K}\Big]  + 8 \frac{\eta L\left( \eta L +  1\right)\cdot \sigma^{2} + 2J^{2}\eta^{2}L^{2} G^{2} }{K},\label{eq:convergence_of_nabla_u_centralized_part1}
    \end{flalign}
    where we use Assumption~\ref{assum:bounded_f_bis} to obtain \eqref{eq:convergence_of_nabla_u_centralized_part1}. Thus, 
    \begin{equation}
        \label{eq:convergence_of_nabla_u_centralized}
        \frac{1}{K} \sum_{k=1}^{K} \E \left\|\nabla_{\vec{u}} f\left(\vec{u}^{k-1}, \vec{v}_{1:T}^{k-1}\right)\right\|^{2} = \mathcal{O}\left(\frac{1}{\sqrt{K}}\right).
    \end{equation}
    To prove the second part of Eq.~\eqref{eq:theta_pi_convergence_bis},
    we first decompose $\Delta_{\vec{v}} \triangleq  f\left(\vec{u}^{k}, \vec{v}_{1:T}^{k}\right) - f\left(\vec{u}^{k}, \vec{v}_{1:T}^{k+1}\right) \ge 0$ as follow,
    \begin{equation}
        \Delta_{\vec{v}} =  \underbrace{f\left(\vec{u}^{k}, \vec{v}_{1:T}^{k}\right) - f\left(\vec{u}^{k+1}, \vec{v}_{1:T}^{k+1}\right)}_{\triangleq T^{k}_{1}} + \underbrace{f\left(\vec{u}^{k+1}, \vec{v}_{1:T}^{k+1}\right) - f\left(\vec{u}^{k}, \vec{v}_{1:T}^{k+1}\right)}_{\triangleq T^{k}_{2}}.
    \end{equation}
    Using again Lemma~\ref{lem:centralized_case_bound_gradient_g_theta} and Eq.~\eqref{eq:convergence_of_nabla_u_centralized}, it follows that 
    \begin{equation}
        \label{eq:t_1_k_v_update_centralized}
        \frac{1}{K}\sum_{k=1}^{K} \E\left[T^{k}_{1}\right] 
        \leq \mathcal{O}\left(\frac{1}{K}\right).
    \end{equation}
    For $T^{k}_{2}$, we use the fact that $f$ is $2L$-smooth (Lemma~\ref{lem:f_2l_smooth}) w.r.t. ${u}$ and  Cauchy-Schwartz inequality. Thus, for $k>0$, we write
    \begin{flalign}
        \label{eq:t_2_k_v_update_centralized_part1}
        T^{k}_{2} &= f\left(\vec{u}^{k+1}, \vec{v}_{1:T}^{k+1}\right) - f\left(\vec{u}^{k}, \vec{v}_{1:T}^{k+1}\right) & 
        \\
        & \leq \left\|\nabla_{\vec{u}}f\left(\vec{u}^{k+1}, \vec{v}_{1:T}^{k+1}\right)\right\|\cdot  \left\|\vec{u}^{k+1} - \vec{u}^{k}\right\| + 2L^{2} \left\|\vec{u}^{k+1} - \vec{u}^{k}\right\|^{2}.
    \end{flalign}
    Summing over $k$ and taking expectation:
    \begin{flalign}
        \frac{1}{K}\sum_{k=1}^{K}\E\left[T^{k}_{2}\right] & \leq \frac{1}{K} \sum_{k=1}^{K} \E\left[\left\|\nabla_{\vec{u}}f\left(\vec{u}^{k+1}, \vec{v}_{1:T}^{k+1}\right)\right\|\cdot  \left\|\vec{u}^{k+1} - \vec{u}^{k}\right\| \right]\nonumber\\
        & \quad + \frac{1}{K} \sum_{k=1}^{K} 2L^{2} \E\left[  \left\|\vec{u}^{k+1} - \vec{u}^{k}\right\|^{2}\right]\\
        & \leq \frac{1}{K} \sqrt{\sum_{k=1}^{K} \E\left[\left\|\nabla_{\vec{u}}f\left(\vec{u}^{k+1}, \vec{v}_{1:T}^{k+1}\right)\right\|^2\right]}  
        \sqrt{\sum_{k=1}^{K} \E\left[\left\|\vec{u}^{k+1} - \vec{u}^{k}\right\|^2 \right]}\nonumber\\
        & \quad + \frac{1}{K} \sum_{k=1}^{K} 2L^{2} \E\left[  \left\|\vec{u}^{k+1} - \vec{u}^{k}\right\|^{2}\right],
        \label{eq:t_2_k_v_update_centralized_part1_bis}
    \end{flalign}
    where the second inequality follows from Cauchy-Schwarz inequality.
    From Eq.~\eqref{eq:bound_theta_j_minus_theta_0_part3}, with $\eta_{k-1}= J\eta$, we have for $t\in[T]$
    \begin{flalign}
        \mathbb{E}\Big\|\vec{u}^{k} - \vec{u}_{t}^{k-1, J}\Big\|^{2} \leq   4\sigma^{2}J\eta^{2}  + 8 J^{3}\eta^{2} \cdot \mathbb{E}\left\|\nabla_{\vec{u}} g^{k}_{t}\left(\vec{u}^{k-1}, \vec{v}_{t}^{k-1}\right)\right\|^{2}. &
    \end{flalign}
    Multiplying the previous by $\omega_{t}$ and summing for $t\in[T]$, we have
    \begin{flalign}
        \sum_{t=1}^{T} \omega_{t} \cdot \mathbb{E}\Big\|\vec{u}^{k-1} - \vec{u}_{t}^{k-1,J}\Big\|^{2} \leq   4J^{2}\sigma^{2}\eta^{2} + 8J^{3} \eta^{2} \cdot \sum_{t=1}^{T}\omega_{t}\mathbb{E}\left\|\nabla_{\vec{u}} g^{k}_{t}\left(\vec{u}^{k-1}, \vec{v}_{t}^{k-1}\right)\right\|^{2}. & 
    \end{flalign}
    Using Assumption~\ref{assum:bounded_dissimilarity_bis}, it follows that
    \begin{equation}
        \sum_{t=1}^{T} \omega_{t}  \mathbb{E}\Big\|\vec{u}^{k-1} - \vec{u}_{t}^{k-1,J}\Big\|^{2} \leq   4J^{2}\eta^{2}  \left( 2JG^{2} + \sigma^{2}\right) + 8J^{3} \eta^{2}\beta^{2}  \mathbb{E}\left\|\sum_{t=1}^{T}\omega_{t} \nabla_{\vec{u}}  g^{k}_{t}\left(\vec{u}^{k-1}, \vec{v}_{t}^{k-1}\right)\right\|^{2}. 
    \end{equation}
    Finally using Jensen inequality and the fact that $g^{k}_{t}$ is a partial first-order of $f_{t}$ near $\left\{{u}^{k-1}, {v}^{k-1}_{t}\right\}$, we have
    \begin{equation}
          \mathbb{E}\Big\|\vec{u}^{k-1} - \vec{u}^{k}\Big\|^{2} \leq   4J^{2}\eta^{2}  \left( 2JG^{2} + \sigma^{2}\right) + 8J^{3} \eta^{2}\beta^{2}  \mathbb{E}\left\| \nabla_{\vec{u}}  f\left(\vec{u}^{k-1}, \vec{v}_{1:T}^{k-1}\right)\right\|^{2}.
    \end{equation}
    From Eq.~\eqref{eq:convergence_of_nabla_u_centralized} and $\eta \le \mathcal{O} (1/\sqrt{K})$, we obtain
    \begin{equation}
        \frac{1}{K} \sum_{k=1}^{K} \E \left\|\vec{u}^{k-1} - u^k\right\|^{2} \le \mathcal{O}\left(1\right),
    \end{equation}
    Replacing the last inequality in Eq.~\eqref{eq:t_2_k_v_update_centralized_part1_bis} and using again Eq.~\eqref{eq:convergence_of_nabla_u_centralized},
    we obtain
    \begin{equation}
        \label{eq:t_2_k_v_update_centralized}
        \frac{1}{K}\sum_{k=1}^{K} \E \left[T^{k}_{2}\right]  \leq \mathcal{O}\left(\frac{1}{K^{3/4}}\right).
    \end{equation}
    Combining Eq.~\eqref{eq:t_1_k_v_update_centralized} and Eq.~\eqref{eq:t_2_k_v_update_centralized}, it follows that 
    \begin{equation}
        \frac{1}{K} \sum_{k=1}^K \E\left[\Delta_{\vec{v}} f(u^k,\vec{v}^{k}_{1:T})\right] \leq \mathcal{O}\!\left(\frac{1}{K^{3/4}}\right).
    \end{equation}
\end{proof}

\subsubsection{Proof of Theorem~\ref{thm:centralized_convergence}}
\label{app:centralized_convergence}
In this section, $f$ denotes the negative log-likelihood function defined in Eq.~\eqref{eq:empirical_loss}. Moreover, we introduce the negative log-likelihood at client $t$ as follows
\begin{equation}
    \label{eq:empirical_loss_client}
    f_t(\Theta, \Pi) \triangleq - \frac{\log p(\mathcal{S}_{t}| \Theta, \Pi)}{n} \triangleq - \frac{1}{n_t}\sum_{i=1}^{n_{t}} \log p(s_t^{(i)}|\Theta, \pi_{t}).
\end{equation}

\begin{repthm}{thm:centralized_convergence}
    Under Assumptions~\ref{assum:mixture}--\ref{assum:bounded_dissimilarity}, when clients use SGD as local solver with learning rate $\eta =\frac{a_{0}}{\sqrt{K}}$, after a large enough number of communication rounds $K$, \FedEM's iterates  satisfy:
    \begin{equation}
        \frac{1}{K} \sum_{k=1}^K \mathbb{E}\left\|\nabla _{\Theta} f\left(\Theta^{k}, \Pi^{k}\right)\right\|^{2}_{F} \leq \mathcal{O}\!\left(\frac{1}{\sqrt{K}}\right),    \qquad
    \frac{1}{K} \sum_{k=1}^K \Delta_{\Pi} f(\Theta^k,\Pi^k)
    \leq \mathcal{O}\!\left(\frac{1}{K^{3/4}}\right), \tag{\ref{eq:theta_pi_convergence}}   
    \end{equation}
where the expectation is over the random batches samples, and $\Delta_{\Pi} f(\Theta^k,\Pi^k) \triangleq f\left(\Theta^{k}, \Pi^{k}\right) - f\left(\Theta^{k}, \Pi^{k+1}\right) \ge 0 $.
\end{repthm}

\begin{proof}
    We prove this result as a particular case of Theorem~\ref{thm:centralized_convergence_bis}. To this purpose, in this section, we consider that $\mathcal{V} \triangleq \Delta^{M}$, $\vec{u}=\Theta \in \mathbb{R}^{d M}$, $\vec{v}_{t} = \pi_{t}$, and $\omega_t = n_t/n$ for $t\in[T]$. For $k>0$, we define $g^{k}_{t}$ as follows:
    \begin{flalign}
        g^{k}_{t}\Big (\Theta, \pi_{t}\Big) =   \frac{1}{n_{t}}\sum_{i=1}^{n_{t}}\sum_{m=1}^{M}q_{t}^{k}\left(z^{(i)}_{t}=m\right) \cdot &\bigg( l\left(h_{\theta_m}(\vec{x}_{t}^{(i)}), y_{t}^{(i)}\right)  -  \log p_{m}(\vec{x}_{t}^{(i)}) - \log \pi_{t} \nonumber &
        \\
        & \qquad \qquad +  \log q_{t}^{k}\left(z_{t}^{(i)}=m\right) - c \bigg), 
    \end{flalign}
    where $c$ is the same constant appearing in Assumption~\ref{assum:logloss}, Eq.~\eqref{eq:log_loss}. With this definition, it is easy to check that the federated surrogate optimization algorithm (Alg.~\ref{alg:fed_surrogate_opt}) reduces to \FedEM{} (Alg.~\ref{alg:fed_em}). Theorem~\ref{thm:centralized_convergence} then follows immediately  from Theorem~\ref{thm:centralized_convergence_bis}, once we verify that $\left(g_{t}^{k}\right)_{1\leq t\leq T}$ satisfy the assumptions of Theorem~\ref{thm:centralized_convergence_bis}.
    
    Assumption~\ref{assum:bounded_f_bis}, Assumption~\ref{assum:finite_variance_bis}, and Assumption~\ref{assum:bounded_dissimilarity_bis} follow directly from  Assumption~\ref{assum:bounded_f}, Assumption~\ref{assum:finitie_variance}, and Assumption~\ref{assum:bounded_dissimilarity}, respectively.
    Lemma~\ref{lem:g_is_smooth} shows that for $k>0$, $g^{k}$ is smooth w.r.t. $\Theta$ and then Assumption~\ref{assum:smoothness_bis} is satisfied.
    Finally, Lemmas~\ref{lem:em_gap_is_kl_divergence}--\ref{lem:compute_kl_pi_decentralized} show that for $t\in[T]$ $g_{t}^{k}$ is a partial first-order surrogate of $f_{t}$ w.r.t.~$\Theta$ near $\left\{\Theta^{k-1}, \pi_{t}\right\}$ with $d_{\mathcal{V}}(\cdot, \cdot) = \mathcal{KL}(\cdot\|\cdot)$.
\end{proof}

\begin{lem}
    \label{lem:g_is_smooth}
    Under Assumption~\ref{assum:smoothness}, for $t\in[T]$ and $k>0$,  $g^k_{t}$ is $L$-smooth w.r.t $\Theta$.
\end{lem}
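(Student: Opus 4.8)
The plan is to exploit that, among the terms defining $g^k_t$, only the losses $l(h_{\theta_m}(\vec{x}_t^{(i)}), y_t^{(i)})$ depend on $\Theta$: the quantities $-\log p_m(\vec{x}_t^{(i)})$, $-\log \pi_t$, $\log q_t^k(z_t^{(i)}=m)$ and the constant $c$ are all independent of $\Theta$. Hence differentiating in $\Theta$ annihilates every term except the loss, and the gradient moreover decouples across components, since $l(h_{\theta_m}(\cdot),\cdot)$ involves only the $m$-th row $\theta_m$ of $\Theta$. Concretely, using the compact notation $l(\theta; s_t^{(i)}) = l(h_\theta(\vec{x}_t^{(i)}), y_t^{(i)})$, we have for each $m\in[M]$
\begin{equation}
    \nabla_{\theta_m} g^k_t(\Theta, \pi_t) = \frac{1}{n_t}\sum_{i=1}^{n_t} q_t^k(z_t^{(i)}=m)\,\nabla_\theta l(\theta_m; s_t^{(i)}).
\end{equation}

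First I would establish that the per-sample weights are suitably normalized. Since $q_t^k(z_t^{(i)}=\cdot)$ is a distribution over $[M]$, we have $\sum_{m=1}^M q_t^k(z_t^{(i)}=m)=1$ for every $i$, and in particular $0 \le q_t^k(z_t^{(i)}=m)\le 1$; summing over $i$ then gives $\frac{1}{n_t}\sum_{i=1}^{n_t} q_t^k(z_t^{(i)}=m)\le 1$ for each fixed $m$. Next I would invoke Assumption~\ref{assum:smoothness}, by which each map $\theta \mapsto l(\theta; s_t^{(i)})$ is $L$-smooth, i.e.\ $\nabla_\theta l(\cdot; s_t^{(i)})$ is $L$-Lipschitz. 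Because the $m$-th block of the gradient is a combination, with nonnegative coefficients summing to at most one, of these $L$-Lipschitz maps, the triangle inequality yields, for any $\Theta, \Theta'$,
\begin{equation}
    \left\|\nabla_{\theta_m} g^k_t(\Theta,\pi_t) - \nabla_{\theta_m} g^k_t(\Theta',\pi_t)\right\| \le \frac{1}{n_t}\sum_{i=1}^{n_t} q_t^k(z_t^{(i)}=m)\,L\,\|\theta_m - \theta_m'\| \le L\,\|\theta_m - \theta_m'\|.
\end{equation}

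Finally I would assemble the blocks: squaring the previous inequality, summing over $m\in[M]$, and using the block structure $\|\Theta-\Theta'\|_F^2 = \sum_{m=1}^M \|\theta_m - \theta_m'\|^2$ gives $\|\nabla_\Theta g^k_t(\Theta,\pi_t) - \nabla_\Theta g^k_t(\Theta',\pi_t)\|_F \le L\,\|\Theta - \Theta'\|_F$, which is exactly $L$-smoothness of $g^k_t$ with respect to $\Theta$. (Equivalently, since $l$ is twice continuously differentiable by Assumption~\ref{assum:smoothness}, one could argue via the Hessian: $\nabla^2_\Theta g^k_t$ is block diagonal with $m$-th block $\frac{1}{n_t}\sum_i q_t^k(z_t^{(i)}=m)\,\nabla^2_\theta l(\theta_m; s_t^{(i)})$, whose spectral norm is at most $L$; the operator norm of a block-diagonal matrix equals the maximum of the block norms, giving the same bound.) I do not expect a genuine obstacle here: the statement is essentially the standard fact that a weighted average of $L$-smooth functions with weights summing to at most one is again $L$-smooth, and the only point requiring care is the bookkeeping of the weights $q_t^k(z_t^{(i)}=m)$ together with the block/Frobenius structure of $\Theta$.
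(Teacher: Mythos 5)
Your proof is correct and takes essentially the same route as the paper's, which disposes of the lemma in one line by observing that $g^k_t$ is a convex combination of the $L$-smooth functions $\theta \mapsto l(\theta; s_t^{(i)})$ (the weights $\tfrac{1}{n_t}q_t^k(z_t^{(i)}=m)$ are nonnegative and sum to one over the pairs $(i,m)$). Your explicit block-wise gradient computation and Frobenius-norm assembly merely make rigorous the bookkeeping that the paper's one-liner leaves implicit, and both your Lipschitz-gradient argument and the alternative block-diagonal Hessian argument are valid.
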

\begin{proof}
    $g^{k}_{t}$ is a convex combination of $L$-smooth function $\theta \mapsto l(\theta; s_{t}^{(i)}),~i\in[n_{t}]$. Thus it is also $L$-smooth.
\end{proof}
\begin{lem}
    \label{lem:em_gap_is_kl_divergence}
    Suppose that Assumptions~\ref{assum:mixture}--\ref{assum:logloss},  hold. Then, for $t\in[T]$, $\Theta \in\mathbb{R}^{M\times d}$ and $\pi_{t} \in \Delta^{M}$
    \[r_{t}^{k}\left( \Theta, \pi_{t}\right)  \triangleq g_{t}^{k}\left(\Theta, \pi_{t}\right) - f_{t}\left(\Theta, \pi_{t}\right) = \frac{1}{n_{t}}\sum_{i=1}^{n_{t}}\mathcal{KL}\left(q_{t}^{k}\left(z^{(t)}_{i}\right)\|p_{t}\left(z^{(t)}_{i}|s^{(t)}_{i}, \Theta, \pi_{t}\right)\right),\]
    where $\mathcal{KL}$ is Kullback–Leibler divergence.
\end{lem}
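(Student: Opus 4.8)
The plan is a direct computation that parallels the E-step derivation already carried out in App.~\ref{proof:centralized_em} (in particular the chain of identities culminating in Eq.~\eqref{eq:variational_gap_is_kl}). First I would rewrite the loss term in the definition of $g_t^k$ using Assumption~\ref{assum:logloss}: for each component $m$ one has $l(h_{\theta_m}(\vec{x}_t^{(i)}), y_t^{(i)}) - c = -\log p_m(y_t^{(i)}\mid \vec{x}_t^{(i)}, \theta_m)$. Combining this with the marginal term $-\log p_m(\vec{x}_t^{(i)})$ and the mixture-weight term $-\log \pi_{tm}$ that sit inside the parentheses, and recalling the factorization $p_t(s_t^{(i)}, z_t^{(i)}=m \mid \Theta, \pi_t) = p_m(y_t^{(i)}\mid\vec{x}_t^{(i)}, \theta_m)\cdot p_m(\vec{x}_t^{(i)})\cdot \pi_{tm}$ implied by Assumption~\ref{assum:mixture}, these three contributions collapse into $-\log p_t(s_t^{(i)}, z_t^{(i)}=m \mid \Theta, \pi_t)$. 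The crucial bookkeeping here is that the additive constant $c$ from Assumption~\ref{assum:logloss} is exactly cancelled by the $-c$ term built into the definition of $g_t^k$; this cancellation is the only delicate point, and it is precisely why the surrogate was defined with that offset.

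After this substitution the surrogate reads $g_t^k(\Theta, \pi_t) = \frac{1}{n_t}\sum_{i=1}^{n_t}\sum_{m=1}^M q_t^k(z_t^{(i)}=m)\log\frac{q_t^k(z_t^{(i)}=m)}{p_t(s_t^{(i)}, z_t^{(i)}=m \mid \Theta, \pi_t)}$. To form the difference $r_t^k = g_t^k - f_t$, I would use $\sum_{m} q_t^k(z_t^{(i)}=m)=1$ to rewrite the per-sample likelihood term of $f_t$ as $\log p_t(s_t^{(i)}\mid\Theta,\pi_t) = \sum_{m} q_t^k(z_t^{(i)}=m)\log p_t(s_t^{(i)}\mid\Theta,\pi_t)$, so that every summand carries the common weight $q_t^k(z_t^{(i)}=m)$ and the two double sums can be merged term by term.

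Merging the logarithms then produces $\log\frac{q_t^k(z_t^{(i)}=m)\, p_t(s_t^{(i)}\mid\Theta,\pi_t)}{p_t(s_t^{(i)}, z_t^{(i)}=m\mid\Theta,\pi_t)}$, and Bayes' rule $p_t(s_t^{(i)}, z_t^{(i)}=m\mid\Theta,\pi_t) = p_t(z_t^{(i)}=m\mid s_t^{(i)},\Theta,\pi_t)\, p_t(s_t^{(i)}\mid\Theta,\pi_t)$ reduces the argument of the logarithm to $q_t^k(z_t^{(i)}=m)/p_t(z_t^{(i)}=m\mid s_t^{(i)},\Theta,\pi_t)$. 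Summing over $m$ for each fixed $i$ recognizes the inner sum as $\mathcal{KL}\left(q_t^k(z_t^{(i)})\,\|\,p_t(z_t^{(i)}\mid s_t^{(i)},\Theta,\pi_t)\right)$, and averaging over $i\in[n_t]$ yields the claimed identity. The computation is entirely elementary; the only genuine obstacle is the sign-and-constant bookkeeping in the first step, so I would state the log-loss-to-joint-density identity explicitly up front to keep the subsequent algebra transparent.
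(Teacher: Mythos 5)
Your proposal is correct and follows essentially the same route as the paper's own proof: both rewrite the surrogate via Assumption~\ref{assum:logloss} and the mixture factorization so that the per-sample terms become $q_t^k\log\bigl(q_t^k/p_t(s_t^{(i)},z_t^{(i)}=m\mid\Theta,\pi_t)\bigr)$, then distribute $\log p_t(s_t^{(i)}\mid\Theta,\pi_t)$ over $m$ using $\sum_m q_t^k=1$ and apply Bayes' rule to identify the per-sample KL divergence. The cancellation of the constant $c$ that you flag is indeed the same bookkeeping the paper relies on implicitly when invoking Eq.~\eqref{eq:log_loss}.
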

\begin{proof}
    Let $k>0$ and $t\in[T]$, and consider $\Theta\in\mathbb{R}^{M\times d}$ and $\pi_{t} \in \Delta^{M}$, then
    \begin{flalign}
        g^{k}_{t}\Big (\Theta, \pi_{t}\Big) &=   \frac{1}{n_{t}}\sum_{i=1}^{n_{t}}\sum_{m=1}^{M}q_{t}^{k}\left(z^{(i)}_{t}=m\right) \cdot \bigg( l\left(h_{\theta_{m}}(\vec{x}_{t}^{(i)}), y_{t}^{(i)}\right)  -  \log p_{m}(\vec{x}_{t}^{(i)}) - \log \pi_{t} \nonumber 
        \\
        & \qquad \qquad \qquad \qquad \qquad \qquad \qquad   +  \log q_{t}^{k}\left(z_{t}^{(i)}=m\right) - c \bigg), 
        \\
        & = \frac{1}{n_{t}}\sum_{i=1}^{n_{t}}\sum_{m=1}^{M}q_{t}^{k}\left(z^{(i)}_{t}=m\right) \cdot \bigg( -\log p_{m}\left(y_{t}^{(i)}| \vec{x}_{t}^{(i)}, \theta_{m}\right) -  \log p_{m}(\vec{x}_{t}^{(i)}) - \log \pi_{t} \nonumber 
        \\
        & \qquad \qquad \qquad \qquad \qquad \qquad \qquad+  \log q_{t}^{k}\left(z_{t}^{(i)}=m\right)  \bigg)
        \\
        & = \frac{1}{n_{t}}\sum_{i=1}^{n_{t}}\sum_{m=1}^{M}q_{t}^{k}\left(z^{(i)}_{t}=m\right) \cdot \bigg( -\log p_{m}\left(y_{t}^{(i)}| \vec{x}_{t}^{(i)}, \theta_{m}\right) \cdot p_{m}(\vec{x}_{t}^{(i)}) \cdot p_{t}\left(z_{t}^{(i)}=m\right) \nonumber 
        \\
        & \qquad \qquad \qquad \qquad \qquad \qquad \qquad+
        \log q_{t}^{k}\left(z_{t}^{(i)}=m\right)  \bigg)
        \\
        &= \frac{1}{n_{t}}\sum_{i=1}^{n_{t}}\sum_{m=1}^{M}q^{k}_{t}\left(z_{t}^{(i)}=m\right)\cdot \left( \log q^{k}_{t}\left(z_{t}^{(i)}=m\right) -  \log p_{t}\left(s_{t}^{(i)}, z_{t}^{(i)}=m\right| \Theta, \pi_{t})\right)
        \\
        &   = \frac{1}{n_{t}}\sum_{t=1}^{n_{t}}\sum_{m=1}^{M} q^{k}_{t}\left(z_{t}^{(i)}=m\right) \log \frac{ q^{k}_{t}\left(z_{t}^{(i)}=m\right)}{p_{t}\left(s_{t}^{(i)}, z_{t}^{(i)}=m| \Theta, \pi_{t}\right)}.
    \end{flalign}
    Thus, 
    \begin{flalign}
        r^k_{t}& \Big( \Theta, \pi_{t}\Big)  \triangleq g^{k}_{t}\left(\Theta, \pi_{t}\right) - f_{t}\left(\Theta,  \pi_{t}\right) & 
        \\
        &=  -\frac{1}{n_{t}}\sum_{t=1}^{n_{t}}\sum_{m=1}^{M}\left(q^{k}_{t}\left(z_{t}^{(i)}=m\right)\cdot \log \frac{p_{t}\left(s_{t}^{(i)}, z_{t}^{(i)}=m| \Theta, \pi_{t}\right)}{q^{k}_{t}\left(z_{t}^{(i)}=m\right)} \right) \nonumber
        \\
        &  \qquad \qquad \qquad \qquad + \frac{1}{n_{t}}\sum_{i=1}^{n_{t}}\log p_{t}\left(s_{t}^{(i)}| \Theta,  \pi_{t}\right) 
        \\
        &= \frac{1}{n_{t}}\sum_{t=1}^{n_{t}}\sum_{m=1}^{M} q^{k}_{t}\left(z_{t}^{(i)}=m\right)\Bigg( \log p_{t}\left(s_{t}^{(i)}| \Theta, \pi_{t}\right) \nonumber
        \\
        & \qquad \qquad \qquad \qquad \qquad \quad \qquad \qquad \qquad - \log \frac{p_{t}\left(s_{t}^{(i)}, z_{t}^{(i)}=m|  \Theta, \pi_{t}\right)}{q^{k}_{t}\left(z_{t}^{(i)}=m\right)} \Bigg)
        \\
        &= \frac{1}{n_{t}}\sum_{t=1}^{n_{t}}\sum_{m=1}^{M} q^{k}_{t}\left(z_{t}^{(i)}=m\right) \log \frac{p_{t}\left(s_{t}^{(i)}| \Theta, \pi_{t}\right) \cdot q^{k}_{t}\left(z_{t}^{(i)}=m\right)}{p_{t}\left(s_{t}^{(i)}, z_{t}^{(i)}=m|  \Theta, \pi_{t}\right)}\\
        & = \frac{1}{n_{t}}\sum_{t=1}^{n_{t}}\sum_{m=1}^{M} q^{k}_{t}\left(z_{t}^{(i)}=m\right) \cdot \log \frac{ q^{k}_{t}\left(z_{t}^{(i)}=m\right)}{p_{t}\left(z_{t}^{(i)}=m|s_{t}^{(i)}, \Theta, \pi_{t}\right)}.
    \end{flalign}
    Thus,
    \begin{equation}
        r_{t}^{k}\left( \Theta, \pi_{t}\right) = \frac{1}{n_{t}}\sum_{i=1}^{n_{t}} \mathcal{KL}\left(q^{k}_{t}(\cdot)\| p_{t}(\cdot|s_{i}^{(t)},  \Theta, \pi_{t})\right)  \geq 0 .    
    \end{equation}
\end{proof}

The following lemma shows that $g^k_t$ and $g^k$ (as defined in Eq.~\ref{eq:app_gk_def}) satisfy the first two properties in Definition~\ref{def:partial_first_order_surrogate}.
\begin{lem}
    \label{lem:g_is_surrogate}
    Suppose that  Assumptions~\ref{assum:mixture}--\ref{assum:logloss}  and Assumptions~\ref{assum:smoothness},~\ref{assum:bounded_gradient} hold and define $\tilde{L} \triangleq L + B^{2}$. For all $k\geq0$ and $t\in[T]$, $g_{t}^{k}$ is a majorant of $f_{t}$ and $r_{t}^{k} \triangleq g_{t}^{k} - f_{t}$ is $\tilde{L}$-smooth in $\Theta$. Moreover $r_{t}^{k}\left(\Theta^{k-1}, \pi_{t}^{k-1}\right) = 0$ and $\nabla_{\Theta} r_{t}^{k}\left(\Theta^{k-1}, \pi_{t}^{k-1}\right) = 0$.
    
    The same holds for $g^{k}$, i.e., $g^{k}$ is a majorant of $f$, $r^{k}\triangleq g^{k} - f$ is $\tilde{L}$-smooth in $\Theta$, $r^{k}\left(\Theta^{k-1}, \Pi^{k-1}\right) = 0$ and $\nabla_{\Theta} r^{k}\left(\Theta^{k-1}, \Pi^{k-1}\right) = 0$
\end{lem}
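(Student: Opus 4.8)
The plan is to read all four properties off the closed-form expression for the approximation error supplied by Lemma~\ref{lem:em_gap_is_kl_divergence}, namely $r_t^k(\Theta,\pi_t)=\frac{1}{n_t}\sum_{i=1}^{n_t}\mathcal{KL}\bigl(q_t^k(\cdot)\,\|\,p_t(\cdot\mid s_t^{(i)},\Theta,\pi_t)\bigr)$. Two of the properties are then immediate. Nonnegativity of the Kullback--Leibler divergence gives $r_t^k\ge 0$, i.e.\ the majorant property $g_t^k\ge f_t$. For the value at the base point, I would invoke the E-step: Eq.~\eqref{eq:e_step} (equivalently Eq.~\eqref{eq:app_e_step}) sets $q_t^k(z_t^{(i)}=m)=p_t(z_t^{(i)}=m\mid s_t^{(i)},\Theta^{k-1},\pi_t^{k-1})$, so each divergence is between identical distributions and $r_t^k(\Theta^{k-1},\pi_t^{k-1})=0$.

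For the vanishing gradient, the cleanest argument is that $(\Theta^{k-1},\pi_t^{k-1})$ is a global minimizer of the nonnegative function $r_t^k$, so once differentiability is established first-order optimality forces $\nabla_\Theta r_t^k(\Theta^{k-1},\pi_t^{k-1})=0$. Equivalently, expanding $r_t^k$ with Assumptions~\ref{assum:uniform_marginal} and~\ref{assum:logloss} (which turn $-\log p_m(s_t^{(i)}\mid\theta_m)$ into $l(\theta_m;s_t^{(i)})-c-\log p(\vec x_t^{(i)})$), the $m$-th block of $\nabla_\Theta r_t^k$ is $\frac{1}{n_t}\sum_i\bigl(q_t^k(z_t^{(i)}=m)-w_m^{(i)}\bigr)\nabla l(\theta_m;s_t^{(i)})$, where $w_m^{(i)}$ denotes the current posterior; at the base point $w_m^{(i)}=q_t^k(z_t^{(i)}=m)$ and the gradient vanishes.

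The delicate step, which I expect to be the main obstacle, is the $L$-smoothness of $r_t^k$ in $\Theta$ (condition~2 of Definition~\ref{def:partial_first_order_surrogate}). I would compute the $\Theta$-Hessian of $r_t^k=g_t^k-f_t$ blockwise: $g_t^k$ contributes the block-diagonal curvature $\frac{1}{n_t}\sum_i q_t^k(z_t^{(i)}=m)\nabla^2 l(\theta_m;s_t^{(i)})$ of Lemma~\ref{lem:g_is_smooth}, while differentiating the log-sum term of $f_t$ twice produces both curvature terms and gradient outer products weighted by $w_m^{(i)}(1-w_m^{(i)})$ and $-w_m^{(i)}w_{m'}^{(i)}$. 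After cancellation these combine, tested against a direction $v=(v_m)_m$, into
\[
v^\top \nabla^2_\Theta r_t^k\, v=\frac{1}{n_t}\sum_{i=1}^{n_t}\Bigl[\sum_{m}\bigl(q_t^k(z_t^{(i)}=m)-w_m^{(i)}\bigr)v_m^\top\nabla^2 l(\theta_m;s_t^{(i)})v_m+\mathrm{Var}_{m\sim w^{(i)}}\bigl(v_m^\top\nabla l(\theta_m;s_t^{(i)})\bigr)\Bigr].
\]
The first group is bounded by $L\|v\|^2$ using Assumption~\ref{assum:smoothness} together with $|q_t^k(z_t^{(i)}=m)-w_m^{(i)}|\le 1$; the genuine difficulty is the nonnegative variance term, which involves the loss gradients, and controlling it is where the bulk of the technical effort lies in order to obtain a uniform $L$-smoothness constant.

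Finally, I would transfer everything to $f=\sum_{t=1}^T\omega_t f_t$ by writing $r^k=g^k-f=\sum_t\omega_t r_t^k$ with $\omega_t=n_t/n\ge 0$ and $\sum_t\omega_t=1$. Each property is stable under this convex combination: a nonnegatively weighted sum of majorants is a majorant, a convex combination of $L$-smooth functions is $L$-smooth, and $r^k(\Theta^{k-1},\Pi^{k-1})$ and $\nabla_\Theta r^k(\Theta^{k-1},\Pi^{k-1})$ are the weighted sums of the per-client quantities already shown to vanish.
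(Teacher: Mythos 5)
Your treatment of four of the five claims is exactly the paper's: the majorant property from non-negativity of the $\mathcal{KL}$ divergence in Lemma~\ref{lem:em_gap_is_kl_divergence}, the identity $r_t^k(\Theta^{k-1},\pi_t^{k-1})=0$ from the E-step formula~\eqref{eq:app_e_step}, the vanishing gradient from first-order optimality at a global minimizer of the non-negative error $r_t^k$, and the transfer to $g^k$ and $f$ by averaging with weights $n_t/n$. The gap is the one you flag yourself: you never bound the variance term, so the $L$-smoothness of $r_t^k$ --- the only quantitative claim in the lemma, and the one the convergence analysis actually uses --- is left unproven. A proposal that ends with ``controlling it is where the bulk of the technical effort lies'' has not proven the statement.

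You should also know that this gap cannot be closed by following the paper, because your Hessian computation is correct and the paper's is not. Differentiating $r_t^k=g_t^k-f_t$ directly gives $\nabla_{\theta_m}r_t^k=\frac{1}{n_t}\sum_{i=1}^{n_t}\bigl(q_t^k(z_t^{(i)}{=}m)-w_m^{(i)}\bigr)\nabla l(\theta_m;s_t^{(i)})$ with $w^{(i)}$ the current posterior --- your sign --- whereas the paper's proof states the opposite sign (it drops the minus sign when differentiating $-\sum_m q_m\log\gamma_m$ inside the $\mathcal{KL}$). Propagating that slip, the Hessian blocks in Eq.~\eqref{eq:hessian_of_kl} are the negatives of the true ones, so the outer-product part matches the sign pattern of Lemma~\ref{lem:hessian_is_negative} and is declared negative semidefinite, after which $\nabla^2\varphi_i\preccurlyeq L\, I_{dM}$ is immediate. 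With the correct signs, the outer-product part is the Hessian of the convex log-sum-exp term in $\varphi_i$, hence \emph{positive} semidefinite --- precisely your variance term --- and it adds to the upper curvature bound rather than subtracting from it; in effect the paper only establishes the harmless lower bound $\nabla^2 r_t^k\succcurlyeq -L\,I_{dM}$ and then invokes a one-sided eigenvalue criterion for smoothness. The variance term is genuinely uncontrollable under Assumptions~\ref{assum:mixture}--\ref{assum:logloss} and~\ref{assum:smoothness} alone: take the squared loss, $M=2$, equal mixture weights, one sample, and $\theta_1=\theta_2=\theta$; then $w_1=w_2=\tfrac12$, and in the direction $v=(v_1,-v_1)$ the curvature parts cancel (since $\sum_m q_m=\sum_m w_m=1$) while the variance term equals $\bigl(v_1^{\intercal}\nabla l(\theta;s)\bigr)^2$, so choosing $v_1\propto\nabla l(\theta;s)$ makes the Rayleigh quotient $\tfrac12\|\nabla l(\theta;s)\|^2$, which is unbounded in $\theta$. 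The lemma's smoothness claim therefore holds only under an additional hypothesis such as a uniform gradient bound $\|\nabla_\theta l\|\leq B$ (true, e.g., for logistic loss with bounded features), in which case your decomposition immediately yields smoothness with constant $L+B^2$ rather than $L$. In short, your attempt is incomplete exactly at the step where the paper's own proof is unsound, and the honest fix is to add the bounded-gradient assumption and finish via your variance bound.
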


\begin{proof}
    For $t\in[T]$, consider $\Theta\in \mathbb{R}^{M\times d}$ and $\pi_{t} \in \Delta^{M}$, we have (Lemma~\ref{lem:em_gap_is_kl_divergence})
    \begin{equation}
        r^{k}_{t}\left( \Theta, \pi_{t}\right)  \triangleq g^{k}_{t}\left(\Theta, \pi_{t}\right) - f_{t}\left(\Theta, \pi_{t}\right) = \frac{1}{n_{t}}\sum_{i=1}^{n_{t}}\mathcal{KL}\left(q_{t}^{k}\left(z^{(t)}_{i}\right)\|p_{t}\left(z^{(i)}_{t}|s_{t}^{(i)}, \Theta, \pi_{t}\right)\right).
    \end{equation}
    Since $\mathcal{KL}$ divergence is non-negative, it follows that $g^{k}_{t}$ is a majorant of $f_{t}$, i.e.,
    \begin{equation}
        \forall~ \Theta \in \mathbb{R}^{M\times d},~\pi_{t} \in \Delta^{M}:~~g^{k}_{t}\left(\Theta, \pi\right) \geq f_{t}\left(\Theta, \pi_{t}\right).
    \end{equation}
    Moreover since, $q_{t}^{k}\left(z^{(i)}_t\right) = p_{t}\left(z^{(i)}_t|s^{(i)}_t, \Theta^{k-1}, \pi^{k-1}_{t}\right)$ for $k>0$, it follows that
    \begin{equation}
        r^{k}_{t}\left( \Theta^{k-1}, \pi^{k-1}_{t}\right) = 0.
    \end{equation}
    
    For $i\in[n_{t}]$ and $m\in[M]$, from Eq.~\ref{eq:app_e_step}, we have
    \begin{flalign}
        p_{t}\left(z_{t}^{(i)}=m|s_{t}^{(i)},  \Theta, \pi_{t}\right) &=  \frac{{p}_{m}\left(y_{t}^{(i)}|\vec{x}_{t}^{(i)}, \theta_{m}\right) \times \pi_{t m}}{\sum_{m'=1}^{M} {p}_{m'}\left(y_{t}^{(i)}|\vec{x}_{t}^{(i)}, \theta_{m'}\right) \times \pi_{t m'}} &
        \\
        &= \frac{\exp\left[-l\left(h_{\theta_{m}}(\vec{x}_{t}^{(i)}), y_{t}^{(i)}\right)\right] \times \pi_{t m}}{\sum_{m'=1}^{M} \exp\left[-l\left(h_{\theta_{m'}}(\vec{x}_{t}^{(i)}), y_{t}^{(i)}\right)\right] \times \pi_{t m'}}
        \\
        &= \frac{\exp\left[-l\left(h_{\theta_{m}}(\vec{x}_{t}^{(i)}), y_{t}^{(i)}\right) + \log\pi_{t m}\right]}{\sum_{m'=1}^{M} \exp\left[-l\left(h_{\theta_{m'}}(\vec{x}_{t}^{(i)}), y_{t}^{(i)}\right) + \log\pi_{t m'}\right]}.
    \end{flalign}
    For ease of notation, we introduce
    \begin{equation}
        l_{i}(\theta) \triangleq l\left(h_{\theta}(\vec{x}_{t}^{(i)}), y_{t}^{(i)}\right),\qquad \theta \in \mathbb{R}^{d},~m\in[M],~i \in [n_{t}],
    \end{equation}
    \begin{equation}
        \gamma_{m}\left(\Theta\right)\triangleq p_{t}\left(z_{t}^{(i)}=m|s_{t}^{(i)},  \Theta, \pi_{t}\right), \qquad m\in[M],
    \end{equation}
    and,
    \begin{equation}
        \varphi_{i}\left(\Theta\right) \triangleq   \mathcal{KL}\left(q_{t}^{k}\left(z^{(t)}_{i}\right)\|p_{t}\left(z^{(i)}_{t}|s_{t}^{(i)}, \Theta, \pi_{t}\right)\right).
    \end{equation}
    For $i\in[n_{t}]$, function $l_{i}$ is differentiable because smooth (Assum~\ref{assum:smoothness}), thus $\gamma_{m},~m\in[M]$ is differentiable as the composition of the softmax function and the function $\left\{\Theta\mapsto -l_{i}\left(\Theta\right) + \log\pi_{t m}\right\}$. Its gradient is given by
    \begin{equation}
        \begin{cases}
            \begin{aligned}
                \nabla_{\theta_{m}}\gamma_{m}\left(\Theta\right) &= - \gamma_{m}\left(\Theta\right) \cdot \left(1-\gamma_{m}\left(\Theta\right)\right) \cdot \nabla l_{i}\left(\theta_{m}\right),& 
                \\
                \nabla_{\theta_{m'}}\gamma_{m}\left(\Theta\right) &=  \gamma_{m}\left(\Theta\right)\cdot \gamma_{m'}\left(\Theta\right)  \cdot \nabla l_{i}\left(\theta_{m}\right), & m'\neq m.\\
            \end{aligned}
        \end{cases}
    \end{equation}
    Thus for $m\in[M]$, we have
    \begin{flalign}
        \nabla_{\theta_{m}} \varphi_{i}\left(\Theta\right) &= \sum_{m'=1}^{M} q_{t}^{k}\left(z^{(t)}_{i}=m'\right) \cdot 
        \frac{\nabla_{\theta_{m}}  \gamma_{m'}\left(\Theta\right)}{\gamma_{m'}\left(\Theta\right)} &
        \\
        &= \sum_{\substack{m'= 1\\ m'\neq m}} \left[q_{t}^{k}\left(z^{(t)}_{i}=m'\right) \cdot    \frac{\gamma_{m}\left(\Theta\right)\cdot \gamma_{m'}\left(\Theta\right)}{\gamma_{m'}\left(\Theta\right) \cdot } \cdot \nabla l_{i}\left(\theta_{m}\right)\right] \nonumber
        \\
        & \qquad - q_{t}^{k}\left(z^{(t)}_{i}=m\right) \cdot \frac{\gamma_{m}\left(\Theta\right)\cdot \left(1-\gamma_{m}\left(\Theta\right)\right)}{\gamma_{m}\left(\Theta\right)} \cdot \nabla l_{i}\left(\theta_{m}\right).
    \end{flalign}
     Using the fact that $\sum_{m'=1}^{M}q_{t}^{k}\left(z^{(t)}_{i}=m\right) = 1$, it follows that
    \begin{equation}
        \nabla_{\theta_{m}} \varphi_{i}\left(\Theta\right) = \left(\gamma_{m}\left(\Theta\right) - q_{t}^{k}\left(z^{(t)}_{i}=m\right)\right)\cdot \nabla l_{i}\left(\theta_{m}\right).
    \end{equation}
    Since $l_{i},~i\in[n_{t}]$ is twice continuously differentiable (Assumption~\ref{assum:smoothness}), and $\gamma_{m},~m\in[M]$ is differentiable, then $\phi_{i},~i\in[n_{t}]$ is twice continuously differentiable. We use $\mathbf{H}\left(\varphi_{i}\left(\Theta\right)\right) \in \mathbb{R}^{dM\times dM}$ (resp.~$\mathbf{H}\left(l_{i}\left(\theta\right)\right) \in \mathbb{R}^{d\times d}$) to denote the Hessian of $\varphi$ (resp.~$l_{i}$) at $\Theta$ (resp.~$\theta$). The Hessian of $\varphi_{i}$ is a block matrix given by
    \begin{equation}
        \label{eq:hessian_of_kl}
        \begin{cases}
            \begin{aligned}
                \Big ( \mathbf{H}\left(\varphi_{i}\left(\Theta\right)\right) \Big)_{m,m} &=  -\gamma_{m}\left(\Theta\right) \cdot \left(1 - \gamma_{m}\left(\Theta\right)\right) \cdot \Big(\nabla l_{i}(\theta_{m})\Big) \cdot \Big(\nabla l_{i}(\theta_{m})\Big)^{\intercal} 
                \\
                & \qquad  + \left(\gamma_{m}(\Theta) - q_{t}^{k}\left(z^{(t)}_{i}=m\right)\right) \cdot \mathbf{H}\left(l_{i}\left(\theta_{m}\right)\right)
                \\
               \Big ( \mathbf{H}\left(\varphi_{i}\left(\Theta\right)\right) \Big)_{m,m'} &=   \gamma_{m}\left(\Theta\right) \cdot \gamma_{m'}\left(\Theta\right) \cdot \Big(\nabla l_{i}(\theta_{m'})\Big) \cdot \Big(\nabla l_{i}(\theta_{m})\Big)^{\intercal},& m'\neq m.
                \\
            \end{aligned}
        \end{cases}
    \end{equation}
    We introduce the block matrix $\tilde{\mathbf{H}} \in \mathbb{R}^{dM \times dM}$, defined by 
    \begin{equation}
        \begin{cases}
            \begin{aligned}
                \tilde{\mathbf{H}}_{m, m} &= -\gamma_{m}\left(\Theta\right) \cdot \Big(1 - \gamma_{m}\left(\Theta\right)\Big) \cdot \Big(\nabla l_{i}(\theta_{m})\Big) \cdot \left(\nabla l_{i}(\theta_{m})\right)^{\intercal} &
                \\
                \tilde{\mathbf{H}}_{m, m'} &=  \gamma_{m}\left(\Theta\right) \cdot \gamma_{m'}\left(\Theta\right) \cdot \Big(\nabla l_{i}(\theta_{m})\Big) \cdot \Big(\nabla l_{i}(\theta_{m'})\Big)^{\intercal}, & m'\neq m,
            \end{aligned}
        \end{cases}
    \end{equation}
    Eq.~\eqref{eq:hessian_of_kl} can be written as
    \begin{equation}
        \label{eq:hessian_minus_h_tilde}
        \begin{cases}
            \begin{aligned}
                 \Big ( \mathbf{H}\left(\varphi_{i}\left(\Theta\right)\right) \Big)_{m,m} - \tilde{\mathbf{H}}_{m, m} &=    \left( \gamma_{m}(\Theta) - q_{t}^{k}\left(z^{(t)}_{i}=m\right)\right) \cdot \mathbf{H}\left(l_{i}\left(\theta_{m}\right)\right) &
                \\
                 \Big ( \mathbf{H}\left(\varphi_{i}\left(\Theta\right)\right) \Big)_{m,m'} - \tilde{\mathbf{H}}_{m, m'} &= 0, & m'\neq m.
                \\
            \end{aligned}
        \end{cases}
    \end{equation}
     We recall that a twice differentiable function is $L$ smooth if and only if the eigenvalues of its Hessian are smaller then $L$ in absolute value, see e.g., \cite[Lemma~1.2.2]{nestrov2003introduction} or \cite[Section 3.2]{bubeck2015convex}.  We have for $\theta \in \mathbb{R}^{d}$,
    \begin{equation}
            - L \cdot I_{d} \preccurlyeq \mathbf{H}\left(l_{i}\left(\theta\right)\right) \preccurlyeq L \cdot I_{d}.
    \end{equation}
    Using Lemma~\ref{lem:hessian_is_negative} and the fact that $\left\|\nabla l_{i}\left(\theta_{m}\right)\right\| \leq B$ (Assumption~\ref{assum:bounded_gradient}), we can conclude that matrix $\tilde{\mathbf{H}}$ is semi-definite negative and that $\tilde{\mathbf{H}} \succcurlyeq - B^{2}\cdot I_{d M}$. Since 
    \begin{equation}
        -1 \leq \gamma_{m}(\Theta) - q_{t}^{k}\left(z^{(t)}_{i}=m\right) \leq 1,
    \end{equation}
    it follows that
    \begin{equation}
        - \left(L + B^{2}\right)\cdot I_{dM} \preccurlyeq \mathbf{H}\left(\varphi_{i}\left(\Theta\right)\right)   \preccurlyeq L \cdot I_{dM} \preccurlyeq \underbrace{\left(L + B^{2}\right)}_{\triangleq \tilde{L}}\cdot I_{dM}.
   \end{equation}
   The last equation proves that $\varphi_{i}$ is $\tilde{L}$-smooth. Thus  $r_{t}^{k}$ is $\tilde{L}$-smooth with respect to $\Theta$ as the average of $\tilde{L}$-smooth function.
   
    Moreover, since $r_{t}^{k}(\Theta^{k-1}, \pi_{t}^{k-1}) = 0$ and $\forall \Theta, \Pi;~~r_{t}^{k}( \Theta, \pi_{t})\geq 0$, it follows that  $\Theta^{k-1}$ is a minimizer of $\left\{\Theta \mapsto r_{t}^{k}\left(\Theta, \pi_{t}^{k-1}\right)\right\}$. Thus, $\nabla_{\Theta} r_{t}^{k}(\Theta^{k-1}, \pi_{t}^{k-1}) = 0$.

    For $\Theta\in\mathbb{R}^{M\times d}$ and $\Pi \in \Delta^{T \times M}$, we have
    \begin{flalign}
        r^{k}\left( \Theta, \Pi\right) &\triangleq  g^{k}\left( \Theta, \Pi\right) - f\left( \Theta, \Pi\right) &
        \\
        &\triangleq \sum_{t=1}^{T}\frac{n_{t}}{n}\cdot \left[g_{t}^{k}\left( \Theta, \pi_{t}\right) - f_{t}\left( \Theta, \pi_{t}\right)\right]
        \\
        &= \sum_{t=1}^{T}\frac{n_{t}}{n} r^{k}_{t}\left( \Theta, \pi_{t}\right).
    \end{flalign}
    We see that $r^{k}$ is a weighted average of $\left(r_{t}^{k}\right)_{1 \leq t \leq T}$. Thus, $r^{k}$ is $\tilde{L}$-smooth in $\Theta$, $ r^{k}\left( \Theta, \Pi\right) \geq 0$, moreover  $r_{t}^{k}\left(\Theta^{k-1}, \Pi^{k-1}\right) = 0$ and $\nabla_{\Theta}r_{t}^{k}\left(\Theta^{k-1}, \Pi^{k-1}\right) = 0$. 
\end{proof}

The following lemma shows that $g^k_t$ and $g^k$ satisfy the third property in Definition~\ref{def:partial_first_order_surrogate}. 
\begin{lem}
    \label{lem:compute_kl_pi_decentralized}
    Suppose that Assumption~\ref{assum:mixture} holds and consider $\Theta \in \mathbb{R}^{M\times d}$ and $\Pi \in \Delta^{T\times M}$, for $k>0$, the iterates of Alg.~\ref{alg:fed_surrogate_opt} verify
    \[g^{k}\left(\Theta, \Pi\right) = g^{k}\left(\Theta, \Pi^{k}\right) + \sum_{t=1}^{T} \frac{n_{t}}{n}\mathcal{KL}\left(\pi^{k}_{t}, \pi_{t}\right).\]
\end{lem}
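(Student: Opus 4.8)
The plan is to isolate the unique term of $g_t^k(\Theta,\pi_t)$ that depends on the mixture weights $\pi_t$ and to recognize the resulting quantity as a cross-entropy, whose gap to its minimum value is exactly a Kullback–Leibler divergence. First I would inspect the definition of $g_t^k$ and note that, within the double sum, the only factor involving $\pi_t$ is $-\log \pi_{tm}$; every other term ($l(h_{\theta_m}(\vec x_t^{(i)}),y_t^{(i)})$, $-\log p_m(\vec x_t^{(i)})$, $\log q_t^k(z_t^{(i)}=m)$, and the constant $-c$) is independent of $\pi_t$. Collecting the $\pi_t$-independent part into a term $C_t(\Theta)$, this gives
\[
g_t^k(\Theta, \pi_t) = C_t(\Theta) - \frac{1}{n_t}\sum_{i=1}^{n_t}\sum_{m=1}^M q_t^k\!\left(z_t^{(i)}=m\right)\log\pi_{tm}.
\]

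Next I would swap the order of the two finite sums and invoke the M-step update $\pi_{tm}^k = \frac{1}{n_t}\sum_{i=1}^{n_t} q_t^k(z_t^{(i)}=m)$, so that the $\pi_t$-dependent part reduces to $-\sum_{m=1}^M \pi_{tm}^k \log \pi_{tm}$, the cross-entropy between $\pi_t^k$ and $\pi_t$. Subtracting the same expression evaluated at $\pi_t = \pi_t^k$ cancels $C_t(\Theta)$ and yields, for every $\Theta$ and every $\pi_t \in \Delta^M$,
\[
g_t^k(\Theta, \pi_t) - g_t^k(\Theta, \pi_t^k) = \sum_{m=1}^M \pi_{tm}^k \log \frac{\pi_{tm}^k}{\pi_{tm}} = \mathcal{KL}\!\left(\pi_t^k \,\|\, \pi_t\right).
\]
As a by-product, Gibbs' inequality applied to this cross-entropy confirms that $\pi_t^k = \argmin_{\pi_t \in \Delta^M} g_t^k(\Theta, \pi_t)$, which is precisely what is needed to match the third condition of Definition~\ref{def:partial_first_order_surrogate} with $d_{\mathcal V}(\cdot,\cdot) = \mathcal{KL}(\cdot\|\cdot)$.

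Finally, since $g^k(\Theta, \Pi) = \sum_{t=1}^T \frac{n_t}{n}\, g_t^k(\Theta, \pi_t)$ by~\eqref{eq:app_gk_def}, I would multiply the per-client identity by $n_t/n$ and sum over $t \in [T]$ to obtain the stated equality. This argument is essentially elementary linear algebra together with the definition of the KL divergence, so I do not anticipate a genuine obstacle; the only point requiring care is correct bookkeeping of which terms depend on $\pi_t$ and the faithful use of the M-step update to convert the empirical average of the posteriors $q_t^k$ into $\pi_t^k$, after which the cross-entropy—and hence the Kullback–Leibler—structure appears immediately.
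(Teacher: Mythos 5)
Your proposal is correct and follows essentially the same route as the paper's proof: both isolate the $-\log\pi_{tm}$ term as the only $\pi_t$-dependent part of $g_t^k$, use the M-step identity $\pi_{tm}^k = \frac{1}{n_t}\sum_{i=1}^{n_t} q_t^k(z_t^{(i)}=m)$ to turn the difference $g_t^k(\Theta,\pi_t)-g_t^k(\Theta,\pi_t^k)$ into $\mathcal{KL}(\pi_t^k\|\pi_t)$, and then average with weights $n_t/n$. Your additional Gibbs-inequality remark (identifying $\pi_t^k$ as the minimizer, matching condition 3 of Definition~\ref{def:partial_first_order_surrogate}) is a harmless bonus; the only point of care, implicit in the paper as well, is restricting to $\pi_{tm}\neq 0$ so that $\log\pi_{tm}$ is finite.
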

\begin{proof}
    For $t\in[T]$ and $k>0$, consider $\Theta\in\mathbb{R}^{M\times d}$ and $\pi_{t} \in \Delta^{M}$ such that $\forall m \in[M]; \pi_{tm} \neq 0$, we have
    \begin{align}
        g_{t}^{k}\left(\Theta, \pi_{t}\right) - g_{t}^{k}\left(\Theta,\pi_{t}^{k}\right) &= \sum_{m=1}^{M} \underbrace{\left\{\frac{1}{n_{t}}\sum_{i=1}^{n_{t}} q_{t}^{k}\left(z_{t}^{(i)}=m\right)\right\}}_{=\pi_{tm}^{k}~(\text{Proposition~\ref{prop:em})} }\times\left( \log \pi^{k}_{tm} -\log\pi_{tm} \right)\\
        &= \sum_{m=1}^{M} \pi_{tm}^{k}\log \frac{\pi^{k}_{tm}}{\pi_{tm}}\\
        &= \mathcal{KL}\left(\pi^{k}_{t}, \pi_{t}\right).
    \end{align}
    We multiply by $\frac{n_{t}}{n}$ and some for $t\in[T]$. It follows that
    \begin{equation}
        g^{k}\left(\Theta, \Pi^{k}\right) + \sum_{t=1}^{T}\frac{n_{t}}{n}\mathcal{KL}\left(\pi^{k}_{t}, \pi_{t}\right) = g^{k}\left(\Theta, \Pi\right).
    \end{equation}
\end{proof}

\subsection{Fully Decentralized Setting}
\label{proof:decentralized}
\subsubsection{Additional Notations}
\begin{remark}
    For convenience and without loss of generality, we suppose in this section that $\omega_{t} = 1,~t\in[T]$.  
\end{remark}

We introduce the following matrix notation:
\begin{align}
    \vec{U}^{k} &\triangleq \left[\vec{u}_{1}^{k}, \dots, \vec{u}_{T}^{k}\right] \in\mathbb{R}^{d_{u} \times T}
    \\
    \bar{\vec{U}}^{k} &\triangleq \left[\bar{\vec{u}}^{k}, \dots, \bar{\vec{u}}^{k}\right] \in \mathbb{R}^{d_{u} \times T}
    \\
    \partial g^{k}\left(\vec{U}^{k}, \vec{v}^{k}_{1:T}; \xi^{k}\right) &\triangleq\left[\nabla_{\vec{u}} g^{k}_{1}\left(\vec{u}^{k}_{1}, \vec{v}_{1}^{k}; \xi^{k}_{1}\right), \dots, \nabla_{\vec{u}} g^{k}_{T}\left(\vec{u}^{k}_{T}, \vec{v}_{T}^{k}; \xi^{k}_{T}\right)\right] \in \mathbb{R}^{d_{u}\times T}
\end{align}
where $\bar{\vec{u}}^{k} = \frac{1}{T}\sum_{t=1}^{T}\vec{u}_{t}^{k}$ and $\vec{v}_{1:T}^{k} = \left(\vec{v}^{k}_{t}\right)_{1\leq t \leq T}\in\mathcal{V}^{T}$.

We denote by $\vec{u}_t^{k-1,j}$ the $j$-th iterate of the local solver at global iteration $k$ at client $t\in[T]$, and by $\vec{U}^{k-1,j}$ the matrix whose column $t$ is $\vec{u}_t^{k-1,j}$,
thus,
\begin{equation}
    \vec{u}_{t}^{k-1, 0} = \vec{u}^{k-1}_{t};\qquad  \vec{U}^{k-1, 0} = \vec{U}^{k-1},
\end{equation}
and,
\begin{equation}
    \vec{u}_{t}^{k} = \sum_{s=1}^{T}w^{k-1}_{st}\vec{u}^{k-1, J}_{s}; \qquad  \vec{U}^{k} = \vec{U}^{k-1,J} W^{k-1}.
\end{equation}
Using this notation, the updates of Alg.~\ref{alg:decentralized_surrogate_opt} can be summarized as
\begin{equation}
    \label{eq:decentralized_surrogate_sgd_matricial}
    \vec{U}^{k} = \left[\vec{U}^{k-1} - \sum_{j=0}^{J-1}\eta_{k-1,j}\partial g^{k}\left(\vec{U}^{k-1,j}, \vec{v}_{1:T}; \xi^{k-1, j}\right)\right]W^{k-1}.
\end{equation}
Similarly to the client-server setting, we define the normalized update of local solver at client $t\in[T]$:
\begin{equation}
    \hat{\delta}^{k-1}_{t} \triangleq -\frac{\vec{u}^{k-1,J}_{t} - \vec{u}^{k-1,0}_{t}}{\eta_{k-1}} =  \frac{\sum_{j=0}^{J-1}\eta_{k-1,j} \nabla_{\vec{u}} g^{k}_{t}\left(\vec{u}_{t}^{k-1,j}, \vec{v}_{t}^{k}; \xi_{t}^{k-1,j}\right)}{\sum_{j=0}^{J-1}\eta_{k-1,j}},
\end{equation}
and
\begin{equation}
    \delta^{k-1}_{t} \triangleq    \frac{\sum_{j=0}^{J-1}\eta_{k-1,j} \nabla_{\vec{u}} g^{k}_{t}\left(\vec{u}_{t}^{k-1,j}, \vec{v}_{t}^{k}\right)}{\eta_{k-1}}.
\end{equation}
Because clients updates are independent, and stochastic gradient are unbiased, it is clear that 
\begin{equation}
    \label{eq:unbiased_gradients}
     \mathbb{E}\left[\delta^{k-1}_{t} - \hat{\delta}^{k-1}_{t}\right] = 0,
\end{equation}
and that 
\begin{equation}
    \label{eq:covarianceof_local_updates_is_zero_decentralized}
     \forall~t,s \in[T]~\text{s.t.}~s\neq t,~~\mathbb{E}\langle \delta^{k-1}_{t} - \hat{\delta}^{k-1}_{t}, \delta^{k-1}_{s} - \hat{\delta}^{k-1}_{s}\rangle = 0.
\end{equation}
We introduce the matrix notation, 
\begin{align}
    \hat{\Upsilon}^{k-1} \triangleq \left[\hat{\delta}^{k-1}_{1}, \dots, \hat{\delta}^{k-1}_{T}\right] \in \mathbb{R}^{d_{u} \times T};\qquad \Upsilon^{k-1} \triangleq \left[\delta^{k-1}_{1}, \dots, \delta^{k-1}_{T}\right] \in \mathbb{R}^{d_{u} \times T}.
\end{align}
Using this notation, Eq.~\eqref{eq:decentralized_surrogate_sgd_matricial} becomes
\begin{equation}
    \label{eq:decentralized_surrogate_sgd_full_matricial}
    \vec{U}^{k} = \left[\vec{U}^{k-1} -\eta_{k-1} \hat{\Upsilon}^{k-1}\right]W^{k-1}.
\end{equation}

\subsubsection{Proof of Theorem~\ref{thm:decentralized_convergence_bis}}

In fully decentralized optimization, proving the convergence usually consists in deriving a recurrence on a term measuring the optimality of the average iterate (in our case this term is $\mathbb{E}\left\|\nabla_{\vec{u}} f\left(\bar{\vec{u}}^{k}, \vec{v}_{1:T}^{k}\right)\right\|^{2}$) and a term measuring the distance to consensus, i.e., $\mathbb{E}\sum_{t=1}^{T}\left\|\vec{u}_{t}^{k} - \bar{\vec{u}}^{k}\right\|^{2}$. In what follows we obtain those two recurrences, and then prove the convergence.

\begin{lem}[Average iterate term recursion]
    \label{lem:descent_lem_decentralized}
     Suppose that Assumptions~\ref{assum:smoothness_bis}--\ref{assum:bounded_dissimilarity_bis} and Assumption~\ref{assum:expected_consensus_rate} hold. Then, for $k>0$, and $\left(\eta_{k, j}\right)_{1\leq j \leq J-1}$ 
    such that $\eta_k\triangleq \sum_{j=0}^{J-1}\eta_{k, j} \leq\min \left\{ \frac{1}{2\sqrt{2}L}, \frac{1}{8L\beta}\right\}$, the updates of fully decentralized federated surrogate optimization (Alg.~\ref{alg:decentralized_surrogate_opt}) verify
    \begin{flalign}
         \mathbb{E}\Bigg[f & (\bar{\vec{u}}^{k}  , \vec{v}_{1:T}^{k}) - f(\bar{\vec{u}}^{k-1}, \vec{v}_{1:T}^{k-1})\Bigg] \leq  -\frac{1}{T}\sum_{t=1}^{T}\E d_{\mathcal{V}}\left(\vec{v}_{t}^{k}, \vec{v}_{t}^{k-1}\right) & \nonumber
        \\
        &  -\frac{\eta_{k-1}}{8}\E\left\|\nabla_{\vec{u}} f\left(\bar{\vec{u}}^{k-1}, \vec{v}_{1:T}^{k-1}\right)\right\|^{2} +  \frac{\left(12 + T\right)\eta_{k-1}L^{2}}{4T}  \cdot \sum_{t=1}^{T} \E\left\|\vec{u}_{t}^{k-1} - \bar{\vec{u}}^{k-1}\right\|^{2}  \nonumber
        \\
        &  + \frac{\eta_{k-1}^{2}L}{T} \left(4 \sum_{j=0}^{J-1}\frac{L \cdot \eta_{k-1,j}^{2}}{\eta_{k-1}} + 1\right) \sigma^{2} + \frac{16\eta_{k-1}^{3}L^{2}}{T} G^{2}.
    \end{flalign}
\end{lem}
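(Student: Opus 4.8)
The plan is to reproduce the centralized descent estimate of Lemma~\ref{lem:centralized_case_bound_gradient_g_theta}, but to carry along, at every step, an extra consensus-error term $\sum_{t}\|\vec u_t^{k-1}-\bar{\vec u}^{k-1}\|^2$ that is absent in the client--server case. The starting point is that consensus averaging is ``free'': since each $W^{k-1}$ is doubly stochastic (Assumption~\ref{assum:expected_consensus_rate}), right-multiplying \eqref{eq:decentralized_surrogate_sgd_full_matricial} by $\mathbf 1/T$ and using $W^{k-1}\mathbf 1=\mathbf 1$ gives the clean recursion
\[
\bar{\vec u}^{k}=\bar{\vec u}^{k-1}-\frac{\eta_{k-1}}{T}\sum_{t=1}^{T}\hat\delta_t^{k-1},
\]
so the average iterate evolves like an inexact SGD step driven by the mean of the normalized local updates. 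I would then apply the $L$-smoothness of the aggregate surrogate $g^{k}$ (an average of the $L$-smooth $g_t^k$, Assumption~\ref{assum:smoothness_bis}) at $\bar{\vec u}^{k-1}$, substitute this recursion, and take expectation over the batches.

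Next I would expand the resulting inner-product term and squared-norm term exactly as in the centralized proof, using unbiasedness and independence of the local stochastic gradients (\eqref{eq:unbiased_gradients}--\eqref{eq:covarianceof_local_updates_is_zero_decentralized}) to replace $\hat\delta_t^{k-1}$ by its mean $\delta_t^{k-1}$ in the cross term and to kill the off-diagonal covariances in the second moment. The key divergence from the centralized argument appears here: the local gradients composing $\delta_t^{k-1}$ are evaluated along the local trajectory $\vec u_t^{k-1,j}$ started at $\vec u_t^{k-1}\neq\bar{\vec u}^{k-1}$, whereas $\nabla_{\vec u}g^{k}$ is taken at $\bar{\vec u}^{k-1}$. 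Each time I want to compare these, I would insert $\vec u_t^{k-1}$ and split the discrepancy via $L$-smoothness into a within-round drift $\|\vec u_t^{k-1,j}-\vec u_t^{k-1}\|^2$, bounded verbatim by the centralized estimate \eqref{eq:bound_theta_j_minus_theta_0_part3}, and a consensus error $\|\vec u_t^{k-1}-\bar{\vec u}^{k-1}\|^2$. The dissimilarity Assumption~\ref{assum:bounded_dissimilarity_bis} is then applied to $\sum_t\|\nabla_{\vec u}g_t^k(\bar{\vec u}^{k-1},\vec v_t^{k-1})\|^2$ to trade the sum of local gradient norms for $\|\nabla_{\vec u}g^{k}(\bar{\vec u}^{k-1},\vec v^{k-1})\|^2$ plus $G^2$.

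The final step converts the $g^{k}$-descent into the claimed $f$-descent via Definition~\ref{def:partial_first_order_surrogate}. Because each $g_t^k$ is anchored at the \emph{local} point $(\vec u_t^{k-1},\vec v_t^{k-1})$, the identities $g_t^k=f_t$ and $\nabla_{\vec u}g_t^k=\nabla_{\vec u}f_t$ hold only there; transporting the surrogate value and gradient from $\vec u_t^{k-1}$ to $\bar{\vec u}^{k-1}$ uses $L$-smoothness of the gap $r_t^k=g_t^k-f_t$ (with $r_t^k$ and $\nabla_{\vec u}r_t^k$ vanishing at the anchor) and again costs $\|\vec u_t^{k-1}-\bar{\vec u}^{k-1}\|^2$ terms. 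The contribution $-\tfrac1T\sum_t d_{\mathcal V}(\vec v_t^k,\vec v_t^{k-1})$ is extracted from the third surrogate property (valid at any $\vec u$ since $\vec v_t^k$ minimizes $g_t^k(\cdot,\vec v)$ for all $\vec u$) together with majorization $g^{k}(\bar{\vec u}^{k},\vec v^{k})\ge f(\bar{\vec u}^{k},\vec v^{k})$, exactly as in Lemma~\ref{lem:improvement_f_pi}.

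I expect the main obstacle to be purely the constant bookkeeping: all consensus-error contributions, coming from the smoothness descent, from comparing $\delta_t^{k-1}$ to the averaged gradient, and from the surrogate value/gradient transport, must be collected into the single coefficient $\tfrac{(12+T)\eta_{k-1}L^2}{4T}$, while the several Young's-inequality splits needed to isolate them must leave the negative term $-\tfrac{\eta_{k-1}}{8}\E\|\nabla_{\vec u}f(\bar{\vec u}^{k-1},\vec v^{k-1})\|^2$ intact. This is precisely why the admissible step size tightens to $\eta_{k-1}\le\frac{1}{8L\beta}$ (versus $\frac{1}{4L\beta}$ centrally) and the descent coefficient halves from $\tfrac14$ to $\tfrac18$: the extra factor-of-two slack is consumed in absorbing the consensus cross terms.
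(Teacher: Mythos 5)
Your proposal is correct and follows essentially the same route as the paper's proof: the doubly-stochastic averaging step, the smoothness descent on $g^{k}$ at $\bar{\vec u}^{k-1}$, the unbiasedness/independence arguments for the cross and variance terms, the split of the gradient mismatch into within-round drift plus consensus error, the bounded-dissimilarity absorption, and the final transport from $g^{k}$ to $f$ via the $L$-smoothness of $r_t^k$ anchored at the local iterates (the paper's Lemma~\ref{lem:bound_g_theta_bar}) together with the third surrogate property for the $d_{\mathcal V}$ term. Your accounting of why the negative coefficient drops to $-\eta_{k-1}/8$ and the step size tightens to $1/(8L\beta)$ also matches the paper's bookkeeping.
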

\begin{proof}
    We multiply both sides of Eq.~\eqref{eq:decentralized_surrogate_sgd_full_matricial} by $\frac{\mathbf{1}\mathbf{1}^{\intercal}}{T}$, thus for $k>0$ we have,
    \begin{equation}
        \vec{U}^{k} \cdot \frac{\mathbf{1}\mathbf{1}^{\intercal}}{T} = \left[\vec{U}^{k-1} -\eta_{k-1} \hat{\Upsilon}^{k-1}\right]W^{k-1} \frac{\mathbf{1}\mathbf{1}^{\intercal}}{T},
    \end{equation}
    since $W^{k-1}$ is doubly stochastic (Assumption~\ref{assum:expected_consensus_rate}), i.e., $W^{k-1} \frac{\mathbf{1}\mathbf{1}^{\intercal}}{T} = \frac{\mathbf{1}\mathbf{1}^{\intercal}}{T}$, is follows that,
    \begin{equation}
        \bar{\vec{U}}^{k} = \bar{\vec{U}}^{k-1} -\eta_{k-1} \hat{\Upsilon}^{k-1}\cdot \frac{\mathbf{1}\mathbf{1}^{\intercal}}{T},
    \end{equation}
    thus,
    \begin{equation}
        \bar{\vec{u}}^{k} = \bar{\vec{u}}^{k-1} -\frac{\eta_{k-1}}{T} \cdot \sum_{t=1}^{T} \hat{\delta}_{t}^{k-1}.
    \end{equation}
    Using the fact that $g^{k}$ is $L$-smooth with respect to $\vec{u}$ (Assumption~\ref{assum:smoothness_bis}), we write
    \begin{flalign}
        \mathbb{E} \Bigg[g^{k}\left(\bar{\vec{u}}^{k}, \vec{v}^{k-1}_{1:T}\right) & \Bigg] = \mathbb{E}\Bigg[g^{k}\left( \bar{\vec{u}}^{k-1}  - \frac{\eta_{k-1}}{T}\sum_{t=1}^{T}\hat{\delta}_{t}^{k-1}, \vec{v}^{k-1}_{1:T}\right)\Bigg] &
        \\
        &\leq g^{k}(\bar{\vec{u}}^{k-1}, \vec{v}^{k-1}_{1:T}) - \mathbb{E}\biggl< \nabla_{\vec{u}} g^{k}(\bar{\vec{u}}^{k-1}, \vec{v}^{k-1}_{1:T}), \frac{\eta_{k-1}}{T}\sum_{t=1}^{T}\hat{\delta}^{k-1}_{t}\biggr> \nonumber \\
        & \qquad + \frac{L}{2}\mathbb{E}\left\| \frac{\eta_{k-1}}{T}\sum_{t=1}^{T}\hat{\delta}_{t}^{k-1}\right\|^{2}
        \\
        &=  g^{k}(\bar{\vec{u}}^{k-1}, \vec{v}^{k-1}_{1:T}) - \eta_{k-1} \underbrace{\mathbb{E}\biggl< \nabla_{\vec{u}} g^{k}(\bar{\vec{u}}^{k-1}, \vec{v}^{k-1}_{1:T}), \frac{1}{T}\sum_{t=1}^{T}\hat{\delta}^{k-1}_{t}\biggr>}_{\triangleq T_{1}} \nonumber
        \\
        & \qquad + \frac{\eta_{k-1}^{2} \cdot L}{2T^{2}}\underbrace{\mathbb{E}\left\| \sum_{t=1}^{T}\hat{\delta}_{t}^{k-1}\right\|^{2}}_{\triangleq T_{2}}, \label{eq:descent_reccursion_step_1}
    \end{flalign}
    where the expectation is taken over local random batches. As in the client-server case, we bound the terms $T_{1}$ and $T_{2}$. First, we bound  $T_{1}$, for $k>0$, we have
    \begin{flalign}
        T_{1} &= \mathbb{E}\biggl< \nabla_{\vec{u}} g^{k}(\bar{\vec{u}}^{k-1}, \vec{v}_{1:T}^{k-1}), \frac{1}{T}\sum_{t=1}^{T}\hat{\delta}_{t}^{k-1}\biggr>
        & \\
        &= \underbrace{\mathbb{E}\biggl<\nabla_{\vec{u}} g^{k}\left(\bar{\vec{u}}^{k-1}, \vec{v}_{1:T}^{k-1}\right), \frac{1}{T}\sum_{t=1}^{T}\left(\hat{\delta}_{t}^{k-1} -\delta_{t}^{k-1} \right)\biggr>}_{=0,~\text{because}~ \mathbb{E}\left[\delta^{k-1}_{t} - \hat{\delta}^{k-1}_{t}\right] = 0} \nonumber \\
        & \qquad \qquad \qquad + \mathbb{E}\biggl<\nabla_{\vec{u}} g^{k}\left(\bar{\vec{u}}^{k-1}, \vec{v}_{1:T}^{k-1}\right), \frac{1}{T}\sum_{t=1}^{T}\delta_{t}^{k-1}\biggr>
        \\
        &= \mathbb{E}\biggl<\nabla_{\vec{u}} g^{k}\left(\bar{\vec{u}}^{k-1}, \vec{v}_{1:T}^{k-1}\right), \frac{1}{T} \sum_{t=1}^{T}\delta_{t}^{k-1}\biggr>
        \\
        &= \frac{1}{2}\E\left\|\nabla_{\vec{u}} g^{k}\left(\bar{\vec{u}}^{k-1}, \vec{v}_{1:T}^{k-1}\right)\right\|^{2} + \frac{1}{2} \mathbb{E}\left\|\frac{1}{T}\sum_{t=1}^{T}\delta_{t}^{k-1}\right\|^{2} \nonumber
        \\
        & \qquad \qquad \qquad -\frac{1}{2}\mathbb{E}\left\|\nabla_{\vec{u}} g^{k}\left(\bar{\vec{u}}^{k-1}, \vec{v}_{1:T}^{k-1}\right)  - \frac{1}{T}\sum_{t=1}^{T}\delta_{t}^{k-1}\right\|^{2}. \label{eq:bound_t1_decentralized}
    \end{flalign}
    We bound now $T_{2}$. For $k >0$, we have,
    \begin{flalign}
        T_{2} &= \mathbb{E}\left\|\sum_{t=1}^{T}\hat{\delta}_{t}^{k-1}\right\|^{2} & 
        \\
        &= \mathbb{E}\left\|\sum_{t=1}^{T}\left(\hat{\delta}_{t}^{k-1} - \delta_{t}^{k-1}\right) + \sum_{t=1}^{T}\delta_{t}^{k-1}\right\|^{2}
        \\
        &\leq  2\mathbb{E}\left\|\sum_{t=1}^{T}\left(\hat{\delta}_{t}^{k-1} - \delta_{t}^{k-1}\right)\right\|^{2} + 2\cdot\mathbb{E}\left\| \sum_{t=1}^{T}\delta_{t}^{k-1}\right\|^{2}
        \\
        &= 2\cdot \sum_{t=1}^{T}\E\left\|\hat{\delta}_{t}^{k-1} - \delta_{t}^{k-1}\right\|^{2} +2\sum_{1\leq t\neq s\leq T}\mathbb{E} \underbrace{\biggl< \hat{\delta}_{t}^{k-1} - \delta_{t}^{k-1}, \hat{\delta}_{s}^{k-1} - \delta_{s}^{k-1}\biggr>}_{=0;~\text{because of Eq.~\eqref{eq:covarianceof_local_updates_is_zero_decentralized}}}   \nonumber
        \\
        & \qquad \qquad \qquad +2\mathbb{E}\left\| \sum_{t=1}^{T}\delta_{t}^{k-1}\right\|^{2}
        \\
        &=  2\cdot \sum_{t=1}^{T}\E\left\|\hat{\delta}_{t}^{k-1} - \delta_{t}^{k-1}\right\|^{2} +2\cdot \mathbb{E}\left\| \sum_{t=1}^{T}\delta_{t}^{k-1}\right\|^{2}
        \\
        &= 2\cdot \mathbb{E}\left\| \sum_{t=1}^{T}\delta_{t}^{k-1}\right\|^{2} + 2 \cdot \sum_{t=1}^{T}\Bigg(\frac{1}{ \eta_{k-1}^{2}}\E\Bigg\|\sum_{j=0}^{J-1}\eta_{k-1,j} \cdot \Big[\nabla_{\vec{u}} g^{k}_{t}\left(\vec{u}^{k-1,j}_{t}, \vec{v}_{t}^{k-1}\right) \nonumber
        \\
        & \qquad \qquad \qquad \qquad \qquad \qquad \qquad \qquad \qquad  - \nabla_{\vec{u}} g^{k}_{t}\left(\vec{u}^{k-1,j}_{t}, \vec{v}_{t}^{k-1}; \xi^{k-1,j}_{t}\right)\Big]\Bigg\|^{2} \Bigg).
    \end{flalign}
    Since batches are sampled independently, and stochastic gradients are unbiased with finite variance (Assumption~\ref{assum:finite_variance_bis}), the last term in the RHS of the previous equation can be bounded using $\sigma^{2}$, leading to 
    \begin{flalign}
        T_{2 }& \leq  2\cdot \sum_{t=1}^{T}\left[ \frac{\sum_{j=0}^{J-1}\eta_{k-1,j}^{2}}{\eta_{k-1}^{2}}\sigma^{2}\right] +2\cdot \mathbb{E}\left\| \sum_{t=1}^{T}\delta_{t}^{k-1}\right\|^{2} &
        \\
        &= 2T\cdot  \sigma^{2} \cdot \left(\sum_{t=1}^{T}\cdot \frac{\sum_{j=0}^{J-1}\eta_{k-1,j}^{2}}{ \eta_{k-1}^{2}}\right) +2\mathbb{E}\left\| \sum_{t=1}^{T}\delta_{t}^{k-1}\right\|^{2}
        \\
        &\leq 2T \cdot \sigma^{2}  +2 \cdot \mathbb{E}\left\| \sum_{t=1}^{T}\delta_{t}^{k-1}\right\|^{2}.\label{eq:bound_t2_decentralized}
    \end{flalign}
    Replacing Eq.~\eqref{eq:bound_t1_decentralized} and Eq.~\eqref{eq:bound_t2_decentralized} in Eq.~\eqref{eq:descent_reccursion_step_1}, we have
    \begin{flalign}
        \mathbb{E}\Bigg[g^{k}&(\bar{\vec{u}}^{k}, \vec{v}^{k-1}_{1:T}) - g^{k}(\bar{\vec{u}}^{k-1}, \vec{v}^{k-1}_{1:T})\bigg] \leq & \nonumber 
        \\
        & -\frac{\eta_{k-1}}{2}\E\left\|\nabla_{\vec{u}} g^{k}\left(\bar{\vec{u}}^{k-1}, \vec{v}^{k-1}_{1:T}\right)\right\|^{2} -\frac{\eta_{k-1}}{2}\left(1-2L\eta_{k-1}\right)\mathbb{E}\left\|\frac{1}{T}\sum_{t=1}^{T}\delta_{t}^{k-1}\right\|^{2} \nonumber
        \\
        & + \frac{L}{T} \eta_{k-1}^{2}  \sigma^{2} + \frac{\eta_{k-1}}{2}\mathbb{E}\left\|\nabla_{\vec{u}} g^{k}\left(\bar{\vec{u}}^{k-1}, \vec{v}^{k-1}_{1:T}\right)  - \frac{1}{T}\sum_{t=1}^{T}\delta_{t}^{k-1}\right\|^{2}.
    \end{flalign}
    For $\eta_{k-1}$  small enough, in particular for $\eta_{k-1} \leq \frac{1}{2L}$, we have
    \begin{flalign}
        \mathbb{E}\Bigg[g^{k}(\bar{\vec{u}}^{k}, \vec{v}_{1:T}^{k-1}) &- g^{k}(\bar{\vec{u}}^{k-1}, \vec{v}_{1:T}^{k-1})\Bigg] \leq & \nonumber
        \\
        &  \qquad -\frac{\eta_{k-1}}{2}\E\left\|\nabla_{\vec{u}} g^{k}\left(\bar{\vec{u}}^{k-1}, \vec{v}_{1:T}^{k-1}\right)\right\|^{2} + \frac{L}{T} \eta_{k-1}^{2}  \sigma^{2} \nonumber
        \\
        & \qquad + \frac{\eta_{k-1}}{2}\mathbb{E}\left\|\frac{1}{T}\sum_{t=1}^{T}\left(\nabla_{\vec{u}} g_{t}^{k}\left(\bar{\vec{u}}^{k-1}, \vec{v}_{t}^{k-1}\right)  - \delta_{t}^{k-1}\right)\right\|^{2}.
    \end{flalign}
    We use Jensen inequality to bound the last term in the RHS of the previous equation, leading to
    \begin{flalign}
        \mathbb{E}\Bigg[g^{k}(\bar{\vec{u}}^{k}, \vec{v}_{1:T}^{k-1}) &- g^{k}(\bar{\vec{u}}^{k-1}, \vec{v}_{1:T}^{k-1})\Bigg] \leq & \nonumber
        \\
        &  \qquad -\frac{\eta_{k-1}}{2}\E\left\|\nabla_{\vec{u}} g^{k}\left(\bar{\vec{u}}^{k-1}, \vec{v}_{1:T}^{k-1}\right)\right\|^{2} + \frac{L}{T} \eta_{k-1}^{2}  \sigma^{2} \nonumber
        \\
        & \qquad + \frac{\eta_{k-1}}{2T} \cdot \sum_{t=1}^{T}\underbrace{\E\left\|\nabla_{\vec{u}} g_{t}^{k}\left(\bar{\vec{u}}^{k-1}, \vec{v}_{t}^{k-1}\right)  - \delta_{t}^{k-1}\right\|^{2}}_{T_{3}}.
        \label{eq:bound_g_imporvement_u_bar_part_1}
    \end{flalign}
    We bound now the term $T_{3}$:
    \begin{flalign}
        T_{3} &= \E\left\|\nabla_{\vec{u}} g_{t}^{k}\left(\bar{\vec{u}}^{k-1}, \vec{v}_{t}^{k-1}\right)  - \delta_{t}^{k-1}\right\|^{2} &
        \\ 
        &= \E\left\|\nabla_{\vec{u}} g_{t}^{k}\left(\bar{\vec{u}}^{k-1}, \vec{v}_{t}^{k-1}\right) - \frac{\sum_{j=0}^{J-1} \eta_{k-1,j} \cdot \nabla_{\vec{u}} g_{t}^{k}\left(\vec{u}_{t}^{k-1,j}, \vec{v}_{t}^{k-1}\right)}{\eta_{k-1}}\right\|^{2}
        \\
        &= \E \left\| \sum_{j=0}^{J-1} \frac{\eta_{k-1,j}}{\eta_{k-1}} \cdot\left[\nabla_{\vec{u}} g_{t}^{k}\left(\bar{\vec{u}}^{k-1}, \vec{v}_{t}^{k-1}\right) -  \nabla_{\vec{u}} g_{t}^{k}\left(\vec{u}_{t}^{k-1,j}, \vec{v}_{t}^{k-1}\right)\right]\right\|^{2}.
    \end{flalign}
    Using Jensen inequality, it follows that
    \begin{flalign}
        T_{3} &\leq \sum_{j=0}^{J-1} \frac{\eta_{k-1,j}}{\eta_{k-1}} \cdot  \E \left\| \nabla_{\vec{u}} g_{t}^{k}\left(\bar{\vec{u}}^{k-1}, \vec{v}_{t}^{k-1}\right) -  \nabla_{\vec{u}} g_{t}^{k}\left(\vec{u}_{t}^{k-1,j}, \vec{v}_{t}^{k-1}\right)\right\|^{2} & 
        \\
        &= \sum_{j=0}^{J-1} \frac{\eta_{k-1,j}}{\eta_{k-1}} \cdot  \E \Bigg\| \nabla_{\vec{u}} g_{t}^{k}\left(\bar{\vec{u}}^{k-1}, \vec{v}_{t}^{k-1}\right) -  \nabla_{\vec{u}} g_{t}^{k}\left(\vec{u}_{t}^{k-1}, \vec{v}_{t}^{k-1}\right) \nonumber
        \\
        & \qquad \qquad \qquad \qquad \qquad + \nabla_{\vec{u}} g_{t}^{k}\left(\vec{u}_{t}^{k-1}, \vec{v}_{t}^{k-1}\right) -  \nabla_{\vec{u}} g_{t}^{k}\left(\vec{u}_{t}^{k-1,j}, \vec{v}_{t}^{k-1}\right)\Bigg\|^{2}
        \\
        &\leq 2\cdot  \E \Bigg\| \nabla_{\vec{u}} g_{t}^{k}\left(\bar{\vec{u}}^{k-1}, \vec{v}_{t}^{k-1}\right) -  \nabla_{\vec{u}} g_{t}^{k}\left(\vec{u}_{t}^{k-1}, \vec{v}_{t}^{k-1}\right)  \Bigg\|^{2} \nonumber
        \\
        &\qquad \quad + 2 \cdot  \sum_{j=0}^{J-1} \frac{\eta_{k-1,j}}{\eta_{k-1}} \cdot \E \Bigg\|\nabla_{\vec{u}} g_{t}^{k}\left(\vec{u}_{t}^{k-1}, \vec{v}_{t}^{k-1}\right) -  \nabla_{\vec{u}} g_{t}^{k}\left(\vec{u}_{t}^{k-1,j}, \vec{v}_{t}^{k-1}\right)\Bigg\|^{2}
        \\ 
        &\leq 2L^{2} \cdot \E\left\|\bar{\vec{u}}^{k-1} - \vec{u}_{t}^{k-1}\right\|^{2} + 2L^{2} \cdot  \sum_{j=0}^{J-1} \frac{\eta_{k-1,j}}{\eta_{k-1}} \cdot \E\left\|\vec{u}_{t}^{k-1, j} - \vec{u}_{t}^{k-1, 0}\right\|^{2},
        \label{eq:bound_t3_decentralized_part1}
    \end{flalign}
    where we used the $L$-smoothness of $g^{k}_{t}$ (Assumption~\ref{assum:smoothness_bis}) to obtain the last inequality. As in the centralized case (Lemma~\ref{lem:centralized_case_bound_gradient_g_theta}), we bound terms $\left\|\vec{u}_{t}^{k-1, j} - \vec{u}_{t}^{k-1, 0}\right\|^{2},~j\in\left\{0,\dots, J-1\right\}$. Using exactly the same steps as in the proof of Lemma~\ref{lem:centralized_case_bound_gradient_g_theta}, Eq.~\eqref{eq:bound_theta_j_minus_theta_0_part3} holds with $\vec{u}_{t}^{k-1,0}$ instead of  $\vec{u}^{k-1}_{t}$, i.e.,
    \begin{flalign}
        \left(1 - 4\eta_{k-1}^{2}L^{2}\right) & \cdot \sum_{j=0}^{J-1}  \frac{\eta_{k-1,j}}{\eta_{k-1}}  \cdot \mathbb{E}\Big\|\vec{u}^{k-1, 0}_{t} - \vec{u}_{t}^{k-1,j}\Big\|^{2} \leq   2\sigma^{2} \cdot \left\{ \sum_{j=0}^{J-1} \eta_{k-1,j}^{2}\right\} \nonumber &
        \\
        & \qquad + 4\eta_{k-1}^{2} \cdot \mathbb{E}\left\|\nabla_{u} g^{k}_{t}\left(\vec{u}_{t}^{k-1, 0}, \vec{v}_{t}^{k-1}\right)\right\|^{2}. 
    \end{flalign}
    For $\eta_{k-1}$ small enough, in particular for $\eta_{k-1} \leq \frac{1}{2\sqrt{2} L}$, we have
    \begin{flalign}
        \sum_{j=0}^{J-1} & \frac{\eta_{k-1,j}}{\eta_{k-1}}  \cdot \mathbb{E}\Big\|\vec{u}^{k-1, 0}_{t}  - \vec{u}_{t}^{k-1,j}\Big\|^{2} & \nonumber \\
        & \leq 8\eta_{k-1}^{2} \cdot \mathbb{E}\left\|\nabla_{u} g^{k}_{t}\left(\vec{u}_{t}^{k-1, 0}, \vec{v}_{t}^{k-1}\right)\right\|^{2} + 4\sigma^{2} \cdot \left\{ \sum_{j=0}^{J-1} \eta_{k-1,j}^{2}\right\}  
        \\
        &\leq 8\eta_{k-1}^{2} \cdot \mathbb{E}\left\|\nabla_{u} g^{k}_{t}\left(\vec{u}_{t}^{k-1, 0}, \vec{v}_{t}^{k-1}\right) -\nabla_{u} g^{k}_{t}\left(\bar{\vec{u}}^{k-1}, \vec{v}_{t}^{k-1}\right) + \nabla_{u} g^{k}_{t}\left(\bar{\vec{u}}^{k-1}, \vec{v}_{t}^{k-1}\right) \right\|^{2} \nonumber
        \\
        & \qquad + 4\sigma^{2} \cdot \left\{ \sum_{j=0}^{J-1} \eta_{k-1,j}^{2}\right\}
        \\
        &\leq 16\eta_{k-1}^{2} \cdot \mathbb{E}\left\|\nabla_{u} g^{k}_{t}\left(\vec{u}_{t}^{k-1, 0}, \vec{v}_{t}^{k-1}\right) -\nabla_{u} g^{k}_{t}\left(\bar{\vec{u}}^{k-1}, \vec{v}_{t}^{k-1}\right)\right\|^{2} \nonumber
        \\
        & \qquad + 16 \eta_{k-1}^{2} \cdot \left\| \nabla_{u} g^{k}_{t}\left(\bar{\vec{u}}^{k-1}, \vec{v}_{t}^{k-1}\right) \right\|^{2}  + 4\sigma^{2} \cdot \left\{ \sum_{j=0}^{J-1} \eta_{k-1,j}^{2}\right\}
        \\
        &\leq 16\eta_{k-1}^{2} L^{2} \cdot \mathbb{E}\left\|\vec{u}_{t}^{k-1} -\bar{\vec{u}}^{k-1}\right\|^{2} + 16 \eta_{k-1}^{2} \cdot \left\| \nabla_{u} g^{k}_{t}\left(\bar{\vec{u}}^{k-1}, \vec{v}_{t}^{k-1}\right) \right\|^{2} \nonumber
        \\
        & \qquad   + 4\sigma^{2} \cdot \left\{ \sum_{j=0}^{J-1} \eta_{k-1,j}^{2}\right\},
        \label{eq:u_j_minus_u_0_decentralized}
    \end{flalign}
    where the last inequality follows from the $L$-smoothness of $g_{t}^{k}$. Replacing Eq.~\eqref{eq:u_j_minus_u_0_decentralized} in Eq.~\eqref{eq:bound_t3_decentralized_part1}, we have
    \begin{flalign}
        T_{3} & \leq 32\eta_{k-1}^{2} L^{4}\cdot \mathbb{E}\left\| \vec{u}_{t}^{k-1} -\bar{\vec{u}}^{k-1}\right\|^{2}  + 8L^{2}\sigma^{2} \cdot \left\{ \sum_{j=0}^{J-1} \eta_{k-1,j}^{2}\right\}  \nonumber
        \\
        & \qquad +  32\eta_{k-1}^{2}  L^{2}\cdot \E \left\|\nabla_{u} g^{k}_{t}\left(\bar{\vec{u}}^{k-1}, \vec{v}_{t}^{k-1}\right) \right\|^{2} + 2L^{2} \cdot \mathbb{E}\left\|\bar{\vec{u}}^{k-1} - \vec{u}_{t}^{k-1}\right\|^{2}.
    \end{flalign}
    For $\eta_{k}$ small enough, in particular if $\eta_{k} \leq \frac{1}{2\sqrt{2}L}$ we have,
    \begin{flalign}
        \label{eq:bound_t3_decentralized}
        T_{3} & \leq 6 L^{2}  \mathbb{E}\left\| \vec{u}_{t}^{k-1} -\bar{\vec{u}}^{k-1}\right\|^{2}  + 8L^{2}\sigma^{2}  \sum_{j=0}^{J-1} \eta_{k-1,j}^{2}   +  32\eta_{k-1}^{2}  L^{2}  \left\|\nabla_{u} g^{k}_{t}\left(\bar{\vec{u}}^{k-1}, \vec{v}_{t}^{k-1}\right) \right\|^{2}.& 
    \end{flalign}
    Replacing Eq.~\eqref{eq:bound_t3_decentralized} in Eq.~\eqref{eq:bound_g_imporvement_u_bar_part_1}, we have
    \begin{flalign}
         \mathbb{E}\Bigg[g^{k} & (\bar{\vec{u}}^{k}  , \vec{v}_{1:T}^{k-1}) - g^{k}(\bar{\vec{u}}^{k-1}, \vec{v}_{1:T}^{k-1})\Bigg] \leq & \nonumber
        \\
        &   \frac{3\eta_{k-1}L^{2}}{T} \cdot \sum_{t=1}^{T} \E\left\|\vec{u}_{t}^{k-1} - \bar{\vec{u}}^{k-1}\right\|^{2}  + \frac{\eta_{k-1}^{2}L}{T} \left(4 \sum_{j=0}^{J-1}\frac{T L\cdot \eta_{k-1,j}^{2}}{\eta_{k-1}} + 1\right) \sigma^{2} \nonumber
        \\
        &  -\frac{\eta_{k-1}}{2}\E\left\|\nabla_{\vec{u}} g^{k}\left(\bar{\vec{u}}^{k-1}, \vec{v}_{1:T}^{k-1}\right)\right\|^{2} + \frac{16\eta_{k-1}^{3}L^{2}}{T} \sum_{t=1}^{T} \left\|\nabla_{u} g^{k}_{t}\left(\bar{\vec{u}}^{k-1}, \vec{v}_{t}^{k-1}\right) \right\|^{2}.
    \end{flalign}
    We use now Assumption~\ref{assum:bounded_dissimilarity_bis} to bound the last term in the RHS of the previous equation, leading to
     \begin{flalign}
         \mathbb{E}\Bigg[g^{k} & (\bar{\vec{u}}^{k}  , \vec{v}_{1:T}^{k-1}) - g^{k}(\bar{\vec{u}}^{k-1}, \vec{v}_{1:T}^{k-1})\Bigg] \leq & \nonumber
        \\
        &   \frac{3\eta_{k-1}L^{2}}{T} \cdot \sum_{t=1}^{T} \E\left\|\vec{u}_{t}^{k-1} - \bar{\vec{u}}^{k-1}\right\|^{2}  + \frac{\eta_{k-1}^{2}L}{T} \left(4 \sum_{j=0}^{J-1}\frac{T L \cdot \eta_{k-1,j}^{2}}{\eta_{k-1}} + 1\right) \sigma^{2} \nonumber
        \\
        &  -\frac{\eta_{k-1}\cdot\left( 1 - 32\eta^{2}_{k-1}L^{2}\beta^{2}\right)}{2}\E\left\|\nabla_{\vec{u}} g^{k}\left(\bar{\vec{u}}^{k-1}, \vec{v}_{1:T}^{k-1}\right)\right\|^{2} + \frac{16\eta_{k-1}^{3}L^{2}}{T} G^{2}.
    \end{flalign}
    For $\eta_{k-1}$ small enough, in particular, if $\eta_{k-1} \leq  \frac{1}{8L\beta}$, we have
    \begin{flalign}
         \mathbb{E}\Bigg[g^{k} & (\bar{\vec{u}}^{k}  , \vec{v}_{1:T}^{k-1}) - g^{k}(\bar{\vec{u}}^{k-1}, \vec{v}_{1:T}^{k-1})\Bigg] \leq   & \nonumber
        \\
        &  -\frac{\eta_{k-1}}{4}\E\left\|\nabla_{\vec{u}} g^{k}\left(\bar{\vec{u}}^{k-1}, \vec{v}_{1:T}^{k-1}\right)\right\|^{2} + \frac{3\eta_{k-1}L^{2}}{T} \cdot \sum_{t=1}^{T} \E\left\|\vec{u}_{t}^{k-1} - \bar{\vec{u}}^{k-1}\right\|^{2}  \nonumber
        \\
        &  + \frac{\eta_{k-1}^{2}L}{T} \left(4 \sum_{j=0}^{J-1}\frac{T L \cdot \eta_{k-1,j}^{2}}{\eta_{k-1}} + 1\right) \sigma^{2} + \frac{16\eta_{k-1}^{3}L^{2}}{T} G^{2}.
    \end{flalign}
    We use Lemma~\ref{lem:bound_g_theta_bar} to get
    \begin{flalign}
         \mathbb{E}\Bigg[g^{k} & (\bar{\vec{u}}^{k}  , \vec{v}_{1:T}^{k-1}) - f(\bar{\vec{u}}^{k-1}, \vec{v}_{1:T}^{k-1})\Bigg] \leq   & \nonumber
        \\
        &  -\frac{\eta_{k-1}}{8}\E\left\|\nabla_{\vec{u}} f\left(\bar{\vec{u}}^{k-1}, \vec{v}_{1:T}^{k-1}\right)\right\|^{2} +  \frac{\left(12 + T\right)\eta_{k-1}L^{2}}{4T} \cdot \sum_{t=1}^{T} \E\left\|\vec{u}_{t}^{k-1} - \bar{\vec{u}}^{k-1}\right\|^{2}  \nonumber
        \\
        &  + \frac{\eta_{k-1}^{2}L}{T} \left(4 \sum_{j=0}^{J-1}\frac{L \cdot \eta_{k-1,j}^{2}}{\eta_{k-1}} + 1\right) \sigma^{2} + \frac{16\eta_{k-1}^{3}L^{2}}{T} G^{2}.
    \end{flalign}
    Finally, since $g_{t}^{k}$ is a partial first-order surrogate of $f_{t}$ near $\left\{\vec{u}^{k-1}, \vec{v}_{t}^{k-1}\right\}$, we have
    \begin{flalign}
         \mathbb{E}\Bigg[f & (\bar{\vec{u}}^{k}  , \vec{v}_{1:T}^{k}) - f(\bar{\vec{u}}^{k-1}, \vec{v}_{1:T}^{k-1})\Bigg] \leq  -\frac{1}{T}\sum_{t=1}^{T}\E d_{\mathcal{V}}\left(\vec{v}_{t}^{k}, \vec{v}_{t}^{k-1}\right) & \nonumber
        \\
        &  -\frac{\eta_{k-1}}{8}\E\left\|\nabla_{\vec{u}} f\left(\bar{\vec{u}}^{k-1}, \vec{v}_{1:T}^{k-1}\right)\right\|^{2} +  \frac{\left(12 + T\right)\eta_{k-1}L^{2}}{4T} \cdot \sum_{t=1}^{T} \E\left\|\vec{u}_{t}^{k-1} - \bar{\vec{u}}^{k-1}\right\|^{2}  \nonumber
        \\
        &  + \frac{\eta_{k-1}^{2}L}{T} \left(4 \sum_{j=0}^{J-1}\frac{L \cdot \eta_{k-1,j}^{2}}{\eta_{k-1}} + 1\right) \sigma^{2} + \frac{16\eta_{k-1}^{3}L^{2}}{T} G^{2}.
    \end{flalign}
\end{proof}

\begin{lem}[Recursion for consensus distance, part 1]
    \label{lem:consensus_recursion_part1}
    Suppose that Assumptions~\ref{assum:smoothness_bis}--\ref{assum:bounded_dissimilarity_bis} and Assumption~\ref{assum:expected_consensus_rate} hold. For $k\geq \tau$, consider $m = \floor*{\frac{k}{\tau}}-1$ and $\left(\eta_{k, j}\right)_{1\leq j \leq J-1}$ 
    such that $\eta_k\triangleq \sum_{j=0}^{J-1}\eta_{k, j} \leq\min \left\{ \frac{1}{4L}, \frac{1}{4L\beta}\right\}$  then, the updates of fully decentralized federated surrogate optimization (Alg~\ref{alg:decentralized_surrogate_opt}) verify
    \begin{flalign*}
        \mathbb{E}\sum_{t=1}^{T} &\left\|\vec{u}_{t}^{k} - \bar{\vec{u}}^{k}\right\|_{F}^{2} \leq & \nonumber
        \\
        & (1-\frac{p}{2})\mathbb{E}\left\|\vec{U}^{m\tau} - \bar{\vec{U}}^{m\tau}\right\|_{F}^{2} + 44\tau\left(1+\frac{2}{p}\right)L^{2}\sum_{l=m\tau}^{k-1}\eta_{l}^{2}\E\left\|\vec{U}^{l} - \bar{\vec{U}}^{l}\right\|_{F}^{2} \nonumber
        \\
        & ~ + T\cdot \sigma^{2} \cdot \sum_{l=m\tau}^{k-1}\left\{\eta_{l}^{2} + 16\tau L^{2}\left(1+\frac{2}{p}\right)\cdot \left\{\sum_{j=0}^{J-1}\eta_{l,j}^{2}\right\} \right\} + 16\tau\left(1+\frac{2}{p}\right)G^{2}\sum_{l=m\tau}^{k-1}\eta_{l}^{2} \nonumber
        \\
        & ~ +  16\tau\left(1+\frac{2}{p}\right)\beta^{2}\sum_{l=m\tau}^{k-1}\eta_{l}^{2}  \E\left\| \nabla_{\vec{u}}f \left(\bar{\vec{u}}^{l,j}, \vec{v}_{1:T}^{l}\right)\right\|^{2}.
    \end{flalign*}
\end{lem}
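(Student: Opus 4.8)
The plan is to track the squared consensus distance $\mathbb{E}\sum_{t=1}^{T}\|\vec u_t^k - \bar{\vec u}^k\|^2 = \mathbb{E}\|\vec U^k - \bar{\vec U}^k\|_F^2$ across a single $\tau$-window and extract a contraction from Assumption~\ref{assum:expected_consensus_rate}. Writing $\mathbf C \triangleq \frac{\mathbf 1\mathbf 1^{\intercal}}{T}$ and $W_{[a:b]}\triangleq W^{a}W^{a+1}\cdots W^{b-1}$ for the (doubly stochastic) product of mixing matrices, I would first unroll Eq.~\eqref{eq:decentralized_surrogate_sgd_full_matricial} from round $m\tau$ to round $k$, obtaining
\begin{equation*}
\vec U^k = \vec U^{m\tau}W_{[m\tau:k]} - \sum_{l=m\tau}^{k-1}\eta_l\hat\Upsilon^l W_{[l:k]}.
\end{equation*}
Since every $W_{[a:b]}$ fixes $\mathbf C$ and $\bar{\vec U}^{m\tau}(W_{[m\tau:k]}-\mathbf C)=0$, right-multiplying by $\mathbf I - \mathbf C$ yields
\begin{equation*}
\vec U^k - \bar{\vec U}^k = (\vec U^{m\tau}-\bar{\vec U}^{m\tau})(W_{[m\tau:k]} - \mathbf C) - \sum_{l=m\tau}^{k-1}\eta_l\hat\Upsilon^l(W_{[l:k]} - \mathbf C).
\end{equation*}

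Next I would isolate the stochastic noise by splitting $\hat\Upsilon^l = \Upsilon^l + (\hat\Upsilon^l-\Upsilon^l)$. Because the increments $\hat\Upsilon^l-\Upsilon^l$ are conditionally zero-mean and mutually uncorrelated (Eq.~\eqref{eq:unbiased_gradients} and Eq.~\eqref{eq:covarianceof_local_updates_is_zero_decentralized}) and the batches are independent of the mixing matrices, every cross term involving the noise vanishes in expectation, so
\begin{equation*}
\mathbb{E}\|\vec U^k - \bar{\vec U}^k\|_F^2 = \mathbb{E}\big\|(\vec U^{m\tau}-\bar{\vec U}^{m\tau})(W_{[m\tau:k]}-\mathbf C) - \sum_{l}\eta_l\Upsilon^l(W_{[l:k]}-\mathbf C)\big\|_F^2 + \sum_{l=m\tau}^{k-1}\eta_l^2\,\mathbb{E}\|(\hat\Upsilon^l-\Upsilon^l)(W_{[l:k]}-\mathbf C)\|_F^2.
\end{equation*}
Each drawn $W^l$ is symmetric doubly stochastic, so $W_{[l:k]}-\mathbf C$ has operator norm at most $1$; this deterministic non-expansiveness together with the bounded-variance Assumption~\ref{assum:finite_variance_bis} bounds the last sum by $T\sigma^2\sum_l\eta_l^2$, which is the noise-only term of the claim.

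For the remaining noise-free term I would apply $\|a-b\|^2\le(1+\tfrac{p}{2})\|a\|^2+(1+\tfrac{2}{p})\|b\|^2$. The first piece is where the contraction enters and is the main obstacle: the window $[m\tau,k]$ has length in $\{\tau,\dots,2\tau-1\}$, hence is not itself an aligned $\tau$-block, but it always contains the aligned block $[m\tau,(m+1)\tau)$. Factoring $W_{[m\tau:k]} = W_{[m\tau:(m+1)\tau]}W_{[(m+1)\tau:k]}$, I would use deterministic non-expansiveness for the fewer-than-$\tau$ trailing rounds and then apply Assumption~\ref{assum:expected_consensus_rate} to the aligned block (invoking symmetry of the $W^l$ to match the product orientation of the assumption), giving $(1+\tfrac{p}{2})(1-p)\le 1-\tfrac{p}{2}$, exactly the stated factor. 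The second piece, $\sum_l\eta_l\Upsilon^l(W_{[l:k]}-\mathbf C)$, I would bound by Jensen's inequality over the at most $2\tau$ summands and non-expansiveness, reducing it to $2\tau\sum_l\eta_l^2\,\mathbb{E}\|\Upsilon^l-\bar\Upsilon^l\|_F^2$ carried by the prefactor $(1+\tfrac{2}{p})$.

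Finally I would bound $\mathbb{E}\|\Upsilon^l-\bar\Upsilon^l\|_F^2\le\sum_t\mathbb{E}\|\delta_t^l\|^2$ and decompose each $\delta_t^l$, the step-size-weighted average of the true local gradients along the local trajectory, using $L$-smoothness (Assumption~\ref{assum:smoothness_bis}) into the gradient $\nabla_{\vec u}g_t^{l}(\bar{\vec u}^l,\vec v_t^l)$, the consensus drift $\|\vec u_t^l-\bar{\vec u}^l\|^2$, and the within-block drift $\|\vec u_t^{l,j}-\vec u_t^l\|^2$; the latter is controlled by reusing the estimate established in the client-server analysis (Eq.~\eqref{eq:bound_theta_j_minus_theta_0_part3} in Lemma~\ref{lem:centralized_case_bound_gradient_g_theta}), contributing the $L^2\sum_j\eta_{l,j}^2\sigma^2$ piece. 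Applying Assumption~\ref{assum:bounded_dissimilarity_bis} converts $\sum_t\|\nabla_{\vec u}g_t^{l}(\bar{\vec u}^l,\vec v_t^l)\|^2$ into $G^2+\beta^2\|\nabla_{\vec u}f(\bar{\vec u}^l,\vec v_{1:T}^l)\|^2$. Gathering the three contributions, each carried by the common factor $2\tau(1+\tfrac{2}{p})$ and simplified under $\eta_l\le\min\{1/(4L),1/(4L\beta)\}$, yields the consensus-distance, $G^2$, $\beta^2$-gradient, and residual-$\sigma^2$ terms with the constants $16\tau$ and $44\tau$, completing the recursion. The principal difficulties are thus the block-alignment bookkeeping and the need to treat the mixing-matrix-correlated perturbations by deterministic non-expansiveness rather than by the in-expectation contraction.
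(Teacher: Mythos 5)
Your proposal is correct and follows essentially the same route as the paper's proof: unroll the matrix recursion over the window $[m\tau,k]$, separate the stochastic noise via unbiasedness and independence (giving the $T\sigma^2\sum_l \eta_l^2$ term), apply Young's inequality with $\alpha=p/2$ to split the initial consensus term from the accumulated-gradient term, invoke Assumption~\ref{assum:expected_consensus_rate} only on the aligned block $[m\tau,(m+1)\tau)$ with deterministic non-expansiveness for the trailing rounds, bound the gradient sum by Cauchy--Schwarz over at most $2\tau$ summands, and control $\mathbb{E}\|\Upsilon^l\|_F^2$ through smoothness, the within-block drift estimate, Lemma~\ref{lem:bound_g_theta_bar}, and the bounded-dissimilarity assumption. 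The only differences are cosmetic bookkeeping: you carry the centering projection $I-\tfrac{\mathbf{1}\mathbf{1}^{\intercal}}{T}$ through the whole decomposition, whereas the paper first drops the centering at time $k$ by comparing against $\bar{\vec{U}}^{m\tau}$, and you explicitly flag the product-orientation technicality in Assumption~\ref{assum:expected_consensus_rate} that the paper applies silently.
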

\begin{proof}
    For $k\geq \tau$, and $m = \floor*{\frac{k}{\tau}}-1$, we have
    \begin{flalign}
        \mathbb{E}\sum_{t=1}^{T}\left\|\vec{u}_{t}^{k} - \bar{\vec{u}}^{k}\right\|_{F}^{2} &= \mathbb{E}\left\|\vec{U}^{k} - \bar{\vec{U}}^{k}\right\|_{F}^{2} &
        \\
        & =  \mathbb{E}\left\|\vec{U}^{k} - \bar{\vec{U}}^{m\tau} -\left(\bar{\vec{U}}^{k} -  \bar{\vec{U}}^{m\tau} \right)\right\|_{F}^{2}
        \\
        & \leq \mathbb{E}\left\|\vec{U}^{k} - \bar{\vec{U}}^{m\tau}\right\|_{F}^{2},
    \end{flalign}
    where we used the fact that $\left\|A - \bar{A}\right\|_{F}^{2} = \left\|A \cdot \left(I - \frac{\mathbf{1}\mathbf{1}^{\intercal}}{T}\right) \right\|_{F}\leq \left\|I - \frac{\mathbf{1}\mathbf{1}^{\intercal}}{T}\right\|_{2}\cdot \left\|A\right\|_{F}^{2} = \left\|A\right\|_{F}^{2}$ to obtain the last inequality.
    Using Eq.~\eqref{eq:decentralized_surrogate_sgd_full_matricial} recursively, we have
    \begin{flalign}
        \vec{U}^{k} = \vec{U}^{m\tau}\left\{\prod_{l'=m\tau}^{k-1}W^{l'}\right\} - \sum_{l=m\tau}^{k-1}\eta_{l}\hat{\Upsilon}^{l}\left\{\prod_{l'=l}^{k-1}W^{l'}\right\}.
    \end{flalign}
    Thus,
    \begin{flalign}
         \mathbb{E}\sum_{t=1}^{T} & \Big\|\vec{u}_{t}^{k} -  \bar{\vec{u}}^{k}\Big\|_{F}^{2}  \leq  \E\left\|\vec{U}^{m\tau}\left\{\prod_{l'=m\tau}^{k-1}W^{l'}\right\} - \bar{\vec{U}}^{m\tau} - \sum_{l=m\tau}^{k-1}\eta_{l}\hat{\Upsilon}^{l}\left\{\prod_{l'=l}^{k-1}W^{l'}\right\}\right\|^{2}_{F} &
         \\
        & = \mathbb{E}\Bigg\|\vec{U}^{m\tau}\left\{\prod_{l'=m\tau}^{k-1}W^{l'}\right\} - \bar{\vec{U}}^{m\tau} - \sum_{l=m\tau}^{k-1}\eta_{l}\Upsilon^{l}\left\{\prod_{l'=l}^{k-1}W^{l'}\right\} \nonumber
        \\
        & \qquad \qquad +  \sum_{l=m\tau}^{k-1}\eta_{l}\left(\Upsilon^{l} - \hat{\Upsilon}^{l}\right)\left\{\prod_{l'=l}^{k-1}W^{l'}\right\} \Bigg\|^{2}_{F} 
        \\
        & = \mathbb{E}\left\|\vec{U}^{m\tau}\left\{\prod_{l'=m\tau}^{k-1}W^{l'}\right\} - \bar{\vec{U}}^{m\tau} - \sum_{l=m\tau}^{k-1}\eta_{l}\Upsilon^{l}\left\{\prod_{l'=l}^{k-1}W^{l'}\right\}\right\|^{2}_{F} \nonumber
        \\
        & \qquad \qquad +  \mathbb{E}\left\|\sum_{l=m\tau}^{k-1}\eta_{l}\left(\Upsilon^{l} - \hat{\Upsilon}^{l}\right)\left\{\prod_{l'=l}^{k-1}W^{l'}\right\}\right\|_{F}^{2} \nonumber
        \\
        &  \qquad + 2\mathbb{E}\bigg<\vec{U}^{m\tau}\left\{\prod_{l'=m\tau}^{k-1}W^{l'}\right\} - \bar{\vec{U}}^{m\tau} - \sum_{l=m\tau}^{k-1}\eta_{l}\Upsilon^{l}\left\{\prod_{l'=l}^{k-1}W^{l'}\right\}, \nonumber
        \\
        & \qquad \qquad \qquad \qquad \qquad \qquad \qquad \qquad \qquad  \sum_{l=m\tau}^{k-1}\eta_{l}\left(\Upsilon^{l} - \hat{\Upsilon}^{l}\right)\left\{\prod_{l'=l}^{k-1}W^{l'}\right\} \bigg>_{F}.
    \end{flalign} 
    Since stochastic gradients are unbiased, the last term in the RHS of the previous equation is equal to zero. Using the following standard inequality  for Euclidean norm with $\alpha > 0$, 
    \begin{equation}
        \left\|\vec{a} + \vec{b}\right\|^{2} \leq \left(1+\alpha\right) \left\|\vec{a}\right\|^{2} + \left(1+\alpha^{-1}\right)\left\|\vec{b}\right\|^{2},
    \end{equation}
    we have
    \begin{flalign}
        \mathbb{E}\sum_{t=1}^{T} & \Big\|\vec{u}_{t}^{k} -  \bar{\vec{u}}^{k}\Big\|_{F}^{2} \leq &
        \\
        & \left(1+\alpha\right)\mathbb{E}\left\|\vec{U}^{m\tau}\left\{\prod_{l'=m\tau}^{k-1}W^{l'}\right\} - \bar{\vec{U}}^{m\tau} \right\|^{2}_{F} + \left(1+\alpha^{-1}\right)\mathbb{E}\left\|\sum_{l=m\tau}^{k-1}\eta_{l}\Upsilon^{l}\left\{\prod_{l'=l}^{k-1}W^{l'}\right\}\right\|_{F}^{2}\nonumber 
        \\
        & \qquad \qquad  +  \sum_{l=m\tau}^{k-1}\eta^{2}_{l}\mathbb{E}\left\|\left(\Upsilon^{l} - \hat{\Upsilon}^{l}\right)\left\{\prod_{l'=l}^{k-1}W^{l'}\right\}\right\|_{F}^{2}.
    \end{flalign}
    Since $k\geq (m+1)\tau$ and matrices $\left(W^{l}\right)_{ l\geq 0}$ are doubly stochastic, we have
    \begin{flalign}
        \mathbb{E}& \sum_{t=1}^{T} \left\|\vec{u}_{t}^{k} - \bar{\vec{u}}^{k}\right\|_{F}^{2} \leq &\nonumber \label{eq:consensus_recursion_step0}
        \\
        & \left(1+\alpha\right)\mathbb{E}\left\|\vec{U}^{m\tau}\left\{\prod_{l'=m\tau}^{(m+1)\tau - 1}W^{l'}\right\} - \bar{\vec{U}}^{m\tau} \right\|^{2}_{F} + \left(1+\alpha^{-1}\right)\mathbb{E}\left\|\sum_{l=m\tau}^{k-1}\eta_{l}\Upsilon^{l}\right\|_{F}^{2} \nonumber
        \\
        & \qquad +  \sum_{l=m\tau}^{k-1}\eta^{2}_{l}\mathbb{E}\left\|\Upsilon^{l} - \hat{\Upsilon}^{l}\right\|_{F}^{2} 
       \\
        & \leq \left(1+\alpha\right)\mathbb{E}\left\|\vec{U}^{m\tau}\left\{\prod_{l'=m\tau}^{(m+1)\tau - 1}W^{l'}\right\} - \bar{\vec{U}}^{m\tau} \right\|^{2}_{F} + \left(1+\alpha^{-1}\right)\cdot\left(k - m\tau\right)\sum_{l=m\tau}^{k-1}\eta_{l}^{2}\mathbb{E}\left\|\Upsilon^{l}\right\|_{F}^{2} \nonumber
        \\
        & \qquad +  \sum_{l=m\tau}^{k-1}\eta^{2}_{l}\mathbb{E}\left\|\Upsilon^{l} - \hat{\Upsilon}^{l}\right\|_{F}^{2},
    \end{flalign}
    where we use the fact that $\left\|AB\right\|_{F} \leq \left\|A\right\|_{2}\left\|B\right\|_{F}$ and that $\left\|A\right\| =1$ when $A$ is a doubly stochastic matrix to obtain the first inequality, and Cauchy-Schwarz inequality to obtain the second one.
    Using Assumption~\ref{assum:expected_consensus_rate} to bound the first term of the RHS of the previous equation and the fact that that $k\leq (m+2)\tau$, it follows that
    \begin{flalign}
        \mathbb{E}\sum_{t=1}^{T} &\left\|\vec{u}_{t}^{k} - \bar{\vec{u}}^{k}\right\|_{F}^{2} \leq & \nonumber
        \\
        & (1+\alpha)(1-p)\mathbb{E}\left\|\vec{U}^{m\tau} - \bar{\vec{U}}^{m\tau}\right\|_{F}^{2} +  2\tau\left(1+\alpha^{-1}\right)\sum_{l=m\tau}^{k-1}\eta_{l}^{2}\mathbb{E}\left\|\Upsilon^{l}\right\|_{F}^{2} \nonumber
        \\
        & \qquad+  \sum_{l=m\tau}^{k-1}\eta^{2}_{l}\mathbb{E}\left\|\Upsilon^{l} - \hat{\Upsilon}^{l}\right\|_{F}^{2}. \label{eq:consensus_recursion_step1}
    \end{flalign}
    We use the fact that stochastic gradients have bounded variance (Assumption~\ref{assum:finite_variance_bis}) to bound $\mathbb{E}\left\|\Upsilon^{l} - \hat{\Upsilon}^{l}\right\|_{F}^{2}$ as follows,
    \begin{flalign}
        \mathbb{E}\left\|\Upsilon^{l} - \hat{\Upsilon}^{l}\right\|_{F}^{2} & = \sum_{t=1}^{T}\E\left\|\delta_{t}^{l} - \hat{\delta}_{t}^{l}\right\|^{2} \\
        &= \sum_{t=1}^{T} \E\Bigg\|\sum_{j=0}^{J-1} \frac{\eta_{l,j}}{\eta_{l}} \cdot \Bigg( \nabla_{\vec{u}} g^{l+1}_{t}\left(\vec{u}^{l,j}_{t}, \vec{v}_{t}^{k-1}\right) - \nabla_{\vec{u}} g^{l+1}_{t}\left(\vec{u}^{l,j}_{t}, \vec{v}_{t}^{l}; \xi^{l,j}_{t}\right) \Bigg)\Bigg\|^{2}
        \\
        & \leq \sum_{t=1}^{T} \sum_{j=0}^{J-1} \frac{\eta_{l,j}}{\eta_{l}} \cdot \E\Bigg\| \Bigg( \nabla_{\vec{u}} g^{l+1}_{t}\left(\vec{u}^{l,j}_{t}, \vec{v}_{t}^{k-1}\right) - \nabla_{\vec{u}} g^{l+1}_{t}\left(\vec{u}^{l,j}_{t}, \vec{v}_{t}^{l}; \xi^{l,j}_{t}\right) \Bigg)\Bigg\|^{2}
        \\
        &\leq  \sum_{t=1}^{T} \sum_{j=0}^{J-1} \frac{\eta_{l,j}}{\eta_{l}} \sigma^{2}
        \\
        &= T \cdot \sigma^{2},
    \end{flalign}
    where we used Jensen inequality to obtain the first inequality and Assumption~\ref{assum:finite_variance_bis} to obtain the second inequality. Replacing back in Eq.~\eqref{eq:consensus_recursion_step1}, we have
    \begin{flalign}
        \mathbb{E}&\sum_{t=1}^{T} \left\|\vec{u}_{t}^{k} - \bar{\vec{u}}^{k}\right\|_{F}^{2} \leq & \nonumber
        \\
        & (1+\alpha)(1-p)\mathbb{E}\left\|\vec{U}^{m\tau} - \bar{\vec{U}}^{m\tau}\right\|_{F}^{2} +  2\tau\left(1+\alpha^{-1}\right)\sum_{l=m\tau}^{k-1}\eta_{l}^{2}\mathbb{E}\left\|\Upsilon^{l}\right\|_{F}^{2} + T\cdot \sigma^{2}\cdot \left\{\sum_{l=m\tau}^{k-1}\eta_{l}^{2}\right\}. \label{eq:consensus_recursion_step2}
    \end{flalign}
    The last step of the proof consists in bounding $\mathbb{E}\left\|\Upsilon^{l}\right\|^{2}_{F}$ for $l\in\left\{m\tau, \dots, k-1\right\}$,
    \begin{flalign}
        \mathbb{E}\left\|\Upsilon^{l}\right\|_{F}^{2} &= \sum_{t=1}^{T} \E\left\|\delta^{l}_{t}\right\|^{2} &
        \\
        & = \sum_{t=1}^{T}\E\left\|\sum_{j=0}^{J-1}\frac{\eta_{l,j}}{\eta_{l}} \cdot \nabla_{\vec{u}} g_{t}^{l+1}\left(\vec{u}_{t}^{l,j}, \vec{v}_{t}^{l}\right)\right\|^{2}
        \\
        & \leq \sum_{t=1}^{T} \sum_{j=0}^{J-1}\frac{\eta_{l,j}}{\eta_{l}} \cdot \E\left\| \nabla_{\vec{u}} g_{t}^{l+1}\left(\vec{u}_{t}^{l,j}, \vec{v}_{t}^{l}\right)\right\|^{2}
        \\
        & \leq \sum_{t=1}^{T} \sum_{j=0}^{J-1}\frac{\eta_{l,j}}{\eta_{l}} \cdot \E\left\| \nabla_{\vec{u}} g_{t}^{l+1}\left(\vec{u}_{t}^{l,j}, \vec{v}_{t}^{l}\right) - \nabla_{\vec{u}} f_{t}\left(\vec{u}_{t}^{l}, \vec{v}_{t}^{l}\right) + \nabla_{\vec{u}} f_{t}\left(\vec{u}_{t}^{l}, \vec{v}_{t}^{l}\right)\right\|^{2}
        \\
        &\leq 2  \sum_{t=1}^{T} \sum_{j=0}^{J-1}\frac{\eta_{l,j}}{\eta_{l}} \cdot \E\left\| \nabla_{\vec{u}} g_{t}^{l+1}\left(\vec{u}_{t}^{l,j}, \vec{v}_{t}^{l}\right) - \nabla_{\vec{u}} f_{t}\left(\vec{u}_{t}^{l}, \vec{v}_{t}^{l}\right)\right\|^{2} \nonumber
        \\
        &\qquad + 2  \sum_{t=1}^{T} \E\left\|\nabla_{\vec{u}} f_{t}\left(\vec{u}_{t}^{l}, \vec{v}_{t}^{l}\right)\right\|^{2}.
    \end{flalign}
    Since $g_{t}^{l+1}$ is a first order surrogate of $f$ near $\left\{\vec{u}_{t}^{l}, \vec{v}^{l}_{t}\right\}$, we have
    \begin{flalign}
        \mathbb{E}\left\|\Upsilon^{l}\right\|_{F}^{2} &\leq  2  \sum_{t=1}^{T} \sum_{j=0}^{J-1}\frac{\eta_{l,j}}{\eta_{l}} \cdot \E\left\| \nabla_{\vec{u}} g_{t}^{l+1}\left(\vec{u}_{t}^{l,j}, \vec{v}_{t}^{l}\right) - \nabla_{\vec{u}} g^{l+1}_{t}\left(\vec{u}_{t}^{l,0}, \vec{v}_{t}^{l}\right)\right\|^{2} & \nonumber
        \\
        &\quad + 2  \sum_{t=1}^{T} \E\left\|\nabla_{\vec{u}} f_{t}\left(\vec{u}_{t}^{l}, \vec{v}_{t}^{l}\right) - \nabla_{\vec{u}} f_{t}\left(\bar{\vec{u}}^{l}, \vec{v}_{t}^{l}\right) + \nabla_{\vec{u}} f_{t}\left(\bar{\vec{u}}^{l}, \vec{v}_{t}^{l}\right)\right\|^{2}
        \\
        &\leq  2  \sum_{t=1}^{T} \sum_{j=0}^{J-1}\frac{\eta_{l,j}}{\eta_{l}} \cdot \E\left\| \nabla_{\vec{u}} g_{t}^{l+1}\left(\vec{u}_{t}^{l,j}, \vec{v}_{t}^{l}\right) - \nabla_{\vec{u}} g^{l+1}_{t}\left(\vec{u}_{t}^{l,0}, \vec{v}_{t}^{l}\right)\right\|^{2} & \nonumber
        \\
       &\quad + 4 \sum_{t=1}^{T} \E\left\|\nabla_{\vec{u}} f_{t}\left(\vec{u}_{t}^{l}, \vec{v}_{t}^{l}\right) - \nabla_{\vec{u}} f_{t}\left(\bar{\vec{u}}^{l}, \vec{v}_{t}^{l}\right)\right\|^{2} + 4 \sum_{t=1}^{T} \E\left\|\nabla_{\vec{u}} f_{t}\left(\bar{\vec{u}}^{l}, \vec{v}_{t}^{l}\right)\right\|^{2}.
    \end{flalign}
    Since $f$ is $2L$-smooth w.r.t $\vec{u}$ (Lemma~\ref{lem:f_2l_smooth}) and $g$ is $L$-smooth w.r.t $\vec{u}$ (Assumption~\ref{assum:smoothness_bis}), we have
    \begin{flalign}
        \mathbb{E}\left\|\Upsilon^{l}\right\|_{F}^{2} &\leq  2  \sum_{t=1}^{T} \sum_{j=0}^{J-1}\frac{\eta_{l,j}}{\eta_{l}} \cdot L^{2} \E\left\| \vec{u}_{t}^{l,j} -  \vec{u}_{t}^{l,0}\right\|^{2} + 16L^{2}\cdot \sum_{t=1}^{T} \E\left\| \vec{u}_{t}^{l} - \bar{\vec{u}}^{l}\right\|^{2} & \nonumber
        \\
       &\qquad + 4 \sum_{t=1}^{T} \E\left\|\nabla_{\vec{u}} f_{t}\left(\bar{\vec{u}}^{l}, \vec{v}_{t}^{l}\right)\right\|^{2}.
    \end{flalign}
    We use Eq.~\eqref{eq:u_j_minus_u_0_decentralized} to bound the first term in the RHS of the previous equation, leading to 
    \begin{flalign}
        \mathbb{E}\left\|\Upsilon^{l}\right\|_{F}^{2} &\leq  32\eta_{l}^{2}L^{2}  \sum_{t=1}^{T}  \E\left\| \nabla_{\vec{u}}g_{t}^{l+1}\left(\bar{\vec{u}}^{l,j}, \vec{v}_{t}^{l}\right)\right\|^{2} +  16L^{2}\left(1 + 2  \eta_{l}^{2}L^{2}\right)\cdot \sum_{t=1}^{T} \E\left\| \vec{u}_{t}^{l} - \bar{\vec{u}}^{l}\right\|^{2} & \nonumber
        \\
       &\qquad + 4 \sum_{t=1}^{T} \E\left\|\nabla_{\vec{u}} f_{t}\left(\bar{\vec{u}}^{l}, \vec{v}_{t}^{l}\right)\right\|^{2} + 8TL^{2}\sigma^{2}\cdot \left\{\sum_{j=0}^{J-1}\eta_{l,j}^{2}\right\}.
    \end{flalign}
    Using Lemma~\ref{lem:bound_g_theta_bar}, we have
    \begin{flalign}
        \mathbb{E}\left\|\Upsilon^{l}\right\|_{F}^{2} &\leq  4 \left(1 + 16\eta_{l}^{2}L^{2} \right)\cdot \sum_{t=1}^{T}  \E\left\| \nabla_{\vec{u}}f_{t}\left(\bar{\vec{u}}^{l,j}, \vec{v}_{t}^{l}\right)\right\|^{2}  & \nonumber
        \\
       &\qquad +  16L^{2}\left(1 + 6  \eta_{l}^{2}L^{2}\right)\cdot \sum_{t=1}^{T} \E\left\| \vec{u}_{t}^{l} - \bar{\vec{u}}^{l}\right\|^{2} + 8L^{2}\sigma^{2}T \cdot \left\{\sum_{j=0}^{J-1}\eta_{l,j}^{2}\right\}.
    \end{flalign}
    For $\eta_{l}$ small enough, in particular, for $\eta_{l} \leq \frac{1}{4L}$, we have
    \begin{flalign}
        \mathbb{E}\left\|\Upsilon^{l}\right\|_{F}^{2} &\leq  8 \sum_{t=1}^{T}  \E\left\| \nabla_{\vec{u}}f_{t}\left(\bar{\vec{u}}^{l,j}, \vec{v}_{t}^{l}\right)\right\|^{2} +  22L^{2}\E\left\| \vec{U}^{l} - \bar{\vec{U}}^{l}\right\|_{F}^{2}  +  8L^{2}\sigma^{2}T  \left\{\sum_{j=0}^{J-1}\eta_{l,j}^{2}\right\}.
       \label{eq:bound_upsilon_l}
    \end{flalign}
    Replacing Eq.~\eqref{eq:bound_upsilon_l} in Eq.~\eqref{eq:consensus_recursion_step2}, we have
    \begin{flalign}
        \mathbb{E}\sum_{t=1}^{T} &\left\|\vec{u}_{t}^{k} - \bar{\vec{u}}^{k}\right\|_{F}^{2} \leq & \nonumber
        \\
        & (1+\alpha)(1-p)\mathbb{E}\left\|\vec{U}^{m\tau} - \bar{\vec{U}}^{m\tau}\right\|_{F}^{2} + 44\tau\left(1+\alpha^{-1}\right)L^{2}\sum_{l=m\tau}^{k-1}\eta_{l}^{2}\E\left\|\vec{U}^{l} - \bar{\vec{U}}^{l}\right\|_{F}^{2} \nonumber
        \\
        & \quad +  16\tau\left(1+\alpha^{-1}\right)\sum_{l=m\tau}^{k-1}\eta_{l}^{2} \sum_{t=1}^{T}  \E\left\| \nabla_{\vec{u}}f_{t}\left(\bar{\vec{u}}^{l,j}, \vec{v}_{t}^{l}\right)\right\|^{2} \nonumber
        \\
        & \quad + T\cdot \sigma^{2} \cdot \sum_{l=m\tau}^{k-1}\left\{\eta_{l}^{2} + 16\tau L^{2}\left(1+\alpha^{-1}\right)\cdot \left\{\sum_{j=0}^{J-1}\eta_{l,j}^{2}\right\} \right\}. \label{eq:consensus_recursion_step3} 
    \end{flalign}
    Using Lemma~\ref{lem:bounded_dissimilarity_for_f} and considering $\alpha=\frac{p}{2}$, we have
    \begin{flalign}
        \mathbb{E}\sum_{t=1}^{T} &\left\|\vec{u}_{t}^{k} - \bar{\vec{u}}^{k}\right\|_{F}^{2} \leq & \nonumber
        \\
        & (1-\frac{p}{2})\mathbb{E}\left\|\vec{U}^{m\tau} - \bar{\vec{U}}^{m\tau}\right\|_{F}^{2} + 44\tau\left(1+\frac{2}{p}\right)L^{2}\sum_{l=m\tau}^{k-1}\eta_{l}^{2}\E\left\|\vec{U}^{l} - \bar{\vec{U}}^{l}\right\|_{F}^{2} \nonumber
        \\
        & ~ + T\cdot \sigma^{2} \cdot \sum_{l=m\tau}^{k-1}\left\{\eta_{l}^{2} + 16\tau L^{2}\left(1+\frac{2}{p}\right)\cdot \left\{\sum_{j=0}^{J-1}\eta_{l,j}^{2}\right\} \right\} + 16\tau\left(1+\frac{2}{p}\right)G^{2}\sum_{l=m\tau}^{k-1}\eta_{l}^{2} \nonumber
        \\
        & ~ +  16\tau\left(1+\frac{2}{p}\right)\beta^{2}\sum_{l=m\tau}^{k-1}\eta_{l}^{2}  \E\left\| \nabla_{\vec{u}}f \left(\bar{\vec{u}}^{l,j}, \vec{v}_{1:T}^{l}\right)\right\|^{2}.
    \end{flalign}
\end{proof}

\begin{lem}[Recursion for consensus distance, part 2]
    \label{lem:consensus_recursion_part2}
    Suppose that Assumptions~\ref{assum:smoothness_bis}--\ref{assum:bounded_dissimilarity_bis} and Assumption~\ref{assum:expected_consensus_rate} hold. Consider $m = \floor*{\frac{k}{\tau}}$, then, for $\left(\eta_{k, j}\right)_{1\leq j \leq J-1}$ 
    such that $\eta_k\triangleq \sum_{j=0}^{J-1}\eta_{k, j} \leq\min \left\{ \frac{1}{4L}, \frac{1}{4L\beta}\right\}$, the updates of fully decentralized federated surrogate optimization (Alg~\ref{alg:decentralized_surrogate_opt}) verify
    \begin{flalign}
        \mathbb{E}\sum_{t=1}^{T} &\left\|\vec{u}_{t}^{k} - \bar{\vec{u}}^{k}\right\|_{F}^{2} \leq & \nonumber
        \\
        & (1+\frac{p}{2})\mathbb{E}\left\|\vec{U}^{m\tau} - \bar{\vec{U}}^{m\tau}\right\|_{F}^{2} + 44\tau\left(1+\frac{2}{p}\right)L^{2}\sum_{l=m\tau}^{k-1}\eta_{l}^{2}\E\left\|\vec{U}^{l} - \bar{\vec{U}}^{l}\right\|_{F}^{2} \nonumber
        \\
        & ~ + T\cdot \sigma^{2} \cdot \sum_{l=m\tau}^{k-1}\left\{\eta_{l}^{2} + 16 \tau L^{2}\left(1+\frac{2}{p}\right)\cdot \left\{\sum_{j=0}^{J-1}\eta_{l,j}^{2}\right\} \right\} + 16\tau\left(1+\frac{2}{p}\right)G^{2}\sum_{l=m\tau}^{k-1}\eta_{l}^{2} \nonumber
        \\
        & ~ +  16\tau\left(1+\frac{2}{p}\right)\beta^{2}\sum_{l=m\tau}^{k-1}\eta_{l}^{2}  \E\left\| \nabla_{\vec{u}}f \left(\bar{\vec{u}}^{l,j}, \vec{v}_{1:T}^{l}\right)\right\|^{2}.
    \end{flalign}
\end{lem}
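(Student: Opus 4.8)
The plan is to reproduce the argument of Lemma~\ref{lem:consensus_recursion_part1} almost verbatim, changing only the single step where the expected consensus contraction is invoked. Setting $m=\floor*{k/\tau}$ now forces $m\tau\le k<(m+1)\tau$, so that strictly fewer than $\tau$ mixing rounds separate iteration $m\tau$ from iteration $k$; consequently Assumption~\ref{assum:expected_consensus_rate}, which only guarantees contraction across a \emph{complete} block $W_{l,\tau}$ of $\tau$ consecutive mixing matrices, no longer applies. This is the sole reason the leading coefficient degrades from $(1-p/2)$ to $(1+p/2)$.

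First I would unroll the recursion \eqref{eq:decentralized_surrogate_sgd_full_matricial} from $m\tau$ to $k$ exactly as in the proof of Lemma~\ref{lem:consensus_recursion_part1}, split off the stochastic part using unbiasedness of the local updates, and apply the elementary inequality $\|\vec a+\vec b\|^2\le(1+\alpha)\|\vec a\|^2+(1+\alpha^{-1})\|\vec b\|^2$. The only term needing different treatment is the mean term $(1+\alpha)\,\mathbb{E}\bigl\|\vec U^{m\tau}\{\prod_{l'=m\tau}^{k-1}W^{l'}\}-\bar{\vec U}^{m\tau}\bigr\|_F^2$. Instead of extracting a full block $W_{m,\tau}$ and invoking the $(1-p)$ contraction, I would use that each $W^{l'}$ is doubly stochastic: since $\bar{\vec U}^{m\tau}\tfrac{\mathbf 1\mathbf 1^\intercal}{T}=\bar{\vec U}^{m\tau}$ and $\tfrac{\mathbf 1\mathbf 1^\intercal}{T}W^{l'}=\tfrac{\mathbf 1\mathbf 1^\intercal}{T}$, the mean is left invariant by the product, so
\[
\bigl\|\vec U^{m\tau}\{\textstyle\prod_{l'}W^{l'}\}-\bar{\vec U}^{m\tau}\bigr\|_F=\bigl\|(\vec U^{m\tau}-\bar{\vec U}^{m\tau})\{\textstyle\prod_{l'}W^{l'}\}\bigr\|_F\le\bigl\|\vec U^{m\tau}-\bar{\vec U}^{m\tau}\bigr\|_F,
\]
using $\|W^{l'}\|_2\le1$ (doubly stochastic matrices are convex combinations of permutations, hence contractions in operator norm). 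Choosing $\alpha=p/2$ then yields the $(1+p/2)$ prefactor directly, whereas in Lemma~\ref{lem:consensus_recursion_part1} the extra $(1-p)$ factor had absorbed $\alpha$ to produce $(1-p/2)$.

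The remaining terms are handled identically: the cross term vanishes by unbiasedness, $\mathbb{E}\|\Upsilon^l-\hat\Upsilon^l\|_F^2$ is bounded by $T\sigma^2$ through Assumption~\ref{assum:finite_variance_bis} and Jensen's inequality, and $\mathbb{E}\|\Upsilon^l\|_F^2$ is controlled via the $L$-smoothness of $g_t^k$ (Assumption~\ref{assum:smoothness_bis}), the $2L$-smoothness of $f$, the consensus-distance bound \eqref{eq:u_j_minus_u_0_decentralized}, and finally Assumption~\ref{assum:bounded_dissimilarity_bis}, reproducing \eqref{eq:bound_upsilon_l}. Substituting this back, in analogy with \eqref{eq:consensus_recursion_step2}--\eqref{eq:consensus_recursion_step3}, and applying the same dissimilarity bound for $f$ gives the stated inequality; since here $k-m\tau\le\tau\le2\tau$, the Cauchy--Schwarz step produces a factor at most $\tau(1+\alpha^{-1})$, which is dominated by the coefficients $44\tau(1+2/p)$ and $16\tau(1+2/p)$ already present in Lemma~\ref{lem:consensus_recursion_part1}, so they may be reused without change.

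The main obstacle, though minor, is bookkeeping: one must verify that in this regime there is genuinely no complete mixing block to exploit—so that falling back to the operator-norm bound is necessary rather than wasteful—and that reusing the numerical constants from Lemma~\ref{lem:consensus_recursion_part1} (derived under $k-m\tau\le2\tau$) remains valid here where $k-m\tau<\tau$. Both hold because $\tau\le2\tau$, so every per-block coefficient in Lemma~\ref{lem:consensus_recursion_part1} is an upper bound for its counterpart here.
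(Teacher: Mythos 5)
Your proposal is correct and follows essentially the same route as the paper: the paper's proof of this lemma is literally a one-step modification of Lemma~\ref{lem:consensus_recursion_part1}, replacing the expected-contraction step (which requires a complete block of $\tau$ mixing matrices and is unavailable when $m=\floor*{\frac{k}{\tau}}$) by the non-expansiveness of the product of doubly stochastic matrices, so that the prefactor becomes $(1+\alpha)=(1+\frac{p}{2})$ instead of $(1+\alpha)(1-p)\leq(1-\frac{p}{2})$. Your additional bookkeeping remark—that the per-block constants derived under $k-m\tau\leq 2\tau$ remain valid upper bounds when $k-m\tau<\tau$—is also accurate and matches the implicit content of the paper's argument.
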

\begin{proof}
    We use exactly the same proof as in Lemma~\ref{lem:consensus_recursion_part1}, with the only difference that Eq.~\eqref{eq:consensus_recursion_step0}--Eq.~\eqref{eq:consensus_recursion_step1} is replaced by
    \begin{flalign}
        \mathbb{E}\sum_{t=1}^{T} &\left\|\vec{u}_{t}^{k} - \bar{\vec{u}}^{k}\right\|_{F}^{2} \leq & \nonumber
        \\
        & (1+\alpha)\mathbb{E}\left\|\vec{U}^{m\tau} - \bar{\vec{U}}^{m\tau}\right\|_{F}^{2} +  2\tau\left(1+\alpha^{-1}\right)\sum_{l=m\tau}^{k-1}\eta_{l}^{2}\mathbb{E}\left\|\Upsilon^{l}\right\|_{F}^{2} \nonumber
        \\
        & \qquad+  \sum_{l=m\tau}^{k-1}\eta^{2}_{l}\mathbb{E}\left\|\Upsilon^{l} - \hat{\Upsilon}^{l}\right\|_{F}^{2},
    \end{flalign}
    resulting from the fact that $\left\{\prod_{l'=m\tau}^{(m+1)\tau-1}W^{l'}\right\}$ is a doubly stochastic matrix.
\end{proof}

\begin{lem}
    \label{lem:consensus_lemma_non_convex_final}
    Under Assum.~\ref{assum:smoothness_bis}-\ref{assum:bounded_dissimilarity_bis} and Assum~\ref{assum:expected_consensus_rate}. For $\eta_{k,j} = \frac{\eta}{J}$ with
    \begin{equation*}
        \eta \leq \min\left\{\frac{1}{4L}, \frac{p}{92\tau L}, \frac{1}{4\beta L}, \frac{1}{32\sqrt{2}}\cdot \frac{p}{\tau \beta}\right\},
    \end{equation*}
    the iterates of Alg.~\ref{alg:decentralized_surrogate_opt} verifies
    \begin{equation}
        \label{eq:consensus_lemma_non_convex_final}
        \frac{(12+T)L^{2}}{4T}\sum_{k=0}^{K} \mathbb{E}\left\|\vec{U}^{k} - \bar{\vec{U}}^{k}\right\|_{F}^{2} \leq \frac{1}{16} \sum_{k=0}^{K}\E \left\|\nabla_{\vec{u}} f \left(\bar{\vec{u}}^{k}, \vec{v}_{1:T}^{k}\right)\right\|^{2} + 16A \cdot\frac{12+T}{T}\cdot \frac{\tau L^{2}}{p} (K+1)\eta^{2},
    \end{equation}
    for some constant $A>0$ and $K>0$.
\end{lem}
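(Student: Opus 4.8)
The plan is to control the summed consensus error $\sum_{k=0}^{K}\Xi^k$, where I abbreviate $\Xi^k \triangleq \E\|\vec{U}^k - \bar{\vec{U}}^k\|_F^2$, by combining the two one-sided recursions of Lemma~\ref{lem:consensus_recursion_part1} and Lemma~\ref{lem:consensus_recursion_part2}. The constant step size $\eta_{k,j}=\eta/J$ makes the bookkeeping clean: every $\eta_l=\sum_j\eta_{l,j}=\eta$, so $\eta_l^2=\eta^2$ and $\sum_j\eta_{l,j}^2=\eta^2/J$ throughout, and the consensus error starts from $\Xi^0=0$ since Alg.~\ref{alg:decentralized_surrogate_opt} is initialized at a shared $\vec{u}^0$. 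Intuitively the two recursions say that, up to lower-order terms, the consensus error contracts by a factor $(1-p/2)$ across a full $\tau$-block (Lemma~\ref{lem:consensus_recursion_part1}) while it can inflate by at most $(1+p/2)$ within a block (Lemma~\ref{lem:consensus_recursion_part2}); the whole argument is just an accounting of these two facts.

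First I would specialize Lemma~\ref{lem:consensus_recursion_part1} to the anchor indices $k=(m+1)\tau$, which turns it into a genuine one-step geometric recursion
\[
\Xi^{(m+1)\tau} \le (1-\tfrac{p}{2})\,\Xi^{m\tau} + b_m ,
\]
where $b_m$ gathers the per-block remainder over $[m\tau,(m+1)\tau)$: a noise/dissimilarity part of order $\tau\eta^2(\sigma^2+G^2)$, a gradient part $\propto \beta^2\eta^2\sum_{l}\E\|\nabla_{\vec u} f(\bar{\vec u}^l,\vec v_{1:T}^l)\|^2$, and a self-referential part $\propto \tau L^2\eta^2\sum_{l}\Xi^l$. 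Unrolling this recursion with $\Xi^0=0$ and summing the resulting geometric series yields $\sum_{m}\Xi^{m\tau}\le \tfrac{2}{p}\sum_m b_m$.

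Next I would use Lemma~\ref{lem:consensus_recursion_part2} on the remaining indices: for $k\in[M\tau,(M+1)\tau)$ it bounds $\Xi^k$ by $(1+p/2)\Xi^{M\tau}$ plus intra-block remainders of the same shape as $b_M$. Summing over the $\tau$ rounds of each block, then over all blocks, and inserting the anchor bound above, I obtain a single self-referential inequality of the form $\sum_k \Xi^k \le \kappa\sum_k \Xi^k + \rho\sum_k \E\|\nabla_{\vec u} f(\bar{\vec u}^k,\vec v_{1:T}^k)\|^2 + (\mathrm{const})\cdot(K+1)\eta^2$, with $\kappa = O(\tau^2 L^2\eta^2/p^2)$ and $\rho = O(\tau^2\beta^2\eta^2/p^2)$. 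The crucial combinatorial point I would verify here is that each individual $\Xi^l$ is reused in only $O(\tau)$ of the nested double sums, so that no spurious extra $\tau$-factor leaks in.

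Finally I would close the estimate by absorption, and this is where the four hypotheses on $\eta$ are tailored to fit: $\eta\le p/(92\tau L)$ forces $\kappa\le\tfrac12$, so $\sum_k\Xi^k$ can be moved to the left-hand side; $\eta\le p/(32\sqrt2\,\tau\beta)$ together with $\eta\le 1/(4L)$ keeps $\rho$ small enough that, after multiplying through by the prefactor $\tfrac{(12+T)L^2}{4T}$, the gradient term is no larger than $\tfrac{1}{16}\sum_k\E\|\nabla_{\vec u} f(\bar{\vec u}^k,\vec v_{1:T}^k)\|^2$; and the leftover additive constants collapse into $16A\,\tfrac{12+T}{T}\,\tfrac{\tau L^2}{p}(K+1)\eta^2$ for a suitable absolute constant $A$. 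I expect the main obstacle to be exactly this last bookkeeping step --- juggling the coupled double sums produced by the two recursions, tracking which $\eta$-bound annihilates which term, and confirming that the geometric factor $2/p$ from the unrolling does not reintroduce a hidden $p$-dependence --- rather than any single inequality, since the analytic work (Jensen, Cauchy--Schwarz, $L$-smoothness, Assumption~\ref{assum:bounded_dissimilarity_bis}, and Assumption~\ref{assum:expected_consensus_rate}) has already been carried out inside the two recursion lemmas.
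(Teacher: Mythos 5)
You take the paper's two-recursion skeleton (Lemmas~\ref{lem:consensus_recursion_part1} and~\ref{lem:consensus_recursion_part2}) and then diverge in an interesting way: where the paper, after simplifying the self-referential coefficient to $p/(64\tau)$ via $\eta \le p/(92\tau L)$ and packaging the remaining coefficients into $A$ and $D \triangleq 16\tau\beta^{2}/p$, invokes \cite[Lemma 14]{Koloskova2020AUT} as a black box (with $B=\frac{(12+T)L^{2}}{4T}$, $b=\frac{1}{8}$, constant weights and step size), you propose to re-derive that external lemma's content by hand. Your unrolling scheme is exactly what that lemma does internally: contraction at the block anchors $k=(m+1)\tau$, geometric summation giving $\sum_{m}\Xi^{m\tau}\le\frac{2}{p}\sum_{m}b_{m}$, within-block inflation summed with multiplicity $O(\tau)$, and a final absorption. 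Up to that last step your bookkeeping is correct, including the orders $\kappa=O(\tau^{2}L^{2}\eta^{2}/p^{2})$, $\rho=O(\tau^{2}\beta^{2}\eta^{2}/p^{2})$ and the fact that $\eta\le p/(92\tau L)$ makes $\kappa\le\frac{1}{2}$.

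The genuine gap is in the gradient-term absorption. After moving $\kappa\sum_{k}\Xi^{k}$ to the left you need $\frac{(12+T)L^{2}}{4T}\cdot 2\rho\le\frac{1}{16}$, and since $\rho\sim\tau^{2}\beta^{2}\eta^{2}/p^{2}$ this forces $\eta\le c\,\frac{p}{\tau\beta L}\sqrt{T/(12+T)}$ for an absolute constant $c$: the $L^{2}$ sitting in the prefactor must be cancelled by the step size. The hypotheses you invoke cannot do this. The bound $\eta\le\frac{1}{32\sqrt{2}}\frac{p}{\tau\beta}$ alone leaves a coefficient of order $L^{2}$; spending one of the two factors of $\eta$ on $\eta\le\frac{1}{4L}$ still leaves a factor of order $L\tau\beta/p$; and no combination of the four stated bounds yields the $p/(\tau\beta L)$ scaling (e.g.\ with $L=\beta=\sqrt{N}$ and $\tau\gg\sqrt{N}$, the stated minimum is $\approx\frac{p}{92\tau\sqrt{N}}$ while the absorption needs $\approx\frac{p}{c\tau N}$, a discrepancy growing like $\sqrt{N}$). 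You should know that this defect is inherited from the lemma as stated rather than created by you: a correct instantiation of \cite[Lemma 14]{Koloskova2020AUT} with $B\propto L^{2}$ carries a factor $\sqrt{b/B}$, hence a $1/L$, into the step-size restriction, which the paper's fourth condition omits (its claimed identity $\frac{1}{32\sqrt{2}}\frac{p}{\tau\beta}=\frac{1}{16}\sqrt{(p/8)/(D\tau)}$ is also off by a factor of $4$). The issue is harmless downstream, because Theorem~\ref{thm:decentralized_convergence_bis} uses $\eta=a_{0}/\sqrt{K}$ and ``$K$ large enough'' accommodates any condition of this form; but a self-contained proof such as yours makes the problem visible, and it only closes if you strengthen the hypothesis to $\eta\le c\,p/(\tau\beta L)$. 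With that single change, your by-hand argument is a valid (and more transparent) substitute for the paper's citation.
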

\begin{proof}
    Note that for $k>0$, $\eta_{k}= \sum_{j=0}^{J-1}\eta_{k j} = \eta$, and that $\sum_{l=m\tau}^{k-1}\eta_{l}^{2}=\sum_{l=m\tau}^{k-1}\eta^{2} \leq 2\tau\cdot \eta^{2}$
    
    Using Lemma~\ref{lem:consensus_recursion_part1} and Lemma~\ref{lem:consensus_recursion_part2}, and the fact that $p\leq 1$, we have for $m =\floor*{\frac{k}{\tau}}-1$
    \begin{flalign}
        \mathbb{E}\left\|\vec{U}^{k} - \bar{\vec{U}}^{k}\right\|_{F}^{2} &\leq (1-\frac{p}{2})\mathbb{E}\left\|\vec{U}^{m\tau} - \bar{\vec{U}}^{m\tau}\right\|_{F}^{2} + \frac{132\tau}{p} L^{2}\eta^{2}\sum_{l=m\tau}^{k-1}\mathbb{E}\left\|\vec{U}^{l} - \bar{\vec{U}}^{l}\right\|_{F}^{2} \nonumber
        \\
        + & \eta^{2}\underbrace{2\tau\left\{T\sigma^{2}\left(1+\frac{16\tau  L^{2}}{J}\left(1+\frac{2}{p}\right)\right) + 16 \tau \left(1+\frac{2}{p}\right)G^{2}\right\}}_{\triangleq A} \nonumber
        \\
        + &  \frac{16\tau}{p}\beta^{2}\eta^{2}\sum_{l=m\tau}^{k-1}\mathbb{E}\left\|\nabla_{\vec{u}} f \left(\bar{\vec{u}}^{l}, \vec{v}_{1:T}^{l}\right)\right\|^{2}.
     \end{flalign}
    and for $m =\floor*{\frac{k}{\tau}}$,
    \begin{flalign}
        \mathbb{E}\left\|\vec{U}^{k} - \bar{\vec{U}}^{k}\right\|_{F}^{2} &\leq (1+\frac{p}{2})\mathbb{E}\left\|\vec{U}^{m\tau} - \bar{\vec{U}}^{m\tau}\right\|_{F}^{2} + \frac{132\tau}{p} L^{2}\eta^{2}\sum_{l=m\tau}^{k-1}\mathbb{E}\left\|\vec{U}^{l} - \bar{\vec{U}}^{l}\right\|_{F}^{2} \nonumber
        \\
        + & \eta^{2}\underbrace{2\tau\left\{T\sigma^{2}\left(1+\frac{16\tau  L^{2}}{J}\left(1+\frac{2}{p}\right)\right) + 16 \tau \left(1+\frac{2}{p}\right)G^{2}\right\}}_{\triangleq A} \nonumber
        \\
        + &  \underbrace{\frac{16\tau}{p}\beta^{2}}_{\triangleq D}\eta^{2}\sum_{l=m\tau}^{k-1}\mathbb{E}\left\|\nabla_{\vec{u}} f \left(\bar{\vec{u}}^{l}, \vec{v}_{1:T}^{l}\right)\right\|^{2}.
    \end{flalign}
    Using the fact that $\eta \leq \frac{p}{92\tau L}$, it follows that 
    for $m =\floor*{\frac{k}{\tau}}-1$
    \begin{flalign}
        \mathbb{E}\left\|\vec{U}^{k} - \bar{\vec{U}}^{k}\right\|_{F}^{2} &\leq (1-\frac{p}{2})\mathbb{E}\left\|\vec{U}^{m\tau} - \bar{\vec{U}}^{m\tau}\right\|_{F}^{2} + \frac{p}{64\tau}\sum_{l=m\tau}^{k-1}\mathbb{E}\left\|\vec{U}^{l} - \bar{\vec{U}}^{l}\right\|^{2} \nonumber
        \\
        + & \eta^{2} A +  D\eta^{2}\sum_{l=m\tau}^{k-1}\mathbb{E}\left\|\nabla_{\vec{u}} f \left(\bar{\vec{u}}^{l}, \vec{v}_{1:T}^{l}\right)\right\|^{2},
    \end{flalign}
    and for $m =\floor*{\frac{k}{\tau}}$,
    \begin{flalign}
        \mathbb{E}\left\|\vec{U}^{k} - \bar{\vec{U}}^{k}\right\|_{F}^{2} &\leq (1+\frac{p}{2})\mathbb{E}\left\|\vec{U}^{m\tau} - \bar{\vec{U}}^{m\tau}\right\|_{F}^{2} + \frac{p}{64\tau} \sum_{l=m\tau}^{k-1}\mathbb{E}\left\|\vec{U}^{l} - \bar{\vec{U}}^{l}\right\|_{F}^{2} \nonumber
        \\
        + & \eta^{2} A +   D\eta^{2}\sum_{l=m\tau}^{k-1}\mathbb{E}\left\|\nabla_{\vec{u}} f \left(\bar{\vec{u}}^{l}, \vec{v}_{1:T}^{l}\right)\right\|^{2}.
    \end{flalign}
    The rest of the proof follows using \cite[Lemma 14]{Koloskova2020AUT} with $B=\frac{(12+T)L^{2}}{4T}$, $b=\frac{1}{8}$, constant (thus $\frac{8\tau}{p}$-slow\footnote{The notion of
       $\tau$-slow decreasing sequence is defined in \cite[Defintion 2]{Koloskova2020AUT}.
    }) steps-size
    $\eta \leq \frac{1}{32\sqrt{2}} \frac{p}{\tau \beta} = \frac{1}{16} \sqrt{\frac{p/8}{D\tau}}$ and constant weights $\omega_{k}=1$.
\end{proof}

\begin{repthm}{thm:decentralized_convergence_bis}
    Under Assumptions~\ref{assum:bounded_f_bis}--\ref{assum:bounded_dissimilarity_bis} and Assumption~\ref{assum:expected_consensus_rate}, when clients use SGD as local solver with learning rate $\eta =\frac{a_{0}}{\sqrt{K}}$, after a large enough number of communication rounds $K$, the iterates of fully decentralized federated surrogate optimization (Alg.~\ref{alg:decentralized_surrogate_opt}) satisfy:
    \begin{equation}
        \label{eq:convergence_u_decentralized}
        \frac{1}{K}\sum_{k=1}^{K}\mathbb{E}\left\|\nabla_{\vec u}f\left(\bar{\vec u}^{k}, \vec{v}_{1:T}^{k}\right)\right\|^{2} \leq \mathcal{O} \left(\frac{1}{\sqrt{K}}\right),
    \end{equation}
    and,
    \begin{equation}
        \label{eq:convergence_v_decentralized}
        \frac{1}{K}\sum_{k=1}^{K}\sum_{t=1}^{T}\omega_{t} \cdot \E d_{\mathcal{V}}\left(\vec \vec{v}^{k}_{t}, \vec  \vec{v}^{k+1}_{t}\right)\leq \mathcal{O}\left(\frac{1}{K}\right),
    \end{equation}
    where $\bar{\vec u}^k = \frac{1}{T}\sum_{t=1}^{T}\vec u^{k}_{t}$.  
    Moreover, local estimates $\left(\vec u_{t}^{k}\right)_{1\leq t \leq T}$ converge to consensus, i.e., to $\bar{\vec u}^k$:
    \begin{equation}
        \label{eq:consensus_convergence}
        \frac{1}{K}\sum_{k=1}^{K}\sum_{t=1}^{T}\E\left\|\vec u_{t}^{k} - \bar{\vec u}^{k}\right\|^{2} \leq \mathcal{O} \left(\frac{1}{\sqrt{K}}\right).
    \end{equation}
\end{repthm}
\begin{proof}
     We prove first the convergence to a stationary point in $\vec{u}$, i.e. Eq.~\eqref{eq:convergence_u_decentralized}, using \cite[Lemma 17]{Koloskova2020AUT}, then we prove Eq.~\eqref{eq:convergence_v_decentralized} and Eq.~\eqref{eq:consensus_convergence}.
     
     Note that for $K$ large enough, $\eta \leq \min\left\{\frac{1}{4L}, \frac{p}{92\tau L}, \frac{1}{4\beta L}, \frac{1}{32\sqrt{2}}\cdot \frac{p}{\tau \beta}\right\}$.
     
    \paragraph{Proof of Eq.~\ref{eq:convergence_u_decentralized}.}
    Rearranging the terms in the result of Lemma~\ref{lem:descent_lem_decentralized} and dividing it by $\eta$  we have 
    \begin{flalign}
         \frac{1}{\eta} \cdot \mathbb{E}\Bigg[f & (\bar{\vec{u}}^{k}  , \vec{v}_{1:T}^{k}) - f(\bar{\vec{u}}^{k-1}, \vec{v}_{1:T}^{k-1})\Bigg] \leq   -\frac{1}{8}\E\left\|\nabla_{\vec{u}} f\left(\bar{\vec{u}}^{k-1}, \vec{v}_{1:T}^{k-1}\right)\right\|^{2} & \nonumber
        \\
        &  +  \frac{\left(12 + T\right)L^{2}}{4T}  \cdot  \E\left\|\vec{U}^{k-1} - \bar{\vec{U}}^{k-1}\right\|^{2} + \frac{\eta L}{T} \left(\frac{4L}{J} + 1\right) \sigma^{2} + \frac{16\eta^{2}L^{2}}{T} G^{2}.
    \end{flalign}
    Summing over $k\in [K+1]$, we have
    \begin{flalign}
         \frac{1}{\eta} \cdot \mathbb{E}\Bigg[f & (\bar{\vec{u}}^{K+1}  , \vec{v}_{1:T}^{K+1}) - f(\bar{\vec{u}}^{0}, \vec{v}_{1:T}^{0})\Bigg] \leq   -\frac{1}{8}\sum_{k=0}^{K}\E\left\|\nabla_{\vec{u}} f\left(\bar{\vec{u}}^{k}, \vec{v}_{1:T}^{k}\right)\right\|^{2} & \nonumber
        \\
        &  +  \frac{\left(12 + T\right)L^{2}}{4T}  \cdot \sum_{k=0}^{K} \E\left\|\vec{U}^{k} - \bar{\vec{U}}^{k}\right\|^{2}   + \frac{(K+1) \eta L}{T} \left(\frac{4L}{J} + 1\right) \sigma^{2} \nonumber
        \\
        &+ \frac{16(K+1)\cdot \eta^{2}L^{2}}{T} G^{2}.
    \end{flalign}
    Using Lemma~\ref{lem:consensus_lemma_non_convex_final}, we have
     \begin{flalign}
         \frac{1}{\eta} \cdot \mathbb{E}\Bigg[f & (\bar{\vec{u}}^{K+1}  , \vec{v}_{1:T}^{K+1}) - f(\bar{\vec{u}}^{0}, \vec{v}_{1:T}^{0})\Bigg] \leq   -\frac{1}{16}\sum_{k=0}^{K}\E\left\|\nabla_{\vec{u}} f\left(\bar{\vec{u}}^{k}, \vec{v}_{1:T}^{k}\right)\right\|^{2} & \nonumber
        \\
        &  +  16A \cdot\frac{12+T}{T}\cdot \frac{\tau L^{2}}{p} (K+1)\eta^{2}   + \frac{(K+1) \eta L}{T} \left(\frac{4L}{J} + 1\right) \sigma^{2} \nonumber
        \\
        & + \frac{16(K+1) \eta^{2}L^{2}}{T} G^{2}.
    \end{flalign}
    Using Assumption~\ref{assum:bounded_f_bis}, it follows that
    \begin{flalign}
         \frac{1}{16}\sum_{k=0}^{K}&\E\left\|\nabla_{\vec{u}} f\left(\bar{\vec{u}}^{k}, \vec{v}_{1:T}^{k}\right)\right\|^{2} \leq  \frac{f(\bar{\vec{u}}^{0}, \vec{v}_{1:T}^{0}) -f^{*}}{\eta} & \nonumber
        \\
        &  + 16A \cdot\frac{12+T}{T}\cdot \frac{\tau L^{2}}{p} (K+1)\eta^{2}   + \frac{(K+1) \eta L}{T} \left(\frac{4L}{J} + 1\right) \sigma^{2} + \frac{16(K+1) \eta^{2}L^{2}}{T} G^{2}.
    \end{flalign}
    We divide by $K+1$ and we have
     \begin{flalign}
         \frac{1}{16(K+1)}\sum_{k=0}^{K}&\E\left\|\nabla_{\vec{u}} f\left(\bar{\vec{u}}^{k}, \vec{v}_{1:T}^{k}\right)\right\|^{2} \leq  \frac{f(\bar{\vec{u}}^{0}, \vec{v}_{1:T}^{0}) -f^{*}}{\eta(K+1)} & \nonumber
        \\
        &  +  16A \cdot\frac{12+T}{T}\cdot \frac{\tau L^{2}}{p} \eta^{2}   + \frac{ \eta L}{T} \left(\frac{4L}{J} + 1\right) \sigma^{2} + \frac{16 \eta^{2}L^{2}}{T} G^{2}.
    \end{flalign}
    The final result follows from \cite[Lemma~17]{Koloskova2020AUT}.
    \paragraph{Proof of Eq.~\ref{eq:consensus_convergence}.} We multiply Eq.~\eqref{eq:consensus_lemma_non_convex_final}~(Lemma~\ref{lem:consensus_lemma_non_convex_final}) by $\frac{1}{K+1}$, and we have 
    \begin{equation}
        \frac{1}{K+1} \sum_{k=0}^{K} \mathbb{E}\left\|\vec{U}^{k} - \bar{\vec{U}}^{k}\right\|_{F}^{2} \leq \frac{1}{16(K+1)} \sum_{k=0}^{K} \E\left\|\nabla_{\vec{u}} f \left(\bar{\vec{u}}^{k}, \vec{v}_{1:T}^{k}\right)\right\|_{F}^{2} + \frac{64A \tau}{p(K+1)} K\eta^{2},
    \end{equation}
    since $\eta \leq \mathcal{O}\left(\frac{1}{\sqrt{K}}\right)$, using Eq.~\eqref{eq:convergence_u_decentralized}, it follows that
    \begin{equation}
        \frac{1}{K} \sum_{k=1}^{K} \mathbb{E}\left\|\vec{U}^{k} - \bar{\vec{U}}^{k}\right\|_{F}^{2} \leq \mathcal{O}\left(\frac{1}{\sqrt{K}}\right).
    \end{equation}
    Thus,
    \begin{equation}
        \frac{1}{K} \sum_{k=1}^{K}\sum_{t=1}^{T} \E\left\|\vec{u}_{t}^{k} - \bar{\vec{u}}^{k}\right\|_{F}^{2} \leq \mathcal{O} \left(\frac{1}{\sqrt{K}}\right).
    \end{equation}
    \paragraph{Proof of Eq.~\ref{eq:convergence_v_decentralized}.} Using the result of Lemma~\ref{lem:descent_lem_decentralized} we have
    \begin{flalign}
         \frac{1}{T}\sum_{t=1}^{T} \E \left[d_{\mathcal{V}}\left(\vec{v}_{t}^{k}, \vec{v}_{t}^{k-1}\right)\right] & \leq  \mathbb{E}\Bigg[f (\bar{\vec{u}}^{k-1}  , \vec{v}_{1:T}^{k-1}) - f(\bar{\vec{u}}^{k}, \vec{v}_{1:T}^{k})\Bigg]   & \nonumber
        \\
        &  \quad + \frac{\left(12 + T\right)\eta_{k-1}L^{2}}{4T} \cdot \sum_{t=1}^{T} \E\left\|\vec{u}_{t}^{k-1} - \bar{\vec{u}}^{k-1}\right\|^{2}  \nonumber
        \\
        &  \quad + \frac{\eta_{k-1}^{2}L}{T} \left(4 \sum_{j=0}^{J-1}\frac{L \cdot \eta_{k-1,j}^{2}}{\eta_{k-1}} + 1\right) \sigma^{2} + \frac{16\eta_{k-1}^{3}L^{2}}{T} G^{2}.
    \end{flalign}
    The final result follows from the fact that $\eta = \mathcal{O}\left(\frac{1}{\sqrt{K}}\right)$ and Eq.~\eqref{eq:consensus_convergence}.
\end{proof}

\subsubsection{Proof of Theorem~\ref{thm:decentralized_convergence}}

We state the formal version of Theorem~\ref{thm:decentralized_convergence}, for which only an informal version was given in the main text.

\begin{repthm}{thm:decentralized_convergence}
    Under Assumptions~\ref{assum:mixture}--\ref{assum:expected_consensus_rate}, when clients use SGD as local solver with learning rate $\eta =\frac{a_{0}}{\sqrt{K}}$, \DEM's iterates  satisfy the following inequalities after a large enough number of communication rounds $K$:
    \begin{equation}
        \frac{1}{K}\sum_{k=1}^K \mathbb{E}\left\|\nabla_{\Theta}f\left(\bar{\Theta}^{k}, \Pi^{k}\right)\right\|_{F}^{2} \leq \mathcal{O} \left(\frac{1}{\sqrt{K}}\right), \quad
        \frac{1}{K}\sum_{k=1}^K\sum_{t=1}^{T}\frac{n_{t}}{n}\mathcal{KL}\left(\pi^{k}_{t}, \pi_{t}^{k-1}\right)\leq \mathcal{O}\left(\frac{1}{K}\right),
    \end{equation}
    where $\bar \Theta^k = \left[\Theta_{1}^k, \dots \Theta_{T}^{k}\right] \cdot  \frac{\mathbf{1}\mathbf{1}^{\intercal}}{T}$.  
    Moreover, individual estimates $\left(\Theta_{t}^{k}\right)_{1\leq t \leq T}$ converge to consensus, i.e., to $\bar \Theta^k$:
    \begin{equation*}
        \min_{k\in[K]}\mathbb{E}\sum_{t=1}^{T}\left\|\Theta_{t}^{k} - \bar{\Theta}^{k}\right\|_{F}^{2} \leq \mathcal{O} \left(\frac{1}{\sqrt{K}}\right).
    \end{equation*}
\end{repthm}

\begin{proof}
    We prove this result as a particular case of Theorem~\ref{thm:decentralized_convergence_bis}. To this purpose, we consider that $\mathcal{V} \triangleq \Delta^{M}$, $\vec{u}=\Theta \in \mathbb{R}^{d M}$, $\vec{v}_{t} = \pi_{t}$, and $\omega_t = n_t/n$ for $t\in[T]$. For $k>0$, we define $g^{k}_{t}$ as follow,
    \begin{flalign}
        g^{k}_{t}\Big (\Theta, \pi_{t}\Big) =   \frac{1}{n_{t}}\sum_{i=1}^{n_{t}}\sum_{m=1}^{M}q_{t}^{k}\left(z^{(i)}_{t}=m\right) \cdot &\bigg( l\left(h_{\theta_m}(\vec{x}_{t}^{(i)}), y_{t}^{(i)}\right)  -  \log p_{m}(\vec{x}_{t}^{(i)}) - \log \pi_{t} \nonumber &
        \\
        & \qquad \qquad +  \log q_{t}^{k}\left(z_{t}^{(i)}=m\right) - c \bigg), 
    \end{flalign}
    where $c$ is the same constant appearing in Assumption~\ref{assum:logloss}, Eq.~\eqref{eq:log_loss}. With this definition, it is easy to check that the federated surrogate optimization algorithm (Alg.~\ref{alg:decentralized_surrogate_opt}) reduces to \DEM{} (Alg.~\ref{alg:d_em}). Theorem~\ref{thm:decentralized_convergence} then follows immediately  from Theorem~\ref{thm:decentralized_convergence_bis}, once we verify that $\left(g_{t}^{k}\right)_{1\leq t\leq T}$ satisfy the assumptions of Theorem~\ref{thm:decentralized_convergence_bis}.
    
    Assumption~\ref{assum:bounded_f_bis}, Assumption~\ref{assum:finite_variance_bis}, and Assumption~\ref{assum:bounded_dissimilarity_bis} follow directly from  Assumption~\ref{assum:bounded_f}, Assumption~\ref{assum:finitie_variance}, and Assumption~\ref{assum:bounded_dissimilarity}, respectively.
    Lemma~\ref{lem:g_is_smooth} shows that for $k>0$, $g^{k}$ is smooth w.r.t. $\Theta$ and then Assumption~\ref{assum:smoothness_bis} is satisfied.
    Finally, Lemmas~\ref{lem:em_gap_is_kl_divergence}--\ref{lem:compute_kl_pi_decentralized} show that for $t\in[T]$ $g_{t}^{k}$ is a partial first-order surrogate of $f_{t}$ near $\left\{\Theta_{t}^{k-1}, \pi_{t}\right\}$ with $d_{\mathcal{V}}(\cdot, \cdot) = \mathcal{KL}(\cdot\|\cdot)$.
\end{proof}

\subsection{Supporting Lemmas}
\label{proof:support}
\begin{lem}
    \label{lem:bound_sums_of_rates}
    Consider $J\geq2$ and  positive real numbers $\eta_{j},~j=0, \dots, J-1$, then:
    \begin{align*}
        \frac{1}{\sum_{j=0}^{J-1}{\eta_{j}}} \cdot \sum_{j=0}^{J-1} \left\{ \eta_{j} \cdot \sum_{l=0}^{j-1}\eta_{l} \right\}&\leq \sum_{j=0}^{J-2}{\eta_{j}},
        \\
        \frac{1}{\sum_{j=0}^{J-1}{\eta_{j}}} \cdot \sum_{j=0}^{J-1} \left\{ \eta_{j} \cdot \sum_{l=0}^{j-1}\eta^{2}_{l} \right\} &\leq \sum_{j=0}^{J-2}{\eta_{j}}^{2},
        \\
        \frac{1}{\sum_{j=0}^{J-1}{\eta_{j}}} \cdot \sum_{j=0}^{J-1}\left\{\eta_{j}\cdot \left( \sum_{l=0}^{j-1}\eta_{l}\right)^{2}\right\}& \leq \sum_{j=0}^{J-1}{\eta_{j}} \cdot \sum_{j=0}^{J-2}{\eta_{j}}.
    \end{align*}
\end{lem}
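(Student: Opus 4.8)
The plan is to reduce all three inequalities to a single elementary observation: since every $\eta_j$ is strictly positive, each inner partial sum is monotone non-decreasing in its upper summation index, and that index never exceeds $J-2$. Thus for every $j \in \{0,\dots,J-1\}$ one has $\sum_{l=0}^{j-1}\eta_l \le \sum_{l=0}^{J-2}\eta_l$, and similarly $\sum_{l=0}^{j-1}\eta_l^2 \le \sum_{l=0}^{J-2}\eta_l^2$ and $\bigl(\sum_{l=0}^{j-1}\eta_l\bigr)^2 \le \bigl(\sum_{l=0}^{J-2}\eta_l\bigr)^2$. I would state this monotonicity up front and then handle the three lines uniformly.

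For the first inequality, I would substitute the bound $\sum_{l=0}^{j-1}\eta_l \le \sum_{l=0}^{J-2}\eta_l$ into the outer sum and pull the (now $j$-independent) factor out, giving $\sum_{j=0}^{J-1}\eta_j\sum_{l=0}^{j-1}\eta_l \le \bigl(\sum_{l=0}^{J-2}\eta_l\bigr)\sum_{j=0}^{J-1}\eta_j$. Dividing by $\sum_{j=0}^{J-1}\eta_j$ cancels the trailing factor and leaves exactly $\sum_{j=0}^{J-2}\eta_j$, the claimed right-hand side. The second line is verbatim the same argument, with $\eta_l^2$ replacing $\eta_l$ in the inner sum.

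For the third line, the identical factoring step yields $\frac{1}{\sum_{j=0}^{J-1}\eta_j}\sum_{j=0}^{J-1}\eta_j\bigl(\sum_{l=0}^{j-1}\eta_l\bigr)^2 \le \bigl(\sum_{l=0}^{J-2}\eta_l\bigr)^2$. To match the stated right-hand side I would then weaken one factor using positivity once more, namely $\sum_{l=0}^{J-2}\eta_l \le \sum_{l=0}^{J-1}\eta_l$, so that $\bigl(\sum_{l=0}^{J-2}\eta_l\bigr)^2 \le \bigl(\sum_{l=0}^{J-1}\eta_l\bigr)\bigl(\sum_{l=0}^{J-2}\eta_l\bigr)$, which is precisely the claimed bound.

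There is no genuine obstacle here; the lemma is purely arithmetic and its only hypothesis ($\eta_j > 0$) is exactly what drives the monotonicity of the partial sums. The one point worth flagging is that the target bound in the third line is intentionally looser than the sharper $\bigl(\sum_{l=0}^{J-2}\eta_l\bigr)^2$ obtained directly — the extra factor $\sum_{j=0}^{J-1}\eta_j/\sum_{j=0}^{J-2}\eta_j \ge 1$ is harmless — so I would record the sharper bound and then simply relax it to the stated form.
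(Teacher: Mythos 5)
Your proposal is correct and follows essentially the same route as the paper: bound each inner partial sum by its value at the maximal index $J-2$ (valid since the $\eta_j$ are positive), pull the resulting $j$-independent factor out of the outer sum to cancel the normalization, and for the third line relax the sharper bound $\bigl(\sum_{l=0}^{J-2}\eta_l\bigr)^2$ via $\sum_{l=0}^{J-2}\eta_l \leq \sum_{l=0}^{J-1}\eta_l$. Nothing further is needed.
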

\begin{proof}
    For the first inequality,
    \begin{flalign}
         \frac{1}{\sum_{j=0}^{J-1}{\eta_{j}}} \cdot \sum_{j=0}^{J-1} \left\{ \eta_{j} \cdot \sum_{l=0}^{j-1}\eta_{l} \right\} &\leq \frac{1}{\sum_{j=0}^{J-1}{\eta_{j}}} \cdot \sum_{j=0}^{J-1} \left\{ \eta_{j} \cdot \sum_{l=0}^{J-2}\eta_{l} \right\} =  \sum_{l=0}^{J-2}\eta_{l}.
    \end{flalign}
    For the second inequality
    \begin{flalign}
         \frac{1}{\sum_{j=0}^{J-1}{\eta_{j}}} \cdot \sum_{j=0}^{J-1} \left\{ \eta_{j} \cdot \sum_{l=0}^{j-1}\eta_{l}^{2} \right\} &\leq \frac{1}{\sum_{j=0}^{J-1}{\eta_{j}}} \cdot \sum_{j=0}^{J-1} \left\{ \eta_{j} \cdot \sum_{l=0}^{J-2}\eta_{l}^{2} \right\} =  \sum_{l=0}^{J-2}\eta_{l}^{2}.
    \end{flalign}
    For the third inequality,
    \begin{flalign}
        \frac{1}{\sum_{j=0}^{J-1}{\eta_{j}}} \cdot \sum_{j=0}^{J-1}\left\{\eta_{j}\cdot \left( \sum_{l=0}^{j-1}\eta_{l}\right)^{2}\right\}& \leq \frac{1}{\sum_{j=0}^{J-1}{\eta_{j}}} \cdot \sum_{j=0}^{J-1}\left\{\eta_{j}\cdot \left( \sum_{l=0}^{J-2}\eta_{l}\right)^{2}\right\} &
        \\
        &\leq  \left(\sum_{j=0}^{J-2}{\eta_{j}}\right)^{2}
        \\
        &\leq \sum_{j=0}^{J-1}{\eta_{j}} \cdot \sum_{j=0}^{J-2}{\eta_{j}}.
    \end{flalign}
\end{proof}

\begin{lem}
    \label{lem:f_2l_smooth}
    Suppose that $g$ is a partial first-order surrogate of $f$, and that $g$ is $L$-smooth, where $L$ is the constant appearing in Definition~\ref{def:partial_first_order_surrogate}, then  $f$ is $2L$-smooth. 
\end{lem}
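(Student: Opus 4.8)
The plan is to exploit the decomposition $f = g - r$, where $r \triangleq g - f$ is exactly the approximation error appearing in the definition of a partial first-order surrogate (Definition~\ref{def:partial_first_order_surrogate}). Recall that $L$-smoothness of a function means that its gradient (here, taken with respect to $\vec u$) is $L$-Lipschitz. Since we are given that $g$ is $L$-smooth and the surrogate definition guarantees that $r$ is $L$-smooth in $\vec u$, the claim should follow from the elementary fact that a difference of two $L$-smooth functions is $2L$-smooth.

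First I would record the two smoothness facts explicitly: by hypothesis $\nabla_{\vec u} g$ is $L$-Lipschitz in $\vec u$, and by the second condition of Definition~\ref{def:partial_first_order_surrogate} the error $r$ is differentiable and $L$-smooth with respect to $\vec u$, so $\nabla_{\vec u} r$ is likewise $L$-Lipschitz. Then, writing $\nabla_{\vec u} f = \nabla_{\vec u} g - \nabla_{\vec u} r$, for any $\vec u_1, \vec u_2$ and fixed $\vec v$ the triangle inequality yields
\begin{equation}
\left\|\nabla_{\vec u} f(\vec u_1, \vec v) - \nabla_{\vec u} f(\vec u_2, \vec v)\right\| \leq \left\|\nabla_{\vec u} g(\vec u_1, \vec v) - \nabla_{\vec u} g(\vec u_2, \vec v)\right\| + \left\|\nabla_{\vec u} r(\vec u_1, \vec v) - \nabla_{\vec u} r(\vec u_2, \vec v)\right\|.
\end{equation}
Bounding each term on the right-hand side by $L\|\vec u_1 - \vec u_2\|$ gives a Lipschitz constant of $2L$ for $\nabla_{\vec u} f$, which is precisely the statement that $f$ is $2L$-smooth.

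There is essentially no obstacle here beyond bookkeeping: the result is a one-line consequence of subadditivity of Lipschitz constants under subtraction. The only point requiring a little care is that both smoothness statements must refer to the same variable $\vec u$ at a fixed $\vec v$ — the surrogate definition only controls $r$ with respect to $\vec u$, so the conclusion is correspondingly a smoothness statement in $\vec u$, which is exactly the form in which it is invoked in the convergence analysis (e.g., the second-order term in the proof of Lemma~\ref{lem:descent_lem_decentralized}).
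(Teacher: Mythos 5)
Your proof is correct and follows exactly the paper's argument: write $f = g - r$ with both $g$ and $r$ being $L$-smooth in $\vec u$ (the latter by condition 2 of Definition~\ref{def:partial_first_order_surrogate}), and conclude that $f$ is $2L$-smooth as a difference of two $L$-smooth functions. Your version merely spells out the triangle-inequality step that the paper leaves implicit.
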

\begin{proof}
    The difference between $f$ and $g$ is $L$-smooth, and $g$ is $L$-smooth, thus $f$ is $2L$-smooth as the sum of two $L$-smooth functions.
\end{proof}

\begin{lem}
    \label{lem:bounded_dissimilarity_for_f}
    Consider $f = \sum_{t=1}^{T}\omega_{t} \cdot f_{t}$,~for weights $\omega \in \Delta^{T}$. Suppose that for all $\left(\vec{u}, \vec{v}\right) \in \mathbb{R}^{d_{u}} \times \mathcal{V}$, and $t\in[T]$, $f_{t}$ admits a partial first-order surrogate 
    $g^{\left\{\vec{u}, \vec{v}\right\}}_{t}$ 
    near $\left\{\vec{u}, \vec{v}\right\}$, and  that 
    $g^{{\left\{\vec{u}, \vec{v}\right\}}} =\sum_{t=1}^{T}\omega_{t}\cdot g^{{\left\{\vec{u}, \vec{v}\right\}}}_{t}$ verifies Assumption~\ref{assum:bounded_dissimilarity_bis} for $t\in[T]$. Then $f$ also verifies Assumption~\ref{assum:bounded_dissimilarity_bis}.
\end{lem}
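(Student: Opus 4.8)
The plan is to exploit the defining gradient-matching property of a partial first-order surrogate (condition~2 of Definition~\ref{def:partial_first_order_surrogate}): at its center point, the gradient of the surrogate coincides with the gradient of the function it majorizes. Concretely, I would fix an arbitrary point $(\vec{u}, \vec{v}) \in \mathbb{R}^{d_{u}} \times \mathcal{V}$ and consider the family of surrogates $g^{\{\vec{u}, \vec{v}\}}_{t}$ centered at $(\vec{u}, \vec{v})$, one per client $t \in [T]$, together with their aggregate $g^{\{\vec{u}, \vec{v}\}} = \sum_{t=1}^{T}\omega_{t}\cdot g^{\{\vec{u}, \vec{v}\}}_{t}$.

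First I would invoke the second property of Definition~\ref{def:partial_first_order_surrogate}: writing $r_{t} \triangleq g^{\{\vec{u}, \vec{v}\}}_{t} - f_{t}$, we have $\nabla_{\vec{u}} r_{t}(\vec{u}, \vec{v}) = 0$, which yields $\nabla_{\vec{u}} g^{\{\vec{u}, \vec{v}\}}_{t}(\vec{u}, \vec{v}) = \nabla_{\vec{u}} f_{t}(\vec{u}, \vec{v})$ for every $t$. Summing with the weights $\omega_{t}$ and using linearity of the gradient, I also obtain $\sum_{t=1}^{T} \omega_{t} \nabla_{\vec{u}} g^{\{\vec{u}, \vec{v}\}}_{t}(\vec{u}, \vec{v}) = \nabla_{\vec{u}} f(\vec{u}, \vec{v})$, since $f = \sum_{t=1}^{T}\omega_{t} f_{t}$.

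Next I would apply the bounded-dissimilarity hypothesis (Assumption~\ref{assum:bounded_dissimilarity_bis}) to the surrogates $g^{\{\vec{u}, \vec{v}\}}_{t}$ evaluated at their own center point $(\vec{u}, \vec{v})$. Substituting the two gradient identities from the previous step into both sides of that inequality turns it, verbatim, into
\[
\sum_{t=1}^{T}\omega_{t} \cdot \left\|\nabla_{\vec{u}}f_{t}(\vec{u}, \vec{v})\right\|^{2} \leq G^{2} + \beta^{2} \left\| \sum_{t=1}^{T}\omega_{t} \cdot \nabla_{\vec{u}}f_{t}(\vec{u}, \vec{v}) \right\|^{2} = G^{2} + \beta^{2} \left\|\nabla_{\vec{u}} f(\vec{u}, \vec{v})\right\|^{2},
\]
which is exactly Assumption~\ref{assum:bounded_dissimilarity_bis} written for $f$, with the same constants $\beta$ and $G$. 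Since $(\vec{u}, \vec{v})$ was arbitrary, the bound holds everywhere and the claim follows.

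This argument is essentially bookkeeping, so I do not anticipate a genuine technical obstacle; the only point requiring care is the logical reading of the hypothesis. The dissimilarity bound is postulated for the aggregated surrogate of each center point, yet I use it solely at that same center point, where the surrogate gradient collapses onto $\nabla_{\vec{u}} f_{t}$. One must therefore interpret the hypothesis as ``for every center point, the associated aggregated surrogate satisfies the bound,'' so that ranging over all $(\vec{u}, \vec{v})$ is legitimate and the resulting inequality for $f$ is global rather than valid only at an isolated location.
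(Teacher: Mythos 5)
Your proposal is correct and follows essentially the same route as the paper's own proof: both exploit the gradient-matching property of partial first-order surrogates at their center point to replace $\nabla_{\vec u}g^{\{\vec u,\vec v\}}_{t}(\vec u,\vec v)$ by $\nabla_{\vec u}f_{t}(\vec u,\vec v)$ on both sides of Assumption~\ref{assum:bounded_dissimilarity_bis} written for the surrogates, then let $(\vec u,\vec v)$ range over all points. Your closing caveat about reading the hypothesis as holding for the aggregated surrogate of every center point is also the interpretation the paper adopts (see the remark following the lemma).
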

\begin{proof}
    Consider arbitrary $
    \vec{u}, \vec{v} \in \mathbb{R}^{d_{u}} \times \mathcal{V}$, and for $t\in[T]$, consider $g^{\left\{\vec{u}, \vec{v}\right\}}$ to be a partial first-order surrogate of $f_{t}$ near $\left\{\vec{u}, \vec{v}\right\}$. We write Assumption~\ref{assum:bounded_dissimilarity_bis} for $g^{\left\{\vec{u}, \vec{v}\right\}}$,
    \begin{flalign}
        \sum_{t=1}^{T}\omega_{t} \cdot \Big\|\nabla_{\vec u}g^{\left\{u, v\right\}}_{t}(\vec u, \vec v)\Big\|^{2} \leq G^{2} + \beta^{2} \Big\| \sum_{t=1}^{T}\omega_{t} \cdot \nabla_{\vec u}g^{\left\{u, v\right\}}_{t}(\vec u, \vec v) \Big\|^{2}.
    \end{flalign}
    Since $g_{t}^{\left\{\vec{u}, \vec{v}\right\}}$ is a partial first-order surrogate of $f_{t}$ near $\left\{u, v\right\}$, it follows that 
    \begin{flalign}
        \sum_{t=1}^{T}\omega_{t} \cdot \Big\|\nabla_{\vec u}f_{t}(\vec u, \vec v)\Big\|^{2} \leq G^{2} + \beta^{2} \Big\| \sum_{t=1}^{T}\omega_{t} \cdot \nabla_{\vec u}f_{t}(\vec u, \vec v) \Big\|^{2}.
    \end{flalign}
\end{proof}
\begin{remark}
    Note that the assumption of Lemma~\ref{lem:bounded_dissimilarity_for_f} is implicitly verified in  Alg.~\ref{alg:fed_surrogate_opt} and Alg.~\ref{alg:decentralized_surrogate_opt}, where we assume that every client $t\in\mathcal{T}$ canfunction compute a partial first-order surrogate of its local objective $f_{t}$ near any iterate $\left(\vec{u}, \vec{v}\right) \in \mathbb{R}^{d_{u}} \times \mathcal{V}$.
\end{remark}

\begin{lem}
    \label{lem:bound_g_theta_bar}
    For $k>0$, the iterates of Alg.~\ref{alg:decentralized_surrogate_opt}, verify the following inequalities:
    \begin{equation*}
        g^{k}\left(\bar{\vec{u}}^{k-1}, \vec{v}_{1:T}^{k-1}\right) \leq f\left(\bar{\vec{u}}^{k-1}, \vec{v}_{1:T}^{k-1}\right)  + \frac{L}{2}\sum_{t=1}^{T}\omega_{t}\left\|\bar{\vec{u}}^{k-1} - \vec{u}_{t}^{k-1}\right\|^{2},
    \end{equation*}
    \begin{equation*}
        \left\|\nabla_{\vec{u}}f\left(\bar{\vec{u}}^{k-1}, \vec{v}_{1:T}^{k-1}\right)\right\|^{2} \leq 2\left\|\nabla_{\vec{u}}g^{k}\left(\bar{\vec{u}}^{k-1}, \vec{v}_{1:T}^{k-1}\right)\right\|^{2}  + 2L^{2}\sum_{t=1}^{T}\omega_{t}\left\|\bar{\vec{u}}^{k-1} + \vec{u}_{t}^{k-1}\right\|^{2},
    \end{equation*}
    and,
    \begin{equation*}
        \left\|\nabla_{\vec{u}}g^{k}\left(\bar{\vec{u}}^{k-1}, \vec{v}_{1:T}^{k-1}\right)\right\|^{2} \leq 2\left\|\nabla_{\vec{u}}f\left(\bar{\vec{u}}^{k-1}, \vec{v}_{1:T}^{k-1}\right)\right\|^{2}  + 2L^{2}\sum_{t=1}^{T}\omega_{t}\left\|\bar{\vec{u}}^{k-1} - \vec{u}_{t}^{k-1}\right\|^{2},
    \end{equation*}
\end{lem}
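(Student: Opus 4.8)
The plan is to exploit directly the defining properties of a partial first-order surrogate (Definition~\ref{def:partial_first_order_surrogate}) applied to each local surrogate $g_t^k$. Recall that in Alg.~\ref{alg:decentralized_surrogate_opt} each $g_t^k$ is a partial first-order surrogate of $f_t$ near the \emph{local} iterate $\{\vec u_t^{k-1}, \vec v_t^{k-1}\}$, so the residual $r_t^k \triangleq g_t^k - f_t$ is nonnegative, $L$-smooth with respect to $\vec u$, and satisfies $r_t^k(\vec u_t^{k-1}, \vec v_t^{k-1}) = 0$ and $\nabla_{\vec u} r_t^k(\vec u_t^{k-1}, \vec v_t^{k-1}) = 0$. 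All three claimed inequalities then follow from controlling the gap between the point $\bar{\vec u}^{k-1}$ at which we evaluate and the reference point $\vec u_t^{k-1}$ around which the surrogate is built, which is exactly the source of the consensus-distance terms $\|\bar{\vec u}^{k-1} - \vec u_t^{k-1}\|^2$.

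For the first inequality I would apply the standard quadratic upper bound (descent lemma) to the $L$-smooth function $\vec u \mapsto r_t^k(\vec u, \vec v_t^{k-1})$ expanded around $\vec u_t^{k-1}$. Since both its value and its gradient vanish there, this yields $r_t^k(\bar{\vec u}^{k-1}, \vec v_t^{k-1}) \le \tfrac{L}{2}\|\bar{\vec u}^{k-1} - \vec u_t^{k-1}\|^2$; substituting $r_t^k = g_t^k - f_t$, multiplying by $\omega_t$, summing over $t$, and invoking $g^k = \sum_t \omega_t g_t^k$ (Eq.~\eqref{eq:app_gk_def}) together with the analogous decomposition of $f$ gives the claim.

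For the second and third inequalities (the stated $+$ sign in the second should read $-$, i.e.\ the consensus gap $\|\bar{\vec u}^{k-1} - \vec u_t^{k-1}\|^2$), I would start from the exact identity $\nabla_{\vec u} f(\bar{\vec u}^{k-1}, \vec v_{1:T}^{k-1}) = \nabla_{\vec u} g^k(\bar{\vec u}^{k-1}, \vec v_{1:T}^{k-1}) - \sum_t \omega_t \nabla_{\vec u} r_t^k(\bar{\vec u}^{k-1}, \vec v_t^{k-1})$. The $L$-smoothness of $r_t^k$ combined with its vanishing gradient at $\vec u_t^{k-1}$ bounds each residual gradient by $\|\nabla_{\vec u} r_t^k(\bar{\vec u}^{k-1}, \vec v_t^{k-1})\| \le L\|\bar{\vec u}^{k-1} - \vec u_t^{k-1}\|$. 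Applying $\|a - b\|^2 \le 2\|a\|^2 + 2\|b\|^2$ and then Jensen's inequality to $\|\sum_t \omega_t \nabla_{\vec u} r_t^k\|^2 \le \sum_t \omega_t \|\nabla_{\vec u} r_t^k\|^2$ produces the second inequality; the third is obtained symmetrically by isolating $\nabla_{\vec u} g^k$ instead of $\nabla_{\vec u} f$.

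No step presents a genuine obstacle: the argument is a routine combination of the descent lemma, the surrogate's vanishing residual value and gradient at the local point, Young's inequality, and Jensen's inequality. The only point demanding care is keeping track of which point the surrogate is centered at---the local $\vec u_t^{k-1}$ rather than the consensus average $\bar{\vec u}^{k-1}$---since this mismatch is precisely what generates the $\|\bar{\vec u}^{k-1} - \vec u_t^{k-1}\|^2$ terms that subsequently feed into the consensus recursion of Lemma~\ref{lem:consensus_recursion_part1}.
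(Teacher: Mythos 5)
Your proposal is correct and follows essentially the same route as the paper's own proof: the descent lemma applied to the residual $r_t^k = g_t^k - f_t$ (whose value and gradient vanish at the local iterate $\{\vec{u}_t^{k-1}, \vec{v}_t^{k-1}\}$) yields the first inequality, and the gradient identity relating $\nabla_{\vec{u}} f$ and $\nabla_{\vec{u}} g^k$ combined with the Lipschitz bound $\|\nabla_{\vec{u}} r_t^k(\bar{\vec{u}}^{k-1}, \vec{v}_t^{k-1})\| \le L\|\bar{\vec{u}}^{k-1} - \vec{u}_t^{k-1}\|$, Young's inequality, and Jensen's inequality yields the other two. You also correctly identified that the ``$+$'' inside the norm of the second stated inequality is a typo for ``$-$'': the paper's own derivation indeed produces the consensus gap $\|\bar{\vec{u}}^{k-1} - \vec{u}_t^{k-1}\|^2$.
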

\begin{proof}
    For $k>0$ and $t\in[T]$, we have 
    \begin{flalign}
        g_{t}^{k}\Big(&\bar{\vec{u}}^{k-1} , \vec{v}_{t}^{k-1}\Big) = & \nonumber
        \\
        & g_{t}^{k}\left(\bar{\vec{u}}^{k-1}, \vec{v}_{t}^{k-1}\right) + f_{t}\left(\bar{\vec{u}}^{k-1}, \vec{v}_{t}^{k-1}\right) - f_{t}\left(\bar{\vec{u}}^{k-1}, \vec{v}_{t}^{k-1}\right)
        \\
        &= f_{t}\left(\bar{\vec{u}}^{k-1}, \vec{v}_{t}^{k-1}\right) + r_{t}^{k}\left(\bar{\vec{u}}^{k-1}, \vec{v}_{t}^{k-1}\right)
        \\
        &= f_{t}\left(\bar{\vec{u}}^{k-1}, \vec{v}_{t}^{k-1}\right) + r_{t}^{k}\left(\bar{\vec{u}}^{k-1}, \vec{v}_{t}^{k-1}\right) - r_{t}^{k}\left(\vec{u}_{t}^{k-1}, \vec{v}_{t}^{k-1}\right) + r_{t}^{k}\left(\vec{u}_{t}^{k-1}, \vec{v}_{t}^{k-1}\right).
    \end{flalign}
    Since $g_{t}^{k}\left(\vec{u}^{k}_{t}, \vec{v}_{t}^{k-1}\right) = f_{t}\left(\vec{u}^{k}_{t}, \vec{v}_{t}^{k-1}\right)$ (Definition~\ref{def:partial_first_order_surrogate}), it follows that 
    \begin{equation}
        \label{eq:g_f_r_relation}
        g_{t}^{k}\left(\bar{\vec{u}}^{k-1}, \vec{v}_{t}^{k-1}\right)  = f_{t}\left(\bar{\vec{u}}^{k-1}, \vec{v}_{t}^{k-1}\right) + r_{t}^{k}\left(\bar{\vec{u}}^{k-1}, \vec{v}_{t}^{k-1}\right) - r_{t}^{k}\left(\vec{u}_{t}^{k-1}, \vec{v}_{t}^{k-1}\right).
    \end{equation}
    Because $r^{k}_{t}$ is $L$-smooth in $\vec{u}$ (Definition~\ref{def:partial_first_order_surrogate}), we have
    \begin{flalign}
        r_{t}^{k}\left(\bar{\vec{u}}^{k-1}, \vec{v}_{t}^{k-1}\right) - & r_{t}^{k}\left(\vec{u}_{t}^{k-1}, \vec{v}_{t}^{k-1}\right) \leq \biggl<\nabla_{\vec{u}}r_{t}^{k}\left(\vec{u}_{t}^{k-1},\vec{v}^{k-1}_{t}\right), \bar{\vec{u}}^{k-1} - \vec{u}_{t}^{k-1}\biggr> & \nonumber
        \\
        & \qquad + \frac{L}{2}\left\|\bar{\vec{u}}^{k-1} - \vec{u}_{t}^{k-1}\right\|^{2}.
    \end{flalign}
    Since $g_{t}^{k}$ is a partial first order surrogate of We have $\nabla_{\vec{u}}r_{t}^{k}\left(\vec{u}_{t}^{k-1},\vec{v}^{k-1}_{t}\right) = 0$, thus
    \begin{equation}
        g_{t}^{k}\left(\bar{\vec{u}}^{k-1}, \vec{v}_{t}^{k-1}\right) \leq f_{t}\left(\bar{\vec{u}}^{k-1}, \vec{v}_{t}^{k-1}\right)  + \frac{L}{2}\left\|\bar{\vec{u}}^{k-1} - \vec{u}_{t}^{k-1}\right\|^{2}.
    \end{equation}
    Multiplying by $\omega_{t}$ and summing for $t\in[T]$, we have
    \begin{equation}
        g^{k}\left(\bar{\vec{u}}^{k-1}, \vec{v}_{1:T}^{k-1}\right) \leq f\left(\bar{\vec{u}}^{k-1}, \vec{v}_{1:T}^{k-1}\right)  + \frac{L}{2}\sum_{t=1}^{T}\omega_{t}\left\|\bar{\vec{u}}^{k-1} - \vec{u}_{t}^{k-1}\right\|^{2},
    \end{equation}
    and the first inequality is proved.
    
    Writing the gradient of Eq.~\eqref{eq:g_f_r_relation}, we have
    \begin{equation}
        \nabla_{\vec{u}}g_{t}^{k}\left(\bar{\vec{u}}^{k-1}, \vec{v}_{t}^{k-1}\right) = \nabla_{\vec{u}}f_{t}\left(\bar{\vec{u}}^{k-1}, \vec{v}_{t}^{k-1}\right) + \nabla_{\vec{u}}r_{t}^{k}\left(\bar{\vec{u}}^{k-1}, \vec{v}_{t}^{k-1}\right) - \nabla_{\vec{u}}r_{t}^{k}\left(\vec{u}_{t}^{k-1}, \vec{v}_{t}^{k-1}\right).
    \end{equation}
    Multiplying by $\omega_{t}$ and summing for $t\in[T]$, we have
    \begin{flalign}
        \nabla_{\vec{u}}g^{k}\left(\bar{\vec{u}}^{k-1}, \vec{v}_{1:T}^{k-1}\right) & = \nabla_{\vec{u}}f\left(\bar{\vec{u}}^{k-1}, \vec{v}_{1:T}^{k-1}\right) + & \nonumber\\
        & \qquad +\sum_{t=1}^{T}\omega_{t}\left[\nabla_{\vec{u}}r_{t}^{k}\left(\bar{\vec{u}}^{k-1}, \vec{v}_{t}^{k-1}\right) - \nabla_{\vec{u}}r_{t}^{k}\left(\vec{u}_{t}^{k-1}, \vec{v}_{t}^{k-1}\right)\right].
    \end{flalign}
    Thus,
    \begin{flalign}
        \Bigg\|& \nabla_{\vec{u}}  g^{k}\left(\bar{\vec{u}}^{k-1}, \vec{v}_{1:T}^{k-1}\right)\Bigg\|^{2} = & \nonumber
        \\
        & \left\|\nabla_{\vec{u}}f\left(\bar{\vec{u}}^{k-1}, \vec{v}_{1:T}^{k-1}\right) +\sum_{t=1}^{T}\omega_{t}\left[\nabla_{\vec{u}}r_{t}^{k}\left(\bar{\vec{u}}^{k-1}, \vec{v}_{t}^{k-1}\right) - \nabla_{\vec{u}}r_{t}^{k}\left(\vec{u}_{t}^{k-1}, \vec{v}_{t}^{k-1}\right)\right]\right\|^{2} \label{eq:bound_g_u_bar_step_1}
        \\
        \geq &  \frac{1}{2}\left\|\nabla_{\vec{u}}f\left(\bar{\vec{u}}^{k-1}, \vec{v}_{1:T}^{k-1}\right)\right\|^{2} - \left\| \sum_{t=1}^{T}\omega_{t}\left[\nabla_{\vec{u}}r_{t}^{k}\left(\bar{\vec{u}}^{k-1}, \vec{v}_{t}^{k-1}\right) - \nabla_{\vec{u}}r_{t}^{k}\left(\vec{u}_{t}^{k-1}, \vec{v}_{t}^{k-1}\right)\right]\right\|^{2}  \label{eq:bound_g_u_bar_step_2}
        \\
        \geq &  \frac{1}{2}\left\|\nabla_{\vec{u}}f\left(\bar{\vec{u}}^{k-1}, \vec{v}_{1:T}^{k-1}\right)\right\|^{2} - \sum_{t=1}^{T}\omega_{t}\left\|\nabla_{\vec{u}}r_{t}^{k}\left(\bar{\vec{u}}^{k-1}, \vec{v}_{t}^{k-1}\right) - \nabla_{\vec{u}}r_{t}^{k}\left(\vec{u}_{t}^{k-1}, \vec{v}_{t}^{k-1}\right)\right\|^{2}
        \\
        \geq & \frac{1}{2}\left\|\nabla_{\vec{u}}f\left(\bar{\vec{u}}^{k-1}, \vec{v}_{1:T}^{k-1}\right)\right\|^{2} - L^{2}\sum_{t=1}^{T}\omega_{t}\left\| \bar{\vec{u}}^{k-1} - \vec{u}_{t}^{k-1} \right\|^{2},
    \end{flalign}
    where \eqref{eq:bound_g_u_bar_step_2} follows from $\left\|a\right\|^{2} = \left\|a + b - b\right\|^{2} \leq 2\left\|a+b\right\|^{2} + 2\left\|b\right\|^{2}$.
    Thus, 
    \begin{equation}
        \left\|\nabla_{\vec{u}}f_{t}\left(\bar{\vec{u}}^{k-1}, \vec{v}_{t}^{k-1}\right)\right\|^{2} \leq 2\left\|\nabla_{\vec{u}}g_{t}^{k}\left(\bar{\vec{u}}^{k-1}, \vec{v}_{t}^{k-1}\right)\right\|^{2}  + 2L^{2}\sum_{t=1}^{T}\omega_{t}\left\|\bar{\vec{u}}^{k-1} - \vec{u}_{t}^{k-1}\right\|^{2}.
    \end{equation}
    The proof of the last inequality is similar, it leverages $\left\|a + b \right\|^{2} \leq 2\left\|a\right\|^{2} + 2\left\|a\right\|^{2}$ to upper bound \eqref{eq:bound_g_u_bar_step_1}.
\end{proof}

\begin{lem}
    \label{lem:hessian_is_negative}
    Consider $\vec{u}_{1},\dots, \vec{u}_{M} \in \mathbb{R}^{d}$ and $\vec{\alpha} = \left(\alpha_{1}, \dots, \alpha_{M}\right) \in \Delta^{M}$. Define the block matrix $\mathbf{H}$  with
    \begin{equation}
        \begin{cases}
            \begin{aligned}
                \mathbf{H}_{m, m} &= -\alpha_{m}\cdot \left(1 - \alpha_{m}\right) \cdot \vec{u}_{m} \cdot \vec{u}_{m}^{\intercal}  &
                \\
                \mathbf{H}_{m, m'} &=  \alpha_{m} \cdot \alpha_{m'} \cdot \vec{u}_{m} \cdot \vec{u}_{m'}^{\intercal}; & m'\neq m,
            \end{aligned}
        \end{cases}
    \end{equation}
    then $\mathbf{H}$ is a semi-definite negative matrix. Moreover, if there exists a constant $B>0$, such that $\left\|\vec{u}_{m}\right\|_{\mathbb{R}^{d}} \leq B$ for all $m\in[M]$, then $\vec{H} \succcurlyeq -B^{2} I_{d M}$.
\end{lem}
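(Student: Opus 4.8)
The plan is to verify negative semidefiniteness directly from the definition, by showing that the quadratic form $\vec{x}^{\intercal}\mathbf{H}\vec{x}$ is nonpositive for every test vector. First I would take an arbitrary $\vec{x}\in\mathbb{R}^{dM}$ and partition it into $M$ blocks $\vec{x}=(\vec{x}_{1},\dots,\vec{x}_{M})$ with $\vec{x}_{m}\in\mathbb{R}^{d}$, matching the block structure of $\mathbf{H}$, and introduce the scalars $a_{m}\triangleq\vec{u}_{m}^{\intercal}\vec{x}_{m}$. Because every block of $\mathbf{H}$ is a scaled outer product of the $\vec{u}_{m}$, the contribution of each block to the quadratic form collapses to a product of these scalars: the diagonal block $(m,m)$ gives $\vec{x}_{m}^{\intercal}\mathbf{H}_{m,m}\vec{x}_{m}=-\alpha_{m}(1-\alpha_{m})a_{m}^{2}$, while the off-diagonal block $(m,m')$ gives $\vec{x}_{m}^{\intercal}\mathbf{H}_{m,m'}\vec{x}_{m'}=\alpha_{m}\alpha_{m'}a_{m}a_{m'}$. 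Summing over all blocks yields
\[
    \vec{x}^{\intercal}\mathbf{H}\vec{x}=-\sum_{m=1}^{M}\alpha_{m}(1-\alpha_{m})a_{m}^{2}+\sum_{m\neq m'}\alpha_{m}\alpha_{m'}a_{m}a_{m'}.
\]

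Next I would exploit $\vec{\alpha}\in\Delta^{M}$, i.e.\ $\sum_{m'=1}^{M}\alpha_{m'}=1$, to rewrite $1-\alpha_{m}=\sum_{m'\neq m}\alpha_{m'}$. This turns the diagonal sum into an off-diagonal double sum $\sum_{m\neq m'}\alpha_{m}\alpha_{m'}a_{m}^{2}$, so that the whole expression becomes a single sum over ordered pairs,
\[
    \vec{x}^{\intercal}\mathbf{H}\vec{x}=\sum_{m\neq m'}\alpha_{m}\alpha_{m'}\bigl(a_{m}a_{m'}-a_{m}^{2}\bigr).
\]
Grouping the terms for each unordered pair $\{m,m'\}$ and using the identity $2a_{m}a_{m'}-a_{m}^{2}-a_{m'}^{2}=-(a_{m}-a_{m'})^{2}$, I would obtain
\[
    \vec{x}^{\intercal}\mathbf{H}\vec{x}=-\sum_{1\leq m<m'\leq M}\alpha_{m}\alpha_{m'}\,(a_{m}-a_{m'})^{2}\leq 0,
\]
where nonpositivity follows because every $\alpha_{m}\geq 0$. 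Since $\vec{x}$ was arbitrary, this establishes $\mathbf{H}\preccurlyeq 0$.

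The computation is entirely elementary, so there is no genuine obstacle; the only steps requiring care are the bookkeeping substitution $1-\alpha_{m}=\sum_{m'\neq m}\alpha_{m'}$, which is what makes the diagonal and off-diagonal contributions comparable, and the symmetrization that pairs the $(m,m')$ and $(m',m)$ terms to reveal the perfect square. No spectral or convexity argument is needed: the negative semidefiniteness is a direct consequence of the rank-one outer-product block structure combined with the fact that $\vec{\alpha}$ lies on the simplex.
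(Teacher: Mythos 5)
Your proof is correct and follows essentially the same strategy as the paper's: expand the quadratic form blockwise in the scalars $a_{m} = \langle \vec{x}_{m}, \vec{u}_{m}\rangle$ and exploit the simplex identity $1-\alpha_{m} = \sum_{m'\neq m}\alpha_{m'}$ to make the diagonal and off-diagonal contributions comparable. The only divergence is the final step: the paper rearranges the expression into $\bigl(\sum_{m}\alpha_{m}a_{m}\bigr)^{2} - \sum_{m}\alpha_{m}a_{m}^{2}$ and concludes by Jensen's inequality, whereas you symmetrize over unordered pairs to exhibit the quadratic form as the explicit sum of squares $-\sum_{m<m'}\alpha_{m}\alpha_{m'}(a_{m}-a_{m'})^{2}$; these finishes are equivalent (the classical variance identity), though yours is marginally more self-contained and additionally identifies exactly when the form vanishes.
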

\begin{proof}
    Consider $\mathbf{x} = \left[\vec{x}_{1}, \dots, \vec{x}_{M}\right] \in \mathbb{R}^{dM}$, we have:
    \begin{flalign}
        \mathbf{x}^{\intercal} \cdot \mathbf{H} \cdot \mathbf{x} &= \sum_{m=1}^{M} \sum_{m'=1}^{M} \vec{x}_{m}^{\intercal}\cdot \mathbf{H}_{m, m'} \cdot \vec{x}_{m'} &
        \\
        &= \sum_{m=1}^{M}\left[ \vec{x}_{m}^{\intercal}\cdot \mathbf{H}_{m, m} \cdot \vec{x}_{m} + \sum_{\substack{m'=1 \\ m'\neq m}}^{M}\vec{x}_{m}^{\intercal}\cdot \mathbf{H}_{m, m} \cdot \vec{x}_{m'}\right]
        \\
        &= \sum_{m=1}^{M}\left( - \alpha_{m} \cdot \left(1-\alpha_{m}\right)\cdot \vec{x}_{m}^{\intercal}\cdot \vec{u}_{m} \cdot \vec{u}_{m}^{\intercal} \cdot \vec{x}_{m} \right)
        \\
        & \qquad + \sum_{m=1}^{M}\left[\sum_{\substack{m'=1 \\ m'\neq m}}^{M} \left( \alpha_{m} \cdot \alpha_{m'} \cdot \vec{x}_{m}^{\intercal}\cdot \vec{u}_{m} \cdot \vec{u}_{m'}^{\intercal} \cdot \vec{x}_{m'}\right)\right]
        \\
        &= \sum_{m=1}^{M}\left[ - \alpha_{m} \cdot \left(1-\alpha_{m}\right)\cdot \langle \vec{x}_{m},  \vec{u}_{m} \rangle ^{2} + \alpha_{m} \cdot \langle \vec{x}_{m},  \vec{u}_{m} \rangle  \sum_{\substack{m'=1 \\ m'\neq m}}^{M} \alpha_{m'} \cdot \langle \vec{x}_{m'}, \vec{u}_{m'} \rangle \right].
    \end{flalign}
    Since $\alpha \in \Delta^{M}$,
    \begin{equation}
        \forall m \in [M],~\sum_{\substack{m'=1 \\ m'\neq m}}^{M}\alpha_{m'}  = \left(1-\alpha_{m}\right),
    \end{equation}
    thus,
    \begin{flalign}
        \mathbf{x}^{\intercal} \cdot \mathbf{H} \cdot \mathbf{x} &= \sum_{m=1}^{M}\alpha_{m} \cdot \langle \vec{x}_{m}, \vec{u}_{m} \rangle \cdot \sum_{\substack{m'=1 \\ m'\neq m}}^{M}\alpha_{m'} \Big(\langle \vec{x}_{m'}, \vec{u}_{m'} \rangle - \langle \vec{x}_{m}, \vec{u}_{m} \rangle \Big)  &
        \\
        &= \sum_{m=1}^{M}\alpha_{m} \cdot \langle \vec{x}_{m}, \vec{u}_{m} \rangle \cdot \sum_{m'=1}^{M}\alpha_{m'} \Big(\langle \vec{x}_{m'}, \vec{u}_{m'} \rangle - \langle \vec{x}_{m}, \vec{u}_{m} \rangle \Big)
        \\
        &= \left(\sum_{m=1}^{M} \alpha_{m} \cdot \langle \vec{x}_{m}, \vec{u}_{m} \rangle\right)^{2} - \sum_{m=1}^{M} \alpha_{m} \cdot \langle \vec{x}_{m}, \vec{u}_{m} \rangle^{2}.
    \end{flalign}
    Using Jensen inequality, we have $\mathbf{x}^{\intercal} \cdot \mathbf{H} \cdot \mathbf{x} \leq 0$. It follows that $\vec{H}$ is a semi-definite negative matrix.
    
    In what follows, we suppose that $\left\|\vec{u}_{m}\right\| \leq B$ for all $m\in[M]$. Note that one can write
    \begin{equation}
        \mathbf{x}^{\intercal} \cdot \mathbf{H} \cdot \mathbf{x} = - \left( \sum_{m=1}^{M} \alpha_{m} \cdot \langle \vec{x}_{m}, \vec{u}_{m} \rangle^{2} - \left(\sum_{m=1}^{M} \alpha_{m} \cdot \langle \vec{x}_{m}, \vec{u}_{m} \rangle\right)^{2}\right).
    \end{equation}
    Thus, $\mathbf{x}^{\intercal} \cdot \mathbf{H} \cdot \mathbf{x}$ can be interpreted as  the opposite of the variance of the random variable taking the value $\langle \vec{x}_{m}, \vec{u}_{m} \rangle$ with probability $\alpha_{m}$. For $m\in[M]$, one can bound $\langle \vec{x}_{m}, \vec{u}_{m} \rangle$ using Cauchy-Schwarz inequality as follows,
    \begin{flalign}
        -  \left\|\vec{u}_{m}\right\|_{\mathbb{R}^{d}} \cdot \left\|\vec{x}_{m}\right\|_{\mathbb{R}^{d}} \leq\langle \vec{x}_{m}, \vec{u}_{m} \rangle \leq \left\|\vec{u}_{m}\right\|_{\mathbb{R}^{d}} \cdot \left\|\vec{x}_{m}\right\|_{\mathbb{R}^{d}} . 
    \end{flalign}
    Since $\left\|\vec{u}_{m}\right\|_{\mathbb{R}^{d}} \leq B$ and $\max_{m\in[M]}\left\|\vec{x}_{m}\right\|_{\mathbb{R}^{d}} \leq \left\|\vec{x}\right\|_{\mathbb{R}^{dM}}$, it follows that,
    \begin{flalign}
        -  B \left\|\vec{x}\right\|_{\mathbb{R}^{dM}} \leq\langle \vec{x}_{m}, \vec{u}_{m} \rangle \leq  B\left\|\vec{x}\right\|_{\mathbb{R}^{dM}}. 
    \end{flalign}
    Using Popoviciu's inequality~\cite{popoviciu1965certaines}, we have,
    \begin{equation}
        \sum_{m=1}^{M} \alpha_{m} \cdot \langle \vec{x}_{m}, \vec{u}_{m} \rangle^{2} - \left(\sum_{m=1}^{M} \alpha_{m} \cdot \langle \vec{x}_{m}, \vec{u}_{m} \rangle\right)^{2} \leq B^{2}\left\|\vec{x}\right\|_{\mathbb{R}^{dM}}^{2}.
    \end{equation}
    Thus, $ \mathbf{x}^{\intercal} \cdot \mathbf{H} \cdot \mathbf{x} \geq - B^{2} \left\|\vec{x}\right\|_{\mathbb{R}^{dM}}^{2}$. It follows that $\vec{H} \succcurlyeq - B^{2}I_{d M}$.
\end{proof}

\newpage
\section{Distributed Surrogate Optimization with Black-Box Solver}
\label{app:black_box_solver}
In this section, we cover the scenario where the local SGD solver used in our algorithms (Alg.~\ref{alg:fed_surrogate_opt} and Alg.~\ref{alg:decentralized_surrogate_opt})  is replaced by a (possibly non-iterative) black-box solver that is guaranteed to provide a \emph{local inexact solution} of 
\begin{equation}
    \forall m \in [M],~\minimize_{\theta \in \mathbb{R}^{d}} \sum_{i=1}^{n_{t}} q^{k}(z_{t}^{i}=m)\cdot l(h_{\theta}(\vec{x}_{t}^{(i)}), y_{t}^{(i)}),
\end{equation}
with the following approximation guarantee.
\begin{assumption}[Local $\alpha$-approximate solution]
    \label{assum:local_inexact_solution}
    There exists $0 < \alpha < 1$ such that for $t\in[T]$, $m\in[M]$ and $k>0$, 
    \begin{flalign}
        \sum_{i=1}^{n_{t}} q^{k}(z_{t}^{i}=m)\cdot &\left\{l(h_{\theta_{m, t}^{k}}(\vec{x}_{t}^{(i)}), y_{t}^{(i)})- l(h_{\theta_{m, t,*}^{k}}(\vec{x}_{t}^{(i)}), y_{t}^{(i)})\right\} \leq \nonumber &
        \\
        & \alpha \cdot \sum_{i=1}^{n_{t}} q^{k}(z_{t}^{i}=m)\cdot \left\{l(h_{\theta_{m}^{k-1}}(\vec{x}_{t}^{(i)}), y_{t}^{(i)})- l(h_{\theta_{m, t,*}^{k}}(\vec{x}_{t}^{(i)}), y_{t}^{(i)})\right\},
    \end{flalign}
    where $\theta_{m, t,*}^{k} \in \argmin_{\theta \in \mathbb{R}^{d}} \sum_{i=1}^{n_{t}} q^{k}(z_{t}^{i}=m)\cdot l(h_{\theta}(\vec{x}_{t}^{(i)}), y_{t}^{(i)})$, $\theta_{m, t}^{k}$ is the output of the local solver at client $t$ and $\theta_{m}^{k-1}$ is its starting point (see Alg.~\ref{alg:fed_em}). 
\end{assumption}
We further assume strong convexity.
\begin{assumption}
    \label{assum:strongly_convex}
    For $t\in[T]$ and $i\in[n_{t}]$, we suppose that $\theta \mapsto l\left(h_{\theta}\left(\vec{x}_{t}^{(i)}\right), y_{t}^{(i)}\right)$ is $\mu$-strongly convex. 
\end{assumption}

Assumption~\ref{assum:local_inexact_solution} is equivalent to the $\gamma$-inexact solution used in \cite{li2020federated} (Lemma.~\ref{lem:reformulate_locla_inexact_solution}), when local functions $\left(\Phi_{t}\right)_{1 \leq t\leq T}$ are assumed to be convex. We also need to have $G^{2}=0$ in Assumption~\ref{assum:bounded_dissimilarity} as in \cite[Definition~3]{Sahu2018OnTC}, in order to ensure the convergence of Alg.~\ref{alg:fed_em} and Alg.~\ref{alg:d_em} to a stationary point of $f$, as shown by \cite[Theorem.~2]{wang2020tackling}.\footnote{
    As shown by \cite[Theorem.~2]{wang2020tackling}, the convergence is guaranteed in two scenarios: 1) $G^{2}=0$, 2) All clients use take the same number of local steps using the same local solver. Note that we allow each client to use an arbitrary approximate local solver.
} 

\begin{thm}
    \label{thm:black_box}
    Suppose that Assumptions~\ref{assum:mixture}--\ref{assum:bounded_dissimilarity},~\ref{assum:local_inexact_solution} and~\ref{assum:strongly_convex} hold with $G^{2}=0$ and $\alpha < \frac{1}{\beta^{2}\kappa^{4}}$, then the updates of  federated surrogate optimization converge to a stationary point of $f$, i.e.,
    \begin{equation}
        \lim_{k\to+\infty}\left\|\nabla_{\Theta} f(\Theta^{k}, \Pi^{k})\right\|_{F}^{2} = 0,
    \end{equation}
    and
    \begin{equation}
        \lim_{k\to+\infty}\sum_{t=1}^{T}\frac{n_{t}}{n}\mathcal{KL}\left(\pi_{t}^{k}, \pi_{t}^{k-1}\right) = 0.
    \end{equation}
\end{thm}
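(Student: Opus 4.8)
The plan is to recast \FedEM{} (Alg.~\ref{alg:fed_em}) with a black-box solver as an inexact federated majorization--minimization scheme and reduce its convergence to the analysis of FedProx-type methods in \cite{Sahu2018OnTC,wang2020tackling}. The first step is to translate the \emph{function-value} guarantee of Assumption~\ref{assum:local_inexact_solution} into a \emph{gradient-norm} guarantee. Writing $\Phi_{m,t}^{k}(\theta)\triangleq\sum_{i}q^{k}(z_{t}^{i}=m)\,l(h_{\theta}(\vec{x}_{t}^{(i)}),y_{t}^{(i)})$ and $\Phi_{m,t}^{k,*}\triangleq\min_{\theta}\Phi_{m,t}^{k}(\theta)$, Assumption~\ref{assum:strongly_convex} makes each $\Phi_{m,t}^{k}$ $\mu$-strongly convex and Assumption~\ref{assum:smoothness} makes it $L$-smooth, with condition number $\kappa\triangleq L/\mu$. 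Combining the smoothness bound $\|\nabla\Phi_{m,t}^{k}(\theta)\|^{2}\le 2L(\Phi_{m,t}^{k}(\theta)-\Phi_{m,t}^{k,*})$ with the Polyak--{\L}ojasiewicz bound $\Phi_{m,t}^{k}(\theta)-\Phi_{m,t}^{k,*}\le \tfrac{1}{2\mu}\|\nabla\Phi_{m,t}^{k}(\theta)\|^{2}$ implied by strong convexity turns the $\alpha$-approximate decrease into $\|\nabla\Phi_{m,t}^{k}(\theta_{m,t}^{k})\|\le\sqrt{\alpha\kappa}\,\|\nabla\Phi_{m,t}^{k}(\theta_{m}^{k-1})\|$; that is, the solver output is a $\gamma$-inexact solution with $\gamma=\sqrt{\alpha\kappa}$. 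This is the content of Lemma~\ref{lem:reformulate_locla_inexact_solution}.

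\textbf{The $\Pi$-step.} Since the mixture weights are still updated exactly, Lemma~\ref{lem:compute_kl_pi_decentralized} (equivalently Lemma~\ref{lem:improvement_f_pi}) gives the majorization identity $f(\Theta^{k},\Pi^{k})-f(\Theta^{k},\Pi^{k+1})\ge\sum_{t}\tfrac{n_{t}}{n}\,\mathcal{KL}(\pi_{t}^{k+1},\pi_{t}^{k})\ge0$, so the $\Pi$-update never increases $f$ and its improvement is exactly the weighted KL term appearing in the second limit. It therefore suffices to combine this with a strict decrease from the $\Theta$-step and to invoke the lower boundedness of $f$ (Assumption~\ref{assum:bounded_f}).

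\textbf{The $\Theta$-step descent.} Here I would reuse that $g^{k}$ majorizes $f$, is tangent at $(\Theta^{k-1},\Pi^{k-1})$ with $\nabla_{\Theta}g^{k}(\Theta^{k-1},\Pi^{k-1})=\nabla_{\Theta}f(\Theta^{k-1},\Pi^{k-1})$ (Lemma~\ref{lem:g_is_surrogate}) and is $L$-smooth in $\Theta$, whence $f$ is $2L$-smooth (Lemma~\ref{lem:f_2l_smooth}). The aggregated iterate $\theta_{m}^{k}=\sum_{t}\tfrac{n_{t}}{n}\theta_{m,t}^{k}$ is a weighted average of inexact \emph{local} minimizers, so the analysis is exactly that of FedAvg/FedProx with client drift: the deviation of each local solution from the global surrogate minimizer is controlled by the $\gamma$-inexactness and by Assumption~\ref{assum:bounded_dissimilarity} with $G^{2}=0$, which plays the role of the $B$-local-dissimilarity condition of \cite[Def.~3]{Sahu2018OnTC}. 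Following \cite[Thm.~2]{wang2020tackling} one obtains a per-round inequality of the form $f(\Theta^{k},\Pi^{k})-f(\Theta^{k-1},\Pi^{k-1})\le-\rho\,\|\nabla_{\Theta}f(\Theta^{k-1},\Pi^{k-1})\|_{F}^{2}-\sum_{t}\tfrac{n_{t}}{n}\mathcal{KL}(\pi_{t}^{k},\pi_{t}^{k-1})$, where the coefficient $\rho$ is positive precisely when $\gamma$, $\beta$ and $\kappa$ lie in the admissible regime; substituting $\gamma=\sqrt{\alpha\kappa}$ shows that $\alpha<1/(\beta^{2}\kappa^{4})$ is exactly the condition guaranteeing $\rho>0$.

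\textbf{Conclusion and main obstacle.} Telescoping this per-round inequality over $k$ and using $f\ge f^{*}$ yields $\sum_{k}\|\nabla_{\Theta}f(\Theta^{k},\Pi^{k})\|_{F}^{2}<\infty$ and $\sum_{k}\sum_{t}\tfrac{n_{t}}{n}\mathcal{KL}(\pi_{t}^{k},\pi_{t}^{k-1})<\infty$, and both limits in the statement follow as tails of convergent series. The main obstacle is establishing the positive descent coefficient $\rho$: one must show that averaging \emph{inexact, locally computed} minimizers still decreases the \emph{global} objective, which requires simultaneously tracking three error sources---the inexactness $\gamma$, the drift between local and global minimizers bounded through $\beta$ (with $G^{2}=0$ so that the drift vanishes at stationarity), and the quadratic remainder from $2L$-smoothness---and verifying that the stated smallness of $\alpha$ makes their net contribution to the coefficient of $\|\nabla_{\Theta}f\|_{F}^{2}$ strictly negative.
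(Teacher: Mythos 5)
Your preliminary steps coincide with the paper's own lemmas: the gradient-contraction reformulation of Assumption~\ref{assum:local_inexact_solution} is exactly Lemma~\ref{lem:reformulate_locla_inexact_solution}, and the identification of the $\Pi$-improvement with the weighted KL term is Lemmas~\ref{lem:improvement_f_pi} and~\ref{lem:compute_kl_pi_decentralized}. The genuine gap is in your $\Theta$-step: you assert the per-round descent inequality ``following'' \FedProx~\cite{Sahu2018OnTC} and \cite[Theorem~2]{wang2020tackling}, but neither analysis applies to this update. In those works the local subproblems are proximal problems anchored at the current global iterate (respectively, runs of local SGD with a small step size), and it is this anchoring that controls client drift; here each client inexactly minimizes a surrogate $g_t^{k}(\cdot,\pi_t^{k})$ with \emph{no} proximal term---its strong convexity comes from Assumption~\ref{assum:strongly_convex} on the loss itself---so the drift bounds of those papers do not transfer, and your coefficient $\rho$ is left unproven precisely at the point where all the work lies. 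The missing ingredient is the paper's Lemma~\ref{lem:global_improvement_black_box}: chaining strong convexity (to turn the gradient contraction into a distance bound on $\|\Theta_t^{k}-\Theta_*^{k}\|$), Jensen together with Assumption~\ref{assum:bounded_dissimilarity} with $G^{2}=0$ (to pass from local to aggregated gradients), the fact that $\nabla_{\Theta}g^{k}(\Theta_*^{k},\Pi^{k})=0$, and smoothness/PL yields
\begin{equation*}
    g^{k}\bigl(\Theta^{k},\Pi^{k}\bigr) - g^{k}\bigl(\Theta_{*}^{k},\Pi^{k}\bigr) \leq \tilde{\alpha}\Bigl(g^{k}\bigl(\Theta^{k-1},\Pi^{k-1}\bigr) - g^{k}\bigl(\Theta_{*}^{k},\Pi^{k}\bigr)\Bigr), \qquad \tilde{\alpha} \triangleq \beta^{2}\kappa^{4}\alpha < 1,
\end{equation*}
where $\Theta_{*}^{k}$ minimizes $g^{k}(\cdot,\Pi^{k})$; this is where the threshold $\alpha<1/(\beta^{2}\kappa^{4})$ actually enters.

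Once that contraction is available, your telescoping route does go through, and more directly than via any \FedProx{} template: majorization and tangency ($f\le g^{k}$, with equality at $(\Theta^{k-1},\Pi^{k-1})$) turn the contraction into $f(\Theta^{k},\Pi^{k})-f(\Theta^{k-1},\Pi^{k-1})\le(1-\tilde{\alpha})\bigl(g^{k}(\Theta_{*}^{k},\Pi^{k})-f(\Theta^{k-1},\Pi^{k-1})\bigr)$, and $L$-smoothness of $g^{k}$ in $\Theta$, the identity $\nabla_{\Theta}g^{k}(\Theta^{k-1},\Pi^{k})=\nabla_{\Theta}f(\Theta^{k-1},\Pi^{k-1})$ (the $\Theta$-gradient of $g^{k}$ does not depend on $\Pi$), and Lemma~\ref{lem:compute_kl_pi_decentralized} give $g^{k}(\Theta_{*}^{k},\Pi^{k})\le f(\Theta^{k-1},\Pi^{k-1})-\sum_{t}\frac{n_{t}}{n}\mathcal{KL}(\pi_{t}^{k},\pi_{t}^{k-1})-\frac{1}{2L}\|\nabla_{\Theta}f(\Theta^{k-1},\Pi^{k-1})\|_{F}^{2}$, hence your inequality with $\rho=(1-\tilde{\alpha})/2L$ and KL coefficient $(1-\tilde{\alpha})$; telescoping with Assumption~\ref{assum:bounded_f} then gives both limits. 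Note that this final mechanism is genuinely different from the paper's: the paper never forms a descent inequality in $\|\nabla_{\Theta}f\|_{F}^{2}$, but argues asymptotically (Lemma~\ref{lem:black_box_convergence}: monotone objective values converge, hence the gap $r^{k}=g^{k}-f$, its gradient, and $\|\Theta^{k}-\Theta_{*}^{k}\|$ all vanish, and a triangle inequality concludes). Your route, once repaired as above, is in fact slightly stronger, since it yields summability of the squared gradient norms rather than only their convergence to zero.
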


As in App.~\ref{app:proofs}, we provide the analysis for the general case of federated surrogate optimization (Alg.~\ref{alg:fed_surrogate_opt}) before showing that \FedEM{} (Alg.~\ref{alg:fed_em}) is a particular case. 

We suppose that, at iteration $k>0$, the partial first-order surrogate functions $g_{t}^{k},~t\in[T]$ used in Alg.~\ref{alg:fed_surrogate_opt} verifies, in addition to Assumptions~\ref{assum:bounded_f_bis}--\ref{assum:bounded_dissimilarity_bis}, the following assumptions that generalize Assumptions~\ref{assum:local_inexact_solution} and~\ref{assum:strongly_convex},

\begin{assumptionbis}{assum:local_inexact_solution}[Local $\alpha$-inexact solution]
    \label{assum:local_inexact_solution_bis}
    There exists $0<\alpha <1$ such that for $t\in[T]$ and $k>0$,
    \begin{equation}
        \forall \vec{v} \in \mathcal{V},~g_{t}^{k}(\vec{u}_{t}^{k}, \vec{v}) - g_{t}^{k}(\vec{u}^{k}_{t,*}, \vec{v}) \leq \alpha \cdot \left\{g_{t}^{k}\left(\vec{u}^{k-1}, \vec{v}\right) - g_{t}^{k}\left(\vec{u}^{k}_{t,*}, \vec{v}\right)\right\},
    \end{equation}
    where $\vec{u}^{k}_{ t,*} \in \argmin_{\vec{u}\in \mathbb{R}^{d_{u}}}g^{k}_{t}\left(\vec{u}, \vec{v}_{t}^{k} \right)$.
\end{assumptionbis}

\begin{assumptionbis}{assum:strongly_convex}
    \label{assum:strongly_convex_bis}
    For $t\in[T]$ and $k>0$, $g_{t}^{k}$ is $\mu$-strongly convex in $\vec{u}$.
\end{assumptionbis}


Under these assumptions a parallel result to Theorem.~\ref{thm:black_box} holds.

\begin{thmbis}{thm:black_box}
    \label{thm:black_box_bis}
    Suppose that Assumptions~\ref{assum:bounded_f_bis}--\ref{assum:bounded_dissimilarity_bis}, Assumptions~\ref{assum:local_inexact_solution_bis} and~\ref{assum:strongly_convex_bis}  hold with $G^{2}=0$ and $\alpha < \frac{1}{\beta^{2}\kappa^{4}}$, then the updates of  federated surrogate optimization converges to a stationary point of $f$, i.e.,
    \begin{equation}
        \lim_{k\to+\infty}\left\|\nabla_{\vec{u}} f(\vec{u}^{k}, \vec{v}_{1:T}^{k})\right\|^{2} = 0,
    \end{equation}
    and
    \begin{equation}
        \lim_{k\to+\infty}\sum_{t=1}^{T}\omega_{t} \cdot d_{\mathcal{V}}\left(\vec{v}_{t}^{k}, \vec{v}_{t}^{k-1}\right) = 0.
    \end{equation}
\end{thmbis}

\subsection{Supporting Lemmas}

First, we prove the following result.

\begin{lem}
    \label{lem:reformulate_locla_inexact_solution}
    Under Assumptions~\ref{assum:smoothness_bis},~\ref{assum:local_inexact_solution_bis} and~\ref{assum:strongly_convex_bis}, the iterates of Alg.~\ref{alg:fed_em} verify for $k>0$ and $t\in[T]$,
    \begin{equation}
        \forall \vec{v} \in \mathcal{V},~\left\|\nabla_{\vec{u}} g_{t}^{k}\left(\vec{u}_{t}^{k}, \vec{v}\right)\right\| \leq \sqrt{\alpha\kappa} \cdot   \left\|\nabla_{\vec{u}} g_{t}^{k}\left(\vec{u}^{k-1}, \vec{v}\right)\right\|,
    \end{equation}
    where $\kappa = L/\mu$.
\end{lem}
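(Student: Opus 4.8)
The plan is to fix $t\in[T]$, $k>0$, and $\vec v\in\mathcal V$, and reduce the statement to a purely first-order convex-analytic estimate for the single-variable function $\phi(\vec u)\triangleq g_t^k(\vec u,\vec v)$, which is $L$-smooth by Assumption~\ref{assum:smoothness_bis} and $\mu$-strongly convex in $\vec u$ by Assumption~\ref{assum:strongly_convex_bis}. The bound then follows by chaining three standard inequalities through the optimality gap $\phi(\vec u)-\phi(\vec u_{t,*}^{k})$, where $\vec u_{t,*}^{k}$ is the minimizer from Assumption~\ref{assum:local_inexact_solution_bis}.

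First I would invoke the descent-lemma consequence of $L$-smoothness: writing the smoothness inequality at $\vec u_t^k$, evaluating it at $\vec u_t^k-\tfrac1L\nabla\phi(\vec u_t^k)$, and using that $\vec u_{t,*}^{k}$ is the global minimizer of $\phi$ (established below), one gets $\|\nabla\phi(\vec u_t^k)\|^{2}\le 2L\big(\phi(\vec u_t^k)-\phi(\vec u_{t,*}^{k})\big)$. Next I would transfer the optimality gap to the starting point via the local $\alpha$-inexactness in Assumption~\ref{assum:local_inexact_solution_bis}, namely $\phi(\vec u_t^k)-\phi(\vec u_{t,*}^{k})\le \alpha\big(\phi(\vec u^{k-1})-\phi(\vec u_{t,*}^{k})\big)$. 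Finally, the Polyak--{\L}ojasiewicz inequality implied by $\mu$-strong convexity bounds this gap by the gradient at the starting point, $\phi(\vec u^{k-1})-\phi(\vec u_{t,*}^{k})\le \tfrac{1}{2\mu}\|\nabla\phi(\vec u^{k-1})\|^{2}$. Concatenating the three estimates yields
\[
\|\nabla\phi(\vec u_t^k)\|^{2}\;\le\;2L\alpha\cdot\frac{1}{2\mu}\,\|\nabla\phi(\vec u^{k-1})\|^{2}\;=\;\alpha\kappa\,\|\nabla\phi(\vec u^{k-1})\|^{2},
\]
with $\kappa=L/\mu$, and taking square roots of both (nonnegative) sides gives the claim.

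The step requiring care — which I expect to be the only delicate point — is reconciling the minimizer $\vec u_{t,*}^{k}\in\argmin_{\vec u}g_t^k(\vec u,\vec v_t^k)$ appearing in Assumption~\ref{assum:local_inexact_solution_bis} with the minimizer of $\phi(\cdot)=g_t^k(\cdot,\vec v)$ for an \emph{arbitrary} $\vec v$, since a priori these could differ and the three inequalities above all refer to $\vec u_{t,*}^{k}$. The resolution I would use is that, for the surrogate of \FedEM{} (Alg.~\ref{alg:fed_em}), the explicit form of $g_t^k$ from the proof of Theorem~\ref{thm:centralized_convergence} depends on $\vec v=\pi_t$ only through the additive term $-\frac{1}{n_t}\sum_{i=1}^{n_t}\sum_{m=1}^{M}q_t^k(z_t^{(i)}=m)\log\pi_{tm}$, which does not involve $\Theta=\vec u$. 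Consequently $\nabla_{\vec u}g_t^k(\vec u,\vec v)$ and the optimality gap $g_t^k(\vec u,\vec v)-g_t^k(\vec u_{t,*}^{k},\vec v)$ are independent of $\vec v$, so $\vec u_{t,*}^{k}$ is the (unique, by strong convexity) global minimizer of $\phi$ for every $\vec v$ and all three estimates are legitimately expressed in terms of this single point. I expect this separability bookkeeping, rather than the convex-analytic chain itself, to be where the argument must be stated carefully.
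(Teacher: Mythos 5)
Your proposal is correct and follows essentially the same route as the paper's proof: the identical chain of three estimates (the descent-lemma bound $\|\nabla_{\vec{u}} g_t^k(\vec{u}_t^k,\vec{v})\|^2\le 2L\,(g_t^k(\vec{u}_t^k,\vec{v})-g_t^k(\vec{u}_{t,*}^k,\vec{v}))$, the $\alpha$-inexactness of Assumption~\ref{assum:local_inexact_solution_bis}, and the Polyak--{\L}ojasiewicz inequality from $\mu$-strong convexity at $\vec{u}^{k-1}$), multiplied together to give the factor $2L\alpha/2\mu=\alpha\kappa$. The one point where you go beyond the paper is the reconciliation of $\vec{u}_{t,*}^k$ (defined in Assumption~\ref{assum:local_inexact_solution_bis} only as a minimizer of $g_t^k(\cdot,\vec{v}_t^k)$) with the minimizer of $g_t^k(\cdot,\vec{v})$ for arbitrary $\vec{v}$, which you justify via the additive separability of \FedEM's surrogate in $(\vec{u},\vec{v})$; the paper's proof uses this identification silently (it is needed for the descent-lemma step, though not for the PL step), so your bookkeeping makes explicit a fact the paper leaves implicit.
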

\begin{proof}
    Consider $\vec{v} \in \mathcal{V}$. Since $g_{t}^{k}$ is $L$-smooth in $\vec{u}$ (Assumption~\ref{assum:smoothness_bis}), we have using Assumption~\ref{assum:local_inexact_solution_bis},
    \begin{equation}
        \left\|\nabla_{\vec{u}} g_{t}^{k}\left(\vec{u}_{t}^{k}, \vec{v}\right)\right\|^{2}_{F}  \leq 2L \left(g_{t}^{k}\left(\vec{u}_{t}^{k}, \vec{v}\right) - g_{t}^{k}\left(\vec{u}_{t,*}^{k}, \vec{v}\right)\right) \leq 2L\alpha \left(g_{t}^{k}\left(\vec{u}^{k-1}, \vec{v}\right) - g_{t}^{k}\left(\vec{u}_{t,*}^{k}, \vec{v}\right)\right).
        \label{eq:reformulate_locla_inexact_solution_1}
    \end{equation}
    Since $\Phi_{t}^{k}$ is $\mu$-strongly convex (Assumption~\ref{assum:strongly_convex_bis}), we can use Polyak-Lojasiewicz (PL) inequality, 
    \begin{flalign}
        g_{t}^{k}\left(\vec{u}_{t}^{k-1}, \vec{v}\right) - \frac{1}{2\mu} \left\|\nabla_{\vec{u}} g_{t}^{k}\left(\vec{u}^{k-1}, \vec{v}\right)\right\|^{2} \leq g_{t}^{k}\left(\vec{u}_{t,*}^{k-1}, \vec{v}\right),
    \end{flalign}
    thus,
    \begin{flalign}
        \label{eq:reformulate_locla_inexact_solution_2}
         2\mu \left(g_{t}^{k}\left(\vec{u}_{t}^{k-1}, \vec{v}\right) - g_{t}^{k}\left(\vec{u}_{t,*}^{k}, \vec{v}\right)\right) \leq \left\|\nabla_{\vec{u}} g_{t}^{k}\left(\vec{u}^{k-1}, \vec{v}\right)\right\|^{2}.
    \end{flalign}
    Combining Eq.~\eqref{eq:reformulate_locla_inexact_solution_1} and Eq.~\eqref{eq:reformulate_locla_inexact_solution_2}, we have
    \begin{equation}
        \left\|\nabla_{\vec{u}} g_{t}^{k}\left(\vec{u}^{k-1}, \vec{v}\right)\right\|^{2} \leq \frac{L}{\mu} \alpha \left\|\nabla_{\vec{u}} g_{t}^{k-1}\left(\vec{u}^{k-1}, \vec{v}\right)\right\|^{2},
    \end{equation}
    thus,
    \begin{equation}
        \left\|\nabla_{\vec{u}} g_{t}^{k}(\vec{u}_{t}^{k}, \vec{v})\right\| \leq \sqrt{\alpha \kappa} \left\|\nabla_{\vec{u}} g_{t}^{k}(\vec{u}^{k-1}, \vec{v})\right\|.
    \end{equation}
\end{proof}

\begin{lem}
    \label{lem:global_improvement_black_box}
    Suppose that Assumptions~\ref{assum:smoothness_bis},~\ref{assum:bounded_dissimilarity_bis},~\ref{assum:local_inexact_solution_bis} and ~\ref{assum:strongly_convex_bis} hold with $G^{2}=0$. Then,
    \begin{equation}
        g^{k}\left(\vec{u}^{k},\vec{v}^{k}\right) - g^{k}\left(\vec{u}_{*}^{k},\vec{v}^{k}\right) \leq \tilde{\alpha} \times  \left\{g^{k}\left(\vec{u}^{k-1},\vec{v}^{k-1}\right) - g^{k}\left(\vec{u}_{*}^{k},\vec{v}^{k}\right)\right\},
    \end{equation}
    where $\tilde{\alpha} = \beta^{2}\kappa^{4}\alpha$, and $\vec{u}_{*}^{k} \triangleq \argmin_{\vec{u}}g^{k}\left(\vec{u}, \vec{v}_{1:T}^{k}\right)$ where $g^{k}$ is defined in \eqref{eq:app_gk_def}
\end{lem}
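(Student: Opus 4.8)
The plan is to prove the stated multiplicative improvement for the aggregate surrogate $g^k$ by reducing it to a single gradient-norm inequality and then transferring the per-client $\alpha$-inexactness (in the form of Lemma~\ref{lem:reformulate_locla_inexact_solution}) to the aggregate through the bounded-dissimilarity Assumption~\ref{assum:bounded_dissimilarity_bis}. Throughout I would write $\hat g^k(\vec u)\triangleq g^k(\vec u,\vec v^k_{1:T})=\sum_t\omega_t\, g^k_t(\vec u,\vec v^k_t)$, which is $\mu$-strongly convex and $L$-smooth in $\vec u$ (inherited from the $g^k_t$ via Assumptions~\ref{assum:smoothness_bis} and~\ref{assum:strongly_convex_bis}), with unique minimizer $\vec u^k_*$. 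A first, cheap reduction removes the $\vec v^{k-1}$ on the right-hand side: since $\vec v^k_t=\argmin_{\vec v}g^k_t(\vec u^{k-1},\vec v)$ we have $g^k_t(\vec u^{k-1},\vec v^{k-1}_t)\ge g^k_t(\vec u^{k-1},\vec v^k_t)$, hence $g^k(\vec u^{k-1},\vec v^{k-1})\ge \hat g^k(\vec u^{k-1})$, so it suffices to prove $\hat g^k(\vec u^k)-\hat g^k(\vec u^k_*)\le\tilde\alpha\,[\hat g^k(\vec u^{k-1})-\hat g^k(\vec u^k_*)]$, whose right-hand side only shrinks under this replacement.

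Next I would convert both suboptimality gaps into gradient norms. By the Polyak-Lojasiewicz inequality for the $\mu$-strongly convex $\hat g^k$, the left gap satisfies $\hat g^k(\vec u^k)-\hat g^k(\vec u^k_*)\le \tfrac{1}{2\mu}\|\nabla\hat g^k(\vec u^k)\|^2$, while $L$-smoothness gives the lower bound $\hat g^k(\vec u^{k-1})-\hat g^k(\vec u^k_*)\ge \tfrac{1}{2L}\|\nabla\hat g^k(\vec u^{k-1})\|^2$. Recalling $\kappa=L/\mu$ and $\tilde\alpha=\beta^2\kappa^4\alpha$, it is therefore enough to establish the single gradient inequality $\|\nabla\hat g^k(\vec u^k)\|^2\le \beta^2\kappa^3\alpha\,\|\nabla\hat g^k(\vec u^{k-1})\|^2$, after which multiplying by $\tfrac{1}{2\mu}$ reproduces the target exactly.

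The core step, and the main obstacle, is this gradient inequality, because $\nabla\hat g^k(\vec u^k)=\sum_t\omega_t\nabla g^k_t(\vec u^k,\vec v^k_t)$ is evaluated at the aggregate iterate $\vec u^k=\sum_s\omega_s\vec u^k_s$, whereas Lemma~\ref{lem:reformulate_locla_inexact_solution} only controls each local gradient at the local point $\vec u^k_t$, via $\|\nabla g^k_t(\vec u^k_t,\vec v^k_t)\|\le\sqrt{\alpha\kappa}\,\|\nabla g^k_t(\vec u^{k-1},\vec v^k_t)\|$. I would split $\nabla g^k_t(\vec u^k,\vec v^k_t)=\nabla g^k_t(\vec u^k_t,\vec v^k_t)+\big[\nabla g^k_t(\vec u^k,\vec v^k_t)-\nabla g^k_t(\vec u^k_t,\vec v^k_t)\big]$. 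For the first piece, Jensen's inequality, Lemma~\ref{lem:reformulate_locla_inexact_solution}, and Assumption~\ref{assum:bounded_dissimilarity_bis} with $G^2=0$ (which passes from $\sum_t\omega_t\|\nabla g^k_t(\vec u^{k-1},\vec v^k_t)\|^2$ to $\beta^2\|\nabla\hat g^k(\vec u^{k-1})\|^2$) already yield the naive factor $\beta^2\kappa\alpha$. For the discrepancy piece I would use $L$-smoothness, $\|\nabla g^k_t(\vec u^k,\vec v^k_t)-\nabla g^k_t(\vec u^k_t,\vec v^k_t)\|\le L\|\vec u^k-\vec u^k_t\|\le L\sum_s\omega_s\|\vec u^k_s-\vec u^k_t\|$, and then control every displacement through strong convexity and smoothness: each of $\|\vec u^k_t-\vec u^k_{t,*}\|$ and $\|\vec u^{k-1}-\vec u^k_{t,*}\|$ is at most $\tfrac1\mu$ times a local gradient norm, which is in turn $O(\sqrt{\alpha\kappa})$ times $\|\nabla g^k_t(\vec u^{k-1},\vec v^k_t)\|$ again by Lemma~\ref{lem:reformulate_locla_inexact_solution}. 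This is exactly where heterogeneity bites, since the average of the local minimizers is not $\vec u^k_*$, so the discrepancy is genuinely nonzero; the $L/\mu=\kappa$ conversion, squared, costs the two surplus powers of $\kappa$ that upgrade $\beta^2\kappa\alpha$ into $\beta^2\kappa^3\alpha$. Assembling the two pieces with a further application of Assumption~\ref{assum:bounded_dissimilarity_bis} and collecting constants gives the gradient inequality, and hence the lemma; tracking the precise $\kappa$ powers through these estimates is routine once the above decomposition is in place.
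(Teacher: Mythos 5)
Your outer scaffolding is sound and matches the paper's: the replacement of $\vec{v}^{k-1}$ by $\vec{v}^{k}$ via optimality of $\vec{v}_t^k$, the PL inequality for the left gap, and the $L$-smoothness lower bound for the right gap are exactly the paper's steps, and the arithmetic $\tfrac{1}{2\mu}\cdot\beta^2\kappa^3\alpha\cdot 2L=\beta^2\kappa^4\alpha$ is correct. The gap is in your core step. In your discrepancy term you claim that each of $\|\vec{u}_t^k-\vec{u}_{t,*}^k\|$ and $\|\vec{u}^{k-1}-\vec{u}_{t,*}^k\|$ is $\tfrac{1}{\mu}$ times a local gradient norm \emph{which is in turn} $O(\sqrt{\alpha\kappa})$ times $\|\nabla_{\vec u} g_t^k(\vec{u}^{k-1},\vec{v}_t^k)\|$. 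This is true for the first distance but false for the second: strong convexity gives $\|\vec{u}^{k-1}-\vec{u}_{t,*}^k\|\le\tfrac{1}{\mu}\|\nabla_{\vec u} g_t^k(\vec{u}^{k-1},\vec{v}_t^k)\|$ with factor exactly $1$, and Lemma~\ref{lem:reformulate_locla_inexact_solution} provides no $\sqrt{\alpha\kappa}$ here --- that distance measures how far the \emph{exact} local minimizer lies from the starting point, which is independent of the solver accuracy $\alpha$. Consequently your bound on $\|\vec{u}^k-\vec{u}_t^k\|$, and hence on the whole discrepancy piece, is of order $\kappa\beta\,\|\nabla_{\vec u} g^{k}(\vec{u}^{k-1},\vec{v}_{1:T}^{k})\|$ with \emph{no} factor of $\sqrt{\alpha}$; the resulting coefficient does not vanish as $\alpha\to 0$, so no contraction of the form $\tilde\alpha=\beta^2\kappa^4\alpha$ can come out of this route.

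The missing idea --- and the heart of the paper's proof --- is to measure all displacements against the \emph{aggregate} minimizer $\vec{u}_*^k$ and to exploit Assumption~\ref{assum:bounded_dissimilarity_bis} with $G^2=0$ \emph{at the point} $\vec{u}_*^k$. The paper bounds, via $\mu$-strong convexity of each $g_t^k$,
\begin{equation*}
    \left\|\vec{u}_t^{k}-\vec{u}_*^{k}\right\| \le \frac{\sqrt{\alpha\kappa}}{\mu}\left\|\nabla_{\vec u} g_t^{k}\left(\vec{u}^{k-1},\vec{v}_t^{k}\right)\right\| + \frac{1}{\mu}\left\|\nabla_{\vec u} g_t^{k}\left(\vec{u}_*^{k},\vec{v}_t^{k}\right)\right\|,
\end{equation*}
then averages over $t$ and applies bounded dissimilarity (with $G=0$) and Jensen to \emph{both} sums: the first gives $\beta\|\nabla_{\vec u} g^k(\vec{u}^{k-1},\vec{v}_{1:T}^k)\|$, while the second gives $\beta\|\nabla_{\vec u} g^k(\vec{u}_*^k,\vec{v}_{1:T}^k)\|=0$, since the aggregate gradient vanishes at $\vec{u}_*^k$. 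This is precisely where the heterogeneity terms are killed: under $G^2=0$ the local gradients are forced to vanish (in weighted average) at the aggregate minimizer. Finally, $L$-smoothness of the aggregate $g^k$ gives $\|\nabla_{\vec u} g^k(\vec{u}^k,\vec{v}_{1:T}^k)\|\le L\|\vec{u}^k-\vec{u}_*^k\|\le\beta\sqrt{\alpha\kappa^3}\,\|\nabla_{\vec u} g^k(\vec{u}^{k-1},\vec{v}_{1:T}^k)\|$ directly, with no gradient decomposition and no discrepancy term at all. If you re-route your displacements through $\vec{u}_*^k$ using this observation your decomposition can be repaired, but it then produces a strictly larger constant than the stated $\tilde{\alpha}=\beta^2\kappa^4\alpha$ (the triangle inequality costs an extra factor), so the cleaner path is the paper's direct one.
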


\begin{proof}
    Consider $k>0$ and $t\in[T]$. Since $g_{t}$ is $\mu$-convex in $\vec{u}$ (Assumption~\ref{assum:strongly_convex_bis}), we write 
    \begin{flalign}
        \left\|\vec{u}_{t}^{k} - \vec{u}_{*}^{k}\right\|_{F} &\leq \frac{1}{\mu}\left\|\nabla_{\vec{u}} g_{t}^{k}\left(\vec{u}_{t}^{k}, \vec{v}_{t}^{k}\right) -\nabla_{\vec{u}} g_{t}^{k}\left(\vec{u}_{*}^{k}, \vec{v}_{t}^{k}\right) \right\| &
        \\
        &\leq \frac{1}{\mu}\left\|\nabla_{\vec{u}} g_{t}^{k}\left(\vec{u}_{t}^{k}, \vec{v}_{t}^{k}\right)  \right\| + \frac{1}{\mu} \left\|\nabla_{\vec{u}} g_{t}^{k}\left(\vec{u}_{*}^{k}, \vec{v}_{t}^{k}\right) \right\|
        \\
        &\leq \frac{\sqrt{\alpha\kappa}}{\mu}\left\|\nabla_{\vec{u}} g_{t}^{k}\left(\vec{u}^{k-1}, \vec{v}_{t}^{k}\right)\right\| + \frac{1}{\mu} \left\|\nabla_{\vec{u}} g_{t}^{k}\left(\vec{u}_{*}^{k}, \vec{v}_{t}^{k}\right) \right\|,
    \end{flalign}
    where the last inequality is a result of Lemma~\ref{lem:reformulate_locla_inexact_solution}.
    Using Jensen inequality, we have
    \begin{flalign}
        \left\|\vec{u}^{k} - \vec{u}_{*}^{k}\right\|_{F} &= \left\|\sum_{t=1}^{T}\omega_{t} \cdot \left(\vec{u}_{t}^{k} - \vec{u}_{*}^{k}\right)\right\| &
        \\
        &\leq \sum_{t=1}^{T}\omega_{t} \cdot\left\|\vec{u}_{t}^{k} - \vec{u}_{*}^{k}\right\|
        \\
        &\leq \sum_{t=1}^{T}\omega_{t} \cdot\left\{\frac{\sqrt{\alpha\kappa}}{\mu}\left\|\nabla_{\vec{u}} g_{t}^{k}\left(\vec{u}^{k-1}, \vec{v}_{t}^{k}\right)\right\| + \frac{1}{\mu} \left\|\nabla_{\vec{u}} g_{t}^{k}\left(\vec{u}_{*}^{k}, \vec{v}_{t}^{k}\right) \right\|\right\}.
    \end{flalign}
    Using Assumption~\ref{assum:bounded_dissimilarity_bis} and Jensen inequality with the "$\sqrt{\cdot}$" function, it follows that
    \begin{flalign}
         \left\|\vec{u}^{k} - \vec{u}_{*}^{k}\right\| &\leq \sqrt{\alpha\kappa}\frac{\beta}{\mu}\left\|\nabla_{\vec{u}} g^{k}\left(\vec{u}^{k}, \vec{v}_{1:T}^{k}\right)\right\|  +  \frac{\beta}{\mu} \left\|\nabla_{\vec{u}} g^{k}\left(\vec{u}_{*}^{k}, \vec{v}_{1:T}^{k}\right) \right\| &
         \\
         & = \sqrt{\alpha\kappa}\frac{\beta}{\mu}\left\|\nabla_{\vec{u}} g^{k}\left(\vec{u}^{k-1}, \vec{v}_{1:T}^{k}\right)\right\|. 
    \end{flalign}
    Since $g^{k}$ is $L$-smooth in $\vec{u}$ as a convex combination of $L$-smooth function, we have
    \begin{flalign}
        \left\|\nabla_{\vec{u}} g^{k}\left(\vec{u}^{k}, \vec{v}_{1:T}^{k}\right)  \right\| &= \left\|\nabla_{\vec{u}} g^{k}\left(\vec{u}^{k-1}, \vec{v}_{1:T}^{k}\right) - \nabla_{\vec{u}} g^{k}\left(\vec{u}_{*}^{k}, \vec{v}_{1:T}^{k}\right)\right\|\\
        &\leq L\left\|\vec{u}^{k} - \vec{u}_{*}^{k}\right\| &
        \\
        &\leq \beta\sqrt{\alpha\kappa^{3}}  \left\|\nabla_{\vec{u}} g^{k}\left(\vec{u}^{k-1}, \vec{v}_{1:T}^{k} \right)\right\|. 
    \end{flalign}
    Using Polyak-Lojasiewicz (PL), we have
    \begin{equation}
        g^{k}\left(\vec{u}^{k}, \vec{v}_{1:T}^{k}\right) -  g^{k}\left(\vec{u}_{*}^{k}, \vec{v}_{1:T}^{k}\right) \leq \frac{1}{2\mu} \left\|\nabla_{\vec{u}} g^{k}\left(\vec{u}^{k}, \vec{v}_{1:T}^{k}\right)  \right\|^{2} \leq \frac{\beta^{2}\alpha \kappa^{3}}{2\mu} \left\|\nabla_{\vec{u}} g^{k}\left(\vec{u}^{k-1}, \vec{v}_{1:T}^{k}\right)\right\|^{2}.
    \end{equation}
    Using the $L$-smoothness of $g^{k}$ in $\vec{u}$, we have
    \begin{equation}
        \left\|\nabla_{\vec{u}} g^{k}\left(\vec{u}^{k-1}, \vec{v}_{1:T}^{k}\right)\right\|^{2} \leq 2L\left[g^{k}\left(\vec{u}^{k-1}, \vec{v}_{1:T}^{k}\right) -  g^{k}\left(\vec{u}_{*}^{k}, \vec{v}_{1:T}^{k}\right)\right].
    \end{equation}
    Thus, 
    \begin{equation}
        g^{k}\left(\vec{u}^{k}, \vec{v}_{1:T}^{k}\right) -  g^{k}\left(\vec{u}_{*}^{k}, \vec{v}_{1:T}^{k}\right) \leq \underbrace{\beta^{2}\kappa^{4}\alpha}_{\triangleq \tilde{\alpha}}\left(g^{k}\left(\vec{u}^{k-1}, \vec{v}_{1:T}^{k}\right) -  g^{k}\left(\vec{u}_{*}^{k}, \vec{v}_{1:T}^{k}\right)\right).
    \end{equation}
    Since $\vec{v}_{t}^{k} = \argmin_{v\in\mathcal{V}}g_{t}^{k}\left(\vec{u}^{k-1}, \vec{v}\right)$, it follows that 
    \begin{equation}
        g^{k}_{t}\left(\vec{u}^{k-1}, \vec{v}_{t}^{k}\right) \leq g^{k}_{t}\left(\vec{u}^{k-1}, \vec{v}_{t}^{k-1}\right).
    \end{equation}
    Thus,
    \begin{equation}
        g^{k}\left(\vec{u}^{k},\vec{v}_{1:T}^{k}\right) - g^{k}\left(\vec{u}_{*}^{k},\vec{v}_{1:T}^{k}\right) \leq \tilde{\alpha} \times  \left\{g^{k}\left(\vec{u}^{k-1},\vec{v}_{1:T}^{k-1}\right) - g^{k}\left(\vec{u}_{*}^{k},\vec{v}_{1:T}^{k}\right)\right\}.
    \end{equation}
\end{proof}

For $t\in[T]$ and $k>0$, we introduce $r_{t}^{k} \triangleq g_{t}^{k} - f_{t}$ and $r^{k} \triangleq g^{k} - f = \sum_{t=1}^{T}\omega_{t}\left(g_{t}^{k} - f_{t}\right)$. Since $g^{k}_{t}$ is a partial first-order surrogate of $f_{t}$, it follows that $r_{t}^{k}\left(\vec{u}^{k-1}, \vec{v}_{t}^{k-1}\right) = 0$ and that $r_{t}^{k}$ is non-negative and $L$-smooth in $\vec{u}$.
\begin{lem}
    \label{lem:black_box_convergence}
    Suppose that Assumptions~\ref{assum:bounded_f_bis} and ~\ref{assum:smoothness_bis} hold and that 
    \begin{equation}
        g^{k}(\vec{u}^{k}, \vec{v}_{1:T}^{k}) \leq g^{k}(\vec{u}^{k-1}, \vec{v}_{1:T}^{k-1}),~\forall k>0,
    \end{equation}
    then 
    \begin{align}
        \lim_{k\to\infty}r^{k}(\vec{u}^{k}, \vec{v}_{1:T}^{k}) =& 0 \label{eq:convergence_of_r_black_box}
        \\
        \lim_{k\to\infty}\left\|\nabla_{\vec{u}} r^{k}(\vec{u}^{k}, \vec{v}_{1:T}^{k})\right\|^{2} =& 0     \label{eq:convergence_of_nabla_r_black_box}
    \end{align}
    If we moreover suppose that Assumption~\ref{assum:strongly_convex_bis} holds and that there exists $0 < \tilde{\alpha} < 1$ such that for all $k>0$, \begin{equation}
        g^{k}(\vec{u}^{k}, \vec{v}_{1:T}^{k}) - g^{k}(\vec{u}^{k}_{*}, \vec{v}_{1:T}^{k}) \leq \tilde{\alpha} \times \left(g^{k}(\vec{u}^{k-1}, \vec{v}_{1:T}^{k-1}) - g^{k}(\vec{u}^{k}_{*}, \vec{v}_{1:T}^{k}) \right),
    \end{equation}
    then,
    \begin{align}
        \lim_{k\to\infty} \left\|\vec{u}^{k} - \vec{u}_{*}^{k}\right\|^{2} = 0 \label{eq:convergenc_of_distance_to_optimum_black_box}
    \end{align}
    where $\vec{u}^{k}_{*}$ is the minimizer of $\vec{u} \mapsto g^{k}\left(\vec{u}, \vec{v}_{1:T}^{k}\right)$.
\end{lem}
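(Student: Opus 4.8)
The plan is to prove the three limits in Lemma~\ref{lem:black_box_convergence} by exploiting the monotone non-increasing behavior of $g^k$ along the iterates together with the fact that $g^k$ majorizes $f$ and that the surrogate gap $r^k$ vanishes at the previous iterate. The central observation is that the assumed monotonicity $g^{k}(\vec{u}^{k}, \vec{v}_{1:T}^{k}) \leq g^{k}(\vec{u}^{k-1}, \vec{v}_{1:T}^{k-1})$ can be chained with the majorization property $g^{k} \geq f$ and the tangency property $g^{k}(\vec{u}^{k-1}, \vec{v}_{1:T}^{k-1}) = f(\vec{u}^{k-1}, \vec{v}_{1:T}^{k-1})$ (which follows from $r^{k}(\vec{u}^{k-1}, \vec{v}_{1:T}^{k-1}) = 0$, itself a consequence of the surrogate definition). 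This yields $f(\vec{u}^{k}, \vec{v}_{1:T}^{k}) \leq g^{k}(\vec{u}^{k}, \vec{v}_{1:T}^{k}) \leq g^{k}(\vec{u}^{k-1}, \vec{v}_{1:T}^{k-1}) = f(\vec{u}^{k-1}, \vec{v}_{1:T}^{k-1})$, so the sequence $\big(f(\vec{u}^{k}, \vec{v}_{1:T}^{k})\big)_{k}$ is non-increasing and, by Assumption~\ref{assum:bounded_f_bis}, bounded below by $f^{*}$. Hence it converges, and consequently the telescoping differences $f(\vec{u}^{k-1}, \vec{v}_{1:T}^{k-1}) - f(\vec{u}^{k}, \vec{v}_{1:T}^{k}) \to 0$.

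To obtain Eq.~\eqref{eq:convergence_of_r_black_box}, I would sandwich $r^{k}(\vec{u}^{k}, \vec{v}_{1:T}^{k}) = g^{k}(\vec{u}^{k}, \vec{v}_{1:T}^{k}) - f(\vec{u}^{k}, \vec{v}_{1:T}^{k})$ between $0$ (non-negativity of $r^k$) and the difference $g^{k}(\vec{u}^{k-1}, \vec{v}_{1:T}^{k-1}) - f(\vec{u}^{k}, \vec{v}_{1:T}^{k}) = f(\vec{u}^{k-1}, \vec{v}_{1:T}^{k-1}) - f(\vec{u}^{k}, \vec{v}_{1:T}^{k})$, using the monotonicity and tangency facts above. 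The right side vanishes by the convergence of $f(\vec{u}^{k}, \vec{v}_{1:T}^{k})$, forcing $r^{k}(\vec{u}^{k}, \vec{v}_{1:T}^{k}) \to 0$. For Eq.~\eqref{eq:convergence_of_nabla_r_black_box}, since $r^k$ is $L$-smooth and non-negative in $\vec{u}$, any point where $r^k$ is small has a small gradient: the standard inequality $\|\nabla_{\vec{u}} r^{k}(\vec{u}, \vec{v})\|^{2} \leq 2L\, r^{k}(\vec{u}, \vec{v})$ (valid for non-negative $L$-smooth functions, applied with the global minimum value $0$) immediately gives $\|\nabla_{\vec{u}} r^{k}(\vec{u}^{k}, \vec{v}_{1:T}^{k})\|^{2} \leq 2L\, r^{k}(\vec{u}^{k}, \vec{v}_{1:T}^{k}) \to 0$.

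Finally, for Eq.~\eqref{eq:convergenc_of_distance_to_optimum_black_box} I would use the additional $\mu$-strong convexity (Assumption~\ref{assum:strongly_convex_bis}) and the linear-contraction hypothesis $g^{k}(\vec{u}^{k}, \vec{v}_{1:T}^{k}) - g^{k}(\vec{u}^{k}_{*}, \vec{v}_{1:T}^{k}) \leq \tilde{\alpha}\,\big(g^{k}(\vec{u}^{k-1}, \vec{v}_{1:T}^{k-1}) - g^{k}(\vec{u}^{k}_{*}, \vec{v}_{1:T}^{k})\big)$. Strong convexity yields the quadratic growth bound $\tfrac{\mu}{2}\|\vec{u}^{k} - \vec{u}_{*}^{k}\|^{2} \leq g^{k}(\vec{u}^{k}, \vec{v}_{1:T}^{k}) - g^{k}(\vec{u}^{k}_{*}, \vec{v}_{1:T}^{k})$, so it suffices to show the suboptimality gap $g^{k}(\vec{u}^{k}, \vec{v}_{1:T}^{k}) - g^{k}(\vec{u}^{k}_{*}, \vec{v}_{1:T}^{k})$ vanishes. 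I would control this gap by writing $g^{k}(\vec{u}^{k-1}, \vec{v}_{1:T}^{k-1}) - g^{k}(\vec{u}^{k}_{*}, \vec{v}_{1:T}^{k})$ as $\big(g^{k}(\vec{u}^{k-1}, \vec{v}_{1:T}^{k-1}) - f(\vec{u}^{k}, \vec{v}_{1:T}^{k})\big) + \big(f(\vec{u}^{k}, \vec{v}_{1:T}^{k}) - g^{k}(\vec{u}^{k}_{*}, \vec{v}_{1:T}^{k})\big)$; the first bracket equals $f(\vec{u}^{k-1}, \vec{v}_{1:T}^{k-1}) - f(\vec{u}^{k}, \vec{v}_{1:T}^{k}) \to 0$, and the second bracket is non-positive because $g^{k}(\vec{u}^{k}_{*}, \vec{v}_{1:T}^{k}) \geq f(\vec{u}^{k}_{*}, \vec{v}_{1:T}^{k}) \geq$ the relevant lower bound, so $\tilde{\alpha}$ times this quantity tends to zero as well. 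The main obstacle I anticipate is making the bookkeeping around the second bracket fully rigorous: one must carefully verify that the reference value $g^{k}(\vec{u}^{k}_{*}, \vec{v}_{1:T}^{k})$ does not drift in a way that destroys the telescoping argument, since the minimizer $\vec{u}_{*}^{k}$ and the surrogate $g^k$ both change with $k$. Arguing that the contraction factor $\tilde{\alpha} < 1$ suppresses the gap requires combining the above decomposition with the already-established convergence $r^{k} \to 0$, which ties the surrogate value at the iterate back to $f$.
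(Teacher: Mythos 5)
Your treatment of the first two limits is correct and essentially the paper's own argument: the chain $f(\vec{u}^{k}, \vec{v}_{1:T}^{k}) \leq g^{k}(\vec{u}^{k}, \vec{v}_{1:T}^{k}) \leq g^{k}(\vec{u}^{k-1}, \vec{v}_{1:T}^{k-1}) = f(\vec{u}^{k-1}, \vec{v}_{1:T}^{k-1})$ gives monotone convergence of $f(\vec{u}^{k}, \vec{v}_{1:T}^{k})$ to some $f^{\infty}$, the same chain sandwiches $0 \leq r^{k}(\vec{u}^{k}, \vec{v}_{1:T}^{k}) \leq f(\vec{u}^{k-1}, \vec{v}_{1:T}^{k-1}) - f(\vec{u}^{k}, \vec{v}_{1:T}^{k})$, and the bound $\|\nabla_{\vec{u}} r^{k}(\vec{u}^{k}, \vec{v}_{1:T}^{k})\|^{2} \leq 2L\, r^{k}(\vec{u}^{k}, \vec{v}_{1:T}^{k})$ for the non-negative $L$-smooth $r^{k}$ is exactly how the paper proceeds (the paper sums the telescoping bound into a convergent series; your direct squeeze is an equivalent shortcut).

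The third limit is where your proposal has a genuine flaw. You claim the second bracket $f(\vec{u}^{k}, \vec{v}_{1:T}^{k}) - g^{k}(\vec{u}^{k}_{*}, \vec{v}_{1:T}^{k})$ is non-positive because $g^{k}(\vec{u}^{k}_{*}, \vec{v}_{1:T}^{k}) \geq f(\vec{u}^{k}_{*}, \vec{v}_{1:T}^{k})$. This is a non sequitur: majorization compares $g^{k}$ and $f$ at the \emph{same} point, whereas here $f$ is evaluated at $\vec{u}^{k}$ and $g^{k}$ at $\vec{u}^{k}_{*}$. Writing $f(\vec{u}^{k}, \vec{v}_{1:T}^{k}) = g^{k}(\vec{u}^{k}, \vec{v}_{1:T}^{k}) - r^{k}(\vec{u}^{k}, \vec{v}_{1:T}^{k})$ shows that the second bracket equals $\Delta_{k} - r^{k}(\vec{u}^{k}, \vec{v}_{1:T}^{k})$, where $\Delta_{k} \triangleq g^{k}(\vec{u}^{k}, \vec{v}_{1:T}^{k}) - g^{k}(\vec{u}^{k}_{*}, \vec{v}_{1:T}^{k}) \geq 0$ is precisely the suboptimality gap you are trying to kill; it is strictly positive whenever $\Delta_{k} > r^{k}$ (e.g., when the surrogate dips well below $f(\vec{u}^{k}, \vec{v}_{1:T}^{k})$ at its minimizer), and any attempt to control it via $\Delta_{k} \to 0$ is circular. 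You correctly flagged this bookkeeping as the main obstacle, but the proposal as written neither proves the claim nor completes the alternative.

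The fix is within reach of your own decomposition, and it is what makes the hypothesis $\tilde{\alpha} < 1$ indispensable (your argument as stated never uses it). Substituting the identity above into the contraction hypothesis gives
\begin{equation*}
    \Delta_{k} \leq \tilde{\alpha} \left( \left[f(\vec{u}^{k-1}, \vec{v}_{1:T}^{k-1}) - f(\vec{u}^{k}, \vec{v}_{1:T}^{k})\right] + \Delta_{k} - r^{k}(\vec{u}^{k}, \vec{v}_{1:T}^{k}) \right);
\end{equation*}
moving $\tilde{\alpha}\Delta_{k}$ to the left-hand side and dropping $-\tilde{\alpha}\, r^{k} \leq 0$ yields
\begin{equation*}
    \Delta_{k} \leq \frac{\tilde{\alpha}}{1-\tilde{\alpha}} \left[f(\vec{u}^{k-1}, \vec{v}_{1:T}^{k-1}) - f(\vec{u}^{k}, \vec{v}_{1:T}^{k})\right] \longrightarrow 0,
\end{equation*}
and strong convexity then gives $\frac{\mu}{2}\|\vec{u}^{k} - \vec{u}^{k}_{*}\|^{2} \leq \Delta_{k} \to 0$, as you intended (note that only $r^{k} \geq 0$ is needed here, not $r^{k} \to 0$). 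The paper reaches the same conclusion by an equivalent rearrangement of the same inequality: it isolates $g^{k}(\vec{u}^{k}_{*}, \vec{v}_{1:T}^{k})$, sandwiches it between $\frac{1}{1-\tilde{\alpha}}\left[g^{k}(\vec{u}^{k}, \vec{v}_{1:T}^{k}) - \tilde{\alpha} f(\vec{u}^{k-1}, \vec{v}_{1:T}^{k-1})\right]$ and $f(\vec{u}^{k-1}, \vec{v}_{1:T}^{k-1})$, and lets both bounds converge to $f^{\infty}$ before applying strong convexity.
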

\begin{proof}
    Since $g_{t}$ is a partial first-order surrogate of $f$ near $\left\{\vec{u}^{k-1}, \vec{v}_{t}^{k-1}\right\}$ for $t\in[T]$ and $k>0$, it follows that $g^{k}$ is a majorant of $f$ and that $g^{k}(\vec{u}^{k-1}, \vec{v}^{k-1}) = f(\vec{u}^{k-1}, \vec{v}^{k-1})$. Thus, the following holds,
    \begin{equation}
        f(\vec{u}^{k}, \vec{v}^{k}) \leq g^{k}(\vec{u}^{k}, \vec{v}^{k}) \leq g^{k}(\vec{u}^{k-1}, \vec{v}^{k-1}) = f(\vec{u}^{k-1}, \vec{v}^{k-1}),
    \end{equation}
    It follows that the sequence $\left(f\left(\vec{u}^{k}, \vec{v}^{k}\right)\right)_{k\geq0}$ is a non-increasing sequence. Since $f$ is bounded below (Assum.~\ref{assum:bounded_f_bis}), it follows that $\left(f\left(\vec{u}^{k}, \vec{v}^{k}\right)\right)_{k\geq0}$ is convergent. Denote by $f^{\infty}$ its limit. The sequence $\left(g^{k}(\vec{u}^{k}, \vec{v}^{k})\right)_{k\geq0}$ also  converges to $f^{\infty}$.
    
    \paragraph{Proof of Eq.~\ref{eq:convergence_of_r_black_box}} Using  the fact that $g^{k}(\vec{u}^{k}, \vec{v}^{k}) \leq g^{k}(\vec{u}^{k-1}, \vec{v}^{k})$, we write for $k>0$,
    \begin{equation}
        f(\vec{u}^{k}, \vec{v}_{1:T}^{k}) + r^{k}(\vec{u}^{k}, \vec{v}_{1:T}^{k}) = g^{k}(\vec{u}^{k}, \vec{v}_{1:T}^{k}) \leq g^{k}(\vec{u}^{k-1}, \vec{v}_{1:T}^{k-1}) = f(\vec{u}^{k-1}, \vec{v}_{1:T}^{k-1}),
    \end{equation}
    Thus,
    \begin{equation}
        r^{k}(\vec{u}^{k}, \vec{v}_{1:T}^{k}) \leq f(\vec{u}^{k-1}, \vec{v}_{1:T}^{k-1}) - f(\vec{u}^{k}, \vec{v}^{k}),
    \end{equation}
    By summing over $k$ then passing to the limit when $k \to + \infty$, we have
    \begin{equation}
        \sum_{k=1}^{\infty} r^{k}(\vec{u}^{k}, \vec{v}_{1:T}^{k}) \leq f(\vec{u}^{0}, \vec{v}_{1:T}^{0}) - f^{\infty},
        \end{equation}
    Finally since $r^{k}(\vec{u}^{k}, \vec{v}_{1:T}^{k})$ is non negative for $k>0$, the sequence $\left(r^{k}(\vec{u}^{k}, \vec{v}_{1:T}^{k})\right)_{k\geq 0}$ necessarily converges to zero, i.e.,
    \begin{equation}
        \lim_{k\to\infty}r^{k}(\vec{u}^{k}, \vec{v}_{1:T}^{k}) = 0.
    \end{equation}
    
    \paragraph{Proof of Eq. \ref{eq:convergence_of_nabla_r_black_box}} Because the $L$-smoothness of $\vec{u} \mapsto r^{k}\left(\vec{u}, \vec{v}_{1:T}^{k}\right)$, we have
    \begin{equation}
        r^{k}\left(\vec{u}^{k} - \frac{1}{L}\nabla_{\vec{u}} r^{k}\left(\vec{u}^{k}, \vec{v}_{1:T}^{k}\right)  ,\vec{v}_{1:T}^{k}\right) \leq r^{k}\left(\vec{u}^{k}, \vec{v}_{1:T}^{k}\right)  -\frac{1}{2L} \left\|\nabla_{\vec{u}} r^{k}\left(\vec{u}^{k}, \vec{v}_{1:T}^{k}\right)\right\|^{2}
    \end{equation}
    Thus,
    \begin{flalign}
        \left\|\nabla_{\vec{u}} r^{k}\left(\vec{u}^{k}, \vec{v}_{1:T}^{k}\right)\right\|_{F}^{2} &\leq 2L\left(r^{k}\left(\vec{u}^{k}, \vec{v}_{1:T}^{k}\right) - r^{k}\left(\vec{u}^{k} - \frac{1}{L}\nabla_{\vec{u}} r^{k}\left(\vec{u}^{k}, \vec{v}_{1:T}^{k}\right)  ,\vec{v}_{1:T}^{k}\right) \right) &
        \\
        &\leq 2L r^{k}\left(\vec{u}^{k}, \vec{v}_{1:T}^{k}\right),
    \end{flalign}
    because $r^{k}$ is a non-negative function (Definition~\ref{def:partial_first_order_surrogate}).
    Finally, using Eq.~\eqref{eq:convergence_of_r_black_box}, it follows that
    \begin{equation}
        \lim_{k\to\infty}\left\|\nabla_{\vec{u}} r^{k}(\vec{u}^{k}, \vec{v}_{1:T}^{k})\right\|^{2} = 0.
    \end{equation}
    
    \paragraph{Proof of Eq.~\ref{eq:convergenc_of_distance_to_optimum_black_box}} We suppose now that there exists $0 < \tilde{\alpha} < 1$ such that
    \begin{equation}
        \forall k>0,~~g^{k}(\vec{u}^{k}, \vec{v}_{1:T}^{k}) - g^{k}(\vec{u}^{k}_{*}, \vec{v}_{1:T}^{k}) \leq \tilde{\alpha} \left(g^{k}(\vec{u}^{k-1}, \vec{v}_{1:T}^{k-1}) - g^{k}(\vec{u}^{k}_{*}, \vec{v}_{1:T}^{k}) \right),
    \end{equation}
    It follows that, 
    \begin{equation}
        g^{k}(\vec{u}^{k}, \vec{v}_{1:T}^{k}) - \tilde{\alpha}  g^{k}(\vec{u}^{k-1}, \vec{v}_{1:T}^{k-1}) \leq (1-\tilde{\alpha}) g^{k}(\vec{u}^{k}_{*}, \vec{v}_{1:T}^{k}),
    \end{equation}
    then,
    \begin{equation}
        g^{k}(\vec{u}^{k}_{*}, \vec{v}_{1:T}^{k}) \geq \frac{1}{1-\tilde{\alpha}} \times \left[g^{k}(\vec{u}^{k}, \vec{v}_{1:T}^{k}) -\tilde{\alpha}\times g^{k}(\vec{u}^{k-1}, \vec{v}_{1:T}^{k-1}) \right],
    \end{equation}
    and by using the definition of $g^{k}$ we have,
    \begin{equation}
        \label{eq:lower_bound_g_k}
        g^{k}(\vec{u}^{k}_{*}, \vec{v}_{1:T}^{k}) \geq \frac{1}{1-\tilde{\alpha}} \times \left[g^{k}(\vec{u}^{k}, \vec{v}_{1:T}^{k}) -\tilde{\alpha}\times f(\vec{u}^{k-1}, \vec{v}_{1:T}^{k-1}) \right],
    \end{equation}
    Since $g^{k}\left(\vec{u}_{*}^{k}, \vec{v}_{1:T}^{k}\right) \leq g^{k}\left(\vec{u}^{k}, \vec{v}_{1:T}^{k}\right) \leq g^{k}\left(\vec{u}^{k-1}, \vec{v}_{1:T}^{k-1}\right)$, we have
    \begin{equation}
        \label{eq:upper_bound_g_k}
        g^{k}(\vec{u}^{k}_{*}, \vec{v}_{1:T}^{k}) \leq g^{k}(\vec{u}^{k-1}, \vec{v}_{1:T}^{k-1}) = f(\vec{u}^{k-1}, \vec{v}_{1:T}^{k-1}). 
    \end{equation}
        From Eq.~\eqref{eq:lower_bound_g_k} and Eq.~\eqref{eq:upper_bound_g_k}, it follows that,
    \begin{equation}
        \label{eq:bound_optimal_g_k}
        \frac{1}{1-\tilde{\alpha}} \times \left[g^{k}(\vec{u}^{k}, \vec{v}_{1:T}^{k}) -\tilde{\alpha}\times f(\vec{u}^{k-1}, \vec{v}_{1:T}^{k-1}) \right] \leq g^{k}(\vec{u}^{k}_{*}, \vec{v}_{1:T}^{k}) \leq f(\vec{u}^{k-1}, \vec{v}_{1:T}^{k-1}),
    \end{equation}
    Finally, since $f\left(\vec{u}^{k-1}, \vec{v}_{1:T}^{k-1}\right)  \xrightarrow[k \to +\infty]{} f^{\infty}$ and $g^{k}\left(\vec{u}^{k}, \vec{v}_{1:T}^{k}\right)  \xrightarrow[k \to +\infty]{} f^{\infty}$, it follows from Eq.~\eqref{eq:bound_optimal_g_k} that,
    \begin{equation}
        \lim_{k\to\infty}g^{k}\left(\vec{u}^{k}_{*}, \vec{v}_{1:T}^{k}\right) = f^{\infty}.
    \end{equation}
    Since $g^{k}$ is $\mu$-strongly convex in $\vec{u}$ (Assumption~\ref{assum:strongly_convex}), we write
    \begin{equation}
        \frac{\mu}{2} \left\|\vec{u}^{k} - \vec{u}^{k}_{*}\right\|^{2} \leq g^{k}\left(\vec{u}^{k}, \vec{v}_{1:T}^{k}\right) - g^{k}\left(\vec{u}^{k}_{*}, \vec{v}_{1:T}^{k}\right),
    \end{equation}
    It follows that,
    \begin{equation}
        \lim_{k\to+\infty} \left\|\vec{u}^{k} - \vec{u}^{k}_{*}\right\|^{2} = 0.
    \end{equation}
\end{proof}

\subsection{Proof of Theorem~\ref{thm:black_box_bis}}
Combining the previous lemmas we prove the convergence of Alg.~\ref{alg:fed_surrogate_opt} with a black box solver.

\begin{repthm}{thm:black_box_bis}
    Suppose that Assumptions~\ref{assum:bounded_f_bis}--\ref{assum:bounded_dissimilarity_bis}, Assumptions~\ref{assum:local_inexact_solution_bis} and~\ref{assum:strongly_convex_bis}  hold with $G^{2}=0$ and $\alpha \leq \frac{1}{\beta^{2}\kappa^{4}}$, then the updates of  federated surrogate optimization (Alg.~\ref{alg:fed_surrogate_opt}) converge to a stationary point of $f$, i.e.,
    \begin{equation}
        \label{eq:convergence_of_u_balck_box}
        \lim_{k\to+\infty}\left\|\nabla_{\vec{u}} f(\vec{u}^{k}, \vec{v}_{1:T}^{k})\right\|^{2} = 0,
    \end{equation}
    and, 
    \begin{equation}
        \label{eq:convergence_of_v_black_box}
        \lim_{k\to+\infty}\sum_{t=1}^{T}\omega_{t} \cdot d_{\mathcal{V}}\left(\vec{v}_{t}^{k}, \vec{v}_{t}^{k-1}\right) = 0.
    \end{equation}
\end{repthm}
\begin{proof}
    \begin{equation}
        f(\vec{u}^{k}, \vec{v}_{1:T}^{k}) = g^{k}(\vec{u}^{k}, \vec{v}_{1:T}^{k}) - r^{k}(\vec{u}^{k}, \vec{v}_{1:T}^{k}).
    \end{equation}
    Computing the gradient norm, we have,
    \begin{flalign}
        \left\|\nabla_{\vec{u}} f(\vec{u}^{k}, \vec{v}_{1:T}^{k})\right\| &= \left\|\nabla_{\vec{u}} g^{k}(\vec{u}^{k}, \vec{v}_{1:T}^{k}) - \nabla_{\vec{u}} r^{k}(\vec{u}^{k}, \vec{v}_{1:T}^{k})\right\| &
        \\
        &\leq \left\|\nabla_{\vec{u}} g^{k}(\vec{u}^{k}, \vec{v}_{1:T}^{k}) \right\| + \left\|\nabla_{\vec{u}} r^{k}(\vec{u}^{k}, \vec{v}_{1:T}^{k}) \right\|. \label{eq:bound_nabla_fk_part1}
    \end{flalign}
    Since $g^{k}$ is $L$-smooth in $\vec{u}$, we write
    \begin{flalign}
        \left\|\nabla_{\vec{u}} g^{k}(\vec{u}^{k}, \vec{v}_{1:T}^{k})\right\| &= \left\|\nabla_{\vec{u}} g^{k}(\vec{u}^{k}, \vec{v}^{k}) - \nabla_{\vec{u}} g^{k}(\vec{u}^{k}_{*}, \vec{v}_{1:T}^{k}) \right\| &
        \\
        &\leq L \left\|\vec{u}^{k} - \vec{u}^{k}_{*} \right\|. \label{eq:nabla_bound_gk}
    \end{flalign}
    Thus by replacing Eq.~\eqref{eq:nabla_bound_gk} in Eq.~\eqref{eq:bound_nabla_fk_part1}, we have
    \begin{equation}
        \label{eq:bound_nabla_fk_part2}
        \left\|\nabla_{\vec{u}} f(\vec{u}^{k}, \vec{v}_{1:T}^{k})\right\|  \leq L^{2}  \left\|\vec{u}^{k} - \vec{u}^{k}_{*} \right\|^{2} + \left\|\nabla_{\vec{u}} r^{k}(\vec{u}^{k}, \vec{v}_{1:T}^{k}) \right\|.
    \end{equation}
    Using Lemma~\ref{lem:global_improvement_black_box}, there exists $0 <\tilde{\alpha} < 1$, such that
    \begin{equation}
        \left[g^{k}(\vec{u}^{k}, \vec{v}_{1:T}^{k}) - g^{k}(\vec{u}^{k}_{*}, \vec{v}_{1:T}^{k})\right] \leq \tilde{\alpha} \times \left[g^{k}(\vec{u}^{k-1}, \vec{v}_{1:T}^{k-1}) - g^{k}(\vec{u}^{k}_{*}, \vec{v}_{1:T}^{k}) \right].
    \end{equation}
    Thus, the conditions of Lemma~\ref{lem:black_box_convergence} hold, and we can use Eq.~\eqref{eq:convergence_of_nabla_r_black_box} and ~\eqref{eq:convergenc_of_distance_to_optimum_black_box}, i.e.
    \begin{align}
        \left\|\nabla_{\vec{u}} r^{k}(\vec{u}^{k}, \vec{v}_{1:T}^{k})\right\|^{2} & \xrightarrow[k \to +\infty]{}  0\\
        \left\|\vec{u}^{k} - \vec{u}_{*}^{k}\right\|^{2} & \xrightarrow[k \to +\infty]{}  0.
    \end{align}
    Finally, combining this with Eq.~\eqref{eq:bound_nabla_fk_part2}, we get the final result
    \begin{equation}
        \lim_{k\to+\infty} \left\|\nabla_{\vec{u}} f(\vec{u}^{k}, \vec{v}_{1:T}^{k})\right\| = 0.
    \end{equation}
    
    Since $g_{t}^{k}$ is a partial first-order surrogate of $f_{t}$ near $\left\{\vec{u}^{k-1}, \vec{v}_{t}^{k-1}\right\}$ for $k>0$ and $t\in[T]$, it follows that
    \begin{flalign}
        \sum_{t=1}^{T}\omega \cdot d_{\mathcal{V}}\left(\vec{v}_{t}^{k}, \vec{v}_{t}^{k-1}\right) &= g^{k}\left(\vec{u}^{k-1}, \vec{v}_{1:T}^{k-1}\right) - g^{k}\left(\vec{u}^{k-1}, \vec{v}_{1:T}^{k}\right) &
        \\
        &\leq g^{k}\left(\vec{u}^{k-1}, \vec{v}_{1:T}^{k-1}\right) - g^{k}\left(\vec{u}^{k}, \vec{v}_{1:T}^{k}\right)
    \end{flalign}
    Thus,
    \begin{equation}
        \sum_{t=1}^{T}\omega_{t}\cdot d_{\mathcal{V}}\left(\vec{v}_{t}^{k}, \vec{v}_{t}^{k-1}\right) \leq  f\left(\vec{u}^{k-1}, \vec{v}_{1:T}^{k-1}\right) - f\left(\vec{u}^{k}, \vec{v}_{1:T}^{k}\right)
    \end{equation}
    Since  $d_{\mathcal{V}}\left(\vec{v}_{t}^{k}, \vec{v}_{t}^{k-1}\right)$ is non-negative for $k>0$ and $t\in[T]$, it follows that 
    \begin{equation}
        \lim_{k\to+\infty}\sum_{t=1}^{T}\omega_{t} \cdot d_{\mathcal{V}}\left(\vec{v}_{t}^{k}, \vec{v}_{t}^{k-1}\right) = 0
    \end{equation}
\end{proof}

\subsection{Proof of Theorem~\ref{thm:black_box}}

\begin{repthm}{thm:black_box}
    Suppose that Assumptions~\ref{assum:mixture}--\ref{assum:bounded_dissimilarity} and Assumptions~\ref{assum:local_inexact_solution},~\ref{assum:strongly_convex} hold with $G^{2}=0$ and $\alpha \leq \frac{1}{\beta^{2}\kappa^{5}}$, then the updates of \FedEM{} (Alg.~\ref{alg:fed_em}) converge to a stationary point of $f$, i.e.,
    \begin{equation}
        \lim_{k\to+\infty}\left\|\nabla_{\Theta} f(\Theta^{k}, \Pi^{k})\right\|_{F}^{2} = 0,
    \end{equation}
    and, 
    \begin{equation}
        \lim_{k\to+\infty}\sum_{t=1}^{T}\frac{n_{t}}{n}\mathcal{KL}\left(\pi_{t}^{k}, \pi_{t}^{k-1}\right) = 0.
    \end{equation}
\end{repthm}
\begin{proof}
    We prove this result as a particular case of Theorem~\ref{thm:black_box_bis}. To this purpose, we consider that $\mathcal{V} \triangleq \Delta^{M}$, ${u}=\Theta \in \mathbb{R}^{d M}$, ${v}_{t} = \pi_{t}$, and $\omega_t = n_t/n$ for $t\in[T]$. For $k>0$, we define $g^{k}_{t}$ as follow,
    \begin{flalign}
        g^{k}_{t}\Big (\Theta, \pi_{t}\Big) =   \frac{1}{n_{t}}\sum_{i=1}^{n_{t}}\sum_{m=1}^{M}q_{t}^{k}\left(z^{(i)}_{t}=m\right) \cdot &\bigg( l\left(h_{\theta_m}(\vec{x}_{t}^{(i)}), y_{t}^{(i)}\right)  -  \log p_{m}(\vec{x}_{t}^{(i)}) - \log \pi_{t} \nonumber &
        \\
        & \qquad \qquad +  \log q_{t}^{k}\left(z_{t}^{(i)}=m\right) - c \bigg), 
    \end{flalign}
    where $c$ is the same constant appearing in Assumption~\ref{assum:logloss}, Eq.~\eqref{eq:log_loss}. With this definition, it is easy to check that the federated surrogate optimization algorithm (Alg.~\ref{alg:fed_surrogate_opt}) reduces to \FedEM{} (Alg.~\ref{alg:fed_em}). Theorem~\ref{thm:black_box} then follows immediately  from Theorem~\ref{thm:black_box_bis}, once we verify that $\left(g_{t}^{k}\right)_{1\leq t\leq T}$ satisfy the assumptions of Theorem~\ref{thm:black_box_bis}.
    
    Assumption~\ref{assum:bounded_f_bis}, Assumption~\ref{assum:finite_variance_bis},~Assumption~\ref{assum:bounded_dissimilarity_bis}, Assumption~\ref{assum:local_inexact_solution_bis} and Assumption~\ref{assum:strongly_convex_bis} follow directly from  Assumption~\ref{assum:bounded_f}, Assumption~\ref{assum:finitie_variance}, Assumption~\ref{assum:bounded_dissimilarity}, Assumption~\ref{assum:local_inexact_solution} and Assumption~\ref{assum:strongly_convex}, respectively.
    Lemma~\ref{lem:g_is_smooth} shows that for $k>0$, $g^{k}$ is smooth w.r.t. $\Theta$ and then Assumption~\ref{assum:smoothness_bis} is satisfied.
    Finally, Lemmas~\ref{lem:em_gap_is_kl_divergence}--\ref{lem:compute_kl_pi_decentralized} show that for $t\in[T]$ $g_{t}^{k}$ is a partial first-order surrogate of $f_{t}$ w.r.t.~$\Theta$ near $\left\{\Theta^{k-1}, \pi_{t}\right\}$ with $d_{\mathcal{V}}(\cdot, \cdot) = \mathcal{KL}(\cdot\|\cdot)$. 
\end{proof}

\newpage
\section{Details on Experimental Setup}
\label{app:experiments_details}
\subsection{Datasets and Models}
\label{app:datasets_models}
In this section we provide detailed description of the datasets and models used in our experiments. We used a synthetic dataset, verifying Assumptions~\ref{assum:mixture}-\ref{assum:logloss}, and five "real" datasets (CIFAR-10/CIFAR-100~\cite{Krizhevsky09learningmultiple}, sub part of EMNIST~\cite{cohen2017emnist}, sub part of FEMNIST~\cite{caldas2018leaf, mcmahan2017communication} and Shakespeare \cite{caldas2018leaf, mcmahan2017communication}) from which, two (FEMNIST and Shakespeare) has natural client partitioning. Below, we give a detailed description of the datasets and the models / tasks considered for each of them.

\subsubsection{CIFAR-10 / CIFAR-100}
CIFAR-10 and CIFAR-100 are labeled subsets of the 80 million tiny images dataset. They both share the same $60,000$ input images. CIFAR-100 has a finer labeling, with $100$ unique labels, in comparison to CIFAR-10, having $10$ unique label. We used Dirichlet allocation~\cite{Wang2020Federated}, with parameter $\alpha=0.4$ to partition CIFAR-10 among $80$ clients. We used Pachinko allocation~\cite{reddi2021adaptive} with parameters $\alpha=0.4$ and $\beta=10$ to partition CIFAR-100 on $100$ clients. For both of them we train  MobileNet-v2~\cite{sandler2018mobilenetv2} architecture with an additional linear layer. We used TorchVision~\cite{marcel2010torchvision} implementation of MobileNet-v2. 

\subsubsection{EMNIST}
EMNIST (Extended MNIST) is a 62-class image classification dataset, extending the classic MNIST dataset. In our experiments, we consider $10\%$ of the EMNIST dataset, that we partition using Dirichlet allocation of parameter $\alpha=0.4$ over $100$ clients. 
We train the same convolutional network as in~\cite{reddi2021adaptive}. The network has two convolutional layers (with $3 \times 3$ kernels), max pooling, and dropout, followed by a 128 unit dense layer.

\subsubsection{FEMNIST}
FEMNIST (Federated Extended MNIST) is a 62-class image classification dataset built by
partitioning the data of Extended MNIST based on the writer of the digits/characters. In our
experiments, we used a subset with $15\%$ of the total number of writers in FEMNIST.
We train the same convolutional network as in~\cite{reddi2021adaptive}. The network has two convolutional layers (with $3 \times 3$ kernels), max pooling, and dropout, followed by a 128 unit dense layer.

\subsubsection{Shakespeare}
This dataset is built from The Complete Works of William Shakespeare and is partitioned by the speaking roles~\cite{mcmahan2017communication}. In our experiments, we discarded roles with less than two sentences.
We consider character-level based language modeling on this dataset. The model takes
as input a sequence of 200 English characters and predicts the next character. The model
embeds the $80$ characters into a learnable $8$-dimensional embedding space, and uses two
stacked-LSTM layers with $256$ hidden units, followed by a densely-connected layer. We also normalized each character by its frequency of appearance.

\subsubsection{Synthetic dataset}
Our synthetic dataset has been generated according to Assumptions~\ref{assum:mixture}--\ref{assum:logloss} as follows:
\begin{enumerate}
  \item Sample weight $\pi_{t}\sim\text{Dir}\left(\alpha\right),~t\in[T]$ from a symmetric Dirichlet distribution of parameter $\alpha \in \mathbb{R}^{+}$
  \item Sample $\theta_{m}\in\mathbb{R}^{d}\sim\mathcal{U}\left(\left[-1, 1\right]^{d}\right),~m\in[M]$ for uniform distribution over $\left[-1, 1\right]^{d}$.
  \item Sample $m_{t},~t\in[T]$ from a log-normal distribution with mean $4$ and sigma $2$, then set $n_{t}=\min\left(50+m_{t}, 1000\right)$.
  \item For $t\in[T]$ and $i\in[n_{t}]$,~draw $x_{t}^{(i)}\sim\mathcal{U}\left(\left[-1, 1\right]^{d}\right)$ and $\epsilon_{t}^{(i)}\sim \mathcal{N}\left(0, I_{d}\right)$.
  \item For $t\in[T]$ and $i\in[n_{t}]$,~draw $z_{t}^{(i)}\sim\mathcal{M}\left(\pi_{t}\right)$.
  \item For $\in[T]$ and $i\in[n_{t}]$,~draw $y_{t}^{(i)}\sim \mathcal{B}\left(\text{sigmoid}\left(\langle x_{t}^{(i)}, \theta_{z_{t}^{(i)}}\rangle + \epsilon_{t}^{(i)}\right)\right)$.
\end{enumerate}

\subsection{Implementation Details}
\label{app:implementation_details}

\subsubsection{Machines} We ran the experiments on a CPU/GPU cluster, with different GPUs available (e.g., Nvidia Tesla V100, GeForce GTX 1080 Ti, Titan X, Quadro RTX 6000, and Quadro RTX 8000). Most experiments with CIFAR10/CIFAR-100 and EMNIST were run on GeForce GTX 1080 Ti cards, while most experiments with Shakespeare and FEMNIST were run on the Quadro RTX 8000 cards. For each dataset, we ran around $30$ experiments (not counting the development/debugging time). Table~\ref{tab:computation_time} gives the average amount of time needed to run one simulation for each dataset. The time needed per simulation was extremely long for Shakespeare dataset, because we used a batch size of $128$. We remarked that increasing the batch size beyond $128$ caused the model to converge to poor local minima, where the model keeps predicting a white space as next character. 

\begin{table}
    \caption{Average computation time and used GPU for each dataset.}
    \label{tab:computation_time}
    \centering
    \begin{tabular}{l  r  r}
        \toprule
        Dataset & GPU & Simulation time    \\
        \midrule
            Shakespeare \cite{caldas2018leaf, mcmahan2017communication} & Quadro RTX 8000 & 4h42min
            \\
            FEMNIST \cite{caldas2018leaf} & Quadro RTX 8000 & 1h14min \\
            EMNIST \cite{cohen2017emnist} & GeForce GTX 1080 Ti&  46min\\
            CIFAR10 \cite{Krizhevsky09learningmultiple} & GeForce GTX 1080 Ti & 2h37min
            \\
            CIFAR100 \cite{Krizhevsky09learningmultiple} & GeForce GTX 1080 Ti & 3h9min
            \\
            Synthetic & GeForce GTX 1080 Ti & 20min
            \\
        \bottomrule
    \end{tabular}
\end{table}

\subsubsection{Libraries}
We used PyTorch \cite{paszke2019pytorch} to build and train our models. We also used Torchvision~\cite{marcel2010torchvision} implementation of MobileNet-v2~\cite{sandler2018mobilenetv2}, and for image datasets preprossessing. We used LEAF~\cite{caldas2018leaf} to build FEMNIST dataset and the federated version of Shakespeare dataset.  

\subsubsection{Hyperparameters}
For each method and each task, the learning rate was set via grid search on the set $\left\{10^{-0.5}, 10^{-1}, 10^{-1.5}, 10^{-2}, 10^{-2.5}, 10^{-3}\right\}$.
$\FedProx$ and $\pFedMe$'s penalization parameter $\mu$ was tuned via grid search on  $\left\{10^{1}, 10^{0}, 10^{-1}, 10^{-2}, 10^{-3}\right\}$. For Clustered FL, we used the same values of tolerance as the ones used in its official implementation~\cite{sattler2020clustered}. 
We found tuning $\texttt{tol}_{1}$ and $\texttt{tol}_{2}$ particularly hard: no empirical rule is provided in \cite{sattler2020clustered}, and the few random setting we tried did not show any improvement in comparison to the default ones. 
For each dataset and each method, Table~\ref{tab:hyperparams_summary} reports the learning rate $\eta$ that achieved the corresponding result in Table~\ref{tab:results_summary}. 
\begin{table}
    \caption{Learning rates $\eta$ used for the experiments in Table~\ref{tab:results_summary}.  Base-10 logarithms are reported.}
    \label{tab:hyperparams_summary}
    \centering
    \scriptsize
    \begin{tabular}{l   r   r  r  r r  r  r}
        \toprule
        Dataset & \FedAvg~\cite{mcmahan2017communication}  & \FedProx~\cite{Sahu2018OnTC}  & \FedAvg+~\cite{jiang2019improving} & \texttt{Clustered FL}~\cite{sattler2020clustered} &  \texttt{pFedMe}~\cite{dinh2020personalized} & \FedEM~(Ours)   \\
        \midrule
            FEMNIST  & $-1.5$ & $-1.5$ & $-1.5$ & $-1.5$ & $-1.5$ & $-1.0$
            \\
            EMNIST  &  $-1.5$ & $-1.5$ & $-1.5$ & $-1.5$ & $-1.5$ & $-1.0$
            \\
            CIFAR10   &  $-1.5$ & $-1.5$ & $-1.5$ & $-1.5$ & $-1.0$ & $-1.0$
            \\
            CIFAR100  &  $-1.0$ & $-1.0$ & $-1.0$ & $-1.0$ & $-1.0$ & $-0.5$
            \\
            Shakespeare  &  $-1.0$ & $-1.0$ & $-1.0$ & $-1.0$ & $-1.0$ & $-0.5$
            \\
            Synthetic  & $-1.0$ & $-1.0$ & $-1.0$ & $-1.0$ & $-1.0$ & $-1.0$
            \\
        \bottomrule
    \end{tabular}
\end{table}

\newpage
\section{Additional Experimental Results}
\label{app:full_experiments}
\subsection{Fully Decentralized Federated Expectation-Maximization}
\label{app:d_fedem_results}
\DEM{} considers the scenario where clients communicate directly in a peer-to-peer fashion instead of relying on the central server mediation. In order to simulate \DEM{}, we consider a binomial  Erd\H{o}s-R\'enyi graph \cite{erdos59a} with parameter $p=0.5$, and we set the mixing weight using \emph{Fast Mixing Markov Chain} \cite{Boyd03fastestmixing} rule. We report the result of this experiment in Table~\ref{tab:results_summary_decentralized}, showing the average weighted accuracy with weight proportional to local dataset sizes. We observe that \DEM{} often performs better than other FL approaches and slightly worst than \FedEM{}, except on CIFAR-10 where it has low performances.

\begin{table}
    \caption{Test accuracy: average across clients.}
    \label{tab:results_summary_decentralized}
    \centering
    \scriptsize
    \begin{tabular*}{\textwidth}{l @{\extracolsep{\fill}} r @{\extracolsep{\fill}} r @{\extracolsep{\fill}} r @{\extracolsep{\fill}} r @{\extracolsep{\fill}} r @{\extracolsep{\fill}} r @{\extracolsep{\fill}} r}
        \toprule
        Dataset & Local & \FedAvg~\cite{mcmahan2017communication}   & \FedAvg+~\cite{jiang2019improving}  & \texttt{Clustered FL}~\cite{sattler2020clustered}  & \pFedMe~\cite{dinh2020personalized} &  \FedEM~(Ours) & \DEM~(Ours)
        \\
        \midrule
            FEMNIST & $71.0 $ & $78.6 $ & $75.3 $ & $73.5 $ & $74.9 $ & $\mathbf{79.9} $  & $
            77.2$
            \\
            EMNIST & $71.9$ & $82.6 $ &  $83.1 $ & $82.7 $ &  $83.3$ & $\mathbf{83.5} $ & $\mathbf{83.5}$
            \\
            CIFAR10  & $70.2 $ & $78.2 $ &  $82.3 $ & $78.6 $  & $81.7 $ & $\mathbf{84.3}$ & $77.0$
            \\
            CIFAR100 & $31.5 $ & $40.9 $ & $39.0 $ & $41.5 $ & $41.8 $ & $\mathbf{44.1}$ & $43.9$
            \\
            Shakespeare & $32.0 $ & $\mathbf{46.7}$  &  $40.0 $ & $46.6 $ & $41.2 $ &  $\mathbf{46.7} $ & $45.4$
            \\
            Synthetic & $65.7 $ & $68.2 $ &  $68.9 $ & $69.1 $ & $69.2 $ & $\mathbf{74.7} $  & $73.8$
            \\
        \bottomrule
    \end{tabular*}
\end{table}

\subsection{Comparison with \texttt{MOCHA}}
\label{app:mocha}

In the case of synthetic dataset, for which train a linear model, we compare \FedEM{} with \texttt{MOCHA}~\cite{smith2017federated}. We implemented \texttt{MOCHA} in Python following the official implementation
\footnote{
    \url{https://github.com/gingsmith/fmtl}
}
in MATLAB. We tuned the parameter $\lambda$ of \texttt{MOCHA on a holdout validation set via grid search in $\{10^{1}, 10^{0}, 10^{-1}, 10^{-2}, 10^{-3}\}$}, and we found that the optimal value of $\lambda$ is $10^{0}$. For this value, we ran \texttt{MOCHA} on the synthetic dataset with three different seeds, and we found that the average accuracy is $73.4 \pm 0.05$ in comparison to $74.7 \pm 0.01$ achieved by \FedEM{}. Note that \texttt{MOCHA} is the second best method after \FedEM{} on this dataset. Unfortunately, \texttt{MOCHA} only works for linear models.

\subsection{Generalization to Unseen Clients}
\label{app:generalization_to_unseen_clients}

Table~\ref{tab:results_summary_zero_shot} shows that \FedEM{} allows new clients to learn a personalized model at least as good as \FedAvg's global one and always better than \FedAvg+'s one.
Unexpectedly, new clients achieve sometimes  a significantly higher test accuracy than old clients (e.g., 47.5\% against 44.1\% on CIFAR100). 

In order to better understand this difference, we looked at the distribution of \FedEM{} personalized weights for the old clients and new ones. 
The average distribution entropy equals $0.27$ and $0.92$ for old and new clients, respectively.
This difference shows that old clients tend to have more skewed distributions, suggesting that some components may be overfitting the local training dataset leading the old clients to give them a high weight.

We also considered a setting where unseen clients progressively collect their own dataset. We investigate the effect of the number of samples on the average test accuracy across unseen clients, starting from no local data (and therefore using uniform weights to mix the $M$ components) and progressively adding more labeled examples until the full local labeled training set is assumed to be available. Figure~\ref{fig:progressive_pi} shows that \FedEM{} achieves a significant level of personalization as soon as clients collect a labeled dataset whose size is about  $20\%$ of what the original clients used for training.

As we mentioned in the main text, it is not clear how the other personalized FL algorithms (e.g., \texttt{pFedMe} and Clustered FL)  should be extended to handle unseen clients.
For example, the  global model learned by pFedMe during training can then be used to perform some ``fine-tuning'' at the new clients, but how exactly? The original \texttt{pFedMe} paper~\cite{dinh2020personalized} does not even mention this issue.
For example, the client could use the global model as initial vector for some local SGD steps (similarly to what done in \texttt{FedAvg+} or the MAML approaches) or it could perform a local \texttt{pFedMe} update (lines 6-9 in \cite[Alg. 1]{dinh2020personalized}). 
The problem is even more complex for Clustered FL (and again not discussed in~\cite{sattler2020clustered}). 
The new client should be assigned to one of the clusters identified. One can think to compute the cosine distances of the new client from those who participated in training, but this would require the server to maintain not only the  model learned, but also the last-iteration gradients of all clients that participated in the training.
Moreover, it is not clear which metric should be considered to assign the new client to a given cluster (perhaps the average cosine similarity from all clients in the cluster?). 
This is an arbitrary choice as \cite{sattler2020clustered} does not provide a criterion to assign clients to a cluster, but only to decide if a given cluster should be split in two new ones.
It appears that many options are possible and they deserve separate investigation. 
Despite these considerations, 
we performed an additional experiment extending \texttt{pFedMe} to unseen clients as described in the second option above on CIFAR-100 dataset with a sampling rate of  $20\%$. \texttt{pFedMe} achieves a test accuracy of  $40.5\% \pm 1.66\%$, in comparison to  $38.9\% \pm 0.97\%$  for \FedAvg{} and  $42.7\% \pm 0.33\%$  for \FedEM. \FedEM{} thus performs better on unseen clients, and \texttt{pFedMe}'s accuracy shows a much larger variability.

\begin{figure}
    \centering
    \includegraphics[scale=0.3]{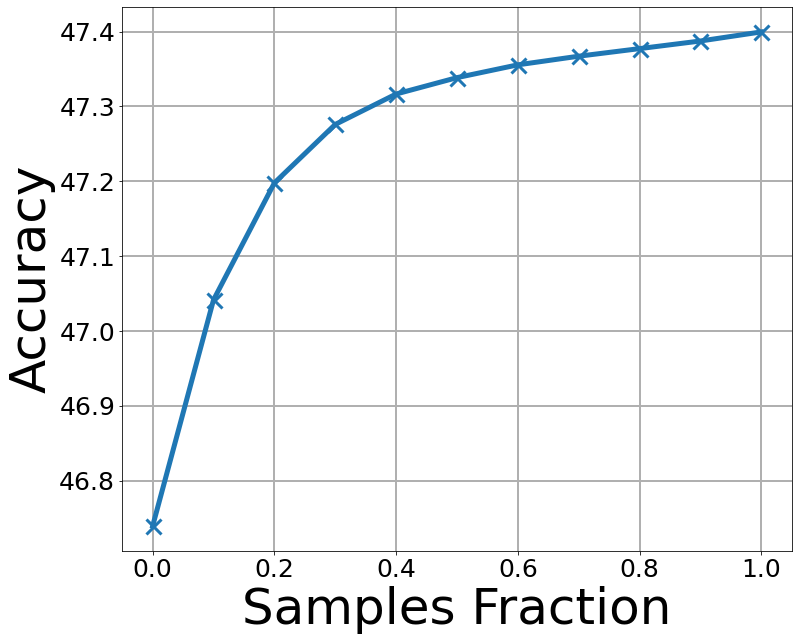}
    \caption{Effect of the number of samples on the average test accuracy across clients unseen at training on CIFAR100 dataset.}
    \label{fig:progressive_pi}
\end{figure}

\subsection{\FedEM{} and Clustering} We performed additional experiments with synthetic datasets to check if \FedEM{} recovers clusters in practice. 
We modified the synthetic dataset generation so that the mixture weight vector $\pi_{t}$ of each client $t$ has a single entry equal to $1$ that is selected uniformly at random.
We consider two scenarios both with $T=300$ client, the first with $M=2$ component and the second with $M=3$ components.
In both cases \FedEM{} recovered almost the correct $\Pi^{*}$ and $\Theta^{*}$: we have $\texttt{cosine\_distance}\left(\Theta^{*}, \breve{\Theta}\right) \leq 10^{-2}$ and $\texttt{cosine\_distance}\left(\Pi^{*}, \breve{\Pi}\right)\leq 10^{-8}$. 
A simple clustering algorithm that assigns each client to the component with the largest mixture weight achieves $100\%$ accuracy, i.e., it partitions the clients in sets coinciding with the original clusters.

\subsection{Effect of $M$ in Time-Constrained Setting}
\label{app:effect_of_M}

Recall that in \FedEM{}, each client needs to update and transmit $M$ components at each round, requiring roughly $M$ times more computation and $M$ times larger messages than the competitors in our study.
In this experiment, we considered a challenging time-constrained setting, where \FedEM{} is limited to run one third ($= 1/M$) of the rounds of the other methods. 
The results in Table~\ref{tab:results_summary_less_rounds} show that even if \FedEM{} does not reach its maximum accuracy, it still outperforms the other methods on 3 datasets.

\begin{table}
    \caption{Test and train accuracy comparison across different tasks. For each method, the best test accuracy is reported. For \FedEM{} we run only $\frac{K}{M}$ rounds, where $K$ is the total number of rounds for other methods--$K=80$ for Shakespeare and $K=200$ for all other datasets--and $M=3$ is the number of components used in \FedEM{}.}
    \label{tab:results_summary_less_rounds}
    \centering
    \scriptsize
    \resizebox{\textwidth}{!}{ 
    \begin{tabular}{l  c   c  c  c   c    c   c}
        \toprule
        \multirow{2}{*}{Dataset} & \multirow{2}{*}{Local} & \multirow{2}{*}{\FedAvg~\cite{mcmahan2017communication}}  & \multirow{2}{*}{\FedProx~\cite{Sahu2018OnTC}}  & \multirow{2}{*}{\FedAvg+~\cite{jiang2019improving}} & \texttt{Clustered}  &  \multirow{2}{*}{\texttt{pFedMe}~\cite{dinh2020personalized}} & \multirow{2}{*}{\FedEM{}~(Ours)} \\
        &  &   &   &  & \texttt{FL}~\cite{sattler2020clustered}  &   & 
        \\
        \midrule
            FEMNIST \cite{caldas2018leaf} & $71.0$ ($99.2$) & $\mathbf{78.6}$ ($79.5$) & $78.6$ ($79.6$) & $75.3$ ($86.0$) & $73.5$ ($74.3$) & $74.9$ ($91.9$) & $74.0$ ($80.9$)  \\
            EMNIST \cite{cohen2017emnist} & $71.9$ ($99.9$) & $82.6$ ($86.5$) & $82.7$ ($86.6$) & $83.1$ ($93.5$) & $82.7$ ($86.6$) &  $\mathbf{83.3}$ ($91.1$) & $82.7$ ($89.4$)  \\
            CIFAR10 \cite{Krizhevsky09learningmultiple} & $70.2$ ($99.9$) & $78.2$ ($96.8$) & $78.0$ ($96.7$) & $82.3$ ($98.9$) & $78.6$ ($96.8$)  & $81.7$ ($99.8$) & $\mathbf{82.5}$ ($92.2$)  \\
            CIFAR100 \cite{Krizhevsky09learningmultiple} & $31.5$ ($99.9$) & $41.0$ ($78.5$) & $40.9$ ($78.6$) & $39.0$ ($76.7$) & $41.5$ ($78.9$) & $41.8$ ($99.6$) & $\mathbf{42.0}$ ($72.9$)  \\
            Shakespeare \cite{caldas2018leaf} & $32.0$ ($95.3$) & $\mathbf{46.7}$ ($48.7$)  & $45.7$ ($47.3$) & $40.0$ ($93.1$) & $46.6$ ($48.7$) & $41.2$ ($42.1$) & $43.8$ ($44.6$)  \\
            Synthetic & $65.7$ ($91.0$) & $68.2$ ($68.7$) & $68.2$ ($68.7$) & $68.9$ ($71.0$) & $69.1$ ($85.1$) & $69.2$ ($72.8$) & $\mathbf{73.2}$ ($74.7$)  \\
        \bottomrule
    \end{tabular}
    }
\end{table}

We additionally compared \FedEM{} with a model having the same number of parameters in order to check if \FedEM{}'s advantage comes from the additional model parameters rather than by its specific formulation.
To this purpose, we trained Resnet-18 and Resnet-34 on CIFAR10. The first one has about $3$ times more parameters than MobileNet-v2 and then roughly as many parameters as FedEM with $M=3$. 
The second one has about $6$ times more parameters than FedEM with $M=3$.
We observed that both architectures perform even worse than MobileNet-v2, so the comparison with these larger models does not suggest that \FedEM{}'s advantage comes from the larger number of parameters. 

We note that there are many possible choices of (more complex) model architectures, and finding one that works well for the task at hand is quite challenging due to the large search space, the bias-variance trade-off, and the specificities of the FL setting.

\newpage
\subsection{Additional Results under Client Sampling}
\label{app:client_sampling}

In our experiments, except for Figure~\ref{fig:sample_rate_and_M}, we considered that all clients participate at each round. 
We run extra experiments with client sampling, by allowing only $20\%$ of the clients to participate at each round.
We also incorporate \texttt{APFL}~\cite{deng2020adaptive} into the comparison.
Table~\ref{tab:results_summary_partial_participation} summarizes our findings, giving the average and standard deviation of the test accuracy across 3 independent runs.

\begin{table}
    \caption{Test accuracy under $20\%$ client sampling: average across clients with +/- standard deviation over 3 independent runs. All experiments with $1200$ communication rounds.}
    \label{tab:results_summary_partial_participation}
    \centering
    \scriptsize
    \begin{tabular}{l   r  r  r   r    r  }
        \toprule
        Dataset & \FedAvg~\cite{mcmahan2017communication}  & \FedAvg+~\cite{jiang2019improving}  & \texttt{pFedMe}~\cite{dinh2020personalized} & \texttt{APFL}~\cite{deng2020adaptive}  & FedEM~(Ours)   
        \\
        \midrule
        CIFAR10 \cite{Krizhevsky09learningmultiple} & $73.1 \pm 0.14$ & $77.7 \pm 0.16$ &  $77.8 \pm 0.07$ &  $78.2 \pm 0.27$ & $\mathbf{82.1} \pm 0.13$
        \\
        CIFAR100 \cite{Krizhevsky09learningmultiple} & $40.6 \pm 0.17$ & $39.7 \pm 0.75$ &  $39.9 \pm 0.08$ &  $40.3 \pm 0.71$ & $\mathbf{43.2} \pm 0.23$
        \\
        Synthetic & $68.2 \pm 0.02$ & $69.0 \pm 0.03$ &  $69.1 \pm 0.03$ &  $69.1 \pm 0.04$ & $\mathbf{74.7} \pm 0.01$
        \\
        \bottomrule
    \end{tabular}
\end{table}

\subsection{Convergence Plots}

Figures~\ref{fig:cifar10} to \ref{fig:synthetic} show the evolution of  average train loss, train accuracy, test loss, and test accuracy over time for each experiment shown in Table~\ref{tab:results_summary}.

\begin{figure}[t] 
    \begin{minipage}{0.5\linewidth}
        \centering
        \includegraphics[width=.9\linewidth]{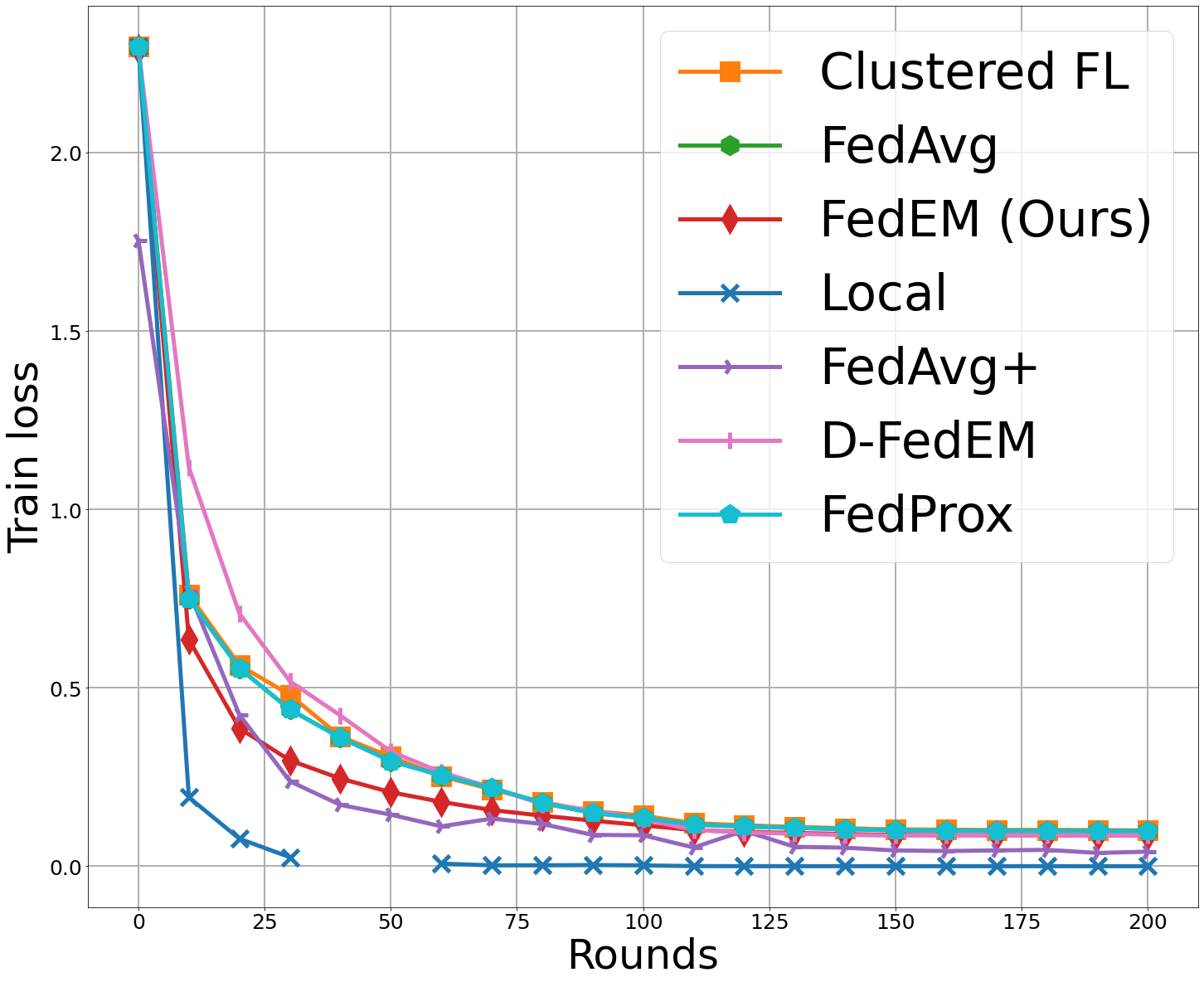} 
    \end{minipage}
    \begin{minipage}{0.5\linewidth}
        \centering
        \includegraphics[width=.9\linewidth]{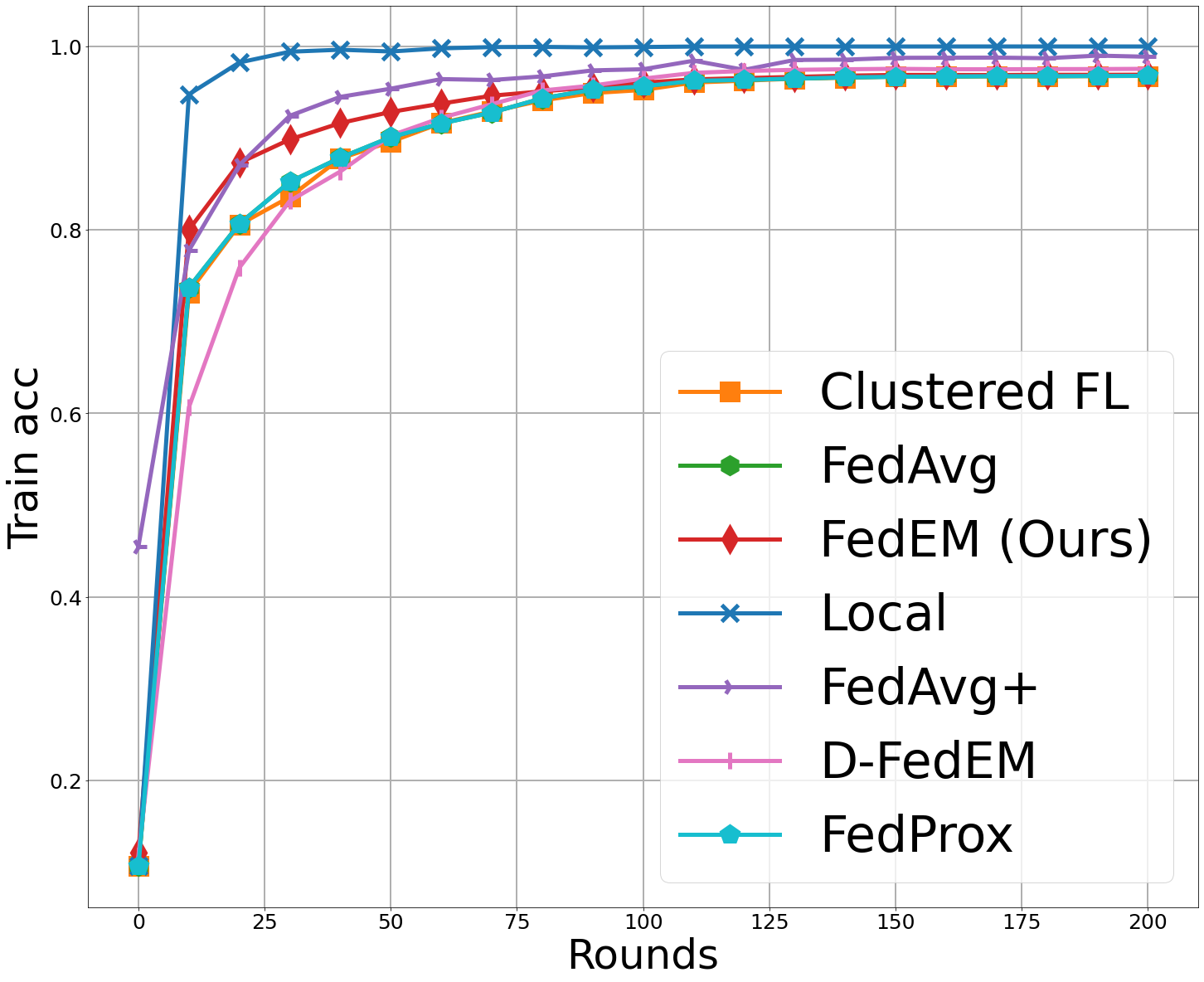} 
    \end{minipage} 
    \\
    \begin{minipage}{0.5\linewidth}
        \centering
        \includegraphics[width=.9\linewidth]{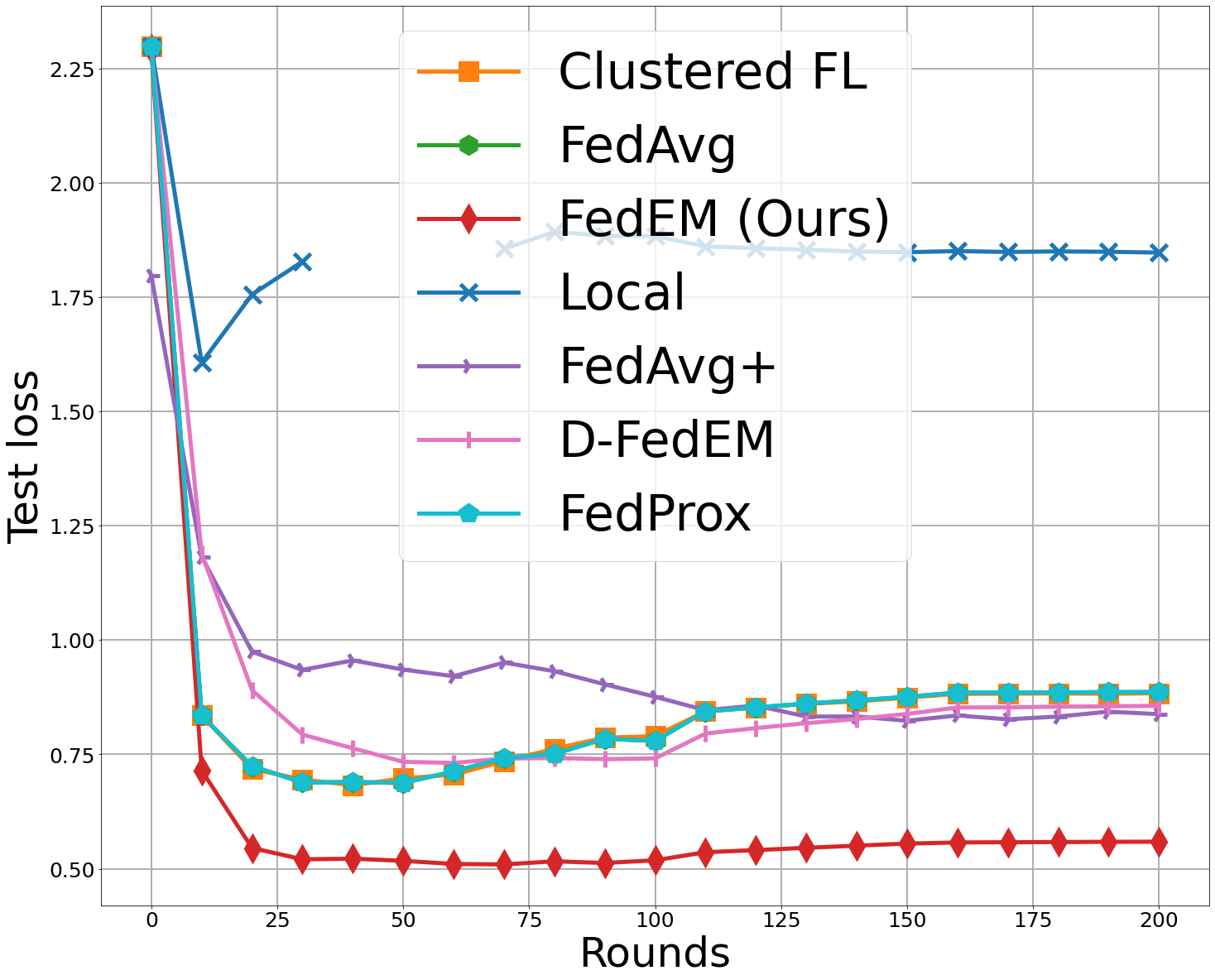} 
    \end{minipage}
    \begin{minipage}{0.5\linewidth}
        \centering
        \includegraphics[width=.9\linewidth]{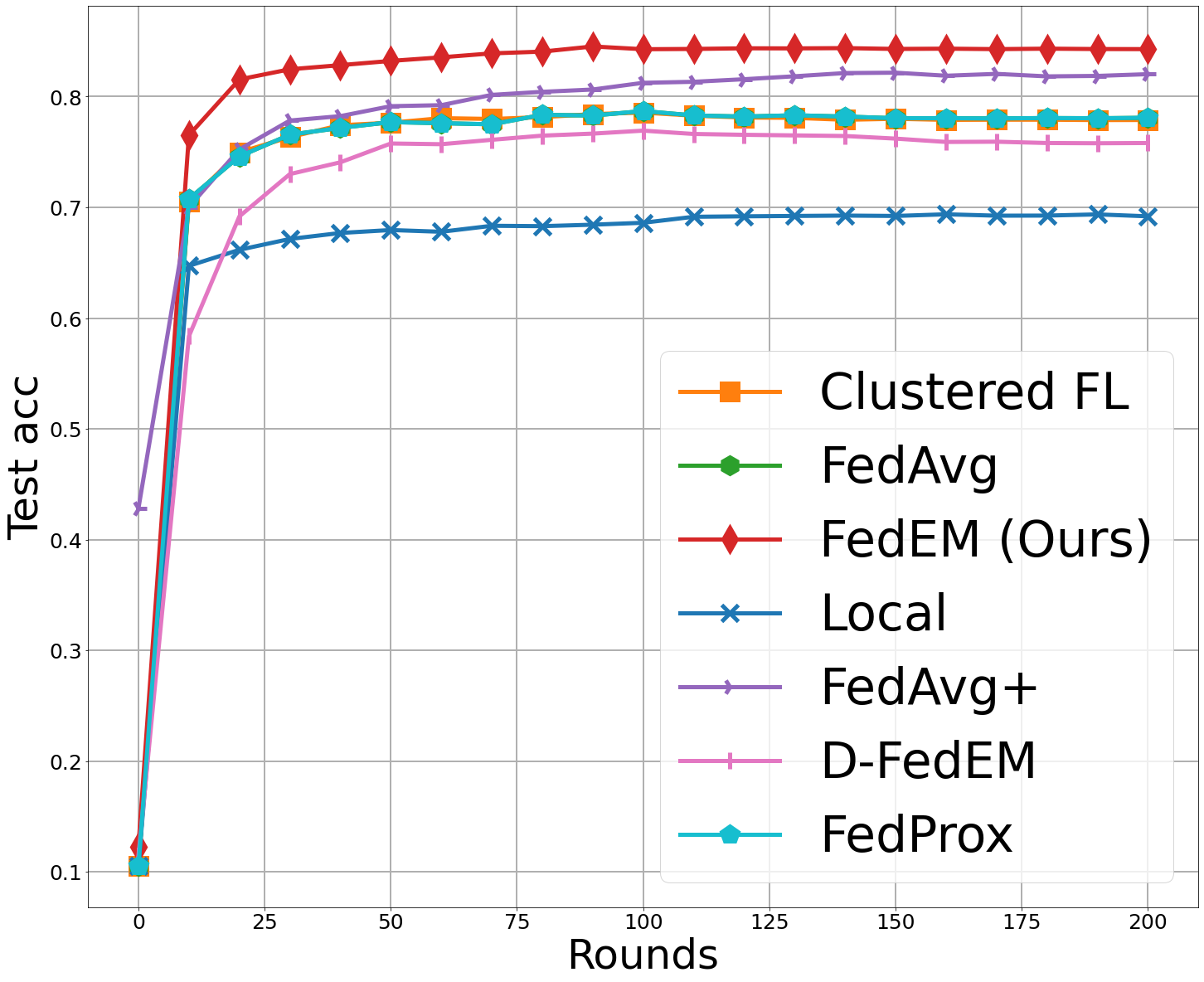} 
    \end{minipage} 
  \caption{Train loss, train accuracy, test loss, and test accuracy for   CIFAR10~\cite{Krizhevsky09learningmultiple}. .
  }
  \label{fig:cifar10} 
\end{figure}

\begin{figure}[t] 
    \begin{minipage}{0.5\linewidth}
        \centering
        \includegraphics[width=.9\linewidth]{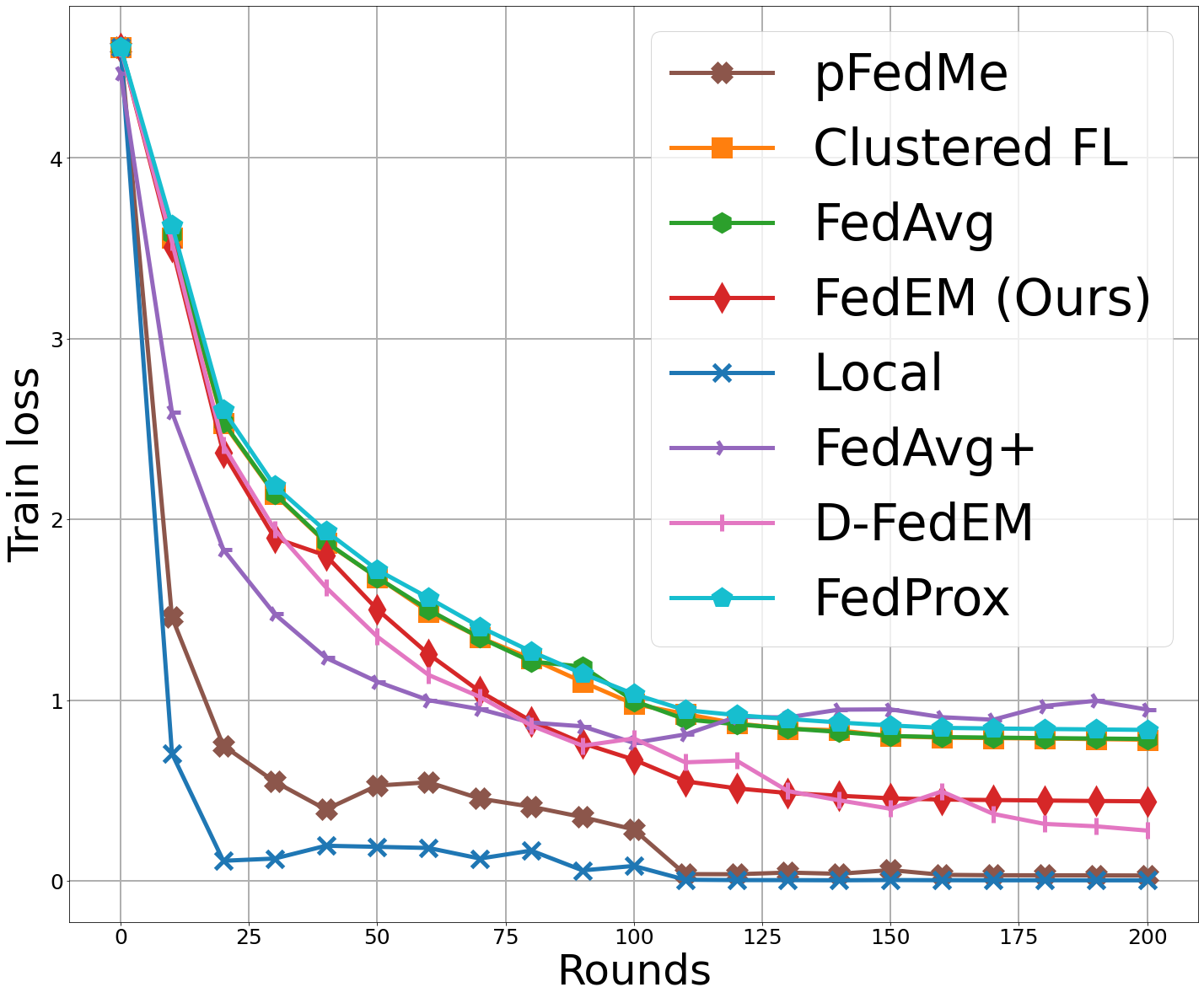} 
    \end{minipage}
    \begin{minipage}{0.5\linewidth}
        \centering
        \includegraphics[width=.9\linewidth]{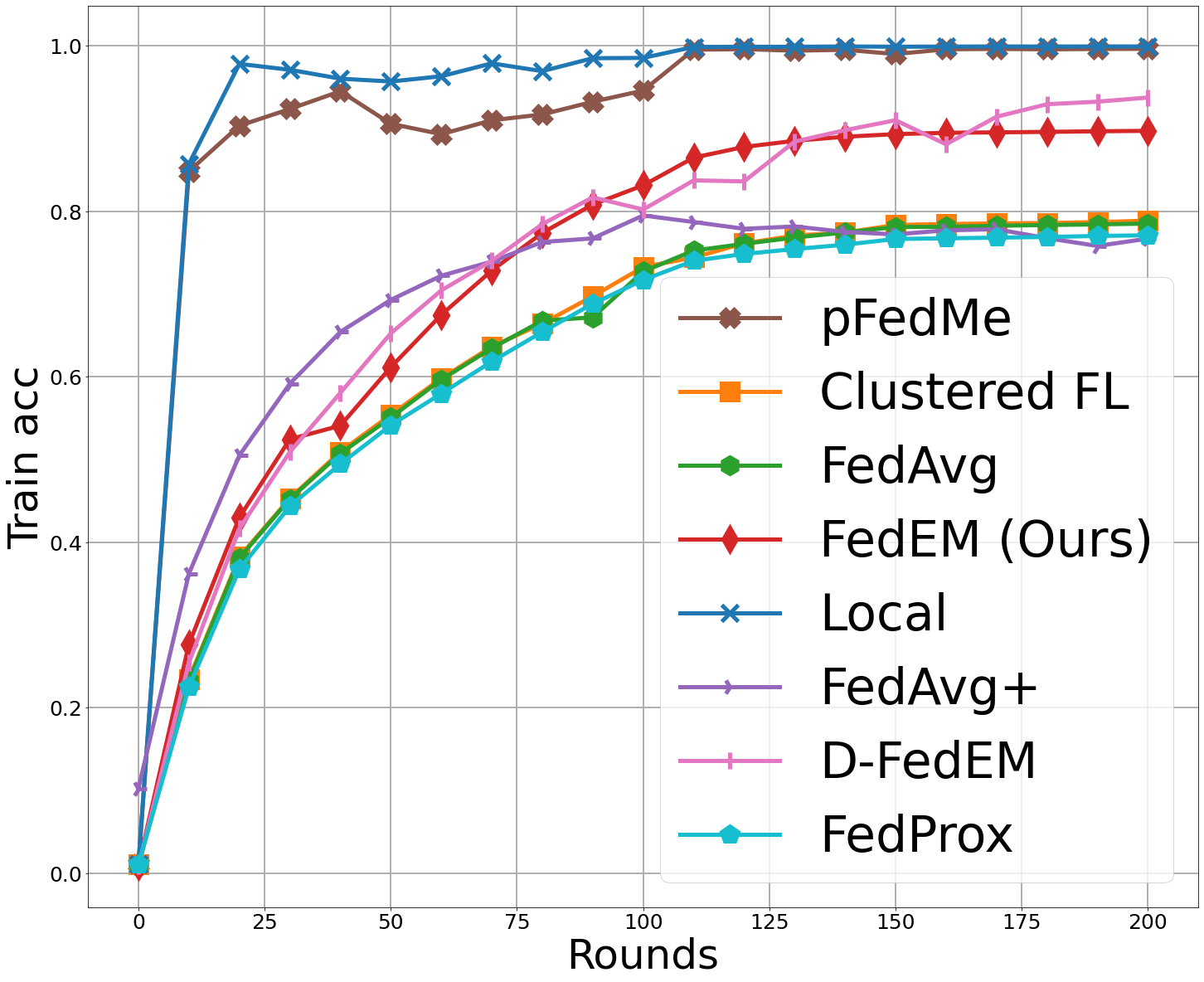} 
    \end{minipage} 
    \\
    \begin{minipage}{0.5\linewidth}
        \centering
        \includegraphics[width=.9\linewidth]{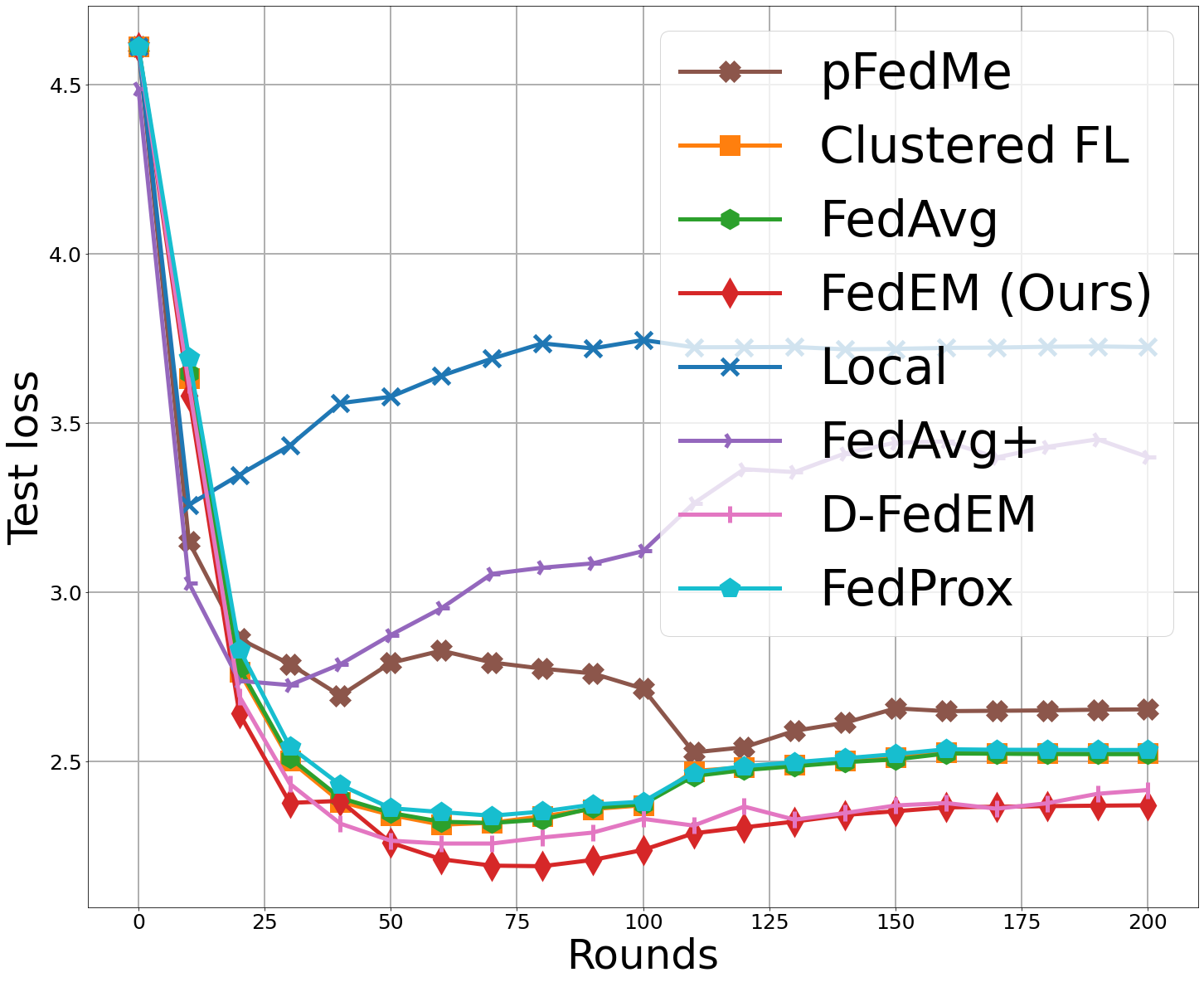} 
    \end{minipage}
    \begin{minipage}{0.5\linewidth}
        \centering
        \includegraphics[width=.9\linewidth]{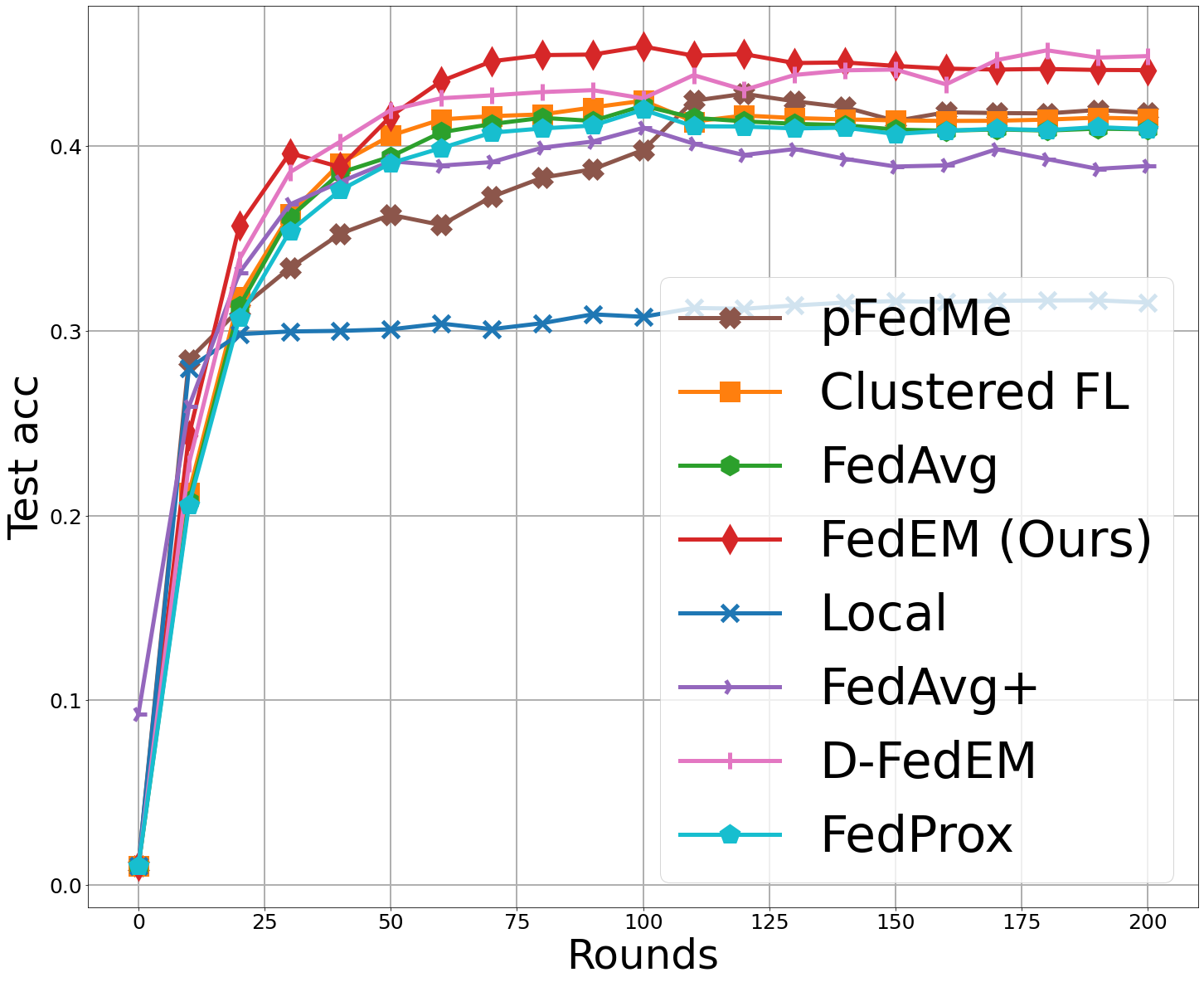} 
    \end{minipage} 
  \caption{Train loss, train accuracy, test loss, and test accuracy for   CIFAR100~\cite{Krizhevsky09learningmultiple}.}
\end{figure}

\begin{figure}[t] 
    \begin{minipage}{0.5\linewidth}
        \centering
        \includegraphics[width=.9\linewidth]{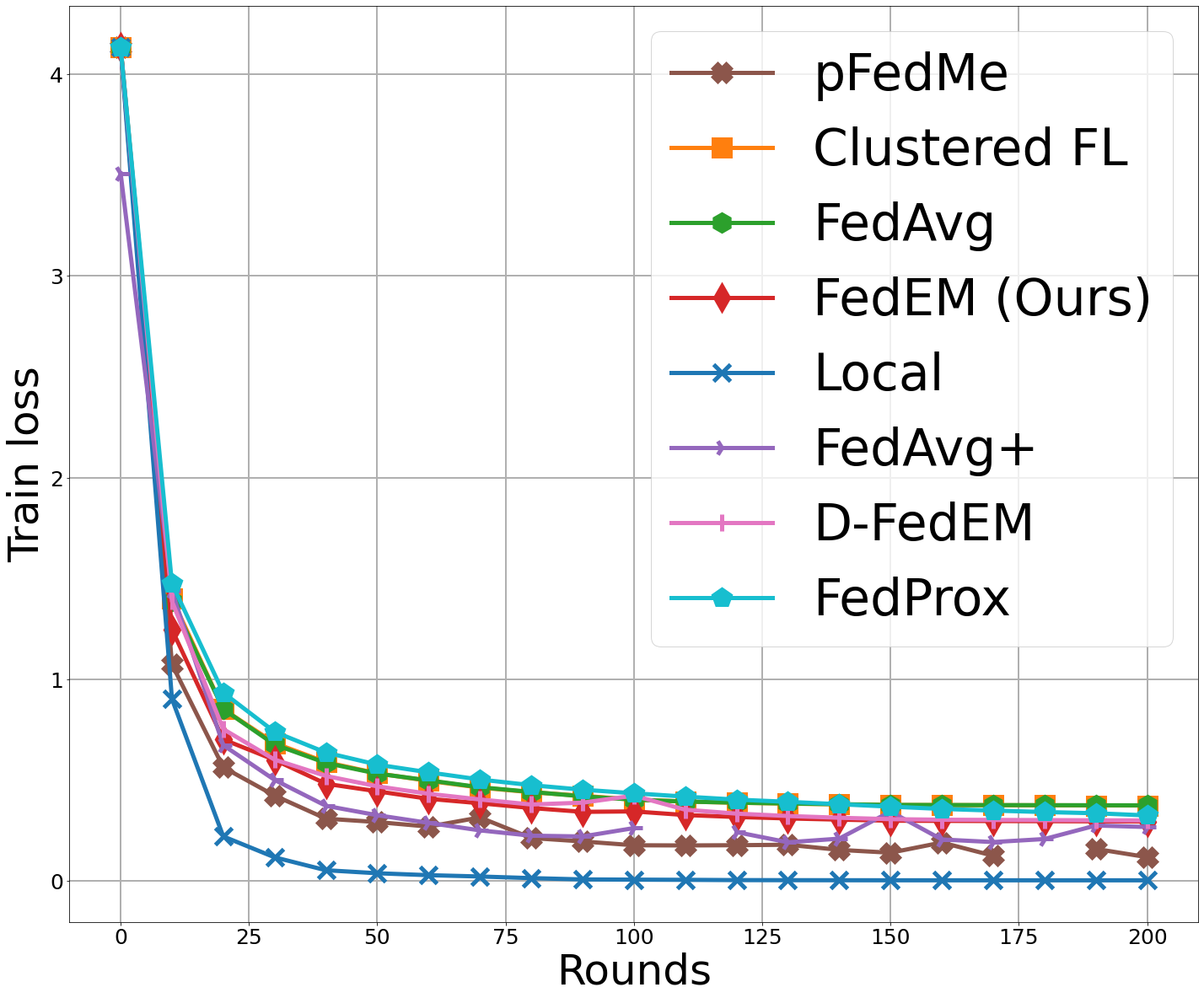} 
    \end{minipage}
    \begin{minipage}{0.5\linewidth}
        \centering
        \includegraphics[width=.9\linewidth]{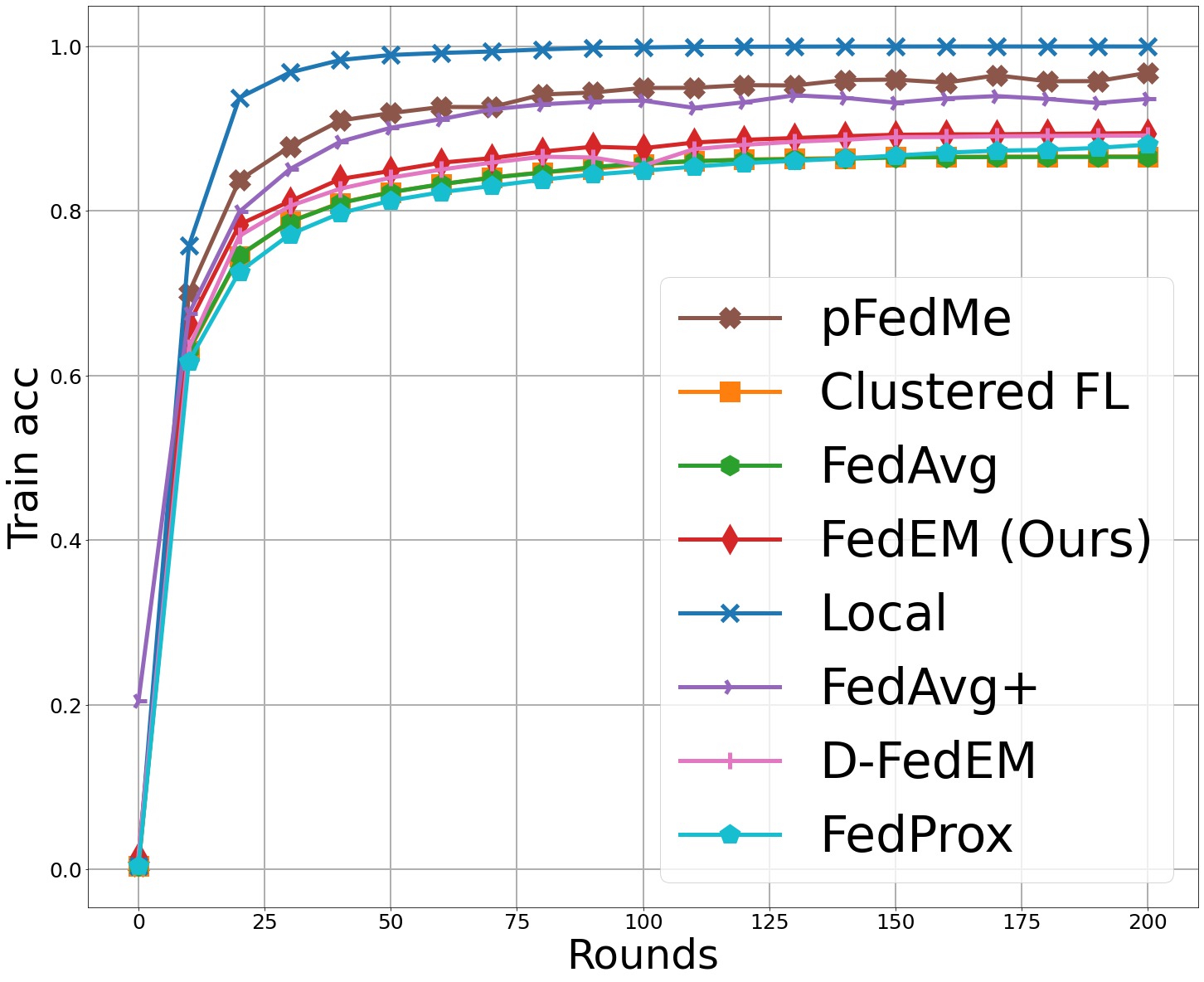} 
    \end{minipage} 
    \\
    \begin{minipage}{0.5\linewidth}
        \centering
        \includegraphics[width=.9\linewidth]{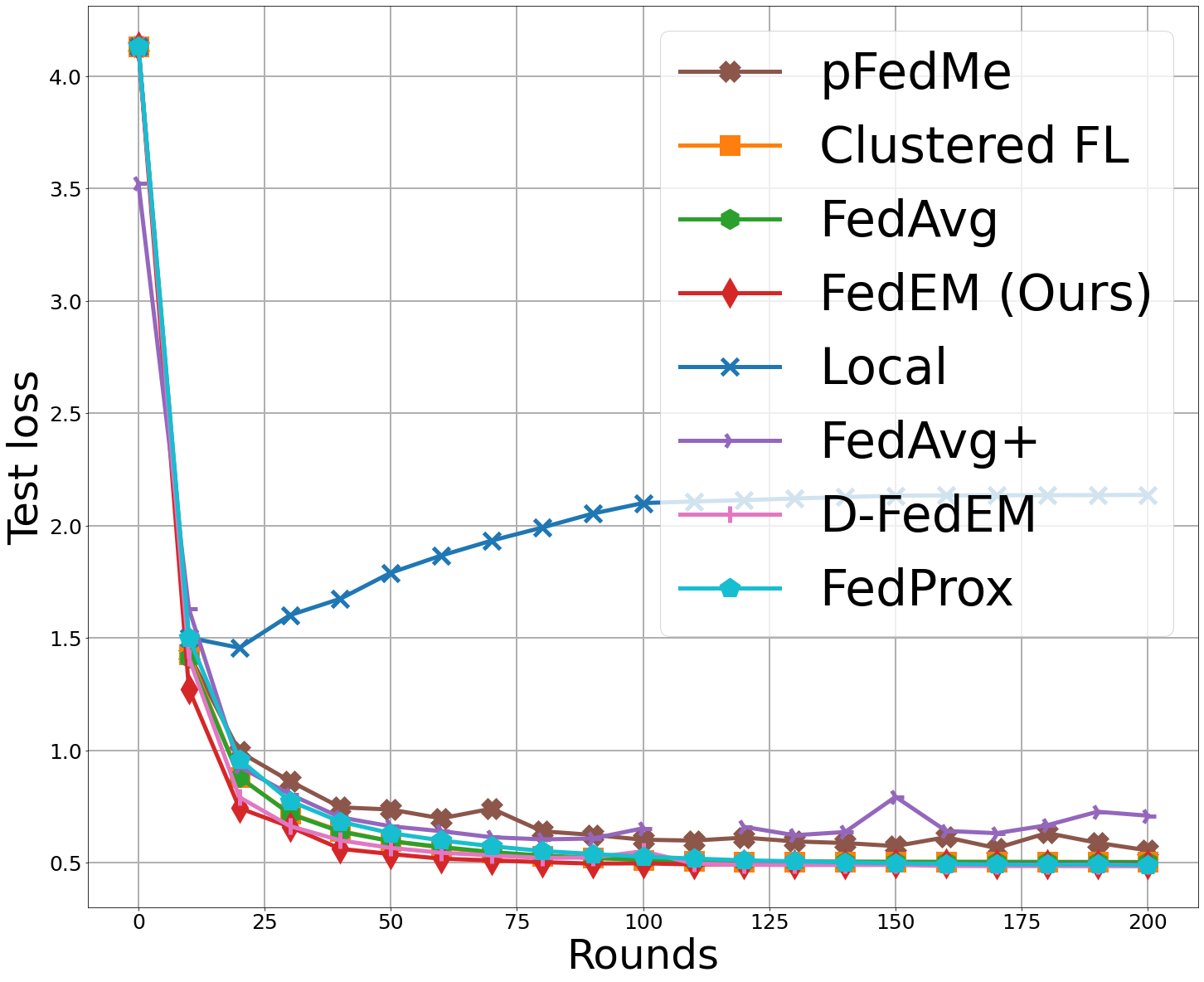} 
    \end{minipage}
    \begin{minipage}{0.5\linewidth}
        \centering
        \includegraphics[width=.9\linewidth]{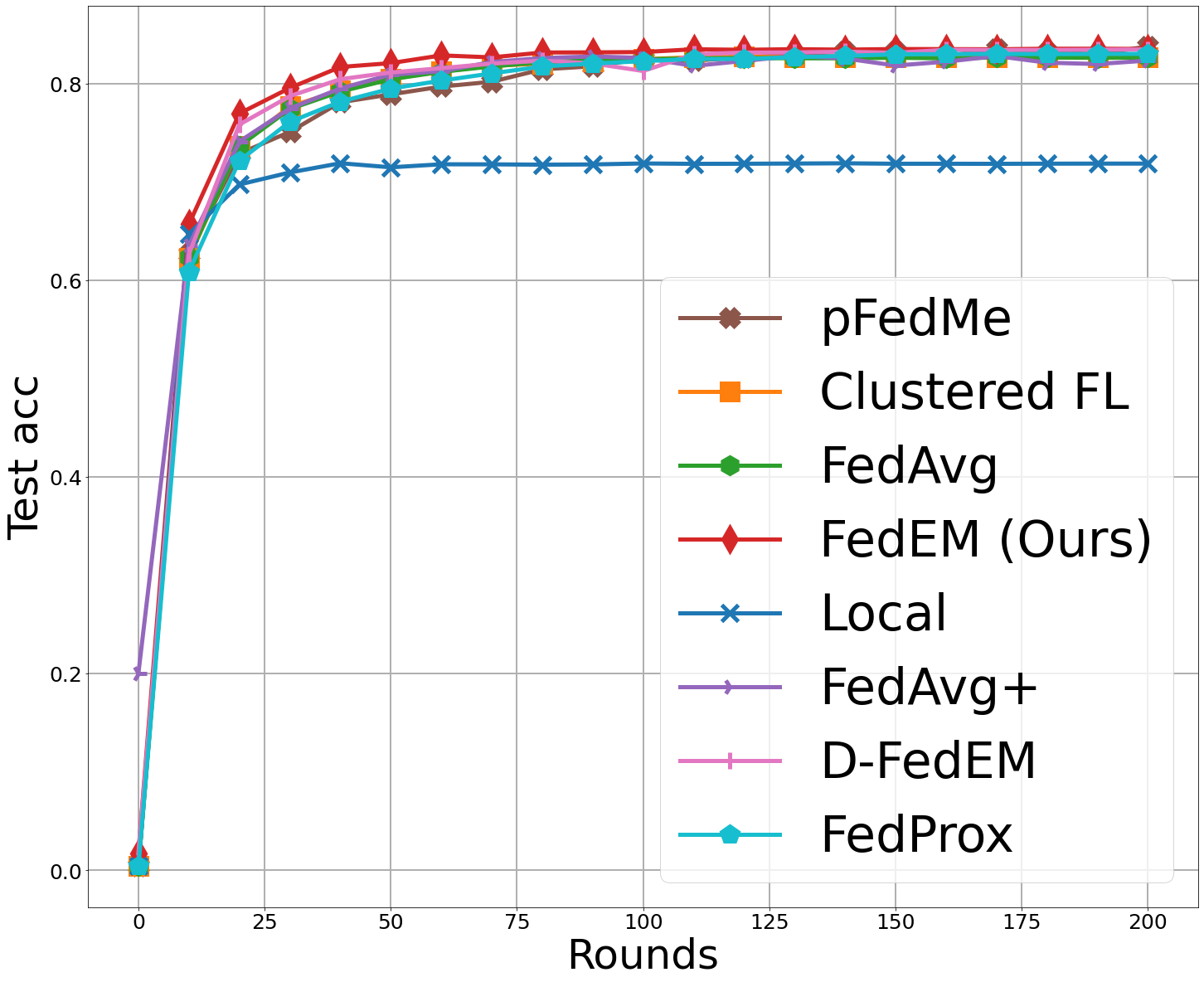} 
    \end{minipage} 
  \caption{Train loss, train accuracy, test loss, and test accuracy for   EMNIST~\cite{cohen2017emnist}.}
\end{figure}

\begin{figure}[t] 
    \begin{minipage}{0.5\linewidth}
        \centering
        \includegraphics[width=.9\linewidth]{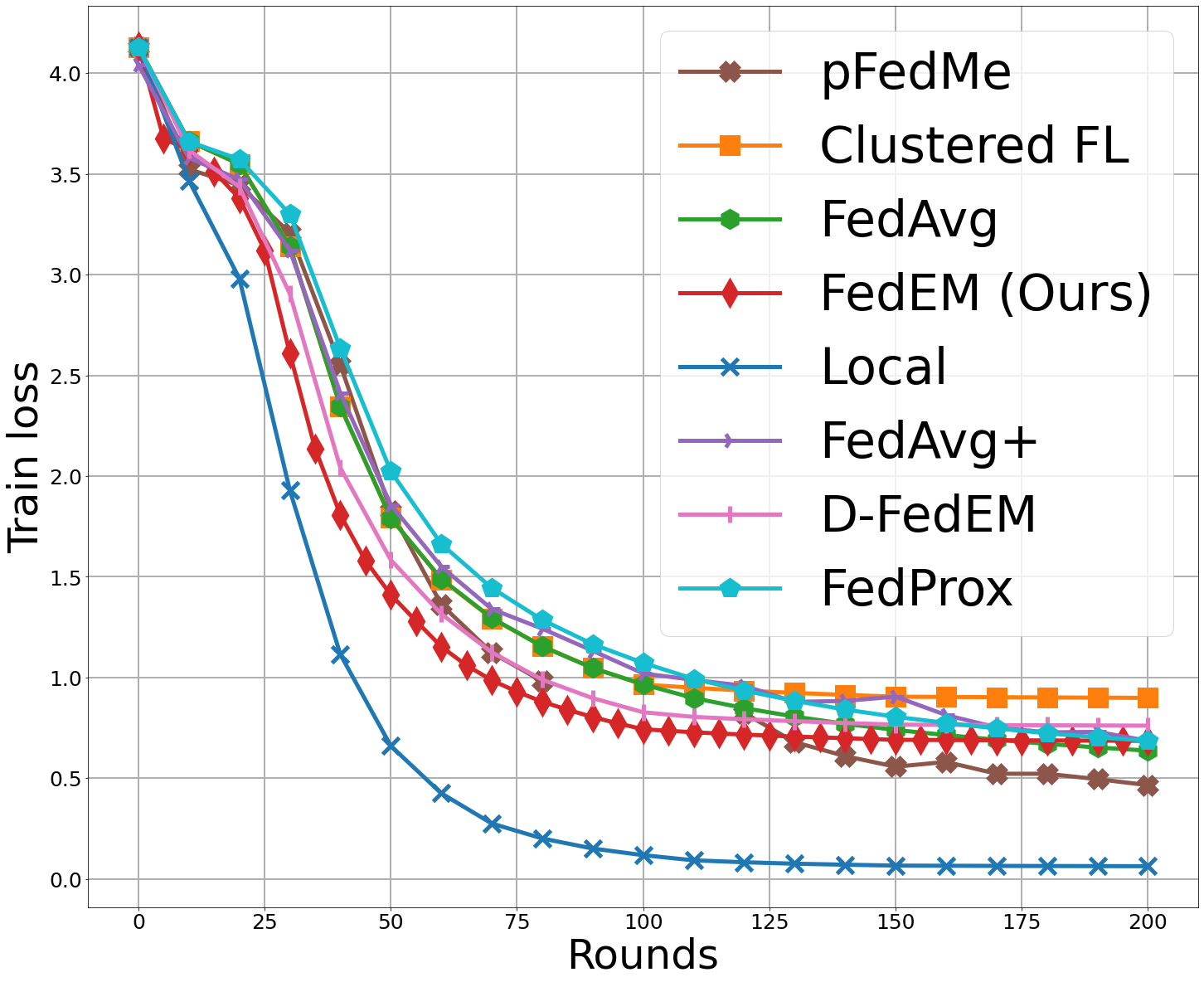} 
    \end{minipage}
    \begin{minipage}{0.5\linewidth}
        \centering
        \includegraphics[width=.9\linewidth]{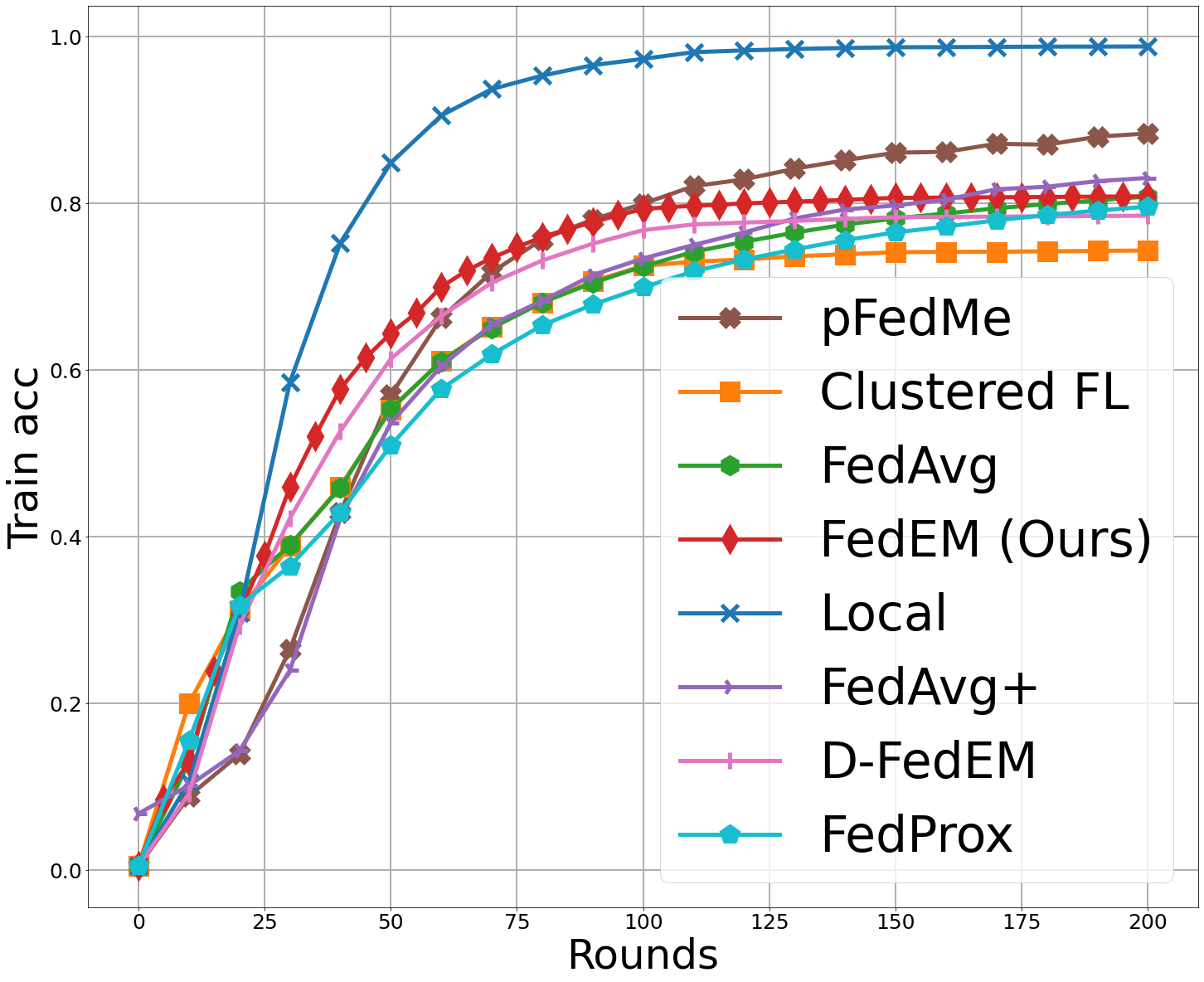} 
    \end{minipage} 
    \\
    \begin{minipage}{0.5\linewidth}
        \centering
        \includegraphics[width=.9\linewidth]{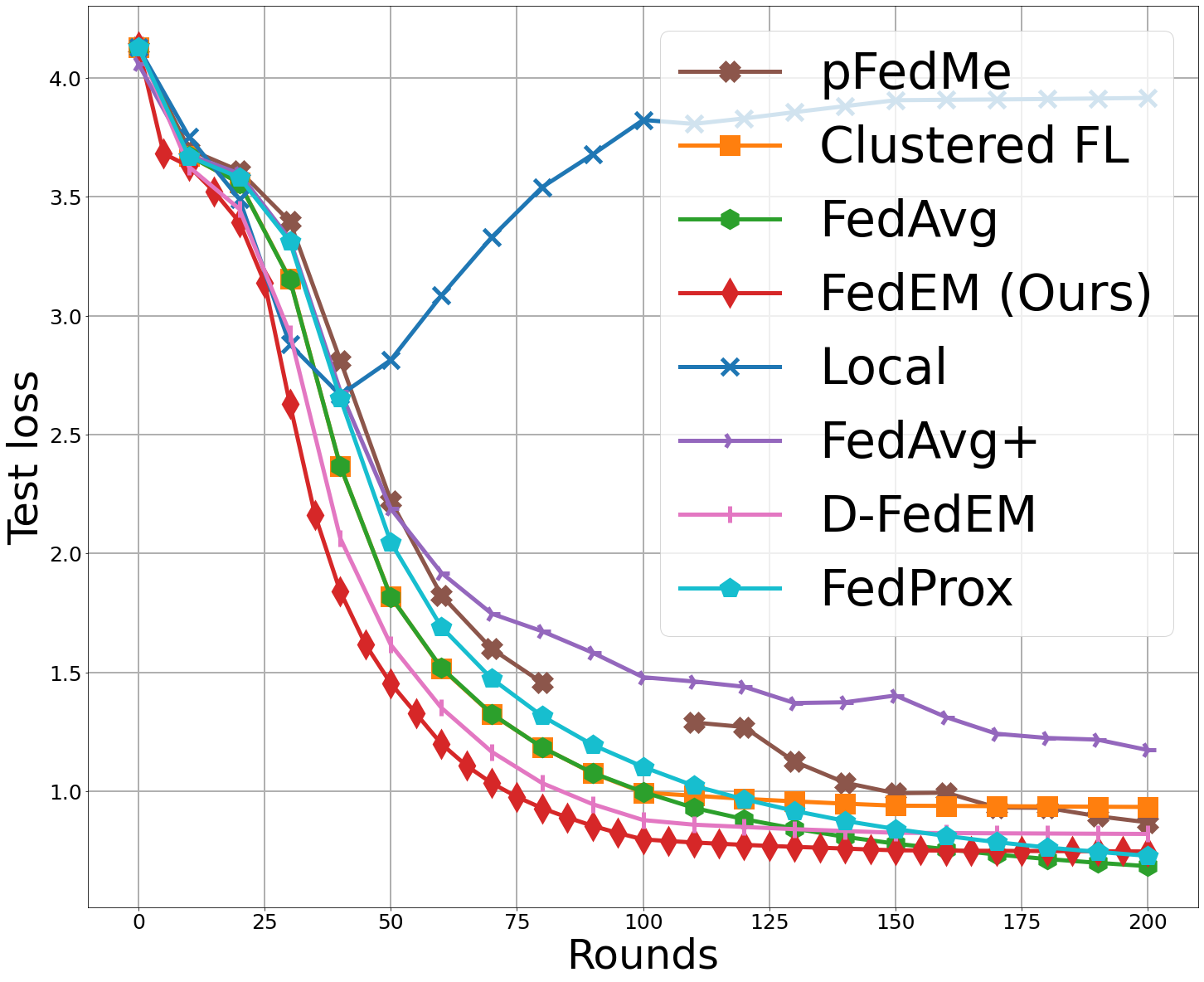} 
    \end{minipage}
    \begin{minipage}{0.5\linewidth}
        \centering
        \includegraphics[width=.9\linewidth]{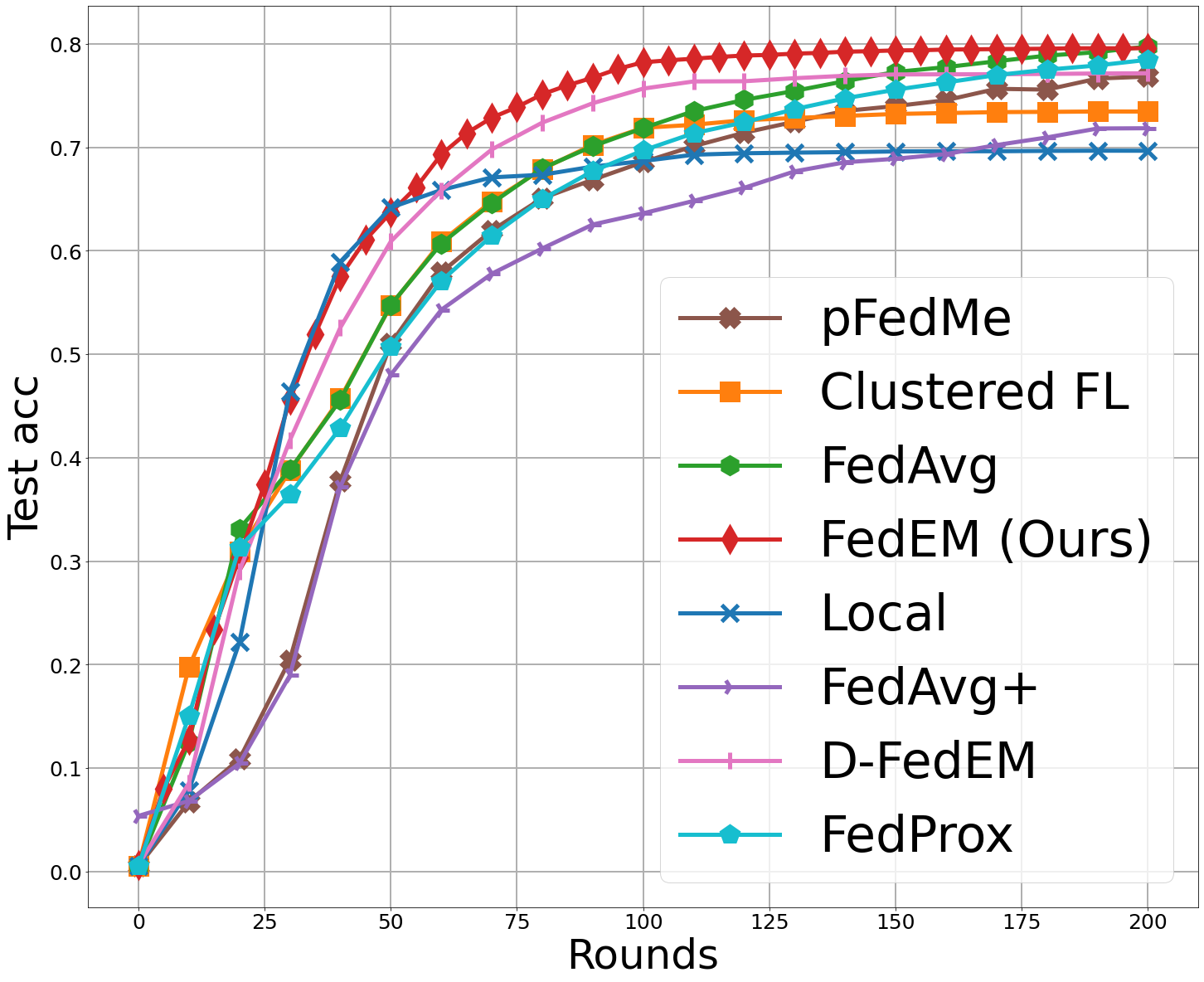} 
    \end{minipage} 
  \caption{Train loss, train accuracy, test loss, and test accuracy for   FEMNIST~\cite{caldas2018leaf, mcmahan2017communication}.}
\end{figure}

\begin{figure}[t] 
    \begin{minipage}{0.5\linewidth}
        \centering
        \includegraphics[width=.9\linewidth]{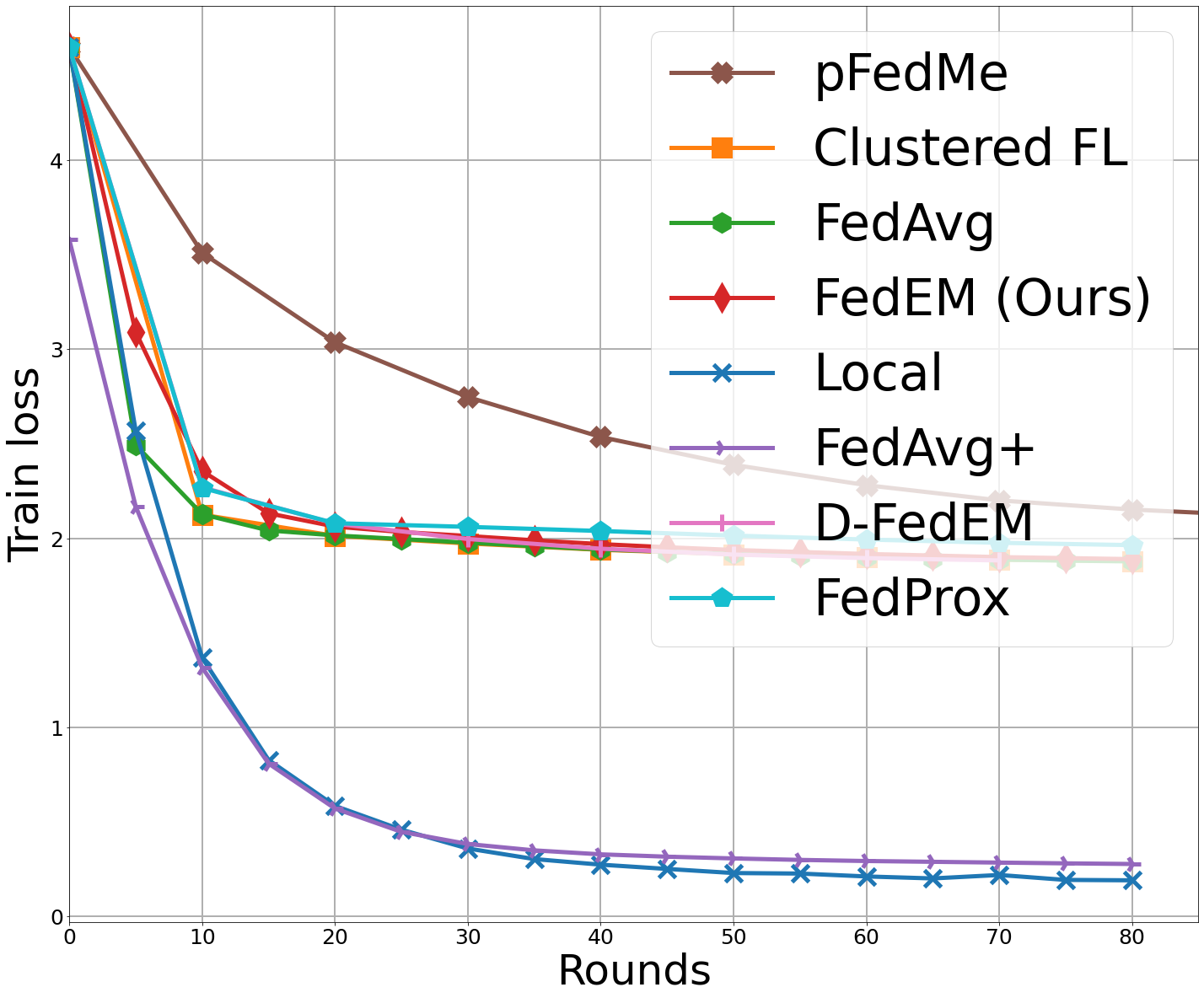} 
    \end{minipage}
    \begin{minipage}{0.5\linewidth}
        \centering
        \includegraphics[width=.9\linewidth]{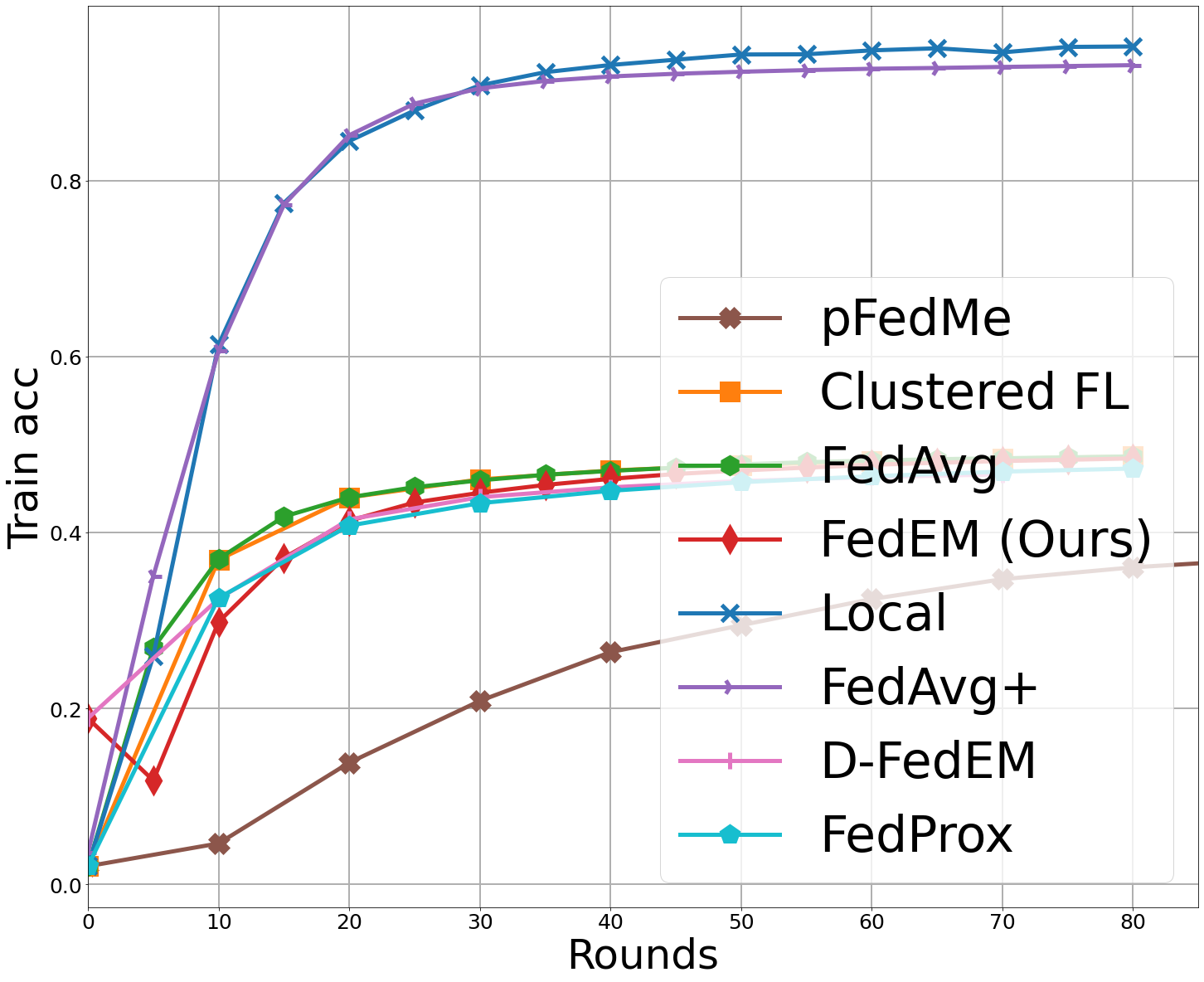} 
    \end{minipage} 
    \\
    \begin{minipage}{0.5\linewidth}
        \centering
        \includegraphics[width=.9\linewidth]{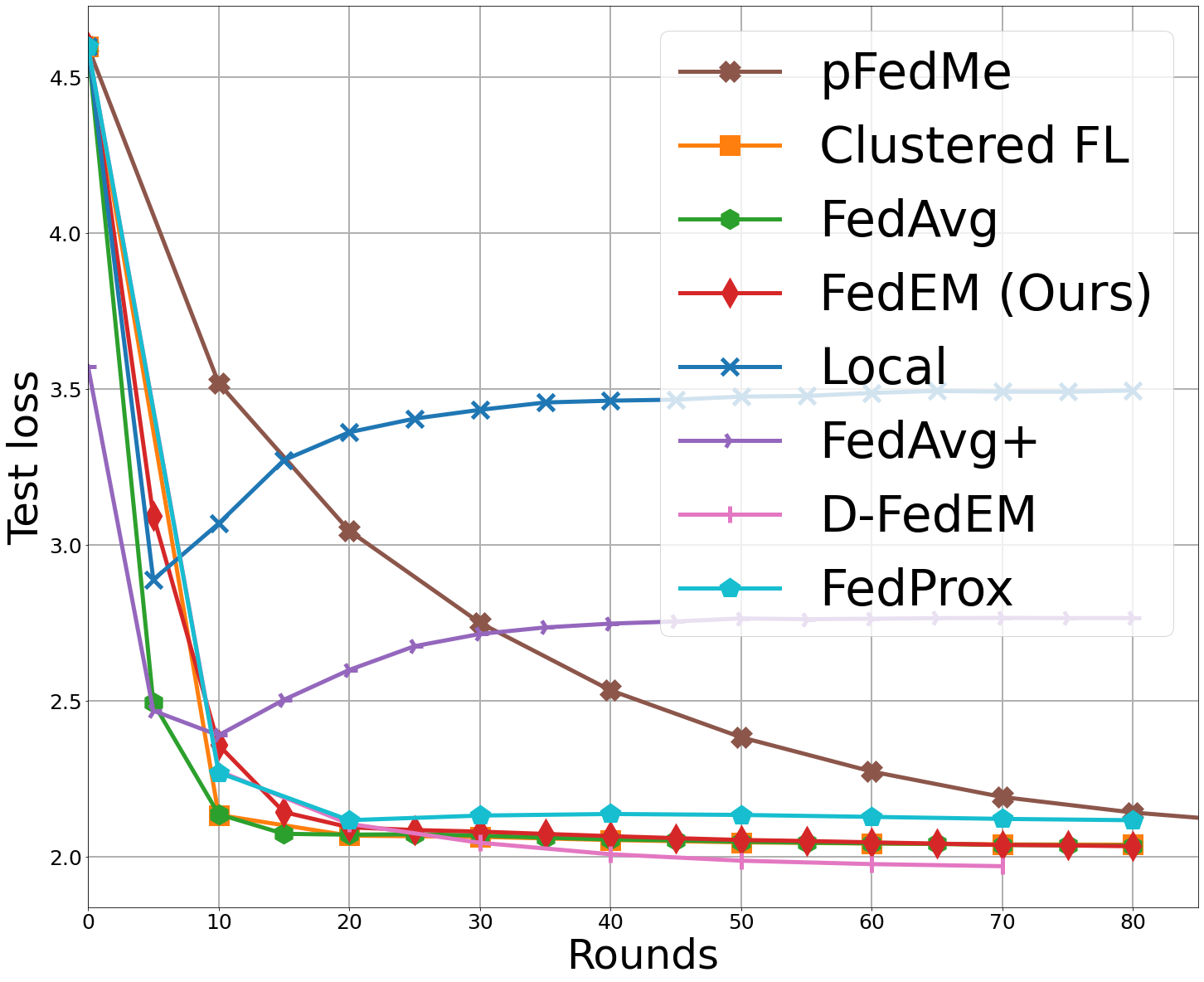} 
    \end{minipage}
    \begin{minipage}{0.5\linewidth}
        \centering
        \includegraphics[width=.9\linewidth]{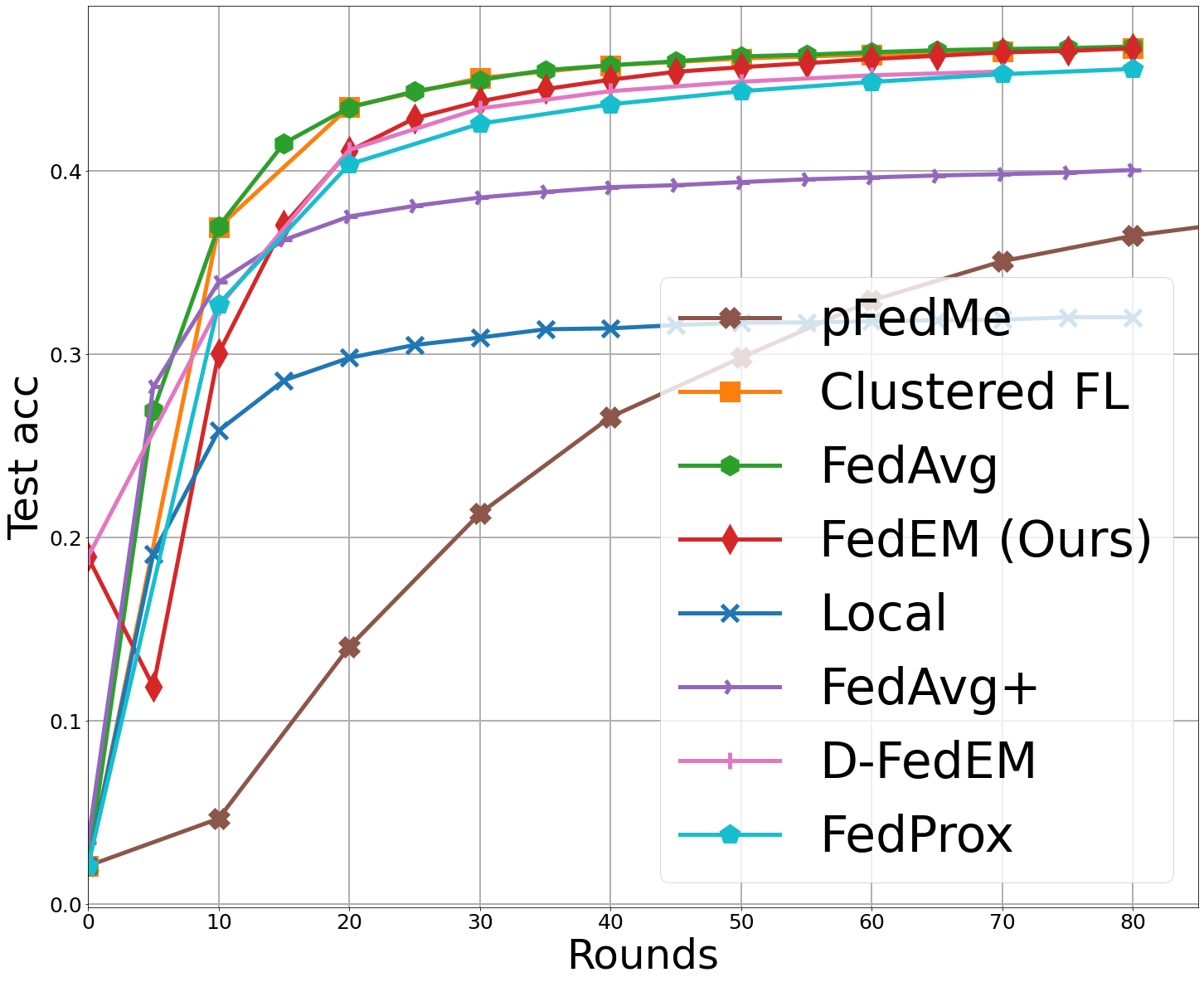} 
    \end{minipage} 
  \caption{Train loss, train accuracy, test loss, and test accuracy for   Shakespeare~\cite{caldas2018leaf, mcmahan2017communication}.}
\end{figure}

\begin{figure}[t] 
    \begin{minipage}{0.5\linewidth}
        \centering
        \includegraphics[width=.9\linewidth]{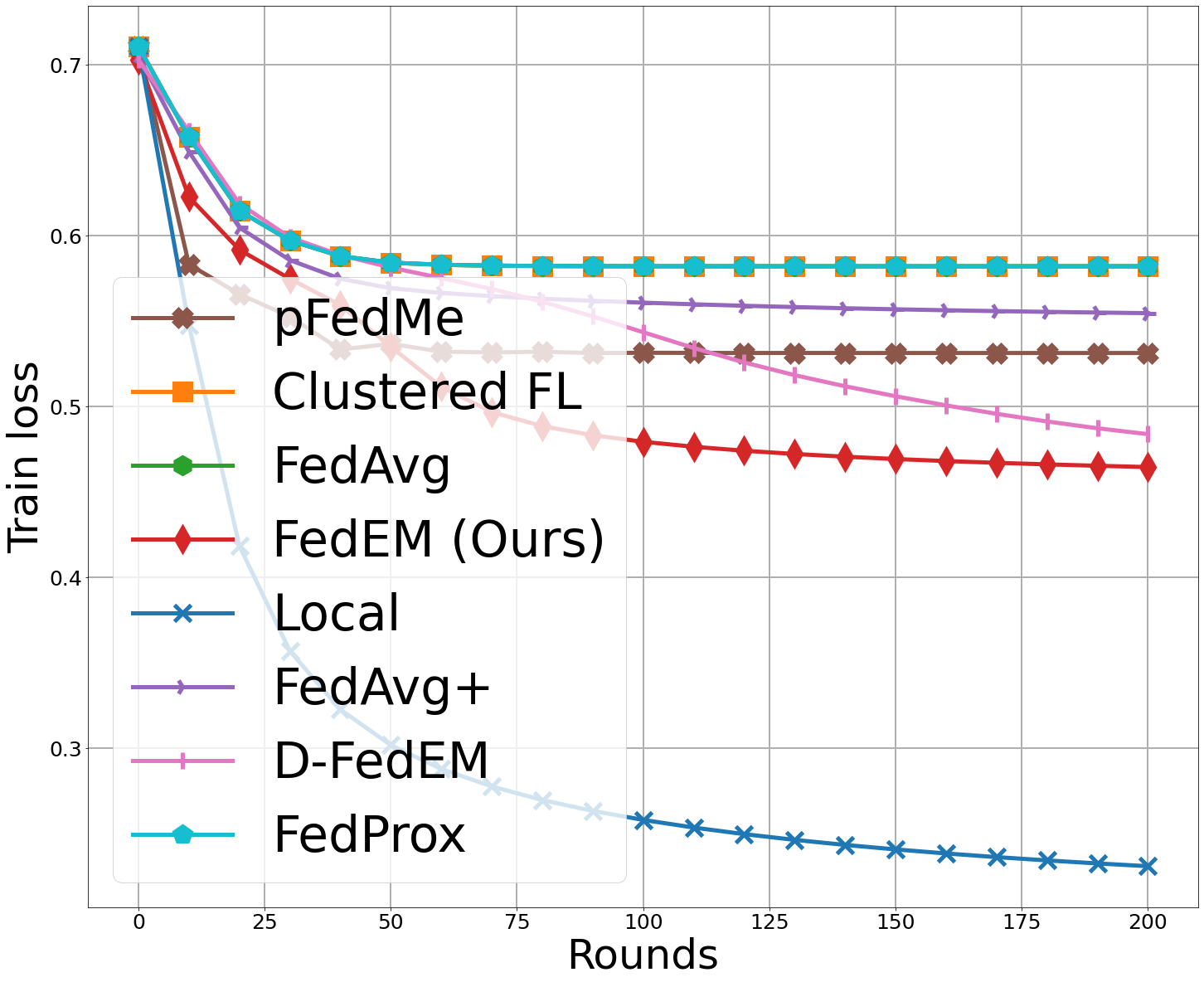} 
    \end{minipage}
    \begin{minipage}{0.5\linewidth}
        \centering
        \includegraphics[width=.9\linewidth]{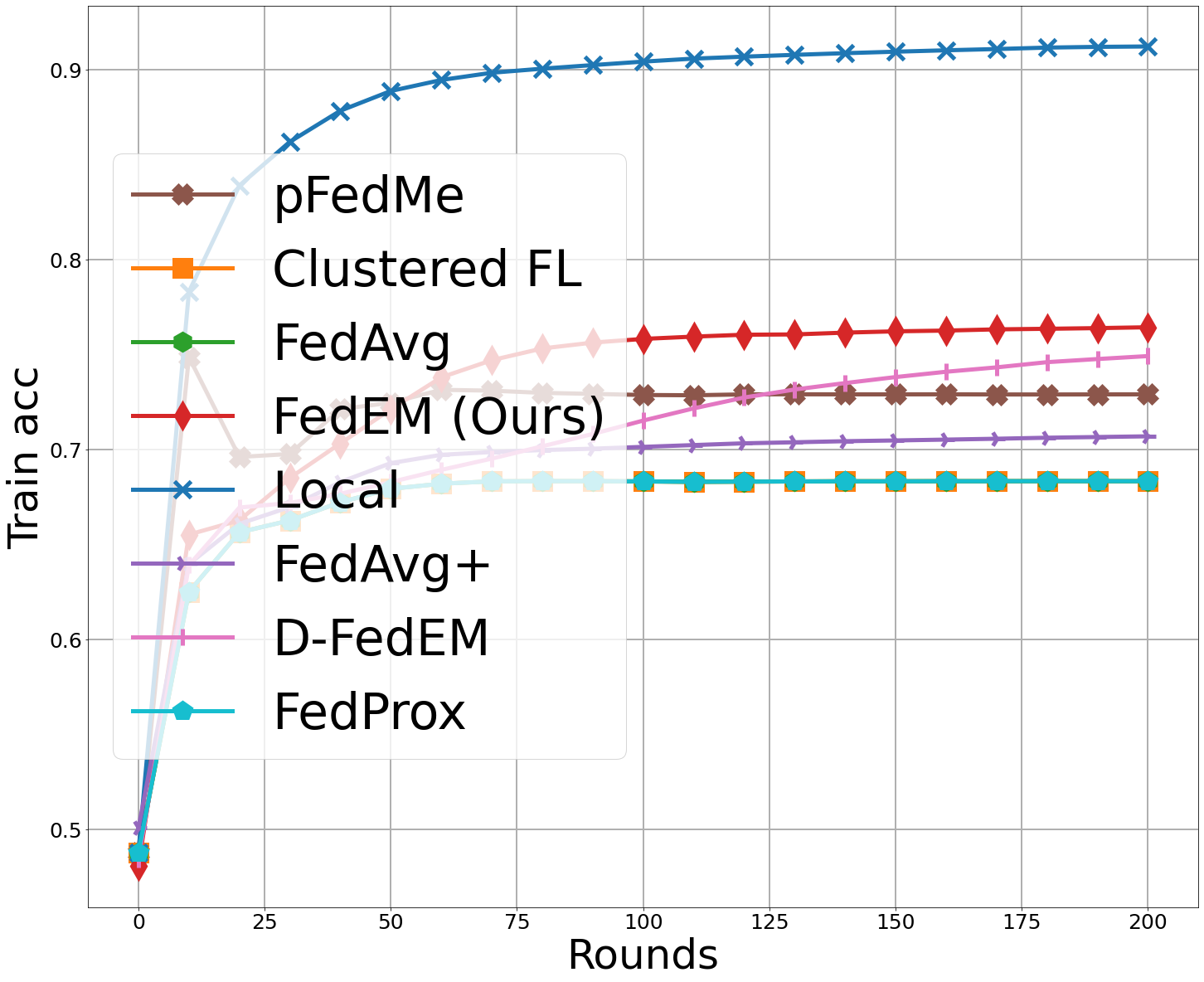} 
    \end{minipage} 
    \\
    \begin{minipage}{0.5\linewidth}
        \centering
        \includegraphics[width=.9\linewidth]{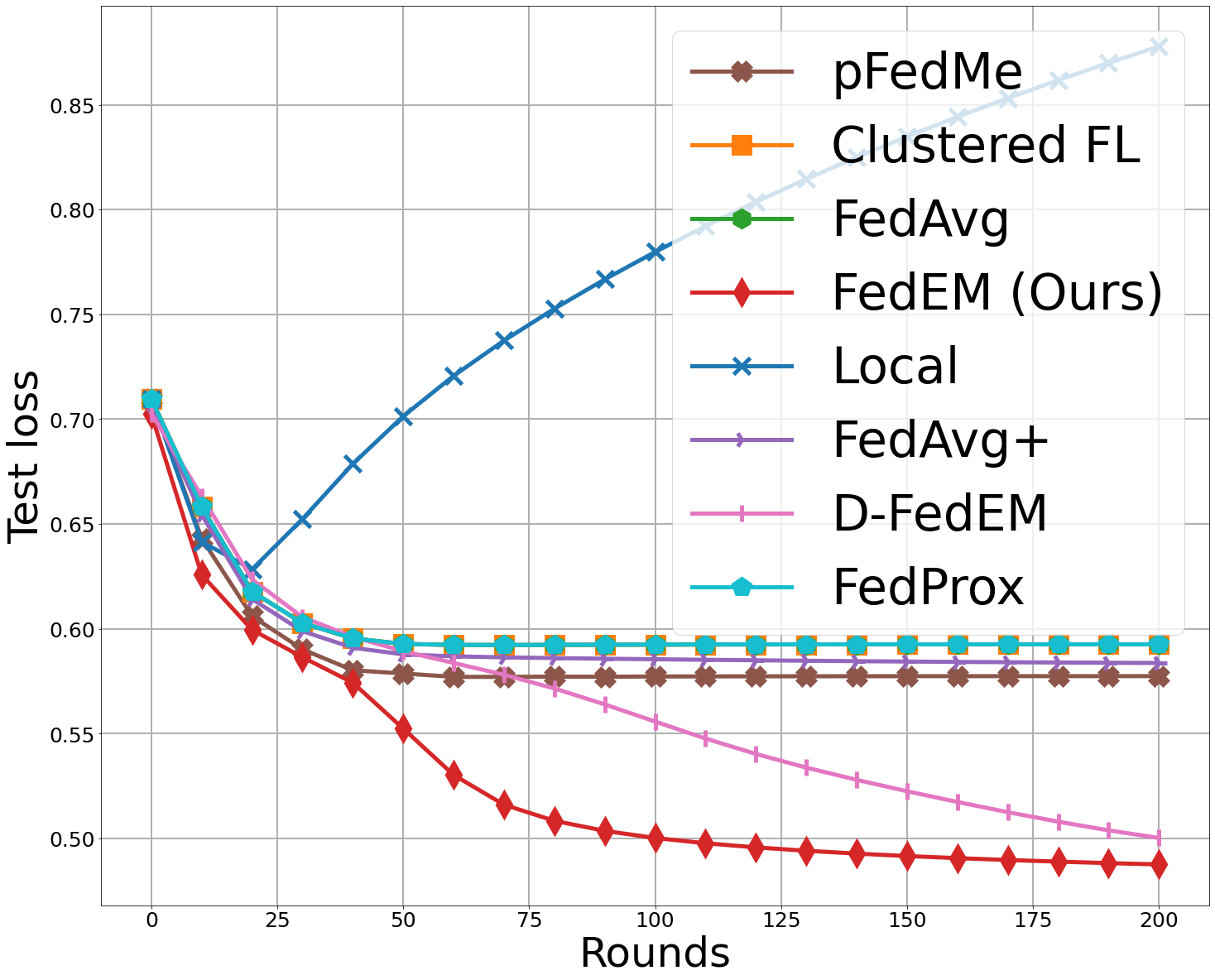} 
    \end{minipage}
    \begin{minipage}{0.5\linewidth}
        \centering
        \includegraphics[width=.9\linewidth]{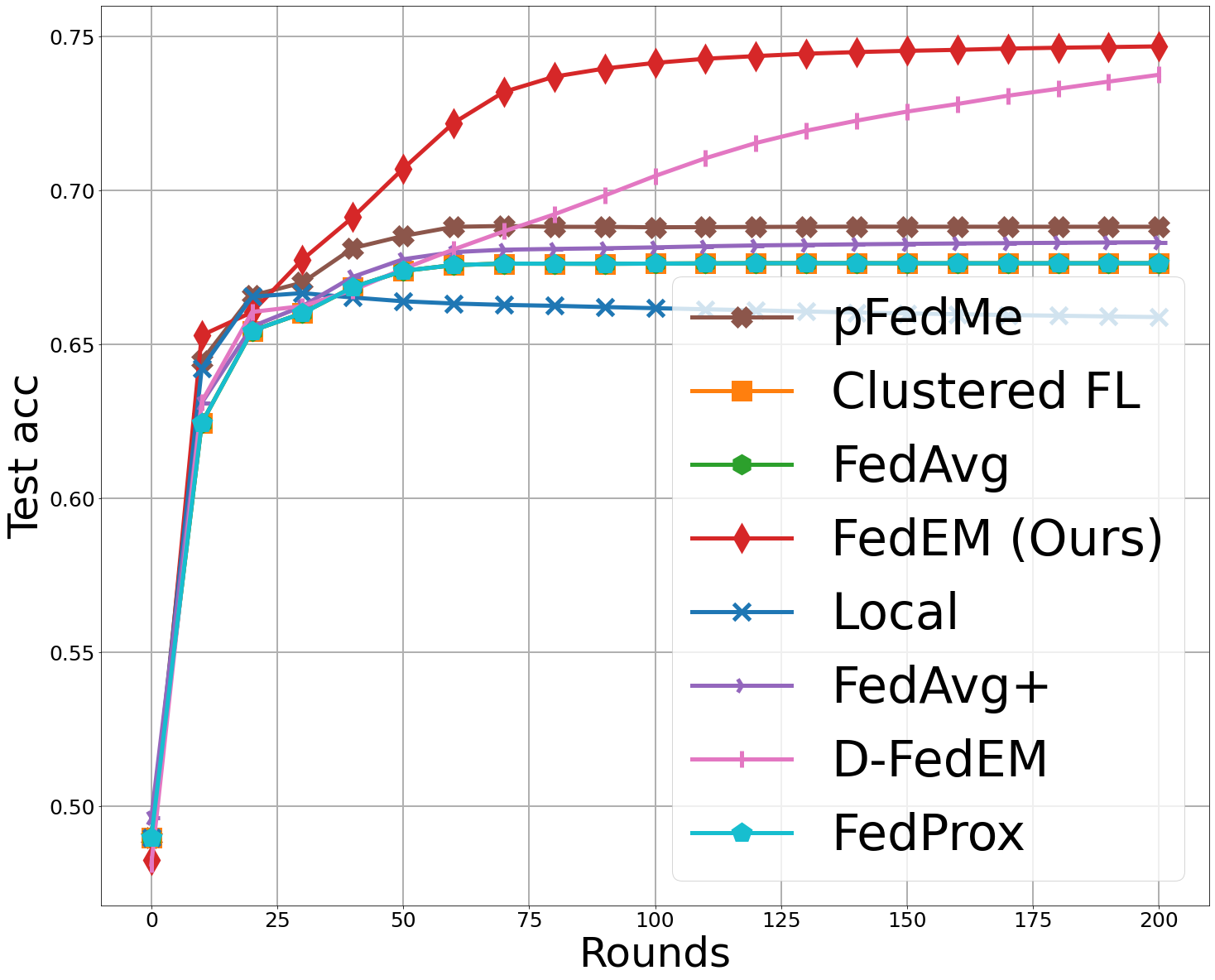} 
    \end{minipage} 
  \caption{Train loss, train accuracy, test loss, and test accuracy for   synthetic dataset.}
  \label{fig:synthetic}
\end{figure}

\end{document}